\documentclass{article}

\usepackage[final,nonatbib]{neurips_2023}

%%%%% Added
% Useful packages
%$\usepackage{a4wide}
%********************************************************
% START NEURIPS PACKAGES
%********************************************************
% Replace `english' with e.g. `spanish' to change the document language
% if you need to pass options to natbib, use, e.g.:
%     \PassOptionsToPackage{numbers, compress}{natbib}
% before loading neurips_2020

% ready for submission
% \usepackage{neurips_2020}

% to compile a preprint version, e.g., for submission to arXiv, add add the
% [preprint] option:
%     \usepackage[preprint]{neurips_2020}

% to compile a camera-ready version, add the [final] option, e.g.:
%     \usepackage[final]{neurips_2020}

% to avoid loading the natbib package, add option nonatbib:
%\PassOptionsToPackage{options}{natbib}
%\usepackage{showframe}
\usepackage[nonatbib]{neurips_2023}

\usepackage[T1]{fontenc}    % use 8-bit T1 fonts
%********************************************************
% END NEURIPS PACKAGES
%********************************************************
\usepackage[english]{babel}
\usepackage{latexsym}
\usepackage{lipsum}
\usepackage{csquotes}
\usepackage{comment}
%********************************************************
% MATH
%********************************************************
\usepackage{nicefrac}       % compact symbols for 1/2, etc.
\usepackage{amsmath, amssymb, amsfonts, amsbsy, amsthm}
\usepackage{mathtools}
\usepackage{bbm}
\usepackage{envmath}
\usepackage[thinc]{esdiff}
%********************************************************
% TABLES
%********************************************************
\usepackage{array}
\usepackage{multirow}
\usepackage{makecell}
\usepackage{caption}
\usepackage{subcaption}
\usepackage{booktabs}
\usepackage{float}

%********************************************************
% Pseudocode
%********************************************************
\usepackage[linesnumbered,ruled,vlined]{algorithm2e}
\DontPrintSemicolon
% Define pseudocode formatting

\SetKwComment{Comment}{\color{green!50!black}\% }{}

\newcommand{\assign}{\, = \,}

\newcommand{\FuncCall}[2]{\texttt{\bfseries \textcolor{myred}{#1}(#2)}}
\SetKwProg{Function}{def}{:}{}

\makeatletter
\long\def\algocf@caption@algo#1[#2]#3{%
  \ifthenelse{\equal{\algocf@algocfref}{\relax}}{}{\algocf@captionref}%
  \@ifundefined{hyper@refstepcounter}{\relax}{% if hyper@refstepcounter undefind, no hyperref, else...
    \ifthenelse{\equal{\algocf@algocfref}{\relax}}{\renewcommand{\theHalgocf}{\thealgocf}}{% take algocf as Href
      \renewcommand{\theHalgocf}{\algocf@algocfref}}%else if SetAlgoRefName done, take this name as ref.
    \hyper@refstepcounter{algocf}%set algocf as category of ref
  }%
     \NR@gettitle{#2}% 
  \algocf@latexcaption{#1}[{#2}]{{#3}}% call original caption
}%
\makeatother
%********************************************************
% REFERENCES
%********************************************************
\usepackage[dvipsnames]{xcolor}
\usepackage{url}            % simple URL typesetting
\usepackage[colorlinks=true]{hyperref}
\hypersetup{citecolor=Green,linkcolor=myred,urlcolor=myred}
\usepackage[capitalize,noabbrev]{cleveref}
\usepackage[
    backend=biber,
    style=authoryear,
    maxcitenames=1,
    natbib=true,
    hyperref=true,
    maxbibnames=10,
    giveninits=true,
    uniquename=init,
    url=false,
    isbn=false,
    doi=false
 ]{biblatex}
\DeclareNameAlias{sortname}{family-given}
%\addbibresource{references.bib}

%--------------------------------

% GRAPHICS
%********************************************************
\usepackage{graphicx}
\usepackage{soul}
\usepackage{tikz}
\usetikzlibrary{external,positioning,fit,calc,shapes}
\tikzset{set/.style={draw,circle,inner sep=0pt,align=center}}
\usepackage{tikzscale}
\usepackage{pgfplots}
\usepgfplotslibrary{groupplots,fillbetween}
\pgfplotsset{compat=newest}
\usetikzlibrary{math}
\tikzmath
{
  function symlog(\x,\a){
    \yLarge = ((\x>\a) - (\x<-\a)) * (ln(max(abs(\x/\a),1)) + 1);
    \ySmall = (\x >= -\a) * (\x <= \a) * \x / \a ;
    return \yLarge + \ySmall ;
  };
  function symexp(\y,\a){
    \xLarge = ((\y>1) - (\y<-1)) * \a * exp(abs(\y) - 1) ;
    \xSmall = (\y>=-1) * (\y<=1) * \a * \y ;
    return \xLarge + \xSmall ;
  };
}
\usepackage{subcaption}
\usepackage{animate}

%\tikzexternalize[shell escape=-enable-write18,prefix=tikz/]

%\usepackage[dvipsnames]{xcolor}
%********************************************************
% MISCELLANEA
%********************************************************
\usepackage{pdflscape}
\usepackage{todonotes}
%********************************************************
% FONTS
%********************************************************
\usepackage{mathrsfs}
%********************************************************
% Typographical Optimizer
%********************************************************
\usepackage{microtype}
%********************************************************
% Glossary
%********************************************************
\usepackage[toc,acronym]{glossaries}

% \usepackage{epstopdf}
% \usepackage[shortlabels]{enumitem}
% %\usepackage{arxiv}
% \usepackage{xurl}
% %\usepackage[comma,sort&compress]{natbib}
% \usepackage{abstract}
% \usepackage{microtype}
% \usepackage{comment}

% \usepackage[noend]{algpseudocode}
% %\usepackage{dblfloatfix}
% %\usepackage{nidanfloat}

% % For theorems and such
% \usepackage[textsize=tiny]{todonotes}

\usepackage{wrapfig}
\usepackage[table]{colortbl}

%*************************************************************************
% MATH OPERATORS
%*************************************************************************

\DeclareMathOperator{\diag}{diag}

\DeclareMathOperator*{\argmax}{arg\,max}
\DeclareMathOperator*{\argmin}{arg\,min}
\DeclareMathOperator{\sign}{sign}
\DeclareMathOperator{\pow}{f_\alpha}

\DeclareMathOperator{\dir}{\mathscr{E}}
\DeclareMathOperator{\atan}{\mathrm{atan2}}
\newcommand{\homophily}{\mathscr{H}}
\newcommand{\sna}{\mathbf{L}} %symmetric-normalized adjacency
\newcommand{\fsna}{\sna^\alpha} % fractional symmetric-normalized adjacency
\newcommand{\lW}{\mathbf{W}} %learnableW
\newcommand{\id}{\mathbf{I}} %identity
\newcommand{\snl}{\mathbf{\id-\sna}} %symmetric-normalized adjacency

\newcommand{\outdeg}{\mathbf{D}_\text{out}} %
\newcommand{\indeg}{\mathbf{D}_\text{in}} %
\newcommand{\iu}{{i\mkern1mu}}
\newcommand*{\conj}{^{*}}
\newcommand*{\tran}{^{\mkern-1.5mu\mathsf{T}}}
\newcommand*{\herm}{^{\mathsf{H}}}
 %symmetric-normalized adjacency
%*************************************************************************
% MISCELLANEA
%*************************************************************************

\renewcommand*{\arraystretch}{1.2}

\setlength\arrayrulewidth{1pt}
\setlength\fboxsep{1pt}
\setul{1pt}{1pt}
\setulcolor{Green}

\newcommand{\first}[1]{\fcolorbox{Green}{Green!50}{#1}}
\newcommand{\second}[1]{\fcolorbox{Green}{white}{#1}}
\newcommand{\third}[1]{\ul{#1}}

\newcommand{\ip}[2]{\left\langle #1\,,\; #2 \right\rangle}
\newcommand{\norm}[1]{\left\lVert #1 \right\rVert}
\newcommand{\abs}[1]{\left\lvert #1 \right\rvert}

\newcommand{\tr}{\mathrm{trace}}
\newcommand{\x}{\mathbf{x}}
\newcommand{\e}{\mathbf{e}}

\newcommand{\M}{\mathbf{M}}
\newcommand{\J}{\mathbf{J}}
\newcommand{\ve}{\mathrm{vec}}

\renewcommand{\P}{\mathbf{P}}
\newcommand{\D}{\mathbf{D}}

\newcommand{\ours}{{FLODE}}

\DeclareCiteCommand{\citeyear}
    {}
    {\bibhyperref{\printdate}}
    {\multicitedelim}
    {}
\DeclareCiteCommand{\citeyearp}
    {}
    {\mkbibparens{\bibhyperref{\printdate}}}
    {\multicitedelim}
    {}

\newcommand{\normalize}[1]{\dfrac{\phantom{\|}#1 \phantom{\|_2}}{\| #1 \|_2}}

% tab10 cmap
\definecolor{myblue}{HTML}{1f77b4}
\definecolor{myorange}{HTML}{ff7f0e}
\definecolor{mygreen}{HTML}{2ca02c}
\definecolor{myred}{HTML}{d62728}
\definecolor{mypurple}{HTML}{9467bd}
\definecolor{mybrown}{HTML}{8c564b}
\definecolor{mypink}{HTML}{e377c2}
\definecolor{mygray}{HTML}{7f7f7f}
\definecolor{myolive}{HTML}{bcbd22}
\definecolor{mycyan}{HTML}{17becf}

\definecolor{LMUgreen}{HTML}{00883A}
%**********************************************************************
% THEOREM- RELATED ENVIRONMENTS
%**********************************************************************
\theoremstyle{plain}
\newtheorem{theorem}{Theorem}[section]
\newtheorem{definition}[theorem]{Definition}
\newtheorem{proposition}[theorem]{Proposition}
\newtheorem{remark}[theorem]{Remark}
\newtheorem{lemma}[theorem]{Lemma}

\newtheorem{approximation problem}[theorem]{Approximation problem}

\newtheorem{corollary}[theorem]{Corollary}

\newtheorem*{assumption*}{Assumption}
\newtheorem*{theorem*}{Theorem}
\newtheoremstyle{named}{}{}{\itshape}{}{\bfseries}{.}{.5em}{\thmnote{#3}}
\theoremstyle{named}
\newtheorem*{namedtheorem}{Theorem}

\Crefname{equation}{}{}
% \Crefname{proposition}{Prop.}{}
% \Crefname{definition}{Def.}{}
% \Crefname{figure}{Fig.}{}
% \crefname{algocf}{alg.}{algs.}
% \Crefname{algocf}{Algorithm}{Algorithms}

\newglossarystyle{mylongblue}{%
\setglossarystyle{long3col}% base this style on the list style
\renewenvironment{theglossary}{% Change the table type --> 3 columns
  \begin{longtable}{lp{0.9\glsdescwidth}>{\centering\arraybackslash}p{.0\glsdescwidth}}}%
  {\end{longtable}}%

}
\newglossarystyle{mylongblack}{%
\setglossarystyle{long3col}% base this style on the list style
  {\end{longtable}}%

}
\glsnoexpandfields

\allowdisplaybreaks

\newglossary[slg]{symbols}{syi}{syg}{Notation} % create add. symbolslist
\loadglsentries{content/appendix/notation}
\makenoidxglossaries

\title{A Fractional Graph Laplacian Approach \\ to Oversmoothing}
 
\author{%
  Sohir Maskey\thanks{Equal contribution.}%
   % about author (webpage, alternative address)---\emph{not} for acknowledging
    %funding agencies.} 
    \\
  Department of Mathematics,\\
  LMU Munich\\
  \href{mailto:maskey@math.lmu.de}{maskey@math.lmu.de}\\
  \And
  Raffaele Paolino\footnotemark[1]%
   % about author (webpage, alternative address)---\emph{not} for acknowledging
    %funding agencies.} 
    \\
  Department of Mathematics \& MCML, \\
  LMU Munich\\
  \href{mailto:paolino@math.lmu.de}{paolino@math.lmu.de}\\
  % examples of more authors
   \And
   Aras Bacho \\
   Department of Mathematics,\\
  LMU Munich
  \And
   Gitta Kutyniok \\
    Department of Mathematics \& MCML, \\
  LMU Munich
  % \And
  % Coauthor \\
  % Affiliation \\
  % Address \\
  % \texttt{email} \\
  % \And
  % Coauthor \\
  % Affiliation \\
  % Address \\
  % \texttt{email} \\
}

\addbibresource{references.bib}

\defcitealias{yanTwoSidesSame2022}{GGCN}
\defcitealias{chienAdaptiveUniversalGeneralized2021}{GPRGNN}
\defcitealias{zhuHomophilyGraphNeural2020}{H$_2$GCN}
\defcitealias{chenSimpleDeepGraph2020}{GCNII}
\defcitealias{peiGeomGCNGeometricGraph2019}{Geom-GCN}
\defcitealias{zhaoPairNormTacklingOversmoothing2019}{PairNorm}
\defcitealias{hamiltonInductiveRepresentationLearning2017}{GraphSAGE}
\defcitealias{kipfSemiSupervisedClassificationGraph2017}{GCN}
\defcitealias{velickovicGraphAttentionNetworks2018}{GAT}
\defcitealias{xhonneuxContinuousGraphNeural2020}{CGNN}
\defcitealias{chamberlainGRANDGraphNeural2021}{GRAND}
\defcitealias{bodnarNeuralSheafDiffusion2022}{Sheaf}
\defcitealias{luanRevisitingHeterophilyGraph2022}{ACM}
\defcitealias{lingamSimpleTruncatedSVD2021}{HLP}
\defcitealias{mauryaImprovingGraphNeural2021}{FSGNN}
\defcitealias{digiovanniGraphNeuralNetworks2022}{GRAFF}
\defcitealias{zhangMagNetNeuralNetwork2021}{MagNet}
\defcitealias{defferrardConvolutionalNeuralNetworks2016}{ChebNet}
\defcitealias{gasteigerPredictThenPropagate2018}{APPNP}
\defcitealias{xuHowPowerfulAre2018}{GIN}
\defcitealias{tongDirectedGraphConvolutional2020}{DGCN}
\defcitealias{tongDigraphInceptionConvolutional2020}{DiGraph}
\defcitealias{boLowfrequencyInformationGraph2021}{FAGCN}
\defcitealias{zhuGraphNeuralNetworks2021}{CPGNN}
\defcitealias{liFindingGlobalHomophily2022}{GloGNN}
\defcitealias{duGBKGNNGatedBiKernel2022}{GBK-GNN}
\defcitealias{wangHowPowerfulAre2022}{JacobiConv}
\defcitealias{shiMaskedLabelPrediction2021}{GT}
\defcitealias{heDeepResidualLearning2016}{ResNet}
\defcitealias{wuSimplifyingGraphConvolutional2019}{SGC}
\defcitealias{choiGREADGraphNeural2023}{GREAD}
\defcitealias{ruschGraphCoupledOscillatorNetworks2022}{GraphCON}
\defcitealias{wangACMPAllenCahnMessage2022}{ACMP}
\defcitealias{Rong2020DropEdge}{DropEdge}

\makeatletter
\DeclareCiteCommand{\citetalias}
  {\usebibmacro{prenote}}
  {\usebibmacro{citeindex}%
   \printtext[bibhyperref]{\@citealias{\thefield{entrykey}}}}%<-- added \printtext[bibhyperref]{...}
  {\multicitedelim}
  {\usebibmacro{postnote}}

\DeclareCiteCommand{\citepalias}[\mkbibparens]
  {\usebibmacro{prenote}}
  {\usebibmacro{citeindex}%
   \printtext[bibhyperref]{\@citealias{\thefield{entrykey}}}}%<-- added \printtext[bibhyperref]{...}
  {\multicitedelim}
  {\usebibmacro{postnote}}
\makeatother

\begin{document}

    \maketitle

    \begin{abstract}
Graph neural networks (\acrshort{gnn}s) have shown state-of-the-art performances in various applications. However, \acrshort{gnn}s often struggle to capture long-range dependencies in graphs due to oversmoothing. In this paper, we generalize the concept of oversmoothing from undirected to directed graphs.
To this aim, we extend the notion of Dirichlet energy by considering a directed symmetrically normalized Laplacian.
As vanilla graph convolutional networks are prone to oversmooth, we adopt a neural graph \acrshort{ode} framework. Specifically, we propose fractional graph Laplacian neural \acrshort{ode}s, which describe non-local dynamics.
We prove that our approach allows propagating information  between distant nodes while maintaining a low probability of long-distance jumps. Moreover, we show that our method is more flexible with respect to the convergence of the graph's Dirichlet energy, thereby mitigating oversmoothing. We conduct extensive experiments on synthetic and real-world graphs, both  directed and undirected, demonstrating our method's versatility across diverse graph homophily levels. Our code is available on \href{https://github.com/rpaolino/flode}{GitHub}.
\end{abstract}
    
    \section{Introduction}
\label{sec: introduction}
Graph neural networks (\acrshort{gnn}s) \citep{goriNewModelLearning2005,scarselliGraphNeuralNetwork2009,bronsteinGeometricDeepLearning2017} have emerged as a powerful class of machine learning models capable of effectively learning representations of structured data. \acrshort{gnn}s have demonstrated state-of-the-art performance in a wide range of applications, including social network analysis \citep{montiFakeNewsDetection2019}, molecular property prediction \citep{gilmerNeuralMessagePassing2017}, and recommendation systems \citep{wangBillionscaleCommodityEmbedding2018,fanGraphNeuralNetworks2019}. The majority of existing work on \acrshort{gnn}s has focused on undirected graphs \citep{defferrardConvolutionalNeuralNetworks2016,kipfSemiSupervisedClassificationGraph2017,hamiltonInductiveRepresentationLearning2017}, where edges have no inherent direction. However, many real-world systems, such as citation networks, transportation systems, and biological pathways, are inherently directed, necessitating the development of methods explicitly tailored to directed graphs.
    
    Despite their success, most existing \acrshort{gnn} models struggle to capture long-range dependencies, which can be critical for specific tasks, such as node classification and link prediction, and for specific graphs, such as heterophilic graphs. This shortcoming also arises from the problem of \emph{oversmoothing}, where increasing the depth of \acrshort{gnn}s results in the node features converging to similar values that only convey information about the node's degree \citep{oonoGraphNeuralNetworks2019,caiNoteOverSmoothingGraph2020}. Consequently, scaling the depth of \acrshort{gnn}s is not sufficient to broaden receptive fields, and other approaches are necessary to address this limitation. While these issues have been extensively studied in undirected graphs \citep{liDeeperInsightsGraph2018,liDeepGCNsCanGCNs2019,luanBreakCeilingStronger2019,chenMeasuringRelievingOverSmoothing2020,ruschGraphCoupledOscillatorNetworks2022}, their implications for directed graphs remain largely unexplored. Investigating these challenges and developing effective solutions is crucial for applying \acrshort{gnn}s to real-world scenarios.
    
    Over-smoothing has been shown to be intimately related to the graph's \emph{Dirichlet energy}, defined as
    \begin{equation*}
        \label{eq:Dirichlet Energy Directed Graphs Undirected}
        \dir(\mathbf{x}) \coloneqq\frac{1}{4}\sum_{i,j=1}^N a_{i,j}\left\|\frac{\mathbf{x}_i }{\sqrt{d_i}} -\frac{\mathbf{x}_j}{\sqrt{d_j}}\right\|^2_2,
    \end{equation*}
    where $\mathbf{A} = (a_{i,j})_{i,j=1}^N$ represents the adjacency matrix of the underlying graph, $\mathbf{x} \in \mathbb{R}^{N \times K}$ denotes the node features, and $d_i \in \mathbb{R}$ the degree of node $i$. Intuitively, the Dirichlet energy measures the smoothness of nodes' features. Therefore, a \acrshort{gnn} that minimizes the Dirichlet energy is expected to perform well on \emph{homophilic} graphs, where similar nodes are likely to be connected. %. since features defined on neighboring nodes vary slowly. 
    Conversely, a \acrshort{gnn} that ensures high Dirichlet energy should lead to better performances on \emph{heterophilic} graphs, for which the nodes' features are less smooth.
    
    This paper aims to bridge the gap in understanding oversmoothing for directed graphs. To this aim, we generalize the concept of Dirichlet energy, providing a rigorous foundation for analyzing oversmoothing. Specifically, we consider the directed symmetrically normalized Laplacian, which accommodates directed graph structures and recovers the usual definition in the undirected case. Even though the directed symmetrically normalized Laplacian has been already used \citep{zouSimpleEffectiveSVDGCN2022}, its theoretical properties remain widely unexplored.
    
    However, a vanilla graph convolutional network (\acrshort{gcn}) \citep{kipfSemiSupervisedClassificationGraph2017} implementing this directed Laplacian alone is not able to prevent oversmoothing. For this reason, we adopt a graph neural \acrshort{ode} framework, which has been shown to effectively alleviate oversmoothing in undirected graphs \citep{bodnarNeuralSheafDiffusion2022,ruschGraphCoupledOscillatorNetworks2022,digiovanniGraphNeuralNetworks2022}.

    \subsection{Graph Neural \texorpdfstring{\acrshort{ode}s}{ODEs}}
    The concept of neural \acrshort{ode} was introduced by \textcite{haberStableArchitecturesDeep2018,chenNeuralOrdinaryDifferential2018},  who first interpreted the layers in neural networks as the time variable in \acrshort{ode}s. Building on this foundation, \textcite{poliGraphNeuralOrdinary2021, chamberlainGRANDGraphNeural2021,eliasofPDEGCNNovelArchitectures2021} extended the connection to the realm of \acrshort{gnn}s, resulting in the development of graph neural \acrshort{ode}s. In this context, each node $i$ of the underlying graph is described by a state variable $\mathbf{x}_i(t) \in \mathbb{R}^K$, representing the node $i$ at time $t$. We can define the dynamics of $\mathbf{x}(t)$ via the node-wise \acrshort{ode}  
    \begin{equation*}
%\label{eq: general graph neural ODE}
\mathbf{x}'(t) = f_\mathbf{w}(\mathbf{x}(t)) \, , \; t \in [0,T] \, ,
\end{equation*}
subject to the initial condition $\mathbf{x}(0) = \mathbf{x}_0\in \mathbb{R}^{N \times K}$, where the function $f_\mathbf{w} : \mathbb{R}^{N \times K} \rightarrow \mathbb{R}^{N \times K}$ is  parametrized by the learnable parameters $\mathbf{w}$.

    The graph neural \acrshort{ode} can be seen as a continuous learnable architecture on the underlying graph, which computes the final node representation $\mathbf{x}(T)$ from the input nodes' features $\mathbf{x}_0$. Typical choices for $f_\mathbf{w}$ include attention-based functions \citep{chamberlainGRANDGraphNeural2021}, which generalize graph attention networks (\acrshort{gat}s) \citep{velickovicGraphAttentionNetworks2018}, or convolutional-like functions \citep{digiovanniGraphNeuralNetworks2022} that generalize \acrshort{gcn}s \citep{kipfSemiSupervisedClassificationGraph2017}.

    How can we choose the learnable function $f_\mathbf{w}$ to accommodate both directed and undirected graphs, as well as different levels of homophily?
    %in a sensible way? 
    We address this question in the following subsection.
    
    \subsection{Fractional Laplacians}    
    The  continuous fractional Laplacian, denoted by $(-\Delta)^\alpha$ for $\alpha > 0$, is used to model non-local interactions. For instance, the fractional heat equation $\partial_t u + (-\Delta)^{\alpha}u=0$ provides a flexible and accurate framework for modeling anomalous diffusion processes. Similarly, the fractional diffusion-reaction, quasi-geostrophic,  Cahn-Hilliard,  porous medium,  Schrödinger, and  ultrasound equations are more sophisticated models to represent complex anomalous systems \citep{pozrikidisFractionalLaplacian2018}. 
    
    Similarly to the continuous case, the fractional graph Laplacian (\acrshort{fgl}) \citep{benziNonlocalNetworkDynamics2020} models non-local network dynamics. In general, the \acrshort{fgl} does not inherit the sparsity of the underlying graph, allowing a random walker to leap rather than walk solely between adjacent nodes. Hence, the \acrshort{fgl} is able to build long-range connections, making it well-suited for heterophilic graphs. %. that can be leveraged to alleviate oversmoothing. 
    %Moreover, long-range interactions allow aggregating information from distant nodes

    \subsection{Main Contributions} 
    %We propose two fractional graph Laplacian neural \acrshort{ode}s: the fractional heat equation and the fractional Schroedinger equation. Since we want to apply our method also to directed graphs, we have to be particularly careful. \acrshort{fgl}s are usually defined in the spectral domain via the eigendecomposition of graph Laplacians. This represents a primary concern for directed graphs since the corresponding graph Laplacian is not guaranteed to be diagonalizable. Moreover, there is no numerical method that computes the Jordan decomposition reliably. Therefore, we define the fractional graph Laplacian in the singular-value domain rather than the eigen-domain. We prove that for normal graph Laplacians, the definition of \acrshort{fgl} in the singular-value domain has a natural interpretation in terms of eigenvalues. The definition in the singular-value domain has another important advantage. Since the singular values are always positive, their fractional powers always exist: this is not the case in the eigen-domain, since fractional powers of negative eigenvalues are not well-defined.
    We present a novel approach to the fractional graph Laplacian by defining it in the singular value domain, instead of the frequency domain \citep{benziNonlocalNetworkDynamics2020}. This formulation  
    %allows for the use of \acrshort{fgl}s in the context of real-world directed graphs, as it 
    bypasses the need for computing the Jordan decomposition of the graph Laplacian, which lacks reliable numerical methods. We show that our version of the \acrshort{fgl} can still capture long-range dependencies, and we prove that its entries remain reasonably bounded. 
    %establish that the proposed \acrshort{fgl}  does not inherit the sparsity of the underlying graph, enabling the capture of long-range interactions. Notably, we prove that the entries of the fractional graph Laplacian remain reasonably bounded.

We then propose two \acrshort{fgl}-based neural \acrshort{ode}s: the fractional heat equation and the fractional Schrödinger equation. Importantly, we demonstrate that solutions to these \acrshort{fgl}-based neural \acrshort{ode}s offer increased flexibility in terms of the convergence of the Dirichlet energy. Notably, the exponent of the fractional graph Laplacian becomes a learnable parameter, allowing our network to adaptively determine the optimal exponent for the given task and graph. We show that this can effectively alleviate oversmoothing in undirected and directed graphs.

To validate the effectiveness of our approach, we conduct extensive experiments on synthetic and real-world graphs, with a specific focus on supervised node classification. Our experimental results indicate the advantages offered by fractional graph Laplacians, particularly in non-homophilic and directed graphs. %These findings bolster the significance of our theoretical contributions and validate the practical utility of our proposed framework.

    %We conduct extensive experiments on synthetic and real-world graphs, focusing on supervised node classification. Our experimental results  demonstrate the advantages of fractional graph Laplacians, particularly in non-homophilic graphs, further bolstering our theoretical contributions.
    \section{Preliminaries}
\label{section:preliminaries}
We denote a graph as $\mathcal{G} = (\mathcal{V}, \mathcal{E})$, where $\mathcal{V}$ is the set of nodes, $\mathcal{E}$ is the set of edges, and ${N=\abs{\mathcal{V}}}$ is the number of nodes. 
% For each node $i \in \mathcal{V}$, we define its in- and out-neighbourhood as
% $
%     {\mathcal{N}^\text{in}_i=\{ j \in \mathcal{V} \; | \;  j \to i\}}
% $,
% $    
%     {\mathcal{N}^\text{out}_i=\{ j \in \mathcal{V} \; | \;  i \to j\}}
% $,
% respectively. We denote the in- and out-degree of node $i$ as $d_i^\text{in}=\abs{\mathcal{N}_i^\text{in}}$ and  $d_i^\text{out}=\abs{\mathcal{N}_i^\text{out}}$.
The adjacency matrix $\mathbf{A}\coloneqq \{a_{i, j}\}$ encodes the edge information, with $a_{i,j} = 1$ if there is an edge directed from node $j$ to $i$, and $0$ otherwise. The in- and out-degree matrices are then defined as $\indeg = \diag(\mathbf{A}\mathbf{1})$, $\outdeg = \diag(\mathbf{A}\tran\mathbf{1})$, respectively. The node feature matrix $\mathbf{x} \in \mathbb{R}^{N \times K}$ contains for every node its feature in $\mathbb{R}^K$.

Given any matrix $\mathbf{M} \in \mathbb{C}^{n \times n}$, we denote its spectrum by $\lambda(\mathbf{M}) \coloneqq \{ \lambda_i(\mathbf{M}) \}_{i=1}^n$ in ascending order w.r.t. to the real part, i.e., $\Re\lambda_1(\mathbf{M}) \leq \Re\lambda_2(\mathbf{M}) \leq \ldots \leq \Re\lambda_n(\mathbf{M})$. 
%We denote the element-wise real and imaginary parts of $\mathbf{M}$ by $\Re(\mathbf{M})$ and $\Im(\mathbf{M})$, respectively. 
Furthermore, we denote by $\norm{\mathbf{M}}_2$ and $\norm{\mathbf{M}}$ the Frobenius and spectral norm of $\mathbf{M}$, respectively. Lastly, we denote by $\mathbf{I}_{n}$ the identity matrix, where we omit the dimension $n$ when it is clear from the context.

\paragraph{Homophily and Heterophily}
Given a graph $\mathcal{G} = (\mathcal{V}, \mathcal{E})$ with labels $\mathbf{y}=\{y_i\}_{i \in \mathcal{V}}$, the \emph{homophily} of the graph indicates whether connected nodes are likely to have the same labels; formally, %. Formally, the  of a graph $\mathcal{G}$ is defined as 
\begin{equation*}
    \homophily(\mathcal{G}) = \frac{1}{N}\sum_{i=1}^N \dfrac{\abs{ \{j \in\{1, \dots, N\}: a_{i, j}=1 \land y_i=y_j\} }}{\abs{\{j \in\{1, \dots, N\}: a_{i, j}=1 \}}},
\end{equation*}
where the numerator represents the number of neighbors of node $i \in \mathcal{V}$ that have the same label $y_i$ \citep{peiGeomGCNGeometricGraph2019}. We say that $\mathcal{G}$ is \emph{homophilic} if $\homophily(\mathcal{G}) \approx 1$ and \emph{heterophilic} if $\homophily(\mathcal{G}) \approx 0$.%, respectively. 

    \section{Dirichlet Energy and Laplacian for (Directed) Graphs}
\label{sec:directed_sna}
In this section, we introduce the concept of Dirichlet energy and demonstrate its relationship to a directed Laplacian, thereby generalizing well-known results for undirected graphs.

\begin{definition}
    The \emph{Dirichlet energy} is defined on the node features $\mathbf{x}\in \mathbb{R}^{N \times K}$ of a graph $\mathcal{G}$ as
    \begin{equation}
    \label{eq:directed_dirichlet_energy}
        \dir(\mathbf{x}) \coloneqq \frac{1}{4}\sum_{i,j=1}^N a_{i,j}\norm{\frac{\mathbf{x}_i }{\sqrt{d_i^\text{in}}} -\frac{\mathbf{x}_j}{\sqrt{\smash[b]{d_{j}^\text{out}}}}}^2_2\, .
     \end{equation}
     %where $\mathbf{A} = (a_{i,j})$ represents the adjacency matrix of the underlying graph, and $d_i^{in}$ and $d_i^{out}$ correspond to the in-degree and out-degree of node $i$.
\end{definition}

%Intuitively, if the underlying graph is homophilic, a \acrshort{gnn} that minimizes $\dir(\mathbf{x})$ is expected to perform well, as connected nodes are expected to have the same label. Conversely, for heterophilic graphs, non-smooth node features, i.e., high Dirichlet energy, are known to lead to better performance. We formalize this intuition in \cref{sec: Frequency Dominance Undirected Graphs}.
The Dirichlet energy measures how much the features change over the nodes of $\mathcal{G}$,
%. Intuitively, it measures 
by quantifying the disparity between the normalized outflow of information from node $j$ and the normalized inflow of information to node $i$.
%the information of node $j$ is weighted by the number of outgoing edges, i.e., edges leaving $j$, while the information of node $i$ is weighted by the number of incoming edges, i.e., edges arriving at $i$.

\begin{definition}
\label{def:directed_sna_snl}
    We define the \emph{symmetrically normalized adjacency (\acrshort{sna})}  as ${\sna \coloneqq \indeg^{-\nicefrac{1}{2}} \mathbf{A} \outdeg^{-\nicefrac{1}{2}}}$.
    %and the \emph{symmetrically normalized Laplacian (\acrshort{snl})}  as ${\snl \coloneqq \id - \sna}$.
\end{definition}
Note that $\sna$ is symmetric if and only if $\mathcal{G}$ is undirected; the term ``symmetrically" refers to the both-sided normalization rather than the specific property of the matrix itself.
%As we will see in the following, this definition is coherent with the one for undirected graphs, with whom it shares important properties. In the following, we present some of them,
%along with their relation to the Dirichlet energy. The corresponding proofs can be found in \cref{appendix: laplacian properties}.
%\cref{def: sym norm adj and laplacian} is consistent with the one for undirected graphs.
%While the \acrshort{sna} has been used before \citep{zouSimpleEffectiveSVDGCN2022}, its theoretical properties are widely unexplored. 
%We will now present some properties of the \acrshort{sna}, along with their relation to the Dirichlet energy. The corresponding proofs can be found in \cref{appendix: laplacian properties}.

It is well-known that the \acrshort{sna}'s spectrum of a connected undirected graph lies within $[-1, 1]$ \citep{chungSpectralGraphTheory1997}. We extend this result to directed graphs, which generally exhibit complex-valued spectra.

\begin{proposition}
\label{thm:directed_sna_spectrum}
Let $\mathcal{G}$ be a directed graph with \acrshort{sna} $\sna$. %and \acrshort{snl} $\snl$.
For every $\lambda \in \lambda(\sna)$, it holds $\abs{\lambda} \leq 1$.
\end{proposition}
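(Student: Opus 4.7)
The plan is to establish $|\lambda|\le 1$ through the stronger claim $\norm{\sna}\le 1$, since every eigenvalue of any matrix is bounded in modulus by its spectral (operator) norm. The core estimate is a one-line Cauchy--Schwarz applied to the bilinear form induced by $\sna$, which exploits the particular symmetric normalization by the in- and out-degrees.

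Concretely, I would take an arbitrary $\x\in\mathbb{C}^N$ with $\norm{\x}_2=1$ (complex-valued, since directed spectra are generally non-real) and expand
\begin{equation*}
    \norm{\sna \x}_2^2 \;=\; \sum_{i=1}^N \frac{1}{d_i^{\text{in}}}\,\abs{\sum_{j=1}^N \frac{a_{i,j}}{\sqrt{\smash[b]{d_j^{\text{out}}}}}\, x_j}^2.
\end{equation*}
Exploiting that $a_{i,j}\in\{0,1\}$, I factor each summand as $\sqrt{a_{i,j}}\cdot\bigl(\sqrt{a_{i,j}}\,x_j/\sqrt{d_j^{\text{out}}}\bigr)$ and invoke Cauchy--Schwarz to obtain
\begin{equation*}
    \abs{\sum_{j=1}^N \frac{a_{i,j}}{\sqrt{\smash[b]{d_j^{\text{out}}}}}\, x_j}^2 \;\le\; \Bigl(\sum_{j=1}^N a_{i,j}\Bigr)\Bigl(\sum_{j=1}^N \frac{a_{i,j}\,\abs{x_j}^2}{d_j^{\text{out}}}\Bigr) \;=\; d_i^{\text{in}}\sum_{j=1}^N \frac{a_{i,j}\,\abs{x_j}^2}{d_j^{\text{out}}}.
\end{equation*}
Plugging this back in, the factor $d_i^{\text{in}}$ cancels against its reciprocal; swapping the order of summation in the remaining double sum and using $\sum_i a_{i,j}=d_j^{\text{out}}$ collapses the expression to $\sum_j \abs{x_j}^2=1$. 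Hence $\norm{\sna\x}_2\le 1$, uniformly over unit vectors, so $\norm{\sna}\le 1$ and therefore $\abs{\lambda}\le 1$ for every $\lambda\in\lambda(\sna)$.

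The only real delicacy is the handling of nodes with zero in- or out-degree, where $\indeg^{-\nicefrac{1}{2}}$ or $\outdeg^{-\nicefrac{1}{2}}$ is ill-defined. I would adopt the standard convention (either assume $\mathcal{G}$ has no isolated nodes, or replace the undefined diagonal entries by zero, which zeroes out the corresponding rows/columns of $\sna$), under which the estimate above goes through verbatim. Beyond this bookkeeping, I do not anticipate any genuine obstacle: the entire argument reduces to one well-chosen application of Cauchy--Schwarz.
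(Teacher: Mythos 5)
Your proof is correct, but it takes a genuinely different route from the paper's. The paper bounds the numerical range: it shows $\abs{\x\herm\sna\x}\le 1$ for every unit vector $\x\in\mathbb{C}^N$ via two successive applications of Cauchy--Schwarz (one over $j$ inside each row, one over $i$ across rows), and then invokes the fact that the spectrum is contained in the numerical range. You instead bound the spectral norm, $\norm{\sna\x}_2\le\norm{\x}_2$, using a single Cauchy--Schwarz per row followed by an exchange of the order of summation and the identity $\sum_i a_{i,j}=d_j^{\text{out}}$; your computation is correct. The two arguments are close cousins but prove different intermediate statements: for a non-normal matrix the numerical radius bound of the paper does \emph{not} by itself imply $\norm{\sna}\le 1$ (in general one only gets $\norm{\M}\le 2\,w(\M)$), whereas your operator-norm bound implies both the numerical-range containment and the eigenvalue bound. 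In fact your stronger conclusion $\norm{\sna}\le 1$ is exactly what the paper later uses in the proof of \cref{thm:fractional_edges} (where it asserts that the singular values of $\sna$ lie in $[0,1]$), so your route supplies that fact directly rather than leaving it to be inferred from the eigenvalue statement. Your remark about zero in- or out-degree nodes is a reasonable piece of bookkeeping that the paper also glosses over.
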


\begin{figure}
    \centering
    \includegraphics{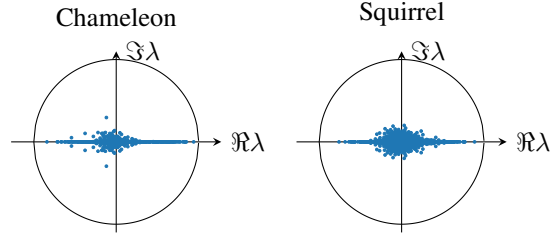}
    \caption{Spectrum $\lambda\(\sna\)$ of common directed real-world graphs. The Perron-Frobenius eigenvalue is $\lambda_\text{PF}\approx0.94$ for Chameleon, and $\lambda_\text{PF}\approx0.89$ for Squirrel.}
    \label{fig:spectrum_chameleon_squirrel}
\end{figure}
\begin{figure}
    \centering
    \includegraphics{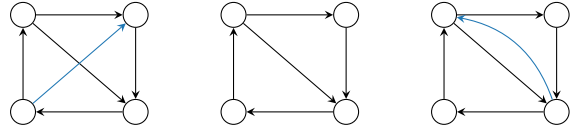}
    \caption{Examples of non-weakly balanced (left), weakly balanced (center), and balanced (right) directed graphs. The Perron-Frobenius eigenvalue of the left graph is $\lambda_\text{PF}\approx 0.97\neq 1$, while for the middle and right graphs $\lambda_\text{PF}=1$.}
    \label{fig:weakly_balanced}
\end{figure}

\cref{thm:directed_sna_spectrum} provides an upper bound for the largest eigenvalue of any directed graph, irrespective of its size. However, many other spectral properties do not carry over easily from the undirected to the directed case. For example, the \acrshort{sna} may not possess a one-eigenvalue, even if the graph is strongly connected (see, e.g., \crefrange{fig:spectrum_chameleon_squirrel}{fig:weakly_balanced}). The one-eigenvalue is of particular interest since its eigenvector $\mathbf{v}$ corresponds to zero Dirichlet energy $\dir(\mathbf{v})=0$. Therefore, studying when $1\in\lambda(\sna)$ is crucial to understanding the behavior of the Dirichlet energy. We fully characterize the set of graphs for which $1\in\lambda(\sna)$; this is the scope of the following definition.

% We refer to \cref{fig:directed_sna_spectrum} to visualize the spectrum of two common directed real-world graphs, for which $1$ is not an eigenvalue.

\begin{definition}
    A graph $\mathcal{G}=(\mathcal{V}, \mathcal{E})$ is said to be \emph{balanced} if $d_i^\text{in}=d_i^\text{out}$ for all $i\in\{1, \dots, N\}$, and \emph{weakly balanced} if there exists $\mathbf{k} \in \mathbb{R}^N$ such that  $\mathbf{k}\neq 0$ and
    \begin{equation*}
        \sum\limits_{j=1}^N a_{i, j} \(\dfrac{k_j}{\sqrt{\smash[b]{d_j^\text{out}}}}-\dfrac{k_i}{\sqrt{d_i^\text{in}}}\)=0 \, , \; \forall i\in\{1, \dots, N\}\, .
    \end{equation*}
    % \item \emph{balanced} if there exists $\mathbf{k} \in \mathbb{R}^N$ such that  $\mathbf{k}\neq 0$
    % \begin{equation*}
    %     a_{i, j} \(\dfrac{k_j}{\sqrt{d_j^\text{out}}}-\dfrac{k_i}{\sqrt{d_i^\text{in}}}\)=0 \, , \; \forall i\, , \; j\in\{1, \dots, N\}\, .
    % \end{equation*}
    % \emph{topologically balanced} if there exists $\mathbf{k} \in \mathbb{R}^N$ such that  $\mathbf{k}\neq 0$
    % \begin{equation*}
    %    a_{i, j} \(\dfrac{k_j}{\sqrt{d_j^\text{out}}}-\dfrac{k_i}{\sqrt{d_i^\text{in}}}\)=0 \, , \; \forall j\, , \;i\in\{1, \dots, N\}\, ,
    % \end{equation*}
\end{definition}
It is straightforward to see that a balanced graph is weakly balanced since one can choose $k_i = \sqrt{d_i^\text{in}}$. 
% \begin{equation*}
%     \(\dfrac{k_j}{\sqrt{d_j^\text{out}}}-\dfrac{k_i}{\sqrt{d_i^\text{in}}}\)=\(\dfrac{\sqrt{d_j^\text{in}}}{\sqrt{d_j^\text{out}}}-\dfrac{\sqrt{d_i^\text{in}}}{\sqrt{d_i^\text{in}}}\)=0\, .
% \end{equation*}
Hence, all undirected graphs are also weakly balanced. However, as shown in \cref{fig:weakly_balanced}, the set of balanced graphs is a proper subset of the set of weakly balanced graphs.
\begin{proposition}
\label{thm:directed_sna_one_eigenvalue}
    Let $\mathcal{G}$ be a directed graph with \acrshort{sna} $\sna$. Then, $1\in\lambda(\sna)$ if and only if the graph is weakly balanced. Suppose the graph is strongly connected, then $-1\in\lambda(\sna)$ if and only if the graph is weakly balanced with an even period.
\end{proposition}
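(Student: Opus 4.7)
The plan is to dispose of the two equivalences separately: the first one is a direct algebraic reformulation of the eigenvalue equation, while the second one is an application of the Perron--Frobenius cyclicity theorem once the first part is in hand.

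For the equivalence $1\in\lambda(\sna)$ iff weakly balanced, I would expand the eigenvalue equation $\sna\mathbf{k}=\mathbf{k}$ componentwise. Since $\sna_{ij}=a_{ij}/\sqrt{d_i^{\text{in}} d_j^{\text{out}}}$ and $d_i^{\text{in}}=\sum_j a_{ij}$, multiplying the $i$-th row equation by $\sqrt{d_i^{\text{in}}}$ yields
\begin{equation*}
\sum_{j=1}^N \frac{a_{ij}\, k_j}{\sqrt{\smash[b]{d_j^{\text{out}}}}} \;=\; \frac{k_i}{\sqrt{d_i^{\text{in}}}}\sum_{j=1}^N a_{ij},
\end{equation*}
which after rearrangement is exactly the weakly balanced condition. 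Hence a nonzero $\mathbf{k}$ witnessing weak balance exists if and only if $1$ is an eigenvalue of $\sna$, with the same vector serving as the eigenvector.

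For the second equivalence I would use that, when $\mathcal{G}$ is strongly connected, $\sna$ is a nonnegative irreducible matrix whose induced directed graph has the same period $h$ as $\mathcal{G}$ (since $\sna$ and $\mathbf{A}$ share the same zero/nonzero pattern). By \cref{thm:directed_sna_spectrum} the spectral radius of $\sna$ satisfies $\rho(\sna)\leq 1$. If $-1\in\lambda(\sna)$, then necessarily $\rho(\sna)=1$, so by Perron--Frobenius $1\in\lambda(\sna)$ as well, and the first part of the proposition forces $\mathcal{G}$ to be weakly balanced. The cyclicity theorem for irreducible nonnegative matrices then identifies the peripheral spectrum with $\{e^{2\pi i k/h}:k=0,\dots,h-1\}$, so $-1$ belongs to it precisely when $h$ is even. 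Conversely, if $\mathcal{G}$ is weakly balanced and strongly connected with even period, the same two facts give $\rho(\sna)=1$ and $-1$ among the peripheral eigenvalues.

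The only step that requires real care is the invocation of the cyclicity theorem in this non-stochastic setting: I want to make explicit that irreducibility of $\sna$ follows directly from strong connectedness of $\mathcal{G}$, and that the index of imprimitivity of $\sna$ coincides with the graph-theoretic period (the gcd of directed cycle lengths), because both are determined by the sparsity pattern alone. Once this identification is made, the two directions of the second equivalence follow immediately from the first part and from the structure of the peripheral spectrum.
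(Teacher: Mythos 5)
Your proposal is correct and follows essentially the same route as the paper: a componentwise rewriting of the eigenvalue equation $\sna\mathbf{k}=\mathbf{k}$ (using $d_i^{\text{in}}=\sum_j a_{ij}$) for the first equivalence, and the Perron--Frobenius cyclicity theorem for irreducible nonnegative matrices for the second. If anything, you are slightly more careful than the paper in spelling out that $-1\in\lambda(\sna)$ forces $\rho(\sna)=1$ (hence weak balance via the first part) and that the period of $\sna$ equals that of $\mathcal{G}$ because it depends only on the sparsity pattern.
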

\cref{thm:directed_sna_one_eigenvalue} generalizes a well-known result for undirected graphs: $-1\in\lambda(\sna)$ if and only if the graph is bipartite, i.e., has even period. The next result shows that the Dirichlet energy defined in \cref{eq:directed_dirichlet_energy} and the \acrshort{sna} are closely connected.

\begin{proposition}
\label{thm:rayleigh_quotient_directed_sna}
For every $\mathbf{x}\in \mathbb{C}^{N \times K}$, it holds $\dir(\mathbf{x}) =\frac{1}{2}\Re\( \tr \( \mathbf{x}\herm \(\snl\) \mathbf{x}\)\)$. Moreover, there exists $\mathbf{x}\neq \mathbf{0}$ such that $\dir(\mathbf{x})=0$ if and only if the graph is weakly balanced.
\end{proposition}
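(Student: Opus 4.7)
The plan is to first derive the identity $\dir(\mathbf{x}) = \tfrac{1}{2}\Re\tr(\mathbf{x}\herm(\snl)\mathbf{x})$ by direct expansion, and then use it to characterize the zero-set of $\dir$. For the identity, expanding the squared norm in \cref{eq:directed_dirichlet_energy} via the complex polarization $\norm{\mathbf{u}-\mathbf{v}}^2 = \norm{\mathbf{u}}^2 - 2\Re(\mathbf{u}\herm\mathbf{v}) + \norm{\mathbf{v}}^2$ (with $\mathbf{u}=\mathbf{x}_i/\sqrt{d_i^{\text{in}}}$, $\mathbf{v}=\mathbf{x}_j/\sqrt{d_j^{\text{out}}}$) splits $4\dir(\mathbf{x})$ into three pieces. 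Using $\sum_j a_{i,j}=d_i^{\text{in}}$ and $\sum_i a_{i,j}=d_j^{\text{out}}$, each ``diagonal'' piece collapses to $\sum_i\norm{\mathbf{x}_i}^2=\tr(\mathbf{x}\herm\mathbf{x})$. The cross-piece equals $\Re\tr(\mathbf{x}\herm\sna\mathbf{x})$, since the weight $a_{i,j}/\sqrt{d_i^{\text{in}}d_j^{\text{out}}}$ is precisely the $(i,j)$-entry of $\sna$. Combining and dividing by $4$ yields the claim.

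For the characterization, both directions rest on the elementary rearrangement that, after dividing by $\sqrt{d_i^{\text{in}}}$, the weakly-balanced equation at $i$ becomes $(\sna\mathbf{k})_i=k_i$, so weakly-balanced is equivalent to $\sna\mathbf{k}=\mathbf{k}$ for a real nonzero $\mathbf{k}$ (this is essentially the content of \cref{thm:directed_sna_one_eigenvalue}). If $\mathcal{G}$ is weakly balanced, picking such $\mathbf{k}$ and setting $\mathbf{x}=\mathbf{k}$ (as a single column) gives $(\snl)\mathbf{x}=\mathbf{0}$, so by the identity $\dir(\mathbf{x}) = \tfrac{1}{2}\Re(\mathbf{k}\tran(\snl)\mathbf{k}) = 0$. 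Conversely, if $\dir(\mathbf{x})=0$ with $\mathbf{x}\neq\mathbf{0}$, then every non-negative summand in \cref{eq:directed_dirichlet_energy} must vanish, giving the edge-wise equality $\mathbf{x}_i/\sqrt{d_i^{\text{in}}}=\mathbf{x}_j/\sqrt{d_j^{\text{out}}}$ for every edge $(j,i)\in\mathcal{E}$. Picking any nonzero column of $\mathbf{x}$ and then its real or imaginary part (whichever is nonzero) produces $\mathbf{k}\in\mathbb{R}^N\setminus\{\mathbf{0}\}$ with the same coordinate-wise equality; summing over $j$ weighted by $a_{i,j}$ recovers the weakly-balanced identity.

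No step is genuinely deep: the proof is essentially one expansion and one rearrangement. The main care is bookkeeping in the complex case---tracking precisely where $\Re(\cdot)$ is actually needed and where a trace is automatically real---and a minor edge case concerning isolated nodes, for which the quotients $1/\sqrt{d_i^{\text{in}}}$ and $1/\sqrt{d_j^{\text{out}}}$ are handled via the standard convention that zero diagonal entries of $\indeg^{-1/2}$ and $\outdeg^{-1/2}$ are set to zero, so that the killed factor $a_{i,j}=0$ makes the corresponding summands vanish harmlessly in both the Dirichlet and weakly-balanced identities.
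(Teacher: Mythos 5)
Your proof is correct, and the first half (the trace identity) follows exactly the paper's route: expand the squared norm, collapse the two diagonal pieces via $\sum_j a_{i,j}=d_i^{\text{in}}$ and $\sum_i a_{i,j}=d_j^{\text{out}}$, and recognize the cross term as $\Re\tr(\mathbf{x}\herm\sna\mathbf{x})$. The ``$\impliedby$'' direction of the second claim is also identical (take $\mathbf{x}=\mathbf{k}$). Where you diverge is the ``$\implies$'' direction. The paper argues by contradiction: assuming the graph is not weakly balanced, it chains Cauchy--Schwarz and the triangle inequality to pass from $\sum_j a_{i,j}\abs{\cdot}^2$ down to $\frac{1}{d_i^{\text{in}}}\abs{\sum_j a_{i,j}(\cdot)}^2$ and concludes this is strictly positive. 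You instead argue directly: since every summand of $\dir(\mathbf{x})$ is non-negative, $\dir(\mathbf{x})=0$ forces the edge-wise equality $\mathbf{x}_i/\sqrt{d_i^{\text{in}}}=\mathbf{x}_j/\sqrt{\smash[b]{d_j^{\text{out}}}}$ on every edge, after which extracting a nonzero real (or imaginary) part of a nonzero column and summing over $j$ yields the weakly balanced identity verbatim. Your version is more elementary and arguably cleaner: it avoids the inequality chain entirely, and it also sidesteps a formal infelicity in the paper's stated negation of ``weakly balanced'' (the paper writes the nonvanishing condition with ``for all $i$'' where the correct negation only gives ``there exists $i$''). The care you take with the reduction from a complex matrix $\mathbf{x}$ to a real vector $\mathbf{k}$, and with isolated nodes, is appropriate and does not appear explicitly in the paper.
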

\cref{thm:rayleigh_quotient_directed_sna} generalizes the well-known result from the undirected (see, e.g., \cite[Definition 3.1]{caiNoteOverSmoothingGraph2020}) to the directed case. This result is an important tool for analyzing the evolution of the Dirichlet energy in graph neural networks.
%\begin{definition}
%\label{def:dirichlet_energy}
%Given a graph with symmetrically normalized laplacian $\snl$ and a graph signal $\mathbf{x}\in \mathbb{C}^{N \times K}$, we define the Dirichlet energy as
%\begin{align*}
%    \dir(\mathbf{x}) = \Re\( \tr \( \mathbf{x}\herm \snl \mathbf{x}\)\)\, .
%\end{align*}
%\end{definition}
    \section{Fractional Graph Laplacians}
\label{section:fractional_graph_laplacians}

\begin{figure}
    \centering
    \begin{subfigure}[b]{\linewidth}
        \centering
        \includegraphics{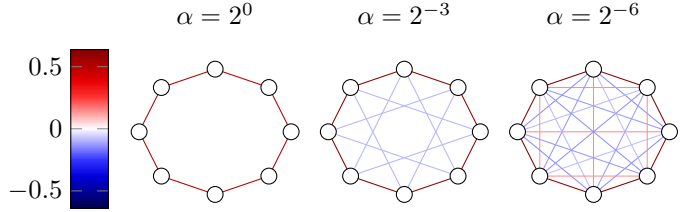}
        \caption{Synthetic cycle graph. The values of the fractional Laplacian can also be negative.}
        \label{fig:fractional_edges_C8}
    \end{subfigure}
    \begin{subfigure}[b]{\linewidth}
         \centering\includegraphics{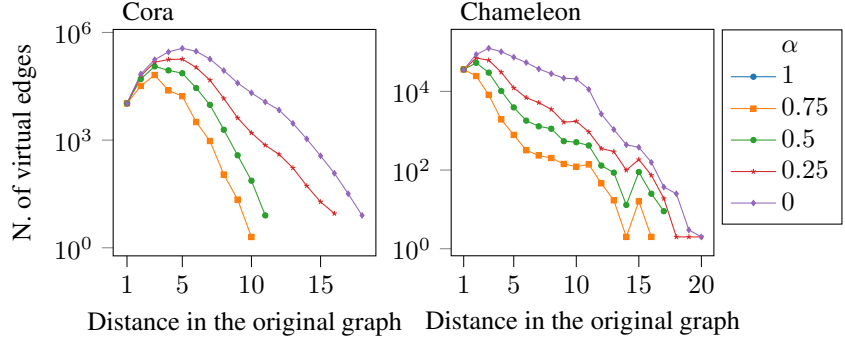}
        \caption{Real-world graphs. We count the number of virtual edges built by the fractional Laplacian based on the distance $d(i, j)$ in the original graph. The number of virtual edges increases as $\alpha$ decreases.}
        \label{fig:distribution_fractional_edges_cora_chameleon}
    \end{subfigure}
    \caption{Visual representation of long-range edges built by the fractional Laplacian. }
    \label{fig:fractional_edges_cora_chameleon}
\end{figure}

We introduce the fractional graph Laplacian through the singular value decomposition (\acrshort{svd}). This approach has two key advantages over the traditional definition \citep{pozrikidisFractionalLaplacian2018, benziNonlocalNetworkDynamics2020} in the spectral domain. First, it allows defining the fractional Laplacian based on any choice of graph Laplacian, including those with negative or complex spectrum such as the \acrshort{sna}. Secondly, the \acrshort{svd} is computationally more efficient and numerically more stable than the Jordan decomposition, which would be necessary if the fractional Laplacian was defined in the spectral domain.

Consider a directed graph with \acrshort{sna} $\sna$ and its \acrshort{svd} $\sna = \mathbf{U} \mathbf{\Sigma} \mathbf{V}\herm$,
where $\mathbf{U}$, $\mathbf{V}\in\mathbb{C}^{N\times N}$ are unitary matrices and $\mathbf{\Sigma}\in\mathbb{R}^{N \times N}$ is a diagonal matrix.  
Given $\alpha\in\mathbb{R}$, we define the \emph{$\alpha$-fractional graph Laplacian} \footnote{%
%Note that we define the fractional graph Laplacian based on the \acrshort{svd} of the symmetric normalized adjacency \acrshort{sna}. We employ the term ``graph Laplacian'' to denote that our definition can be applied to any matrix representation, often referred to as graph Laplacians, of the underlying graph.
We employ the term ``fractional graph Laplacian" instead of ``fractional symmetrically normalized adjacency" to highlight that, in the singular value domain, one can construct fractional powers of any matrix representations of the underlying graph (commonly referred to as ``graph Laplacians"). This fact is not generally true in the eigenvalue domain because eigenvalues can be negative or complex numbers.
} (\emph{$\alpha$-\acrshort{fgl}} in short)  as
\begin{equation*}
    \fsna \coloneqq \mathbf{U} \mathbf{\Sigma}^\alpha \mathbf{V}\herm\, .
\end{equation*}
    
%The definition of fractional Laplacian in the singular-value domain has a meaningful interpretation for normal matrices. According to Lemma \ref{lemma:normal_matrices},  the fractional Laplacian of a normal matrix only affects the magnitude of the eigenvalues, and not their phase. Consequently, for undirected graphs, the fractional Laplacian preserves the sign of the eigenvalues while altering their magnitudes.
In undirected graphs, the $\alpha$-\acrshort{fgl} preserves the sign of the eigenvalues $\lambda$ of $\sna$ while modifying their magnitudes, i.e., $\lambda \mapsto \sign(\lambda)\abs{\lambda}^\alpha$. \footnote{In fact, this property can be generalized to all directed graphs with a normal \acrshort{sna} matrix, including, among others, directed cycle graphs. See \cref{thm:normal_matrices} for more details.}

%We note that for $\alpha\geq0$, the spectrum of $\lambda(\fsna)$ satisfies $\lambda(\sna) \leq 1$. %For $\alpha < 0$, the fractional Laplacian $\lambda(\fsna)$ is well-defined if $\lambda(\sna) \cap 0 = \emptyset$. 
%Many real-world graphs do not satisfy this property. However, even in that case we can ensure well-definedness by simply considering the largest singular values; see \cref{section:experiments} for more details.

The $\alpha$-\acrshort{fgl} is generally less sparse than the original \acrshort{sna}, as it connects nodes that are not adjacent in the underlying graph. The next theorem  proves that the weight of such ``virtual'' edges is bounded.

%Fractional Laplacians are not sparse, allowing random walkers to make long-distance jumps beyond adjacent nodes.  This enhances their ability to efficiently capture long-range dependencies. However, in the  theorem below, we demonstrate that the probability of such jumps remains reasonably low while adhering to the underlying graph structure.

\begin{theorem}
\label{thm:fractional_edges}
Let $\mathcal{G}$ be a directed graph with \acrshort{sna} $\sna$. For $\alpha > 0$, if the distance $d(i,j)$ between nodes $i$ and $j$ is at least 2, then
\begin{equation*}
\abs{ \left(\sna^\alpha\right)_{i,j} } \leq  \(1 + \frac{\pi^2}{2}\) \(\dfrac{\|\sna\|}{2\(d(i,j)-1\)}\)^{\alpha}\,.
\end{equation*}
\end{theorem}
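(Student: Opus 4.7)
My plan is to use that odd polynomial approximations of $\sigma\mapsto\sigma^\alpha$ lift through the SVD to matrix expressions whose sparsity mirrors the graph structure. For any polynomial $q\in\mathbb{R}[y]$,
\[\mathbf{U}\,\bigl(\mathbf{\Sigma}\,q(\mathbf{\Sigma}^2)\bigr)\,\mathbf{V}\herm=\sna\,q(\sna\herm\sna),\]
so each monomial $\sna(\sna\herm\sna)^s$ has $(i,j)$-entry supported only on pairs connected by a walk of length $2s+1$ in the underlying (undirected) graph, since $(\sna\herm\sna)_{a,b}\neq 0$ requires $a$ and $b$ to share a common out-neighbor. Taking $\deg q = m\leq\lfloor(d(i,j)-2)/2\rfloor$ therefore ensures $\bigl(\sna\,q(\sna\herm\sna)\bigr)_{i,j}=0$, and unitary invariance of the spectral norm yields
\[|(\fsna)_{i,j}|\leq\bigl\|\fsna-\sna\,q(\sna\herm\sna)\bigr\|\leq\sup_{\sigma\in[0,\|\sna\|]}\bigl|\sigma^\alpha-\sigma\,q(\sigma^2)\bigr|.\]

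This reduces the claim to constructing $q$ of degree $\sim(d-1)/2$ whose associated odd polynomial approximates $\sigma^\alpha$ on $[0,\|\sna\|]$ to within $(1+\pi^2/2)\bigl(\|\sna\|/(2(d-1))\bigr)^\alpha$. I would do this by truncating the Stieltjes-type representation
\[\sigma^\alpha=\frac{\sin(\pi\alpha/2)}{\pi}\int_0^\infty\frac{\sigma^2\,t^{\alpha/2-1}}{t+\sigma^2}\,dt\quad(\alpha\in(0,2)),\]
expanding the kernel $\sigma^2/(t+\sigma^2)$ as a geometric series in $\sigma^2/t$ on $\{t\geq t_\star\}$, truncating at order $m$, and retaining $\int_0^{t_\star}$ as a remainder; choosing the split point $t_\star\sim\|\sna\|^2/(d-1)^2$ balances the two contributions and produces the correct rate $(\|\sna\|/(d-1))^\alpha$. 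The case $\alpha\geq 2$ is handled by iterating the identity $\sna^\alpha=\sna^2\cdot(\sna\herm\sna)^{(\alpha-2)/2}$, which reduces the exponent by $2$ at each step.

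The main obstacle will be pinning down the precise prefactor $1+\pi^2/2$: the polynomial decay $(d-1)^{-\alpha}$ drops out naturally from the Stieltjes-integral bookkeeping, but the exact numerical constant emerges only after carefully summing the Neumann tail and the low-$t$ remainder, combining the bound $\sin(\pi\alpha/2)/\pi\leq 1$ with a Basel-type identity such as $\sum_{k\geq 1}k^{-2}=\pi^2/6$. A secondary subtlety is ensuring the polynomial degree count $2m+1<d(i,j)$ matches the truncation index in the Neumann expansion, which requires aligning the indexing conventions for even vs.\ odd values of $d(i,j)$.
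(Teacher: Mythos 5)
Your opening reduction is sound and in fact matches the paper's strategy: the paper proves exactly this via a lemma stating that if $p_m$ is a polynomial of degree $m\leq d(i,j)-1$, then $p_m(\sna)_{i,j}=0$ by sparsity of matrix powers, so $\abs{(\fsna)_{i,j}}\leq\norm{\fsna-p_m(\sna)}_2=\norm{\sigma^\alpha-p_m(\sigma)}_{\infty,[0,\norm{\sna}]}$ by unitary invariance. Your insistence on \emph{odd} polynomials $\sigma\mapsto\sigma q(\sigma^2)$, which genuinely lift to $\sna\,q(\sna\herm\sna)$ under the SVD functional calculus, is actually more careful than the paper on the non-normal case (where $\mathbf{U}\mathbf{\Sigma}^k\mathbf{V}\herm\neq\sna^k$), though it costs you a factor of roughly $2$ in the usable degree and changes the relevant notion of distance, so the stated constant $2(d(i,j)-1)$ in the denominator would not survive your bookkeeping as is.

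The genuine gap is in the second half: your construction of the approximating polynomial does not work. The geometric expansion of $\sigma^2/(t+\sigma^2)$ in powers of $\sigma^2/t$ converges only for $t>\sigma^2$; to cover all $\sigma\in[0,\norm{\sna}]$ you would need $t_\star\geq\norm{\sna}^2$, whereas you propose $t_\star\sim\norm{\sna}^2/(d-1)^2$. On the region $\sigma^2>t_\star$ (which is most of the interval) the truncated Neumann series is a partial sum of a divergent series and the pointwise error does not decay in $m$ at all --- indeed it grows like $(\sigma^2/t_\star)^{m}$. Conversely, taking $t_\star=\norm{\sna}^2$ leaves a low-$t$ remainder of order $\norm{\sna}^\alpha$ with no decay in $d(i,j)$. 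So the split cannot be "balanced" to produce the rate $(d(i,j)-1)^{-\alpha}$. Relatedly, the prefactor $1+\pi^2/2$ is not something that "emerges" from summing a Neumann tail together with a Basel identity; it is the constant in Jackson's theorem, which is precisely the tool the paper invokes: Jackson supplies, for every degree $m$, a polynomial with uniform error at most $(1+\pi^2/2)\,\omega\bigl((b-a)/(2m)\bigr)$, and combining this with the modulus of continuity $\omega(t)=t^\alpha$ of $x\mapsto x^\alpha$ on $[0,\norm{\sna}]$ gives both the rate and the constant in one step. Replacing your Stieltjes truncation with an appeal to Jackson's theorem (in a version adapted to odd approximants, with the attendant loss in the constant) is what your argument needs to close.
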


We provide a proof of \cref{thm:fractional_edges} in \cref{sec:long_range_dependencies}. In \cref{fig:fractional_edges_C8}, we visually represent the cycle graph with eight nodes and the corresponding $\alpha$-\acrshort{fgl} entries. 
We also refer to \cref{fig:distribution_fractional_edges_cora_chameleon}, where we depict the distribution of $\alpha$-\acrshort{fgl} entries for the real-world graphs Cora (undirected) and Chameleon (directed) with respect to the distance in the original graph. 
Our empirical findings align with our theoretical results presented in \cref{thm:fractional_edges}. 
%Specifically, we demonstrate that fractional Laplacians are denser and can capture long-range interactions in the data while ensuring that the entries do not explode. 

    \section{Fractional Graph Laplacian Neural \texorpdfstring{\acrshort{ode}}{ODE}}
\label{sec:fractional_pde}
 
This section explores two fractional Laplacian-based graph neural \acrshort{ode}s. First, we consider the fractional heat equation,
\begin{equation}  
\label{eq:LxW_ode}
    %\begin{dcases}
       \mathbf{x}'(t) = -\fsna \mathbf{x}(t) \lW  \, ,\;
        \mathbf{x}(0) = \mathbf{x}_0\, ,
    %\end{dcases}
\end{equation}
where $\mathbf{x}_0 \in \mathbb{R}^{N\times K}$ is the \emph{initial condition}, $\mathbf{x}(t) \in \mathbb{R}^{N\times K}$ for $t>0$ and $\alpha\in\mathbb{R}$. We assume that the \emph{channel mixing matrix} $\lW \in \mathbb{R}^{K \times K}$ is a symmetric matrix. %This does not decrease the expressive power of the resulting network \citep{huExploringWeightSymmetry2019}. 
Second, we consider the fractional Schrödinger equation,
\begin{equation}
    \label{eq:schroedinger_ode}
    %\begin{dcases}
       \mathbf{x}'(t) = \iu \, \fsna \mathbf{x}(t) \lW  \, ,\;
        \mathbf{x}(0) = \mathbf{x}_0\, ,
    %\end{dcases}
\end{equation}
where $\mathbf{x}_0, \mathbf{x}(t) \in \mathbb{C}^{N\times K}$ and $\lW \in \mathbb{C}^{K \times K}$ is unitary diagonalizable. Both  \cref{eq:LxW_ode} and  \cref{eq:schroedinger_ode} can be analytically solved. For instance, the solution of \cref{eq:LxW_ode} is given by $\mathrm{vec}(\x)(t)=\exp(-t\, \lW\otimes\fsna) \mathrm{vec} (\mathbf{x}_0)$, where $\otimes$ denotes the Kronecker product and $\mathrm{vec}( \cdot )$ represents the vectorization operation. However, calculating the exact solution is computationally infeasible since the memory required to store $\lW\otimes\fsna$ alone grows as $(N K)^2$. Therefore, we rely on numerical schemes to solve \cref{eq:LxW_ode} and \cref{eq:schroedinger_ode}.

%\paragraph{Oversmoothing and Frequencies of the \acrshort{snl} $\snl$.}
In the remainder of this section, we analyze the Dirichlet energy for solutions to \eqref{eq:LxW_ode} and \eqref{eq:schroedinger_ode}. We begin with the definition of oversmoothing.

\begin{definition}
\label{def:oversmoothing}
 Neural \acrshort{ode}-based \acrshort{gnn}s are said to \emph{oversmooth} if the normalized Dirichlet energy decays exponentially fast. That is, for any initial value $\mathbf{x}_0$, the solution $\mathbf{x}(t)$ satisfies for every $t>0$
 \begin{equation*}
 \label{eq:oversmoothing}
     \abs{ \dir\(\normalize{\x (t)}\) - \min\lambda(\snl) }\leq \exp\(-Ct\)\, ,\; C>0\, .
 \end{equation*}
 % where $\lambda_{\min}$ is the smallest eigenvalue of the underlying graph's \acrshort{snl} $\snl$ and $C>0$.  %Similarly, a \acrshort{gnn} with $t$ layers is said to \emph{oversmooth} if  \cref{eq:oversmoothing} is satisfied.
\end{definition}

 \cref{def:oversmoothing} captures the \emph{actual} smoothness of features by considering the normalized Dirichlet energy, which mitigates the impact of feature amplitude \citep{caiNoteOverSmoothingGraph2020,digiovanniGraphNeuralNetworks2022}. Additionally, \cref{thm:rayleigh_quotient_directed_sna} shows that the normalized Dirichlet energy is intimately related to the numerical range of $\snl$ of the underlying graph. 
 This shows that the Dirichlet energy and eigenvalues (or \emph{frequencies}) of the \acrshort{sna} are intertwined, and one can equivalently talk about Dirichlet energy or frequencies (see also \cref{thm:eigenvalue_as_frequency}).
 In particular, it holds that
\begin{equation*}
    0 \leq \dir\(\normalize{\x (t)}\) \leq \dfrac{\|\snl\|}{2}\, .
\end{equation*}
As seen in \cref{sec:directed_sna}, the minimal possible value attained by the normalized Dirichlet energy is often strictly greater than $0$ for directed graphs. This indicates that \acrshort{gnn}s on general directed graphs inherently cannot oversmooth to the same extent as in undirected. However, we prove that a vanilla \acrshort{gcn} implementing the directed \acrshort{sna} oversmooths with respect to \cref{def:oversmoothing}, see \cref{subsec: directed GCN without residual connection oversmooths}.

\subsection{Frequency Analysis for Graphs with Normal \texorpdfstring{\acrshort{sna}}{Symmetrically Normalized Adjacency}}
\label{sec:frequency_dominance_normal_sna}
\begin{figure}
    \includegraphics{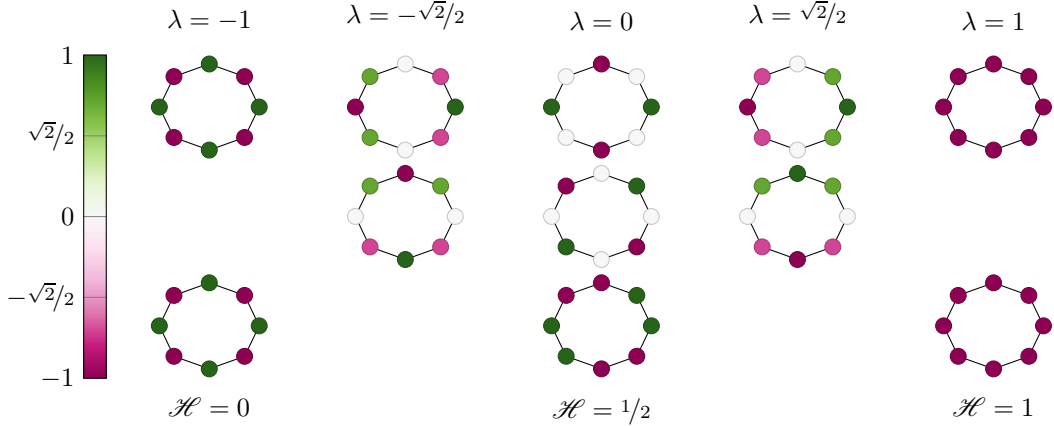}
    \caption{Eigendecomposition of $\sna$ for the cycle graph $C_8$ (see \cref{sec:cycle_graph_appendix}). The first two rows show the eigenvectors corresponding to the eigenvalues $\lambda$. The last row shows how the (label-unaware) eigendecomposition  can be used to  study homophily, whose definition requires the labels.}
    \label{fig:C8_eigs}
\end{figure}

This subsection focuses on the frequency analysis of \acrshort{fgl}-based Neural \acrshort{ode}s for undirected graphs.
Most classical \acrshort{gnn}s \citep{kipfSemiSupervisedClassificationGraph2017, velickovicGraphAttentionNetworks2018} and also graph neural \acrshort{ode}s \citep{chamberlainGRANDGraphNeural2021, eliasofPDEGCNNovelArchitectures2021} have been shown to oversmooth.  \citet{digiovanniGraphNeuralNetworks2022} proved that the normalized Dirichlet energy for \acrshort{gnn}s based on \cref{eq:LxW_ode} with $\alpha=1$ can not only converge to its minimal value but also to its maximal possible value. A \acrshort{gnn} exhibiting this property is then termed \emph{Highest-Frequency-Dominant (\acrshort{hfd})}. 

However, in real-world scenarios, most graphs are not purely homophilic nor purely heterophilic but fall somewhere in between. Intuitively, this suggests that mid-range frequencies might be more suitable. To illustrate this intuition, consider the cycle graph as an example. If we have a homophily of $1$, low frequencies are optimal; with a homophily equal to $0$, high frequencies are optimal. Interestingly, for a homophily of $\nicefrac{1}{2}$, the mid-range frequency is optimal, even though the eigendecomposition is label-independent. More information on this example can be found in \cref{fig:C8_eigs} and \cref{sec:cycle_graph_appendix}.
%The exact relation between frequencies of $\snl$ and the Dirichlet energy is elaborated on in \cref{...}.
Based on this observation, we propose the following definition to generalize the concept of \acrshort{hfd}, accommodating not only the lowest or highest frequency but all possible frequencies. %This allows us to analyze the behavior of solutions of \crefrange{eq: general graph neural ODE}{eq:schroedinger_ode} with increasing time $t >0$. 

\begin{definition}
Let $\lambda \geq 0$. Neural \acrshort{ode}-based \acrshort{gnn}s initialized at $\mathbf{x}_0$ are \emph{$\lambda$-Frequency-Dominant ($\lambda$-\acrshort{fd}}) if the solution $\x(t)$ satisfies
\begin{equation*}
    \dir\(\normalize{\x (t)}\) \xrightarrow{t \to \infty} \frac{\lambda}{2}.
\end{equation*}
 Suppose $\lambda$ is the smallest or the largest eigenvalue with respect to the real part. In that case, we call it \emph{Lowest-Frequency-Dominant (\acrshort{lfd})} or \emph{Highest-Frequency-Dominant (\acrshort{hfd})}, respectively.
\end{definition}

%Standard GNNs, as well as many graph neural ODE approaches such as GRAND \citep{chamberlainGRANDGraphNeural2021} and PDE-GCN \citep{eliasofPDEGCNNovelArchitectures2021}, are consistently \acrshort{lfd} or, in other words, exhibit oversmoothing. \citep{digiovanniGraphNeuralNetworks2022a} demonstrated that   \cref{eq:LxW_ode} with $\alpha=1$, i.e., a standard GCN with a residual connection and shared channel mixing matrix, can also be \acrshort{hfd}.

In the following theorem, we show that  \cref{eq:LxW_ode} and \cref{eq:schroedinger_ode} are not limited to being \acrshort{lfd} or \acrshort{hfd}, but can also be mid-frequency dominant.%$\lambda$-\acrshort{fd} for $\lambda \not\in \{\lambda_1(\snl)/2, \lambda_N(\snl)/2\}$. 

\begin{theorem}
\label{thm:frequency_dominance_normal_sna}
Let $\mathcal{G}$ be an undirected %\footnote{The results in this theorem can be generalized to any directed graph with normal \acrshort{sna}, see \cref{thm:frequency_dominance_normal_sna}.}
graph with \acrshort{sna} $\sna$. Consider the initial value problem in \cref{eq:LxW_ode} with $\lW \in \mathbb{R}^{K \times K}$ and $\alpha \in \mathbb{R}$. Then, for almost all initial values $\mathbf{x}_0 \in \mathbb{R}^{N \times K}$ the following holds.
   \begin{enumerate}
    \item[$(\alpha > 0)$]   The solution to \cref{eq:LxW_ode} is either \acrshort{hfd} or \acrshort{lfd}.
    % if 
    % $
    %     \lambda_K(\lW)   \sign(\lambda_1(\sna))\abs{ \lambda_1(\sna) }^\alpha  <   \lambda_1(\lW)
    % $, and \acrshort{lfd} otherwise. 
%    \item[(ii)] For $0 < \tilde{\alpha} < \alpha$, the convergence described in (i) for solutions of \cref{eq:LxW_ode} with fraction $\tilde{\alpha}$ is \emph{strictly slower} than the convergence for the solution of \cref{eq:LxW_ode} with fraction $\alpha$, i.e., 
%    \begin{equation*}
%        \dir(\mathbf{x}_{\tilde{\alpha}}(t)/\|\mathbf{x}_{\tilde{\alpha}}(t)\|) < \dir(\mathbf{x}_{\alpha}(t)/\|\mathbf{x}_{\alpha}(t)\|).
%    \end{equation*}
    \item[$(\alpha < 0)$]  Let $\lambda_+(\sna)$ and $\lambda_-(\sna)$  be the smallest positive and negative non-zero eigenvalue of $\sna$, respectively.  The solution to \cref{eq:LxW_ode} is either $(1-\lambda_+(\sna))$-\acrshort{fd} or  $(1-\lambda_-(\sna))$-\acrshort{fd}.
    % \begin{equation}
    %     \lambda_1(\lW)  \mathrm{sgn}(\lambda_+(\sna))|\lambda_+(\sna)|^\alpha <   \lambda_K(\lW)\mathrm{sgn}(\lambda_-(\sna))|\lambda_-(\sna)|^\alpha \, .
    % \end{equation}
    % Otherwise, the solution to \cref{eq:LxW_ode} is  $(1-\lambda_-(\sna))$-\acrshort{fd}.
\end{enumerate}
Furthermore, the previous results also hold for solutions to the Schrödinger equation \cref{eq:schroedinger_ode} if ${\lW \in \mathbb{C}^{K \times K}}$ has at least one eigenvalue with non-zero imaginary part. 
\end{theorem}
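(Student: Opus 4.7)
The plan is to reduce both \acrshort{ode}s to a family of decoupled scalar equations by simultaneously diagonalising $\sna$ and $\lW$. Since $\mathcal{G}$ is undirected the $\sna$ is real symmetric, so $\sna=\mathbf{U}\Lambda\mathbf{U}\tran$ with $\Lambda=\diag(\lambda_i)$ and $\lambda_i\in[-1,1]$ by \cref{thm:directed_sna_spectrum}; the \acrshort{svd} then yields $\fsna=\mathbf{U}\,\diag\!\bigl(\sign(\lambda_i)\abs{\lambda_i}^\alpha\bigr)\,\mathbf{U}\tran$ (with the kernel of $\sna$ treated separately when $\alpha<0$). Under the hypotheses of the theorem, $\lW$ admits the spectral factorisation $\lW=\mathbf{Q}\Theta\mathbf{Q}\herm$ with $\Theta=\diag(\theta_j)$. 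Setting $\mathbf{y}(t)\coloneqq\mathbf{U}\tran\mathbf{x}(t)\mathbf{Q}$, both \cref{eq:LxW_ode} and \cref{eq:schroedinger_ode} decouple entrywise into $y_{ij}'(t)=\mu_{ij}\,y_{ij}(t)$, with explicit solution $y_{ij}(t)=y_{ij}(0)\,e^{\mu_{ij}t}$, where
\[
\mu_{ij}=\begin{cases}-\sign(\lambda_i)\,\abs{\lambda_i}^\alpha\,\theta_j&\text{(heat equation)},\\ \iu\,\sign(\lambda_i)\,\abs{\lambda_i}^\alpha\,\theta_j&\text{(Schr\"odinger equation)}.\end{cases}
\]

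Because $\mathbf{U}$ and $\mathbf{Q}$ are orthogonal/unitary, one has $\norm{\mathbf{x}(t)}_2^2=\sum_{i,j}\abs{y_{ij}(0)}^2 e^{2\Re\mu_{ij}t}$ and, by \cref{thm:rayleigh_quotient_directed_sna}, $\dir(\mathbf{x}(t))=\tfrac12\sum_{i,j}\abs{y_{ij}(0)}^2(1-\lambda_i)e^{2\Re\mu_{ij}t}$. Introducing $M\coloneqq\max_{i,j}\Re\mu_{ij}$ and $S\coloneqq\{(i,j):\Re\mu_{ij}=M\}$, taking the ratio and letting $t\to\infty$ gives
\[
\dir\!\left(\normalize{\x(t)}\right)\;\xrightarrow{t\to\infty}\;\frac{1}{2}\,\frac{\sum_{(i,j)\in S}\abs{y_{ij}(0)}^2 (1-\lambda_i)}{\sum_{(i,j)\in S}\abs{y_{ij}(0)}^2},
\]
valid whenever $\mathbf{x}_0$ has non-trivial projection onto $S$, a measure-zero condition on the initial data. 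It then remains to identify $S$. For the heat equation with $\alpha>0$ the map $\abs{\lambda}\mapsto\abs{\lambda}^\alpha$ is increasing, hence $S$ is concentrated on either $(\lambda_{\max}(\sna),\theta_{\min}(\lW))$, giving $(1-1)/2=0$ (\acrshort{lfd}), or $(\lambda_{\min}(\sna),\theta_{\max}(\lW))$, giving $(1-\lambda_{\min}(\sna))/2$ (\acrshort{hfd}), whichever yields the larger product. For $\alpha<0$ the map is decreasing, so the dominant eigenvalue of each sign is the smallest in magnitude, namely $\lambda_+(\sna)$ or $\lambda_-(\sna)$, producing $(1-\lambda_\pm(\sna))/2$ as claimed. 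For the Schr\"odinger case one just notes that $\Re(\iu\theta_j)=-\Im\theta_j$, so the entire case analysis applies with $\Im\theta_j$ in place of $\theta_j$, and the hypothesis that some $\theta_j$ has non-zero imaginary part guarantees $M$ is attained by a non-trivial index.

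The main obstacle I anticipate is the tie analysis. When several indices achieve the maximal rate $M$, the asymptotic Dirichlet energy is a convex combination of the values $(1-\lambda_i)/2$ across $\{i : (i,j)\in S\}$. Because $\mu_{ij}$ factors as a product of a function of $i$ (through $\sign(\lambda_i)\abs{\lambda_i}^\alpha$) and a function of $j$ (through $\theta_j$ or $\Im\theta_j$), the set $S$ has the product form $\{i:\lambda_i\in\Lambda^\star\}\times\{j:\theta_j\in\Theta^\star\}$, so generically all eigenvalues appearing in $\Lambda^\star$ share the same value and the convex combination collapses to the expected dichotomous limit. The degenerate situation in which the two candidate products coincide (e.g.\ $\abs{\lambda_{\max}}^\alpha\abs{\theta_{\min}}=\abs{\lambda_{\min}}^\alpha\theta_{\max}$ for $\alpha>0$) has to be ruled out as a non-generic alignment of $(\sna,\lW)$, and together with the requirement of non-vanishing projection onto the dominant block this is precisely what the ``almost all initial values'' clause encapsulates.
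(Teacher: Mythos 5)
Your proposal is correct and follows essentially the same route as the paper: diagonalise $\sna$ and $\lW$ simultaneously (the paper does this in vectorized form via the Kronecker product $\lW\otimes\fsna$ in \cref{thm:main_theorem_heat_appendix}), identify the eigenvalue product with the most negative real part as the dominant mode, and convert the limiting eigenvector into a Dirichlet-energy value via the Rayleigh-quotient identity of \cref{thm:rayleigh_quotient_directed_sna}. One small imprecision: the degenerate tie $\lambda_K(\lW)\pow(\lambda_1(\sna))=\lambda_1(\lW)$ is a condition on $(\sna,\lW)$, not on $\mathbf{x}_0$, so it cannot be absorbed into the ``almost all initial values'' clause — the paper handles this boundary case by simply assigning it to the \acrshort{lfd} branch.
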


\cref{thm:frequency_dominance_normal_sna} \textit{$(\alpha>0)$} generalizes the result by \citet{digiovanniGraphNeuralNetworks2022} for $\alpha=1$ to arbitrary positive values of $\alpha$. The convergence speed in \cref{thm:frequency_dominance_normal_sna} \textit{$(\alpha>0)$}  depends on the choice of ${\alpha \in \mathbb{R}}$. By selecting a variable $\alpha$ (e.g., as a learnable parameter), we establish a flexible learning framework capable of adapting the convergence speed of the Dirichlet energy.
A slower or more adjustable convergence speed facilitates broader frequency exploration as it converges more gradually to its maximal or minimal value. Consequently, the frequency component contributions (for finite time, i.e., in practice) are better balanced, which is advantageous for graphs with different homophily levels. \cref{thm:frequency_dominance_normal_sna} \textit{$(\alpha<0)$} shows that solutions of the fractional neural \acrshort{ode}s in \cref{eq:LxW_ode} and \cref{eq:schroedinger_ode}  are not limited to be \acrshort{lfd} or \acrshort{hfd}. %, but also $\lambda$-\acrshort{fd} for general $\lambda$.
To demonstrate this and the other results of \cref{thm:frequency_dominance_normal_sna}, we solve \cref{eq:LxW_ode} using an explicit Euler scheme for different choices of $\alpha$ and $\lW$ on the Cora and Chameleon graphs. The resulting evolution of the Dirichlet energy with respect to time is illustrated in \cref{fig:dirichlet_energy_convergence}.  Finally, we refer to \cref{thm:main_theorem_heat_appendix} in \cref{sec:frequency_dominance_normal_sna_appendix} for the full statement and proof of \cref{thm:frequency_dominance_normal_sna}.

%\begin{figure}[t]
%    \centering\includegraphics{img/dirichlet_energy_Cora.tikz}
%    \caption{Different $\alpha$-Laplacians for the undirected  Cora citation network graph. Left: $\lW=-\id$. Right:  $\lW=\id$.}%
%    \label{fig:Gen2}
%\end{figure}

%\begin{figure}[t!]
%\begin{center}
% \resizebox{1\textwidth}{!}{\begin{tabular}{cc}
%\input{tables/Dirichlet_Energy_Chameleon_Dir_NegIdentity.tex} 
 % &  
%\input{tables/Dirichlet_Energy_Chameleon_Dir_PosIdentity.tex}
 % \includegraphics[width=\linewidth]{PicRand.png} 
 % \end{tabular}}
 %   \end{center}
 %   \caption{Different $\alpha$-Laplacians for the directed  Chameleon Wikipedia graph. Left: $\lW=-\id$. Right:  $\lW=\id$.}%
 %   \label{fig:Gen2}
%\end{figure}

%\begin{theorem}
%\label{thm:frequency_dominance_normal_sna %schroedinger}
%    Consider the initial value problem in \cref{eq:schroedinger_ode}. Assume that there exists at least one eigenvalue $\lambda$ of $\lW$ with non-zero imaginary part. 
%    Then, similarly to \cref{thm:frequency_dominance_normal_sna}, solutions to \cref{eq:schroedinger_ode} are either \acrshort{lfd} or \acrshort{hfd}. 
%\end{theorem}

\begin{remark}
\cref{thm:frequency_dominance_normal_sna} is stated for the analytical solutions of \cref{eq:LxW_ode} and \cref{eq:schroedinger_ode}, respectively. As noted in \cref{sec:fractional_pde},  calculating the analytical solution is infeasible in practice. % since the kronecker product give rise to huge and dense matrices in general.
However, we show in \crefrange{sec:frequency_dominance_euler_scheme}{sec:frequency_dominance_euler_scheme_schroedinger} that approximations of the solution of \cref{eq:LxW_ode} and \cref{eq:schroedinger_ode} via explicit Euler schemes satisfy the same Dirichlet energy convergence properties if the step size is sufficiently small.
\end{remark}

\begin{remark}
\cref{thm:frequency_dominance_normal_sna} can be generalized to all directed graphs with normal \acrshort{sna}, i.e.,  satisfying the condition ${\sna\sna^\top = \sna^\top \sna}$. For the complete statement, see \cref{sec:frequency_dominance_normal_sna_appendix}.
\end{remark}

\subsection{Frequency Dominance for Directed Graphs}
\label{sec:frequency_dominance_directed}
\cref{sec:frequency_dominance_normal_sna} analyzes the Dirichlet energy in graphs with normal \acrshort{sna}. However, the situation becomes significantly more complex when considering generic directed graphs. In our experiments (see \cref{fig:dirichlet_energy_convergence}), we observe that the solution to \cref{eq:LxW_ode} and \cref{eq:schroedinger_ode} does not necessarily lead to oversmoothing. On the contrary, the solution can be controlled to exhibit either \acrshort{lfd} or \acrshort{hfd} for $\alpha>0$, and  mid-frequency-dominance for $\alpha<0$ as proven for undirected graphs in \cref{{thm:frequency_dominance_normal_sna}}. We present an initial theoretical result for directed graphs, specifically in the case of $\alpha=1$.

\begin{theorem}
\label{thm:frequency_dominance_directed}
Let $\mathcal{G}$ be a directed graph with \acrshort{sna} $\sna$. Consider the initial value problem in \cref{eq:LxW_ode} with diagonal channel mixing matrix $\lW \in \mathbb{R}^{K \times K}$ and $\alpha=1$. Suppose $\lambda_1(\sna)$ is unique. For almost all initial values $\mathbf{x}_0 \in \mathbb{R}^{N \times K}$, the solution to \cref{eq:LxW_ode} is either \acrshort{hfd}  
% $
% \lambda_K(\lW)   \Re\lambda_1(\sna)  <   \lambda_1(\lW)\lambda_N(\sna)\,
% $
% and . Otherwise,  the solution is 
or \acrshort{lfd}. 
\end{theorem}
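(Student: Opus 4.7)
The plan is to exploit the diagonality of $\lW$ to decouple \cref{eq:LxW_ode} column-wise, then analyze the large-$t$ behavior of each resulting matrix exponential via a Jordan decomposition of $\sna$, in direct analogy with the proof of \cref{thm:frequency_dominance_normal_sna} for normal $\sna$.

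\textbf{Decoupling and per-channel dominance.} Writing $\lW = \diag(w_1,\ldots,w_K)$, the evolution splits into $K$ independent systems $\x_k'(t) = -w_k\sna\x_k(t)$, whose solution is $\x_k(t) = \exp(-w_k t\sna)\x_k(0)$. Fixing a Jordan decomposition $\sna = V J V^{-1}$ and expanding $\x_k(0)$ in this basis, the large-$t$ behavior of $\x_k(t)$ is governed by the Jordan block(s) of $\sna$ that maximize $-w_k\Re\lambda$: this selects $\lambda_1(\sna)$ when $w_k > 0$ and $\lambda_N(\sna)$ when $w_k < 0$. The uniqueness hypothesis, together with the fact that the real matrix $\sna$ has its non-real eigenvalues in conjugate pairs sharing a real part, forces $\lambda_1$ to be real and simple, with dominant subspace $\spn(v_1)$. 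Likewise, since $\sna$ is entry-wise non-negative, the Perron--Frobenius theorem places its spectral radius $\lambda_N$ on the real axis, with dominant subspace $\spn(v_N)$. The ``almost all $\x_0$'' clause enters here: it excludes the measure-zero set on which the coefficient of the dominant Jordan block in the expansion of $\x_k(0)$ vanishes for some channel $k$.

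\textbf{Asymptotic direction and Dirichlet-energy limit.} Set $\rho_k \coloneqq -w_k\Re\lambda_{j(k)}$ with $j(k)\in\{1,N\}$ chosen according to the sign of $w_k$, $\rho^\star \coloneqq \max_k \rho_k$, and $K^\star \coloneqq \{k : \rho_k = \rho^\star\}$, so that $\|\x(t)\|_2$ grows like $e^{\rho^\star t}$ and only columns in $K^\star$ survive normalization. If $K^\star \subseteq \{k : w_k > 0\}$, every dominant column asymptotically points into $\spn(v_1)$, and the normalized solution behaves as a rank-one matrix $v_1 u(t)^\top / \|v_1 u(t)^\top\|_2$ with $u(t)$ supported on $K^\star$. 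Plugging this rank-one limit into \cref{thm:rayleigh_quotient_directed_sna} and using $\sna v_1 = \lambda_1 v_1$ with $\lambda_1\in\mathbb{R}$ yields
\begin{equation*}
\dir\!\left(\x(t)/\|\x(t)\|_2\right) \;\longrightarrow\; \tfrac{1}{2}\bigl(1-\lambda_1(\sna)\bigr) \;=\; \tfrac{1}{2}\lambda_N(\snl),
\end{equation*}
the HFD value. The symmetric argument with $v_N$ in place of $v_1$ produces LFD when $K^\star \subseteq \{k : w_k < 0\}$.

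\textbf{Main obstacle.} The principal difficulty, absent from the normal-$\sna$ case, is the non-orthogonality of the Jordan basis: sub-dominant Jordan blocks carry polynomial-in-$t$ factors that must be shown to be absorbed by the strict spectral gap that the uniqueness of $\lambda_1$ and the Perron--Frobenius simplicity of $\lambda_N$ guarantee between the dominant eigenvalues and the rest of the spectrum. A secondary subtlety is the ``mixed'' case in which $K^\star$ intersects both $\{w_k>0\}$ and $\{w_k<0\}$, which would combine $v_1$- and $v_N$-contributions and break the rank-one reduction; because this case requires the arithmetic coincidence $w_k\lambda_1(\sna) = w_{k'}\lambda_N(\sna)$ for an opposite-sign pair of entries of $\lW$, I expect the proof to either impose this mild genericity on $\lW$ or absorb it into the almost-everywhere statement on $\x_0$.
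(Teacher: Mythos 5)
Your proposal is correct and follows essentially the same route as the paper's proof: the paper likewise passes through the Jordan decomposition of $\sna$ (via the Kronecker-product form, which for diagonal $\lW$ is exactly your column-wise decoupling), isolates the eigenvalue pair minimizing $\Re\bigl(\lambda_r(\lW)\lambda_l(\sna)\bigr)$, shows the polynomial factors from non-trivial Jordan blocks are absorbed by the strict exponential gap, and concludes via the Rayleigh-quotient lemma that the normalized solution converges to the eigenvector giving $\tfrac{1}{2}(1-\lambda_1(\sna))$ or $\tfrac{1}{2}(1-\lambda_N(\sna))$. Your observations that uniqueness forces $\lambda_1(\sna)$ to be real and that the ``mixed'' equality case $w_k\lambda_1(\sna)=w_{k'}\lambda_N(\sna)$ needs to be excluded are both sound; the paper handles the former implicitly and silently sweeps the latter into its ``otherwise LFD'' clause.
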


%The proof of \cref{thm:frequency_dominance_normal_sna directed} is already quite involved due to the nature of non-symmetric Laplacians. 
%We note that the condition on $\lambda_1(\sna)$ is satisfied in all real-world graphs we considered. %(see \cref{fig:directed_sna_spectrum}). 
The proof of \cref{thm:frequency_dominance_directed} is given in   \cref{sec:proof_frequency_dominance_directed}. Finally, we refer to \cref{sec:proof_frequency_dominance_euler_scheme_directed} for the analogous statement and proof when the solution of \cref{eq:LxW_ode} is approximated via an explicit Euler scheme.

% Now, we consider the general case $\alpha\in \mathbb{R}$, however, restrict to the case where $L$ is a normal matrix. We have

\begin{figure}
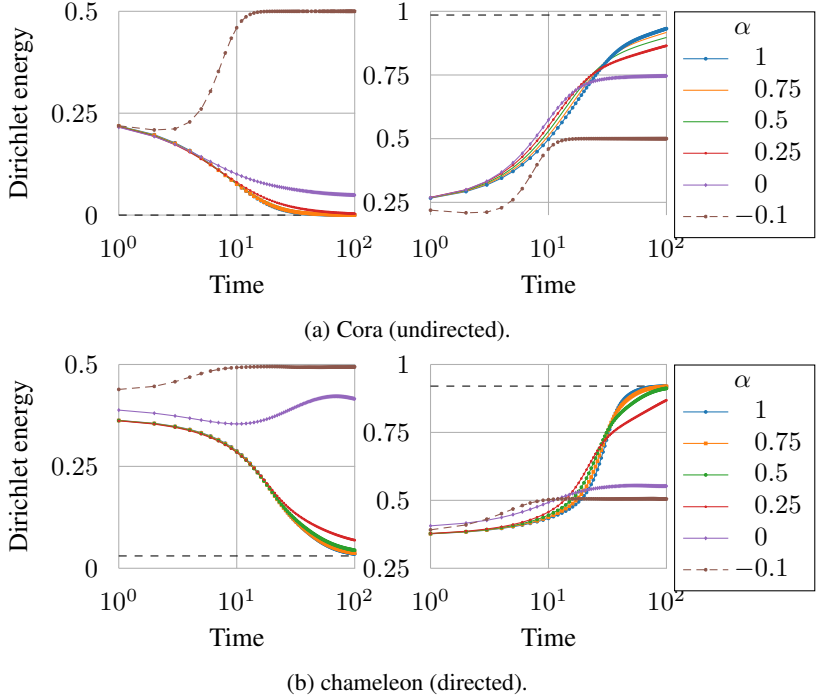

    \begin{subfigure}[b]{\linewidth}
        \centering\includegraphics{img/dirichlet_energy_Cora_random.tikz}
        \caption{Cora (undirected).}
        \label{fig:cora_energy}
    \end{subfigure}
% \end{figure}
% \begin{figure}
%     \ContinuedFloat
    \begin{subfigure}[b]{\linewidth}
        \centering\includegraphics{img/dirichlet_energy_chameleon.tikz}
        \caption{chameleon (directed).}
        \label{fig:chameleon_energy}
    \end{subfigure}
     \caption{Convergence of Dirichlet energy for the solution of equation \cref{eq:LxW_ode} using an explicit Euler scheme with a step size of $h=10^{-1}$. We consider different $\alpha$-\acrshort{fgl} in \cref{eq:LxW_ode} and choose $\mathbf{W}$ as a random diagonal matrix. In the left plot, $\mathbf{W}$ has only a negative spectrum, while in the right plot, $\mathbf{W}$ has only a positive spectrum. The black horizontal line represents the theoretical limit based on \cref{thm:frequency_dominance_normal_sna}.}
     \label{fig:dirichlet_energy_convergence}
\end{figure}
    \section{Numerical Experiments}
\label{section:experiments}

\begin{table}
    \setlength\tabcolsep{5pt}
    \centering
    \caption{Test accuracy (Film, Squirrel, Chameleon, Citeseer) and test \acrshort{auroc} (Minesweeper, Tolokers, Questions) on node classification, top three models. The thorough comparison is reported in \cref{tab:node_classification_appendix}, \cref{sec:implementation_details}: \ours{} consistently outperforms the baseline models \citetalias{kipfSemiSupervisedClassificationGraph2017} and \citetalias{digiovanniGraphNeuralNetworks2022}, and it achieves results comparable to state-of-the-art.}
    \label{tab:node_classification}
    \hfill
    \begin{subtable}{.5\linewidth}
        \centering
        \footnotesize
        \caption{Undirected graphs.}
        \label{tab:node_classification_undirected}
            
            \begin{tabular}{lccc}
\toprule 
     &
     % \textbf{Film} &
     \textbf{Squirrel} &
     \textbf{Chameleon} &
     \textbf{Citeseer} 
     % \textbf{Pubmed} & 
     % \textbf{Cora}
     \\
     
     % Homophily &
     % \textbf{0.22} & 
     % \textbf{0.22} & 
     % \textbf{0.23} &
     % \textbf{0.74} &
     % \textbf{0.80} &
     % \textbf{0.81} \\ 
     
     % \#Nodes &
     % $7,600$ &
     % $5,201$ & 
     % $2,277$ &
     % $3,327$ &
     % $18,717$ &
     % $2,708$ \\
     
     % \#Edges &
     % $26,752$ & 
     % $198,493$ & 
     % $31,421$ &
     % $4,676$ &
     % $44,327$ &
     % $5,278$ \\
     
     % \#Classes &
     % $5$ &
     % $5$ & 
     % $5$ &
     % $7$ &
     % $3$ &
     % $6$ \\ 
     \midrule
     
    %  ACM (Different eval) &
    %  $87.84 \pm 3.87$ &
    %  $88.43 \pm 3.22$ &
    %  $85.95 \pm 5.64$ &
    %  $36.89 \pm 1.18$ &
    %  $54.40 \pm 1.88$ & 
    %  $67.08 \pm 2.04$ &
    %  $78.02 \pm 1.15$ &
    %  $89.95 \pm 0.47$ &
    %  $88.05 \pm 1.57$ \\ \midrule
    1$^\text{st}$
    &
    % \makecell{
    % \citetalias{choiGREADGraphNeural2023}\\
    % $ 37.90 \pm 1.17$
    % }
    %  &
     \makecell{
     \textbf{\ours{}}\\
     $64.23\pm1.84$
     }
     &
     \makecell{
     \textbf{\ours{}}\\
     $73.60 \pm 1.55$
     }
     &
     \makecell{
     \textbf{\ours{}}\\
     $78.07 \pm 1.62$
     }
    %  &
    %  \makecell{
    %  \citetalias{choiGREADGraphNeural2023}\\
    % $90.23\pm0.55$
    %  }
    %  &
    %  \makecell{
    %  \citetalias{choiGREADGraphNeural2023}\\
    % $88.57\pm0.66$
    %  }

     \\\midrule

    2$^\text{nd}$
    &
    % \makecell{
    %  \citetalias{bodnarNeuralSheafDiffusion2022} (max) \\
    %  $37.81 \pm 1.15$
    %  }
    %  &
     \makecell{
     \citetalias{choiGREADGraphNeural2023}\\
    $59.22 \pm 1.44$
     }
     &
     \makecell{
     \citetalias{choiGREADGraphNeural2023}\\
    $71.38 \pm1.30$
     }
     &
     \makecell{
     \citetalias{peiGeomGCNGeometricGraph2019}\\
     $78.02 \pm 1.15$
     }
     % &
     % \makecell{
     % \citetalias{chenSimpleDeepGraph2020} \\
     % $90.15 \pm 0.43$
     % }
     % &
     % \makecell{
     % \citetalias{chenSimpleDeepGraph2020} \\
     % $88.37 \pm 1.25$
     % }

     \\\midrule

     3$^\text{rd}$
     &
    % \makecell{
    %  \citetalias{yanTwoSidesSame2022}\\
    %  $37.54 \pm 1.56$
    %  }
    %  &
    \makecell{
     \citetalias{digiovanniGraphNeuralNetworks2022}$_\text{NL}$\\
     $59.01 \pm 1.31$
     }
     &
     \makecell{
     \citetalias{digiovanniGraphNeuralNetworks2022}$_\text{NL}$\\
     $71.38 \pm 1.47$
     }
     &
     \makecell{
     \citetalias{choiGREADGraphNeural2023}\\
    $77.60 \pm 1.81$
     }
     %&
     % \makecell{
     % \citetalias{peiGeomGCNGeometricGraph2019}\\
     % $89.95 \pm 0.47$
     % }
     % &
     % \makecell{
     % \citetalias{yanTwoSidesSame2022} \\
     % $87.95 \pm 1.05$
     % }
     \\
     \bottomrule
\end{tabular}

    \end{subtable}\hfill
% \end{table}
% \begin{table}
%    \ContinuedFloat
%    \centering
    \begin{subtable}{.45\linewidth}
       \centering
       \footnotesize
       \caption{Heterophily-specific graphs.}
       \label{tab:node_classification_heterophilic}
       
        \begin{tabular}{ccc}
    \toprule
    %& 
    %\textbf{Roman-empire} & 
    %\textbf{Amazon-ratings} & 
    \textbf{Minesweeper} & 
    \textbf{Tolokers} & 
    \textbf{Questions}
    \\
    \midrule

     % 1$^\text{st}$
     % &
    % \makecell{
    % \citetalias{velickovicGraphAttentionNetworks2018}-sep \\ 
    % $88.75 \pm 0.41$
    % }
    % &
    %  \makecell{
    % \citetalias{hamiltonInductiveRepresentationLearning2017}\\
    % $53.63 \pm 0.39$
    % }
    %&
     \makecell{
    \citetalias{velickovicGraphAttentionNetworks2018}-sep \\ 
    $93.91 \pm 0.35$
    }
    &
     \makecell{
    \textbf\ours{}\\
    $84.17 \pm 0.58$
    }
    &
     \makecell{
     \citetalias{mauryaImprovingGraphNeural2021}\\
    $78.86 \pm 0.92$
    }

    \\\midrule

     % 2$^\text{nd}$
     % &
    % \makecell{
    % \citetalias{shiMaskedLabelPrediction2021}-sep\\
    % $87.32 \pm 0.39$
    % }
    % &
    %  \makecell{
    %  \citetalias{mauryaImprovingGraphNeural2021}\\
    % $52.74 \pm 0.83$
    % }
    % &
     \makecell{
    \citetalias{hamiltonInductiveRepresentationLearning2017}\\
    $93.51 \pm 0.57$
    }
    &
     \makecell{
     \citetalias{velickovicGraphAttentionNetworks2018}-sep \\
    $83.78 \pm 0.43$
    }
    &
    \makecell{
    \textbf\ours{}\\
    $78.39\pm 1.22$
    }

    \\\midrule
    
    % 3$^\text{rd}$
    %  &
    % \makecell{
    % \citetalias{shiMaskedLabelPrediction2021}\\
    % $86.51 \pm 0.73$
    % }
    % &
    %  \makecell{
    % \citetalias{velickovicGraphAttentionNetworks2018}-sep \\
    % $52.70 \pm 0.62$
    % }
    % &
     \makecell{
    \textbf\ours{}\\
    $92.43 \pm 0.51$
    }
    &
     \makecell{
     \citetalias{velickovicGraphAttentionNetworks2018}\\
    $83.70 \pm 0.47$
    }
     &
     \makecell{
     \citetalias{shiMaskedLabelPrediction2021}-sep\\
    $78.05 \pm 0.93$
    }
  \\
\bottomrule
\end{tabular}
    \end{subtable}\hfill
    
   \begin{subtable}{\linewidth}
       \centering
       \footnotesize
       \caption{Directed graphs.}
       \label{tab:node_classification_directed}
       
        \begin{tabular}{lccc} %cccc
    \toprule
    & \textbf{Film} &   \textbf{Squirrel}  & \textbf{Chameleon}   \\   
    \midrule
    1$^\text{st}$
    &
    \makecell{
    \textbf{\ours{}}\\
    $37.41\pm 1.06$
    } &
    \makecell{
    \citetalias{lingamSimpleTruncatedSVD2021} \\
    $74.17 \pm 1.83$
    } &
    \makecell{
    \citetalias{mauryaImprovingGraphNeural2021} \\
    $78.14 \pm 1.25$
    } 
    
    \\ \midrule

    2$^\text{nd}$
    &
    \makecell{
    \citetalias{digiovanniGraphNeuralNetworks2022}\\
    $37.11 \pm 1.08$
    } &
    \makecell{
    \textbf{\ours{}}\\
    $74.03\pm 1.58$
    } &
    \makecell{
    \textbf{\ours{}}\\
    $77.98\pm 1.05$
    } 
    
    \\ \midrule

    3$^\text{rd}$
    &
    \makecell{
    \citetalias{luanRevisitingHeterophilyGraph2022}\\
    $36.89 \pm 1.18$
    } &
    \makecell{
    \citetalias{mauryaImprovingGraphNeural2021}\\
    $73.48 \pm 2.13$
    } &
    \makecell{
    \citetalias{lingamSimpleTruncatedSVD2021}\\
    $77.48  \pm 1.50$
    } \\
    \bottomrule
%\vspace{20pt}
\end{tabular}
    \end{subtable}
    
\end{table}

This section evaluates the fractional Laplacian \acrshort{ode}s in node classification by approximating \cref{eq:LxW_ode} and \cref{eq:schroedinger_ode} with an explicit Euler scheme. This leads to the following update rules
 \begin{equation}
 \label{eq:euler_scheme}
     \mathbf{x}_{t+1} = \mathbf{x}_t - h\, \sna^\alpha \mathbf{x}_t \lW\, , \;  \mathbf{x}_{t+1} = \mathbf{x}_t +\iu\, h\, \sna^\alpha \mathbf{x}_t \lW\, ,
 \end{equation}
 for the heat and Schrödinger equation, respectively. In both cases, $\lW $, $\alpha$ and $h$ are learnable parameters, $t$ is the layer, and $\mathbf{x}_0$ is the initial nodes' feature matrix. In accordance with the results in \cref{sec:fractional_pde}, we select $\lW$ as a diagonal matrix. The initial features $\mathbf{x}_0$ in \cref{eq:euler_scheme} are encoded through a \acrshort{mlp}, and the output is decoded using a second \acrshort{mlp}. We refer to the resulting model as \emph{\ours} (fractional Laplacian \acrshort{ode}).
In \cref{sec:implementation_details}, we present details on the baseline models, the training setup, and the exact hyperparameters.%; in \cref{sec:ablation}, we report an ablation study on the effect of each component of the proposed method, such as learnable vs. fixed $\alpha$, and the choice of the \acrshort{ode}.

\paragraph{Ablation Study.} %The ablation study 
In \cref{sec:ablation}, we investigate the influence of each component (learnable exponent, \acrshort{ode} framework, directionality via the \acrshort{sna}) on the performance of \ours{}. The adjustable fractional power in the \acrshort{fgl} is a crucial component of \ours{}, as it alone outperforms the model employing the \acrshort{ode} framework with a fixed $\alpha=1$. Further, \cref{sec:ablation} includes ablation studies that demonstrate \ours{}'s capability to efficiently scale to large depths, as depicted in \cref{fig:effect_num_layers}.

\paragraph{Real-World Graphs.}
 We report results on $6$ undirected datasets consisting of both homophilic graphs, i.e., Cora \citep{mccallumAutomatingConstructionInternet2000}, Citeseer
\citep{senCollectiveClassificationNetwork2008} and Pubmed \citep{namataQuerydrivenActiveSurveying2012}, and
heterophilic graphs, i.e., Film \citep{, tangSocialInfluenceAnalysis2009}, Squirrel and Chameleon \citep{rozemberczkiMultiScaleAttributedNode2021}.
%The heterophilic graphs are directed; hence, we consider both the directed and the undirected version.
%\vspace{-5mm}
% \begin{wraptable}{r}{0.6\textwidth}
%     %\ContinuedFloat
%     %\begin{subtable}{\linewidth}
%         \centering   \label{tab:node_classification_directed}
%         \caption{Directed graphs.}  \input{tables/node_classification_directed}
%     %\end{subtable}
% \end{wraptable}
%\textbf{Real-World Directed Graphs.}
We evaluate our method on the directed and undirected versions of Squirrel, Film, and Chameleon. %In this case, we compare our approach against several state-of-the-art models. 
In all datasets, we use the standard $10$
splits from \citep{peiGeomGCNGeometricGraph2019}. The choice of the baseline models and their results are taken from \citep{digiovanniGraphNeuralNetworks2022}. %, which in turn are taken from \citep{yanTwoSidesSame2022}. 
Further, we test our method on heterophily-specific graph datasets, i.e., Roman-empire, Minesweeper, Tolokers, and Questions \citep{platonov2023a}. The splits, baseline models, and results are taken from \citep{platonov2023a}. The top three models are shown in \cref{tab:node_classification}, and the thorough comparison is reported in \cref{tab:node_classification_appendix}. Due to memory limitations, we compute only $30\%$ of singular values for Pubmed, Roman-Empire, and Questions, which serve as the best low-rank approximation of the original \acrshort{sna}.

\paragraph{Synthetic Directed Graph.}
%We consider the directed stochastic block model (DSBM) datasets from \citep{zhangMagNetNeuralNetwork2021}.   In \cref{fig:dSBM}, we vary the intra-cluster edge density $\alpha^\ast$, while in \cref{fig:dSBM}, we vary the inter-cluster directed flow $\beta^\ast$. Given the generated graph, the goal is to classify vertices by their respective clusters.
We consider the directed stochastic block model (\acrshort{dsbm}) datasets \citep{zhangMagNetNeuralNetwork2021}. The \acrshort{dsbm} divides nodes into $5$ clusters and assigns probabilities for interactions between vertices. It considers two sets of probabilities: $\{\alpha_{i,j}\}$ for undirected edge creation and $\{\beta_{i,j}\}$ for assigning edge directions, $i,j\in\{1,\dots 5\}$. The objective is to classify vertices based on their clusters. In the first experiment, $\alpha_{i,j}=\alpha^\ast$ varies, altering neighborhood information's importance. In the second experiment, $\beta_{i,j}=\beta^\ast$ varies, changing directional information.  The results are shown in \cref{fig:dSBM} and \cref{tab:dSBM}. The splits, baseline models, and results are taken from \citep{zhangMagNetNeuralNetwork2021}. 
\begin{figure}
    \includegraphics{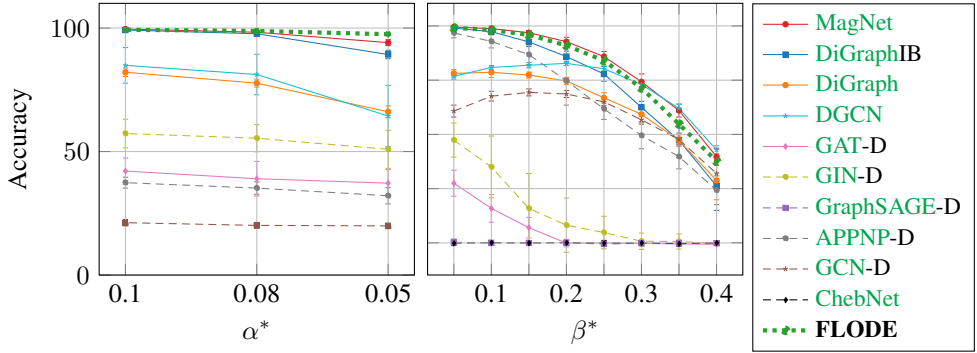}
    \caption{Experiments on directed stochastic block model. Unlike other models, \ours{}'s performances do not deteriorate as much when changing the inter-cluster edge density $\alpha^\ast$.}
    \label{fig:dSBM}
\end{figure}

\paragraph{Results.}
The experiments showcase the flexibility of \ours, as it can accommodate various types of graphs, both directed and undirected, as well as a broad range of homophily levels. While other methods, such as \citetalias{zhangMagNetNeuralNetwork2021} \citep{zhangMagNetNeuralNetwork2021}, perform similarly to our approach, they face limitations when applied to certain graph configurations. For instance, when applied to undirected graphs, \citetalias{zhangMagNetNeuralNetwork2021} reduces to \citetalias{defferrardConvolutionalNeuralNetworks2016}, making it unsuitable for heterophilic graphs. Similarly, \citetalias{digiovanniGraphNeuralNetworks2022} \citep{digiovanniGraphNeuralNetworks2022} performs well on undirected graphs but falls short on directed graphs. We note that oftentimes \ours{} learns a non-trivial exponent $\alpha \neq 1$, highlighting the advantages of \acrshort{fgl}-based \acrshort{gnn}s (see, e.g.,  \cref{tab:hyperparams}). Furthermore, as shown in \cref{tab:undirected_HFD} and \cref{sec:ablation}, our empirical results align closely with the theoretical results in \cref{sec:fractional_pde}.

    \section{Conclusion}
In this work, we introduce the concepts of Dirichlet energy and oversmoothing for directed graphs and demonstrate their relation with the \acrshort{sna}. Building upon this foundation, we define fractional graph Laplacians in the singular value domain, resulting in matrices capable of capturing long-range dependencies. To address oversmoothing in directed graphs, we propose fractional Laplacian-based graph \acrshort{ode}s, which are provably not limited to \acrshort{lfd} behavior.
We finally show the flexibility of our method to accommodate various graph structures and homophily levels in node-level tasks. 
%Our method represents a versatile and robust solution for node classification across a wide range of graph structures.
%showcase that our proposed method exhibits a high degree of flexibility, as it can accommodate various types of networks, including undirected and directed as well as homophilic and heterophilic settings. While other methods such as MagNet perform comparably to our approach, they face limitations when applied to certain graph configurations. For instance, when applied to undirected graphs, MagNet reduces to ChebNet, making it unsuitable for heterophilic graphs. Similarly, GRAFF performs well on undirected graphs but falls short on directed graphs. Our method, therefore, represents a versatile and robust solution for graph analysis across a wide range of graph structures.
\begin{figure}
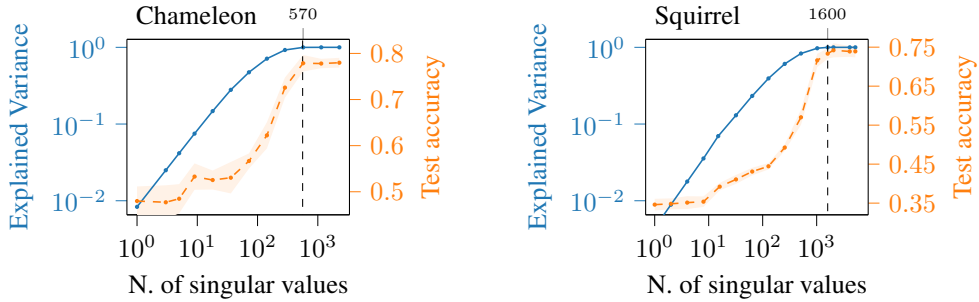

    \centering
    \hfill
    \includegraphics{img/sparsity_chameleon.tikz}
    \hfill
    \includegraphics{img/sparsity_squirrel.tikz}
    \hfill
    \caption{Effect of truncated \acrshort{svd} on test accuracy (orange) for standard directed real-world graphs. The explained variance, defined as $\nicefrac{\sum_{i=1}^k \sigma_i^2}{\sum_{j=1}^N \sigma_j^2}$, measures the variability the first $k$ singular values explain. For chameleon, the accuracy stabilized after $570$ ($25\%$) singular values, corresponding to an explained variance of $0.998$. For squirrel, after $1600$ ($31\%$) singular values, which correspond to an explained variance $0.999$, the improvement in test accuracy is only marginal.}
    \label{fig:sparsity}
\end{figure}

\paragraph{Limitations and Future Work.}
% Approximating the solutions of \cref{eq:LxW_ode} and \cref{eq:schroedinger_ode} poses a challenge. Specifically,
The computational cost of the \acrshort{svd} grows cubically in $N$, while the storage of the singular vectors grows quadratically in $N$. Both costs can be significantly reduced by computing only $k\ll N$ singular values via truncated \acrshort{svd} (\cref{fig:sparsity}), giving the best $k$-rank approximation of the \acrshort{sna}. Moreover, the \acrshort{svd} can be computed offline as a preprocessing step.
%Future work should aim to overcome this limitation, possibly by developing more efficient numerical solvers or approximation techniques.

The frequency analysis of $\alpha$-\acrshort{fgl} neural \acrshort{ode}s in directed graphs is an exciting future direction.  It would also be worthwhile to investigate the impact of choosing $\alpha \neq 1$ on the convergence speed of the Dirichlet energy. Controlling the speed %not only enables more fine-grained adjustments but also 
could facilitate the convergence of the Dirichlet energy to an \emph{optimal} value, which has been shown to exist in synthetic settings \citep{kerivenNotTooLittle2022, wuNonAsymptoticAnalysisOversmoothing2022}.  Another interesting future direction would be to analyze the dynamics when approximating the solution to the \acrshort{fgl} neural \acrshort{ode}s using alternative numerical solvers, such as adjoint methods.
%generalization capabilities of graph neural \acrshort{ode}s depending on the choice of the \acrshort{ode} itself and the solver employed. 

%The frequency analysis of \acrshort{fgl}-based Neural \acrshort{ode}s in directed graphs has only been performed for trivial fractions. Extending the frequency analysis to encompass non-trivial fractions would be an interesting avenue for future research.   Finally, it would be worthwhile to investigate the impact of choosing $\alpha \neq 1$ on the convergence speed of the Dirichlet energy. Controlling the convergence speed %not only enables more fine-grained adjustments but also  could facilitate the possibility of attaining an \emph{optimal} Dirichlet energy, which has been shown to exist \citep{kerivenNotTooLittle2022, wuNonAsymptoticAnalysisOversmoothing2022}.
    
    \section*{Acknowledgments}
S. M. acknowledges partial support by the NSF-Simons Research Collaboration on the Mathematical and Scientific Foundations of Deep Learning (MoDL) (NSF DMS 2031985) and DFG SPP 1798, KU
1446/27-2.

%R. P. acknowledges support from the Munich Center for Machine Learning (MCML).

G. K. acknowledges partial support by the DAAD programme Konrad Zuse Schools of Excellence in Artificial Intelligence, sponsored by the Federal Ministry of Education and Research. G. Kutyniok also acknowledges support from the Munich Center for Machine Learning (MCML) as well as the German Research Foundation under Grants DFG-SPP-2298, KU 1446/31-1 and KU 1446/32-1 and under Grant DFG-SFB/TR 109 and Project C09.
    
    \printbibliography
    
    \newpage
    \appendix
    \glsaddall
    \printnoidxglossary[type=\acronymtype,style=mylongblue]   % list of     symbols
    \printnoidxglossary[type=symbols,style=mylongblack]
    \newpage
    %\section{Appendix for \texorpdfstring{\cref{section:experiments}}{Section 6}: Numerical Experiments}
\section{Implementation Details}
\label{sec:implementation_details}

In this section, we give the details on the numerical results in \cref{section:experiments}.
We begin by describing the exact model.

\paragraph{Model architecture.}
Let $\mathcal{G}$ be a directed graphs and $\mathbf{x}_0\in \mathbb{R}^{N \times K}$ the node features.
Our architecture first embeds the input node features $\mathbf{x}$ via a multi-layer perceptron (\acrshort{mlp}). We then evolve the features $\mathbf{x}_0$ according to a slightly modified version of \cref{eq:schroedinger_ode}, i.e, $\mathbf{x}'(t)=-\iu\, \sna^\alpha \mathbf{x}(t) \lW$ for some time $t\in[0, T]$. In our experiments, we approximate the solution with an Explicit Euler scheme with step size $h > 0$. This leads to the following update rule
 \begin{equation*}
     \mathbf{x}_{t+1} = \mathbf{x}_t - \iu h \sna^\alpha \mathbf{x}_t \lW\, .
 \end{equation*}
The channel mixing matrix is a diagonal learnable matrix $\lW \in \mathbb{C}^{K\times K}$, and $\alpha \in \mathbb{R}$, $h\in\mathbb{C}$  are also learnable parameters. The features at the last time step $\mathbf{x}_T$ are then fed into a second \acrshort{mlp}, whose output is used as the final output. Both \acrshort{mlp}s use LeakyReLU as non-linearity and dropout \citep{srivastavaDropoutSimpleWay2014}. On the contrary, the graph layers do not use any dropout nor non-linearity. A sketch of the algorithm is reported in \nameref{code:flode}.

\begin{algorithm}
%\centering
  \caption{fLode}
  \label{code:flode}
  \Comment{$\mathbf{A}$, $\x_0$ are given.}
  \Comment{Preprocessing}
  %\Function{preprocessing(\mathbf{A})}{
        $\mathbf{D}_\text{in}$ \assign $\diag\(\mathbf{A}\mathbf{1}\)$\;
        $\mathbf{D}_\text{out}$ \assign $\diag\(\mathbf{A}\tran\mathbf{1}\)$\;
        $\sna$ \assign $\mathbf{D}_\text{in}^{-\nicefrac{1}{2}}\mathbf{A}\mathbf{D}_\text{out}^{-\nicefrac{1}{2}}$\;
        $\mathbf{U}$, $\mathbf{\Sigma}$, $\mathbf{V}\herm$  \assign \FuncCall{svd}{$\sna$}\;
    %}
    \Comment{The core of the algorithm is very simple}
    \Function{training\_step($\mathbf{x}_0$)}{
        $\mathbf{x}_0$ = \FuncCall{input\_MLP}{$\mathbf{x}_0$}\;
        \For{$t\in\{1, \dots, T\}$}{
            $\mathbf{x}_t$ \assign $\mathbf{x}_{t-1} -\iu \, h\, \mathbf{U} \mathbf{\Sigma}^\alpha \mathbf{V}\herm \mathbf{x}_{t-1} \mathbf{W} $\;
        }
        $\mathbf{x}_T$ = \FuncCall{output\_MLP}{$\mathbf{x}_{T}$}\;
        \Return{$\mathbf{x}_T$}
    }
\end{algorithm}

\paragraph{Complexity.}
The computation of the \acrshort{svd} is $\mathcal{O}(N^3)$. However, one can compute only the first $p\ll N$ singular values: this cuts down the cost to $\mathcal{O}(N^2\, p)$. The memory required to store the singular vectors is $\mathcal{O}(N^2)$, since they are not sparse in general. Each training step has a cost of $\mathcal{O}(N^2\, K)$.

\paragraph{Experimental details.}
Our model is implemented in \texttt{PyTorch} \citep{paszkePyTorchImperativeStyle2019}, using \texttt{PyTorch geometric} \citep{feyFastGraphRepresentation2019}. The computation of the \acrshort{svd} for the fractional Laplacian is implemented using the library \texttt{linalg} provided by \texttt{PyTorch}. In the case of truncated \acrshort{svd}, we use the function \texttt{randomized\_svd} provided by the library \texttt{extmath} from \texttt{sklearn}.
%via SciPy \citep{virtanenSciPyFundamentalAlgorithms2020}. 
The code and instructions to reproduce the experiments are available on \href{https://github.com/rpaolino/flode}{\texttt{GitHub}}. Hyperparameters were tuned using grid search. All experiments were run on an internal cluster with \texttt{NVIDIA GeForce RTX 2080 Ti} and \texttt{NVIDIA TITAN RTX} GPUs with 16 and 24 GB of memory, respectively.

\paragraph{Training details.}
All models were trained for $1000$ epochs using Adam \citep{kingmaAdamMethodStochastic2017} as optimizer  with a fixed learning rate. We perform early stopping if the validation metric does not increase for $200$ epochs.

\subsection{Real-World Graphs}
\paragraph{Undirected graphs}
We conducted $10$ repetitions using data splits obtained from \citep{peiGeomGCNGeometricGraph2019}. 
For each split, $48\%$ of the nodes are used for training, $32\%$ for validation and $20\%$ for testing.
In all datasets, we considered the largest connected component (\acrshort{lcc}). 
Chameleon, Squirrel, and Film are directed graphs; hence, we converted them to undirected. Cora, Citeseer, and Pubmed are already undirected graphs: to these, we added self-loops. We normalized the input node features for all graphs.

As baseline models, we considered the same models as in \citep{digiovanniGraphNeuralNetworks2022}. The results were provided by \citet{peiGeomGCNGeometricGraph2019} and include standard \acrshort{gnn}s, such as \citetalias{velickovicGraphAttentionNetworks2018} \citep{velickovicGraphAttentionNetworks2018}, \citetalias{kipfSemiSupervisedClassificationGraph2017} \citep{kipfSemiSupervisedClassificationGraph2017}, and \citetalias{hamiltonInductiveRepresentationLearning2017}
\citep{hamiltonInductiveRepresentationLearning2017}. We also included models designed to address oversmoothing and heterophilic graphs, such as \citetalias{zhaoPairNormTacklingOversmoothing2019} \citep{zhaoPairNormTacklingOversmoothing2019}, \citetalias{yanTwoSidesSame2022} \citep{yanTwoSidesSame2022}, \citetalias{peiGeomGCNGeometricGraph2019} \citep{peiGeomGCNGeometricGraph2019}, \citetalias{zhuHomophilyGraphNeural2020} \citep{zhuHomophilyGraphNeural2020}, \citetalias{chienAdaptiveUniversalGeneralized2021} \citep{chienAdaptiveUniversalGeneralized2021}, and \citetalias{bodnarNeuralSheafDiffusion2022} \citep{bodnarNeuralSheafDiffusion2022}. Furthermore, we included the graph neural ODE-based approaches, \citetalias{xhonneuxContinuousGraphNeural2020} \citep{xhonneuxContinuousGraphNeural2020} and \citetalias{chamberlainGRANDGraphNeural2021} \citep{chamberlainGRANDGraphNeural2021}, as in \citep{digiovanniGraphNeuralNetworks2022}, and the model \citetalias{digiovanniGraphNeuralNetworks2022} from \citep{digiovanniGraphNeuralNetworks2022} itself.  Finally, we included \citetalias{choiGREADGraphNeural2023} \citep{choiGREADGraphNeural2023}, \citetalias{ruschGraphCoupledOscillatorNetworks2022} \citep{ruschGraphCoupledOscillatorNetworks2022}, \citetalias{wangACMPAllenCahnMessage2022} \citep{wangACMPAllenCahnMessage2022} and \citetalias{kipfSemiSupervisedClassificationGraph2017} and \citetalias{velickovicGraphAttentionNetworks2018} equipped with \citetalias{Rong2020DropEdge} \citep{Rong2020DropEdge}.

%To ensure a fair comparison, we used the same training, validation, and test splits as in \citep{peiGeomGCNGeometricGraph2020} for all datasets.

\paragraph{Heterophily-specific Models}

For heterophily-specific datasets, we use the same models and results as in \citep{platonov2023a}.  As baseline models we considered the topology-agnostic \citetalias{heDeepResidualLearning2016} \citep{heDeepResidualLearning2016} and two graph-aware modifications: \citetalias{heDeepResidualLearning2016}+\citetalias{wuSimplifyingGraphConvolutional2019}\citep{wuSimplifyingGraphConvolutional2019} where the initial node features are multiplied by powers of the \acrshort{sna}, and  \citetalias{heDeepResidualLearning2016}+adj, where rows of the adjacency matrix are used as additional node features; \citetalias{kipfSemiSupervisedClassificationGraph2017} \citep{kipfSemiSupervisedClassificationGraph2017}, \citetalias{hamiltonInductiveRepresentationLearning2017} \citep{hamiltonInductiveRepresentationLearning2017}; \citetalias{velickovicGraphAttentionNetworks2018} \citep{velickovicGraphAttentionNetworks2018} and  \citetalias{shiMaskedLabelPrediction2021} \citep{shiMaskedLabelPrediction2021} as well as their modification \citetalias{velickovicGraphAttentionNetworks2018}-sep and \citetalias{shiMaskedLabelPrediction2021}-sep which separate ego- and neighbor embeddings; \citetalias{zhuHomophilyGraphNeural2020} \citep{zhuHomophilyGraphNeural2020}, \citetalias{zhuGraphNeuralNetworks2021} \citep{zhuGraphNeuralNetworks2021}, \citetalias{chienAdaptiveUniversalGeneralized2021} \citep{chienAdaptiveUniversalGeneralized2021}, \citetalias{mauryaImprovingGraphNeural2021} \citep{mauryaImprovingGraphNeural2021}, \citetalias{liFindingGlobalHomophily2022} \citep{liFindingGlobalHomophily2022}, \citetalias{boLowfrequencyInformationGraph2021} \citep{boLowfrequencyInformationGraph2021}, \citetalias{duGBKGNNGatedBiKernel2022} \citep{duGBKGNNGatedBiKernel2022}, and \citetalias{wangHowPowerfulAre2022} \citep{wangHowPowerfulAre2022}.

The exact hyperparameters for \ours{} are provided in \cref{tab:hyperparams}.

\begin{table}
    \centering
    \caption{Test accuracy on node classification: top three models indicated as \first{$^{\phantom{\text{d}}} 1^\text{st}$}, \second{$2^\text{nd}$}, \third{$3^\text{rd}$}.}
    \label{tab:node_classification_appendix}
    \begin{subtable}{\linewidth}
        \centering
        \caption{Undirected graphs.}
        \label{tab:node_classification_undirected_appendix}
            \scriptsize
            \begin{tabular}{l ccccccccc}
\toprule 
     &
     \textbf{Film} &
     \textbf{Squirrel} &
     \textbf{Chameleon} &
     \textbf{Citeseer} & 
     \textbf{Pubmed} & 
     \textbf{Cora} \\
     
     % Homophily &
     % \textbf{0.22} & 
     % \textbf{0.22} & 
     % \textbf{0.23} &
     % \textbf{0.74} &
     % \textbf{0.80} &
     % \textbf{0.81} \\ 
     
     % \#Nodes &
     % $7,600$ &
     % $5,201$ & 
     % $2,277$ &
     % $3,327$ &
     % $18,717$ &
     % $2,708$ \\
     
     % \#Edges &
     % $26,752$ & 
     % $198,493$ & 
     % $31,421$ &
     % $4,676$ &
     % $44,327$ &
     % $5,278$ \\
     
     % \#Classes &
     % $5$ &
     % $5$ & 
     % $5$ &
     % $7$ &
     % $3$ &
     % $6$ \\ 
     \midrule
     
    %  ACM (Different eval) &
    %  $87.84 \pm 3.87$ &
    %  $88.43 \pm 3.22$ &
    %  $85.95 \pm 5.64$ &
    %  $36.89 \pm 1.18$ &
    %  $54.40 \pm 1.88$ & 
    %  $67.08 \pm 2.04$ &
    %  $78.02 \pm 1.15$ &
    %  $89.95 \pm 0.47$ &
    %  $88.05 \pm 1.57$ \\ \midrule
     
     \makecell[l]{
        \citetalias{yanTwoSidesSame2022} 
        %\citeyearp{zhuHomophilyGraphNeural2020}
     } &
     \third{$37.54 \pm 1.56$} &
     $55.17 \pm 1.58$ & 
     \third{$71.14 \pm 1.84$} &
     $77.14 \pm 1.45$ &
     $89.15 \pm 0.37$ &
     \third{$87.95 \pm 1.05$} \\

     \makecell[l]{
     \citetalias{chienAdaptiveUniversalGeneralized2021}
     %\citeyearp{chienAdaptiveUniversalGeneralized2021}
     } &
     $34.63 \pm 1.22$ & 
     $31.61 \pm 1.24$ &
     $46.58 \pm 1.71$ &
     $77.13 \pm 1.67$ &
     $87.54 \pm 0.38$ &
     \third{$87.95 \pm 1.18$} \\ 

     \makecell[l]{
     \citetalias{boLowfrequencyInformationGraph2021}
     %\citeyearp{boLowfrequencyInformationGraph2021}
     } &
     $35.70 \pm 1.00$ &
     $36.48 \pm 1.86$ & 
     $60.11 \pm 2.15$ &
     $77.11 \pm 1.57$ &
     $89.49 \pm 0.38$ &
     {$87.87 \pm 1.20$} \\

     \makecell[l]{
     \citetalias{chenSimpleDeepGraph2020} 
     %\citeyearp{chenSimpleDeepGraph2020} 
     } &
     {$37.44 \pm 1.30$} &
     $38.47 \pm 1.58$ &
     $63.86 \pm 3.04$ &
     {$77.33 \pm 1.48$} &
     \second{$90.15 \pm 0.43$} &
     \second{$88.37 \pm 1.25$}\\
     
     \makecell[l]{
     \citetalias{peiGeomGCNGeometricGraph2019}
     %\citeyearp{yanTwoSidesSame2022}
     } &
     $31.59 \pm 1.15$ & 
     $38.15 \pm 0.92$ & 
     $60.00 \pm 2.81$ &
     \second{$78.02 \pm 1.15$} &
     \third{$89.95 \pm 0.47$} &  
     $85.35 \pm 1.57$ \\ 
     
     \makecell[l]{
     \citetalias{zhaoPairNormTacklingOversmoothing2019} 
     %\citeyearp{zhaoPairNormTacklingOversmoothing2019}
     } &
     $27.40 \pm 1.24$ & 
     $50.44 \pm 2.04$ & 
     $62.74 \pm 2.82$ &
     $73.59 \pm 1.47$ &
     $87.53 \pm 0.44$ &
     $85.79 \pm 1.01$ \\ 
     
     \makecell[l]{
     \citetalias{hamiltonInductiveRepresentationLearning2017} 
     %\citeyearp{hamiltonInductiveRepresentationLearning2017}
     } &
     $34.23 \pm 0.99$ & 
     $41.61 \pm 0.74$ & 
     $58.73 \pm 1.68$ &
     $76.04 \pm 1.30$ &
     $88.45 \pm 0.50$ &
     $86.90 \pm 1.04$\\
     
     \makecell[l]{
     \citetalias{kipfSemiSupervisedClassificationGraph2017}
     %\citeyearp{kipfSemiSupervisedClassificationGraph2017}
     } &
     $27.32 \pm 1.10$ & 
     $53.43 \pm 2.01$ &
     $64.82 \pm 2.24$ &
     $76.50 \pm 1.36$ &
     $88.42 \pm 0.50$ &
     $86.98 \pm 1.27$ \\ 
     
     \makecell[l]{
     \citetalias{velickovicGraphAttentionNetworks2018}
     %\citeyearp{velickovicGraphAttentionNetworks2018}
     }&
     $27.44 \pm 0.89$ & 
     $40.72 \pm 1.55$ &
     $60.26 \pm 2.50$ &
     $76.55 \pm 1.23$ &
     $87.30 \pm 1.10$ & 
     $86.33 \pm 0.48$ \\ 
     
     \makecell[l]{
     \textcolor{mygreen}{MLP}
     } &
     $36.53 \pm 0.70$ & 
     $28.77 \pm 1.56$ & 
     $46.21 \pm 2.99$ &
     $74.02 \pm 1.90$ &
     $75.69 \pm 2.00$ & 
     $87.16 \pm 0.37$ \\ 
    %  \midrule 
%
%         GraphSAGE (on local agg, h=512) &
%        $76.0 \pm 6.8$ &
%        $76.7 \pm 5.6$ &
%        $70.5 \pm 6.5$ &
%        $33.1 \pm 1.2$ &
%        $57.1 \pm 1.6$ &
%        $70.5 \pm 1.5$ &
%         $ \pm $ &
%         $ \pm $ &
%         $ \pm $\\
%         
%         GCN (on local agg, h=512) &
%        $59.4 \pm 4.4$ &
%        $46.5 \pm 8.3$ &
%        $55.2 \pm 7.6$ &
%        $28.5 \pm 1.1$ &
%        $53.4 \pm 1.3$ &
%        $66.9 \pm 1.7$ &         
%        $ \pm $ &
%        $ \pm $ &
%        $ \pm $ \\ \midrule
%         
     \makecell[l]{
     \citetalias{xhonneuxContinuousGraphNeural2020}
     %\citeyearp{xhonneuxContinuousGraphNeural2020} 
     } &
     $35.95 \pm 0.86$ &	
     $29.24 \pm 1.09$ &	
     $46.89 \pm 1.66$ &	
     $76.91 \pm 1.81$ &	
     $87.70 \pm 0.49$ &	
     $87.10 \pm 1.35$ \\
    
    \makecell[l]{
    \citetalias{chamberlainGRANDGraphNeural2021}
    %\citeyearp{chamberlainGRANDGraphNeural2021} 
    } &
    $35.62 \pm 1.01$ &
    $40.05 \pm 1.50$ &
    $54.67 \pm 2.54$ &
    $76.46 \pm 1.77$ &
    $89.02 \pm 0.51$ &
    $87.36 \pm 0.96$ \\
     
     \makecell[l]{
     \citetalias{bodnarNeuralSheafDiffusion2022} (max) 
     %\citeyearp{bodnarNeuralSheafDiffusion2022} 
     } & 
     \second{$37.81 \pm 1.15$}	&
     {$56.34 \pm 1.32$}	&
     $68.04 \pm 1.58$	&
     $76.70 \pm 1.57$	&
     $89.49 \pm 0.40$	&
     $86.90 \pm 1.13$ \\ 
     
   % GRAFF &  
   %  $ 37.11 \pm 1.08 $ &
   % %  $ %\third{54.52 \pm 1.37} $ & 
   %  $ \second{58.72 \pm 0.84} $ & %with batch norm but no non-lin sweep dim128 "squirrel_best_final"
   %  $ \third{71.08 \pm 1.75} $ & %original neurips submission
   %  $ \third{77.30 \pm 1.85} $ &
   %  $ \second{90.04 \pm 0.41} $ &
   %  $ \second{88.01 \pm 1.03}  $\\

     \makecell[l]{
\citetalias{digiovanniGraphNeuralNetworks2022}$_\text{NL}$
     } & 
     $ 35.96 \pm 0.95 $ &
     \third{$59.01 \pm 1.31$} & 
    \second{$71.38 \pm 1.47$} & 
     $ 76.81 \pm 1.12 $ &
     {$89.81 \pm 0.50$} &
     $ 87.81 \pm 1.13$ 
     \\ 
     \midrule

    \makecell[l]{
    \citetalias{choiGREADGraphNeural2023}
     } & 
     \first{$ 37.90 \pm 1.17$} &
     \second{$59.22 \pm 1.44$} & 
    \second{$71.38 \pm1.30$} & 
     \third{$77.60 \pm 1.81$} &
     \first{$90.23\pm0.55$} &
     \first{$88.57\pm0.66$} 
     \\ 
     \makecell[l]{\citetalias{ruschGraphCoupledOscillatorNetworks2022}
     } &  	 	 	 	 	
     $35.58\pm1.24$ &
     {$35.51\pm1.40$} & 
    {$49.63\pm1.89$} & 
     $76.36\pm2.67$ &
     {$88.01\pm0.47$} &
     $87.22\pm1.48$ 
     \\
     \makecell[l]{\citetalias{wangACMPAllenCahnMessage2022}
     } &  	 	 	 	 		 	 	 	
     $34.93\pm1.26$ &
     {$40.05\pm1.53$} & 
    {$57.59\pm2.09$} & 
     $76.71\pm1.77$ &
     {$87.79\pm0.47$} &
     $87.71\pm0.95$ 
     \\ 
     \makecell[l]{\citetalias{kipfSemiSupervisedClassificationGraph2017}+\citetalias{Rong2020DropEdge}
     } &  	 	 	 	 		 	 	 	
     $29.93\pm0.80$ &
     {$41.30\pm1.77$} & 
    {$59.06\pm2.04$} & 
     $76.57\pm2.68$ &
     {$86.97\pm0.42$} &
     $83.54\pm1.06$ 
     \\ 
     \makecell[l]{\citetalias{velickovicGraphAttentionNetworks2018}+\citetalias{Rong2020DropEdge}
     } &  	 	 	 	 		 	 	 	
     $28.95\pm0.76$ &
     {$41.27\pm1.76$} & 
    {$58.95\pm2.13$} & 
     $76.13\pm2.20$ &
     {$86.91\pm0.45$} &
     $83.54\pm1.06$

     \\ 
     \midrule

    \makecell[l]{\textbf{\ours{}}}&
     $37.16\pm1.42$ &	
     \first{$64.23\pm1.84$} &	
    \first{$73.60 \pm 1.55$} &	
    \first{$78.07 \pm 1.62$} &
    $89.02\pm 0.38$
    &	
     $86.44 \pm 1.17$ \\
    \bottomrule
\end{tabular}

    \end{subtable}
% \end{table}
% \begin{table}
%    \ContinuedFloat
%    \centering
   \begin{subtable}{\linewidth}
       \centering
       \caption{Directed graphs.}
       \label{tab:node_classification_directed_appendix}
       \scriptsize
        \begin{tabular}{lcccc} %cccc
    \toprule
    & \textbf{Film} &   \textbf{Squirrel}  & \textbf{Chameleon}   \\   
    \midrule
    \makecell[l]{
    \citetalias{luanRevisitingHeterophilyGraph2022}
    %\citeyearp{luanRevisitingHeterophilyGraph2022}
    } & \third{$36.89 \pm 1.18$} &    $54.4 \pm 1.88$   & $67.08 \pm 2.04$  \\
    \makecell[l]{
    \citetalias{lingamSimpleTruncatedSVD2021} 
    %\citeyearp{lingamSimpleTruncatedSVD2021}
    } & $34.59 \pm 1.32$ &  \first{$74.17 \pm 1.83$} & \third{$77.48  \pm 1.50$}   \\
    \makecell[l]{
    \citetalias{mauryaImprovingGraphNeural2021}
    %\citeyearp{mauryaImprovingGraphNeural2021}
    } & $35.67 \pm 0.69 $ & \third{$73.48 \pm 2.13$} & \first{$78.14 \pm 1.25$}  \\
    \makecell[l]{
    \citetalias{digiovanniGraphNeuralNetworks2022}
    %\citeyearp{digiovanniGraphNeuralNetworks2022}
    } & \second{$37.11 \pm 1.08$} & $58.72 \pm 0.84$ & $71.08 \pm 1.75 $  \\
    \midrule
    \makecell[l]{\textbf{\ours{}}} & \first{$37.41\pm 1.06$} & \second{$74.03\pm 1.58$} &  \second{$77.98\pm 1.05$}  \\
    \bottomrule
%\vspace{20pt}
\end{tabular}
    \end{subtable}
    \begin{subtable}{\linewidth}
       \centering
       \caption{Heterophily-specific graphs. For Minesweeper, Tolokers and Questions the evaluation metric is the \acrshort{auroc}.}
       \label{tab:node_classification_heterophilic_appendix}
       \scriptsize
        \begin{tabular}{lccccc}
    \toprule
    & \textbf{Roman-empire} & 
    %\textbf{Amazon-ratings} & 
    \textbf{Minesweeper} & \textbf{Tolokers} & \textbf{Questions}
    \\
    \midrule
\citetalias{heDeepResidualLearning2016} & $65.88 \pm 0.38$ & 
%$45.90 \pm 0.52$ & 
$50.89 \pm 1.39$ & $72.95 \pm 1.06$ & $70.34 \pm 0.76$ \\
\citetalias{heDeepResidualLearning2016}+\citetalias{wuSimplifyingGraphConvolutional2019} & $73.90 \pm 0.51$ & 
%$50.66 \pm 0.48$ & 
$70.88 \pm 0.90$ & $80.70 \pm 0.97$ & $75.81 \pm 0.96$ \\
\citetalias{heDeepResidualLearning2016}+adj & $52.25 \pm 0.40$ & 
%$51.83 \pm 0.57$ & 
$50.42 \pm 0.83$ & $78.78 \pm 1.11$ & $75.77 \pm 1.24$ \\

\midrule

\citetalias{kipfSemiSupervisedClassificationGraph2017} & $73.69 \pm 0.74$ & 
%$48.70 \pm 0.63$ & 
$89.75 \pm 0.52$ & $83.64 \pm 0.67$ & $76.09 \pm 1.27$ \\
\citetalias{hamiltonInductiveRepresentationLearning2017} & $85.74 \pm 0.67$ & 
%\first{$53.63 \pm 0.39$} & 
\second{$93.51 \pm 0.57$} & $82.43 \pm 0.44$ & $76.44 \pm 0.62$ \\
\citetalias{velickovicGraphAttentionNetworks2018} & $80.87 \pm 0.30$ & 
%$49.09 \pm 0.63$ & 
$92.01 \pm 0.68$ & \third{$83.70 \pm 0.47$} & $77.43 \pm 1.20$ \\
\citetalias{velickovicGraphAttentionNetworks2018}-sep & \first{$88.75 \pm 0.41$} & %\third{$52.70 \pm 0.62$} & 
\first{$93.91 \pm 0.35$} & \second{$83.78 \pm 0.43$} & $76.79 \pm 0.71$ \\
\citetalias{shiMaskedLabelPrediction2021} & \third{$86.51 \pm 0.73$} & 
%$51.17 \pm 0.66$ & 
$91.85 \pm 0.76$ & $83.23 \pm 0.64$ & {$77.95 \pm 0.68$} \\
\citetalias{shiMaskedLabelPrediction2021}-sep & \second{$87.32 \pm 0.39$} & 
%$52.18 \pm 0.80$ & 
$92.29 \pm 0.47$ & $82.52 \pm 0.92$ & \third{$78.05 \pm 0.93$} \\

\midrule

\citetalias{boLowfrequencyInformationGraph2021}
 & $60.11 \pm 0.52$ & 
 %$36.47 \pm 0.23$  &
 $89.71 \pm 0.31$ &  $73.35 \pm 1.01$ & $63.59 \pm 1.46$ \\
\citetalias{zhuGraphNeuralNetworks2021} &$63.96 \pm 0.62$ & 
%$39.79 \pm 0.77$ & 
$52.03 \pm 5.46$ & $73.36 \pm 1.01$ & $65.96 \pm 1.95$  \\
\citetalias{zhuHomophilyGraphNeural2020} & $64.85 \pm 0.27$ & 
%$44.88 \pm 0.34$ & 
$86.24 \pm 0.61$ & $72.94 \pm 0.97$ & $55.48 \pm 0.91$ \\
\citetalias{mauryaImprovingGraphNeural2021} & $79.92 \pm 0.56$ & 
%\second{$52.74 \pm 0.83$} & 
$90.08 \pm 0.70$ & $82.76 \pm 0.61$ & \first{$78.86 \pm 0.92$} \\
\citetalias{liFindingGlobalHomophily2022} & $59.63 \pm 0.69$ & 
%$36.89 \pm 0.14$ & 
$51.08 \pm 1.23$ & $73.39 \pm 1.17$ & $65.74 \pm 1.19$ \\
\citetalias{boLowfrequencyInformationGraph2021} & $65.22 \pm 0.56$ &
% $44.12 \pm 0.30$ & 
$88.17 \pm 0.73$ & $77.75 \pm 1.05$ & $77.24 \pm 1.26$ \\
\citetalias{duGBKGNNGatedBiKernel2022} & $74.57 \pm 0.47$ & 
%$45.98 \pm 0.71$ & 
$90.85 \pm 0.58$ & $81.01 \pm 0.67$ & $74.47 \pm 0.86$ \\
\citetalias{wangHowPowerfulAre2022} & $71.14 \pm 0.42$ & 
%$43.55 \pm 0.48$ & 
$89.66 \pm 0.40$ & $68.66 \pm 0.65$ & $73.88 \pm 1.16$ \\
\midrule
\textbf\ours{} & $74.97 \pm 0.53$ &
%& 
\third{$92.43 \pm 0.51$} & \first{$84.17 \pm 0.58$} & \second{$78.39\pm 1.22$}  \\
\bottomrule
\end{tabular}
    \end{subtable}
\end{table}

\begin{table}[!htbp]
\setcounter{subfigure}{0}
\centering
\caption{Selected hyperparameters, learned exponent, step size, and Dirichlet energy in the last layer for real-world datasets.}
\centering
\begin{subtable}{\linewidth}
\scriptsize 
    \centering
    \caption{Undirected.}
    \label{tab:hyperparams_real_world}
    \begin{tabular}{lccccccccc}
    \toprule
      & \multicolumn{6}{c}{\textbf{Dataset}}\\
     \cmidrule{2-7}
      &\textbf{Film} & \textbf{Squirrel} & \textbf{Chameleon} & \textbf{Citeseer} & \textbf{Pubmed} & \textbf{Cora}\\
     \midrule
      learning rate & $10^{-3}$ & $2.5\cdot 10^{-3}$ & $5\cdot10^{-3}$ & $10^{-2}$ & $10^{-2}$ & $10^{-2}$\\
      weight decay & $5\cdot10^{-4}$ & $5\cdot10^{-4}$ & $10^{-3}$ & $ 5\cdot 10^{-3}$ & $10^{-3}$ & $5\cdot10^{-3}$\\
      hidden channels & $256$ & $64$ & $64$ & $64 $ & $64$ & $64$ \\
      num. layers & $1$ & $6$ & $4$ & $2$ & $3$ & $2$ \\
      encoder layers & $3$ & $1$ & $1$ & $1$ & $3$ & $1$ \\
      decoder layers & $2$ & $2$ & $2$ & $1$ & $1$ & $2$ \\
      input dropout & $0$ & $1.5\cdot 10^{-1}$ & $0$ & $0$ & $5\cdot 10^{-2}$ & $0$\\
      decoder dropout & $10^{-1}$ & $10^{-1}$ & $0$ & $0$ & $10^{-1}$ & $0$  \\
     \midrule
      exponent & $1.001\pm0.003$& $0.17\pm0.03$& $0.35\pm0.15 $ & $ 0.92\pm0.03 $ & $0.82\pm0.07$ &$0.90\pm0.02 $\\
      step size & $0.991\pm0.002$ & $1.08\pm0.01$ & $1.22 \pm 0.03$ & $ 1.04\pm 0.02$ & $1.12\pm0.02$&$1.06\pm0.01$\\
      Dirichlet energy & $0.246\pm0.006$ & $0.40\pm0.02$ & $0.13\pm0.03 $ & $0.021\pm0.001 $ & $0.015\pm0.001$ & $0.0227\pm0.0006$\\
    \bottomrule
    \end{tabular}
\end{subtable}
% \end{table}
% \begin{table}[!htbp]
% \ContinuedFloat
\centering
\begin{subtable}{\linewidth}
\scriptsize
    \centering
    \caption{Directed.}
    \begin{tabular}{lcccccc}
    \toprule
      & \multicolumn{3}{c}{\textbf{Dataset}}\\
     \cmidrule{2-4}
      &\textbf{Film} & \textbf{Squirrel} & \textbf{Chameleon}\\
     \midrule
      learning rate  & $10^{-3} $ & $ 2.5\cdot10^{-3}$ & $10^{-2}$\\
      weight decay  & $5\cdot 10^{-4}$ & $5\cdot 10^{-4}$ & $10^{-3}$\\
      hidden channels & $256$ & $64$ & $64$ \\
      num. layers  & $1$ & $6$ & $5$\\
      encoder layers & $3$ & $1$ & $1$ \\
      decoder layers & $2$ & $2$ & $2$ \\
      input dropout & $0$ & $10^{-1}$ & $0$ \\
      decoder dropout & $0.1$ & $10^{-1}$ & $0$ \\
     \midrule
      exponent & $1.001\pm0.005$& $0.28\pm0.06 $ & $0.30\pm0.11$\\
      step size & $0.990\pm0.002$ & $1.22\pm0.02 $ & $1.22\pm0.05$\\
      Dirichlet energy & $0.316\pm0.005$ & $0.38\pm 0.02$ & $0.27\pm0.04$ \\
    \bottomrule
    \end{tabular}
\end{subtable}
\begin{subtable}{\linewidth}
\scriptsize
    \centering
    \caption{Heterophily-specific graphs.}
    \begin{tabular}{lcccccc}
    \toprule
      & \multicolumn{4}{c}{\textbf{Dataset}}\\
     \cmidrule{2-5}
      & \textbf{Roman-empire}&\textbf{Minesweeper} & \textbf{Tolokers} & \textbf{Questions} \\
     \midrule
      learning rate& $10^{-3}$  & $10^{-3} $ & $ 10^{-3}$ & $10^{-2}$\\
      weight decay& $0$  & $0$ & $0$ & $5 \cdot 10^{-4}$\\
      hidden channels& $512$ & $512$ & $512$  & $128$\\
      num. layers& $4$  & $4$ & $4$ & $5$ \\
      encoder layers& $2$  & $2$ & $1$ & $2$ \\
      decoder layers& $2$ & $2$ & $2$ & $2$ \\
      input dropout& $0$ & $0$ & $0$ & $0$    \\
      decoder dropout& $0$ & $0$ & $0$& $0$    \\
     \midrule
      exponent & $ 0.689\pm 0.038 $& $0.749\pm0.017$& $ 1.053\pm 0.041$ & $1.090 \pm 0.046$   \\
      step size & $0.933 \pm 0.015$ & $0.984\pm0.004$ & $0.993\pm 0.009$ & $0.789\pm 0.062$  \\
      Dirichlet energy& $0.059 \pm 0.003$ & $0.173\pm0.019$ & $0.155\pm 0.013$ & $0.092\pm 0.039$   \\
    \bottomrule
    \end{tabular}
\end{subtable}
\label{tab:hyperparams}
\end{table}

\subsection{Synthetic Directed Graphs}
\label{sec:dSBM_appendix}
The \href{https://github.com/matthew-hirn/magnet}{dataset and code} are taken from \citep{zhangMagNetNeuralNetwork2021}. As baseline models, we considered the ones in \citep{zhangMagNetNeuralNetwork2021} for which we report the corresponding results. The baseline models include standard \acrshort{gnn}s, such as \citetalias{defferrardConvolutionalNeuralNetworks2016} \citep{defferrardConvolutionalNeuralNetworks2016},  \citetalias{kipfSemiSupervisedClassificationGraph2017} \citep{kipfSemiSupervisedClassificationGraph2017}, \citetalias{hamiltonInductiveRepresentationLearning2017} \citep{hamiltonInductiveRepresentationLearning2017}, \citetalias{gasteigerPredictThenPropagate2018} \citep{gasteigerPredictThenPropagate2018}, \citetalias{xuHowPowerfulAre2018} \citep{xuHowPowerfulAre2018}, \citetalias{velickovicGraphAttentionNetworks2018} \citep{velickovicGraphAttentionNetworks2018}, but also models specifically designed for directed graphs, such as \citetalias{tongDirectedGraphConvolutional2020} \citep{tongDirectedGraphConvolutional2020},  \citetalias{tongDigraphInceptionConvolutional2020} and \citetalias{tongDigraphInceptionConvolutional2020}IB \citep{tongDigraphInceptionConvolutional2020},  \citetalias{zhangMagNetNeuralNetwork2021} \citep{zhangMagNetNeuralNetwork2021}). 
%We provide more on the considered datasets and the training setup.

\paragraph{The \acrshort{dsbm} dataset.}
The  directed stochastic block model (\acrshort{dsbm}) is described in detail in \citep[Section 5.1.1]{zhangMagNetNeuralNetwork2021}. To be self-contained, we include a short explanation.

The \acrshort{dsbm} model is defined as follows. There are $N$ vertices, which are divided into $n_c$ clusters $(C_1, C_2, ... C_{n_c})$, each having an equal number of vertices. An interaction is defined between any two distinct vertices, $u$ and $v$, based on two sets of probabilities: $\{\alpha_{i,j}\}_{i,j=1}^{n_c}$ and $\{\beta_{i,j}\}_{i,j=1}^{n_c}$.

The set of probabilities $\{\alpha_{i,j}\}$ is used to create an undirected edge between any two vertices $u$ and $v$, where $u$ belongs to cluster $C_i$ and $v$ belongs to cluster $C_j$. The key property of this probability set is that $\alpha_{i,j} = \alpha_{j,i}$, which means the chance of forming an edge between two clusters is the same in either direction.

The set of probabilities  $\{\beta_{i,j}\}$ is used to assign a direction to the undirected edges.  
For all $i,j\in\{1, \ldots,n_c\}$, we assume that $\beta_{i,j} + \beta_{j,i}=1$ holds. 
Then, to the undirected edge $(u,v)$ is assigned the direction from $u$ to $v$ with probability  $\beta_{i,j}$ if $u$ belongs to cluster $C_i$ and $v$ belongs to cluster $C_j$, and the direction from $v$ to $u$ with probability $\beta_{j, i}$.

The primary objective here is to classify the vertices based on their respective clusters.

There are several scenarios designed to test different aspects of the baseline models and our model. In the experiments, the total number of nodes is fixed at $N=2500$ and the number of clusters is fixed at $n_c=5$. In all experiments, the training set contains $20$ nodes per cluster, $500$ nodes for validation, and the rest for testing. The results are averaged over 5 different seeds and splits.

Following \citet{zhangMagNetNeuralNetwork2021}, we train our model in both experiments for $3000$ epochs and use early-stopping if the validation accuracy does not increase for $500$ epochs. We select the best model based on the validation accuracy after sweeping over a few hyperparameters. We give exact numerical values for the experiments with the standard error in \cref{tab:class_alpha} and refer to \cref{tab:hyperparams_dsbm_1} for the chosen hyperparameters.

\paragraph{\acrshort{dsbm} with varying edge density.}
In the first experiment, the model is evaluated based on its performance on the \acrshort{dsbm} with varying $\alpha_{i,j}=\alpha^\ast$, $\alpha^\ast\in\{0.1, 0.08, 0.05\}$ for $i \neq j$, which essentially changes the density of edges between different clusters. The other probabilities are fixed at $\alpha_{i,i}=0.5$,  $\beta_{i,i} = 0.5$ and $\beta_{i,j} = 0.05$ for $i>j$. The results are shown in \cref{fig:dSBM} with exact numerical values in \cref{tab:class_alpha}. 

\paragraph{\acrshort{dsbm} with varying net flow.}
In the other scenario, the model is tested on how it performs when the net flow from one cluster to another varies. This is achieved by keeping $\alpha_{i,j}=0.1$ constant for all $i$ and $j$, and allowing $\beta_{i,j}$ to vary from $0.05$ to $0.4$. The other probabilities are fixed at $\alpha_{i,i}=0.5$ and  $\beta_{i,i} = 0.5$. The results are shown in \cref{fig:dSBM} with exact numerical values in \cref{tab:class_beta}. 

\begin{table}%[!tbph]
\setlength\tabcolsep{1.5pt}
\caption{Node classification accuracy of ordered \acrshort{dsbm} graphs: top three models as \first{$1^\text{st}$}, \second{$2^\text{nd}$} and \third{$3^\text{rd}$}.}
\label{tab:dSBM}
\begin{subtable}{\linewidth}
\scriptsize
\centering
    \caption{Varying edge density.}
    \begin{tabular}{l c c c} 
        \toprule
        & \multicolumn{3}{c}{$\alpha^{\ast} $}\\
        \cmidrule{2-4}
          & $0.1$ & $0.08$ & $0.05$\\ \midrule
        \citetalias{defferrardConvolutionalNeuralNetworks2016} & $19.9\pm0.6$ & $20.0\pm0.7$ & $20.0\pm0.7$\\
        \citetalias{kipfSemiSupervisedClassificationGraph2017}-D & $68.9\pm2.1$ & $67.6\pm2.7$ & $58.5\pm2.0$\\
        \midrule
        \citetalias{gasteigerPredictThenPropagate2018}-D & $97.7\pm1.7$ & $95.9\pm2.2$ & \third{$90.3\pm2.4$}\\
        \citetalias{hamiltonInductiveRepresentationLearning2017}-D & $20.1\pm1.1$ & $19.9\pm0.8$ & $19.9\pm1.0$\\
        \citetalias{xuHowPowerfulAre2018}-D & $57.3\pm5.8$ & $55.4\pm5.5$ & $50.9\pm7.7$\\
        \citetalias{velickovicGraphAttentionNetworks2018}-D & $42.1\pm5.3$ & $39.0\pm7.0$ & $37.2\pm5.5$\\
        \midrule
        \citetalias{tongDirectedGraphConvolutional2020} & $84.9\pm7.2$ & $81.2\pm8.2$ & $64.4\pm12.4$\\
        \citetalias{tongDigraphInceptionConvolutional2020} & $82.1\pm1.7$ & $77.7\pm1.6$ & $66.1\pm2.4$\\
        \citetalias{tongDigraphInceptionConvolutional2020}IB & \third{$99.2\pm0.5$} & \third{$97.7\pm0.7$} & $89.3\pm1.7$\\
        \citetalias{zhangMagNetNeuralNetwork2021} & \first{$99.6\pm0.2$} & \second{$98.3\pm0.8$} & \second{$94.1\pm1.2$}\\
        \midrule
        \textbf{\ours{}} & \second{$99.3\pm0.1$} & \first{$98.8 \pm0.1$} & \first{$97.5 \pm 0.1$} \\
        \bottomrule
    \end{tabular}
    \label{tab:class_alpha}
\end{subtable}
% \end{table}
% \begin{table}[!htbp]
% \ContinuedFloat
\centering
\begin{subtable}{\linewidth}
\scriptsize
\centering
    \caption{Varying net flow.}
    \begin{tabular}{l c c c c c c c c} 
        \toprule
        & \multicolumn{8}{c}{$\beta^{\ast}$}\\
        \cmidrule{2-9}
         & $0.05$ & $0.10$ & $0.15$ & $0.20$ & $0.25$ & $0.30$ & $0.35$ & $0.40$\\
        \midrule
        \citetalias{defferrardConvolutionalNeuralNetworks2016} & $19.9\pm 0.7$ & $20.1\pm0.6$ & $20.0\pm0.6$ & $20.1\pm0.8$ & $19.9\pm0.9$ & $20.0\pm0.5$ & $19.7\pm0.9$ & $20.0\pm0.5$\\
        \citetalias{kipfSemiSupervisedClassificationGraph2017}-D & $68.6\pm2.2$ & $74.1\pm1.8$ & $75.5\pm1.3$ & $74.9\pm1.3$ & $72.0\pm1.4$ & $65.4\pm1.6$ & $58.1\pm2.4$ & $45.6\pm4.7$\\
        \midrule
        \citetalias{gasteigerPredictThenPropagate2018}-D & $97.4\pm1.8$ & $94.3\pm2.4$ & $89.4\pm3.6$ & $79.8\pm9.0$ & $69.4\pm3.9$ & $59.6\pm4.9$ & $51.8\pm4.5$ & $39.4\pm5.3$\\
        \citetalias{hamiltonInductiveRepresentationLearning2017}-D & $20.2\pm1.2$ & $20.0\pm1.0$ & $20.0\pm0.8$ & $20.0\pm0.7$ & $19.6\pm0.9$ & $19.8\pm0.7$ & $19.9\pm0.9$ & $19.9\pm0.8$\\
        \citetalias{xuHowPowerfulAre2018}-D & $57.9\pm6.3$ & $48.0\pm11.4$ & $32.7\pm12.9$ & $26.5\pm10.0$ & $23.8\pm6.0$ & $20.6\pm3.0$ & $20.5\pm2.8$ & $19.8\pm0.5$\\
        \citetalias{velickovicGraphAttentionNetworks2018}-D & $42.0\pm4.8$ & $32.7\pm5.1$ & $25.6\pm3.8$ & $19.9\pm1.4$ & $20.0\pm1.0$ & $19.8\pm0.8$ & $19.6\pm0.2$ & $19.5\pm0.2$\\
        \midrule
        \citetalias{tongDirectedGraphConvolutional2020} & $81.4\pm1.1$ & $84.7\pm0.7$ & $85.5\pm1.0$ & $86.2\pm0.8$ & \third{$84.2\pm1.1$} & \second{$78.4\pm1.3$} & \first{$69.6\pm1.5$} & \first{$54.3\pm1.5$}\\
        \citetalias{tongDigraphInceptionConvolutional2020} & $82.5\pm1.4$ & $82.9\pm1.9$ & $81.9\pm1.1$ & $79.7\pm1.3$ & $73.5\pm1.9$ & $67.4\pm2.8$ & $57.8\pm1.6$ & $43.0\pm7.1$\\
        \citetalias{tongDigraphInceptionConvolutional2020}IB & \third{$99.2\pm0.4$} & \third{$97.9\pm0.6$} & \third{$94.1\pm1.7$} & \third{$88.7\pm2.0$} & $82.3\pm2.7$ & $70.0\pm2.2$ & $57.8\pm6.4$ & $41.0\pm9.0$\\
        \citetalias{zhangMagNetNeuralNetwork2021} & \first{$99.6\pm0.2$} & \first{$99.0\pm1.0$} & \first{$97.5\pm0.8$} & \first{$94.2\pm1.6$} & \first{$88.7\pm1.9$} & \first{$79.4\pm2.9$} & \second{$68.8\pm2.4$} & \second{$51.8\pm3.1$}
        %\cmidrule{2-9}
        \\
        \midrule
        \textbf{\ours{}} & \second{$99.3\pm0.1$} &  \second{$98.5\pm0.1$} & \second{$96.7
        \pm0.2$}  &  \second{$92.8\pm0.1$} & \second{$87.2\pm0.3$} & \third{$77.1\pm0.5$} & \third{$63.8\pm0.3$} & \third{$50.1\pm0.5$}\\
        \bottomrule
    \end{tabular}
    \label{tab:class_beta}
\end{subtable}
\end{table}
\begin{table}[!htbp]
\centering
\caption{Selected hyperparameters for \acrshort{dsbm} dataset.}
\begin{subtable}{\linewidth}
\scriptsize
    \centering
    \caption{Varying edge density.}
    \begin{tabular}{l ccc}
    \toprule
    & \multicolumn{3}{c}{$\alpha^\ast$}\\
    \cmidrule{2-4}
    & $0.1$ & $0.08$ & $0.05$\\
    \midrule
    learning rate & $5\cdot10^{-3}$ & $5\cdot10^{-3}$ & $5\cdot10^{-3}$ \\
    decay &$1\cdot10^{-3}$&$1\cdot10^{-3}$ & $5\cdot 10^{-4}$\\
    input dropout &$1\cdot10^{-1}$& $2\cdot10^{-1}$ & $1\cdot10^{-1}$    \\ 
    decoder dropout & $1\cdot10^{-1}$ & $5\cdot10^{-2}$ & $1\cdot10^{-1}$\\
    hidden channels & $256$ & $256$ &$256$\\
    \bottomrule
    \end{tabular}
\end{subtable}
\end{table}
\begin{table}[!htbp]
\ContinuedFloat
\centering
\begin{subtable}{\linewidth}
\scriptsize
    \centering
    \caption{Varying net flow.}
    \begin{tabular}{l cccccccc}
    \toprule
     & \multicolumn{8}{c}{$\beta^\ast$}\\
    \cmidrule{2-8}
    & $0.05$ & $0.1$ & $0.15$ & $0.2$ & $0.25$ & $0.3$ & $0.35$ & $0.4$ \\
    \midrule
    learning rate & $5\cdot10^{-3}$ & $5\cdot10^{-3}$  & $1\cdot10^{-3}$ & $1\cdot10^{-3}$ & $1\cdot10^{-3}$ & $1\cdot10^{-3}$  & $1\cdot10^{-3}$  & $1\cdot10^{-3}$   \\
    decay & $1\cdot10^{-3}$ & $1\cdot10^{-3}$  & $5\cdot10^{-4}$  & $1\cdot10^{-3}$ & $1\cdot10^{-3}$ & $1\cdot10^{-3}$ & $5\cdot10^{-4}$ & $1\cdot10^{-3}$   \\
    input dropout & $1\cdot10^{-1}$ & $1\cdot10^{-1}$ & $2\cdot10^{-1}$  & $1\cdot10^{-1}$& $1\cdot10^{-1}$& $2\cdot10^{-1}$ & $5\cdot10^{-2}$  & $2\cdot10^{-1}$\\ 
    decoder dropout & $1\cdot10^{-1}$ & $1\cdot10^{-1}$ & $5\cdot10^{-2}$  & $5\cdot10^{-2}$  & $5\cdot10^{-2}$   & $1\cdot10^{-1}$  & $2\cdot10^{-1}$  & $1\cdot10^{-1}$ \\
    hidden channels & $256$ & $256$ &$256$& $256$ & $256$ &$256$& $256$ & $256$\\
    \bottomrule
    \end{tabular}
\end{subtable}
    \label{tab:hyperparams_dsbm_1}
\end{table}

\subsection{Ablation Study}
\label{sec:ablation}
We perform an ablation study on Chameleon and Squirrel (directed, heterophilic), and Citeseer (undirected, homophilic).
For this, we sweep over different model options using the same hyperparameters via grid search. The test accuracy corresponding to the hyperparameters that yielded maximum validation accuracy is reported in \cref{tab:ablation}.

%Whenever a box in the table is marked green, it means that the specific model property was used. %If its red, it was not used. For example, if $\alpha\neq1$ is green, this corresponds to a learnable exponent in the FGL. 
%The keyword \acrshort{ode} describes whether the node features were updated following \cref{eq:LxW_ode} or \cref{eq:schroedinger_ode}. The keyword \acrshort{gcn} describes whether the node features were updated following $\mathbf{x}_{t+1} = \sna^\alpha \mathbf{x}_t \lW $ for real, and $\mathbf{x}_{t+1} = \iu \, \sna^\alpha \mathbf{x}_t \lW$
%for complex, which correspond to a vanilla GCN implementing the fractional Laplacian.
% where $\lW$ is either complex (Schrödinger green) or real (Heat green).

The ablation study on Chameleon demonstrates that all the components of the model (learnable exponent, \acrshort{ode} framework with the Schrödinger equation, and directionality via the \acrshort{sna}) contribute to the performance of \ours. The fact that performance drops when any of these components are not used suggests that they all play crucial roles in the model's ability to capture the structure and evolution of heterophilic graphs. It is important to note that the performance appears to be more dependent on the adjustable fraction in the \acrshort{fgl} than on the use of the ODE framework, illustrating that the fractional Laplacian alone can effectively capture long-range dependencies. However, when the ODE framework is additionally employed, a noticeable decrease in variance is observed.

\paragraph{From Theory to Practice.}
\label{par:from_theory_to_practice}
We conduct an ablation study to investigate the role of depth on Chameleon, Citeseer, Cora, and Squirrel datasets. The results, depicted
in \cref{fig:effect_num_layers}, demonstrate that the neural
\acrshort{ode} framework enables \acrshort{gnn}s to scale to large depths ($256$ layers). Moreover,
we see that the fractional Laplacian improves over the standard Laplacian
in the heterophilic graphs which is supported by our claims in
\cref{sec:frequency_dominance_directed}. We highlight that using only the
fractional Laplacian without the neural \acrshort{ode} framework oftentimes
outperforms the standard Laplacian with the neural \acrshort{ode} framework. This
indicates the importance of the long-range connections built by the
fractional Laplacian.

We further demonstrate the close alignment of our theoretical and experimental results, which enables us to precisely anticipate when the models will exhibit \acrshort{hfd} or \acrshort{lfd} behaviors. In this context, we calculate parameters (according to \cref{thm:main_theorem_heat_appendix}) and illustrate at each depth the expected and observed behaviors. For Squirrel and Chameleon, which are heterophilic graphs, we observe that both their theoretical and empirical behaviors are \acrshort{hfd}. Additionally, the learned exponent is small. In contrast, for Cora and Citeseer, we see the opposite.

Finally, we employ the best hyperparameters in \cref{tab:hyperparams_real_world} to solve both fractional heat and Schrödinger graph \acrshort{ode}s, further substantiating the intimate link between our theoretical advancements and practical applications.
\begin{table}[!htbp]
    \centering
    \caption{Ablation study on node classification task: top two models are indicated as \first{$1^\text{st}$} and \second{$2^\text{nd}$}}
    \label{tab:ablation}
    \begin{subtable}{\linewidth}
        \centering
        \caption{Chameleon (directed, heterophilic).}
        \label{tab:ablation_chameleon}
        \begin{tabular}{cccc}
            \toprule
            & Update Rule  & Test Accuracy & Dirichlet Energy \\
            
            \midrule
            
            \multirow{4}{*}{D} &$\x_{t+1}=\x_t -\iu  h \fsna \x_t \lW$  & \first{$77.79\pm1.42$} & $0.213$ (t=5)    %running under 15636
            \\ 
            &$\x_{t+1}=\x_t - \iu h \sna\phantom{^\alpha} \x_t \lW$  &$75.72 \pm 1.13$ & $0.169$ (t=6)  %running under 15637 
            \\ 
            &$\x_{t+1}=\phantom{\x_t} - \iu \phantom{h}\fsna \x_t \lW$  & \second{$77.35\pm2.22$} & $0.177$ (t=4)%running under 15637 
            \\ 
            &$\x_{t+1}=\phantom{\x_t} -  \iu \phantom{h} \sna\phantom{^\alpha} \x_t \lW$   &  $69.61\pm 1.59$ & $0.178$ (t=4)  %running under 15637
            \\
            
            \midrule
            
            \multirow{4}{*}{U} &$\x_{t+1}=\x_t -\iu  h \fsna \x_t \lW$  & \first{$73.60\pm1.68$} & $0.131$ (t=4) \\
            &$\x_{t+1}=\x_t - \iu h \sna\phantom{^\alpha} \x_t \lW$  & $70.15 \pm 0.86$ & $0.035$ (t=4)   \\
            &$\x_{t+1}=\phantom{\x_t} - \iu \phantom{h}\fsna \x_t \lW$ & \second{$71.25\pm 3.04$}  & $0.118$ (t=4) \\
            &$\x_{t+1}=\phantom{\x_t} -  \iu \phantom{h} \sna\phantom{^\alpha} \x_t \lW$  & $67.19 \pm 2.49$ & $0.040$ (t=4) \\
            
            \midrule
            
            \multirow{4}{*}{D} &$\x_{t+1}=\x_t -\phantom{\iu}  h \fsna \x_t \lW$  & \first{$77.33\pm1.47$} & $0.378$ (t=6)  \\
            &$\x_{t+1}=\x_t - \phantom{\iu} h \sna\phantom{^\alpha} \x_t \lW$  & $73.55 \pm 0.94$ & $0.165$ (t=6)  \\
            &$\x_{t+1}=\phantom{\x_t} - \phantom{\iu} \phantom{h}\fsna \x_t \lW$ & \second{$74.12 \pm 3.60$} & $0.182$ (t=4)  \\
            &$\x_{t+1}=\phantom{\x_t} -  \phantom{\iu} \phantom{h} \sna\phantom{^\alpha} \x_t \lW$  & $68.47 \pm 2.77$ & $0.208$ (t=4) \\
            \bottomrule
        \end{tabular}
    \end{subtable}
    
\end{table}
\begin{table}[!htbp]
    \ContinuedFloat
    \centering
    \begin{subtable}{\linewidth}
    
    \centering
    \caption{Squirrel (directed, heterophilic).}
    \label{tab:ablation_squirrel}%
    \begin{tabular}{cccc}
    \toprule
     & Update Rule  & Test Accuracy & Dirichlet Energy \\
     
    \midrule
    
    \multirow{4}{*}{D} &$\x_{t+1}=\x_t -\iu  h \fsna \x_t \lW$   & \first{$74.03 \pm 1.58$} & $0.38 \pm 0.02$     
     \\ 
     &$\x_{t+1}=\x_t - \iu h \sna\phantom{^\alpha} \x_t \lW$  & $64.04 \pm 2.25$ & $0.35 \pm 0.02$  
     \\ 
    &$\x_{t+1}=\phantom{\x_t} - \iu \phantom{h}\fsna \x_t \lW$  &  \second{$64.25 \pm 1.85$} & $ 0.46 \pm 0.01$ 
     \\ 
      &$\x_{t+1}=\phantom{\x_t} -  \iu \phantom{h} \sna\phantom{^\alpha} \x_t \lW$   &  $42.04\pm 1.58$ & $0.29 \pm 0.05$   
    \\
    
    \midrule
    
    \multirow{4}{*}{U} &$\x_{t+1}=\x_t -\iu  h \fsna \x_t \lW$  & \first{$64.23 \pm 1.84$} & $0.40 \pm 0.02$ \\
    &$\x_{t+1}=\x_t - \iu h \sna\phantom{^\alpha} \x_t \lW$  & $55.19 \pm 1.52$ & $0.26 \pm 0.03$    \\
    &$\x_{t+1}=\phantom{\x_t} - \iu \phantom{h}\fsna \x_t \lW$ & \second{$61.40\pm 2.15$}  & $0.43 \pm 0.01$  \\
    &$\x_{t+1}=\phantom{\x_t} -  \iu \phantom{h} \sna\phantom{^\alpha} \x_t \lW$  & $41.19\pm 1.95$ & $0.20 \pm 0.02$  \\
    
    \midrule
    
    \multirow{4}{*}{D} &$\x_{t+1}=\x_t -\phantom{\iu}  h \fsna \x_t \lW$  &  \first{$71.86 \pm 1.65$}  & $0.50 \pm 0.01$ \\
    &$\x_{t+1}=\x_t - \phantom{\iu} h \sna\phantom{^\alpha} \x_t \lW$  & \second{$59.34\pm 1.78$} & $0.43 \pm 0.03$    \\
    &$\x_{t+1}=\phantom{\x_t} - \phantom{\iu} \phantom{h}\fsna \x_t \lW$ & $42.91 \pm 7.86$ & $0.32 \pm 0.08$  \\
    &$\x_{t+1}=\phantom{\x_t} -  \phantom{\iu} \phantom{h} \sna\phantom{^\alpha} \x_t \lW$  & $35.37 \pm 1.69$ & $0.25 \pm 0.05$  \\
    
    \midrule

    \multirow{4}{*}{U} &$\x_{t+1}=\x_t -\phantom{\iu}  h \fsna \x_t \lW$  & \first{$62.95 \pm 2.02$}  & $0.61 \pm 0.08$ \\
    &$\x_{t+1}=\x_t - \phantom{\iu} h \sna\phantom{^\alpha} \x_t \lW$  & $52.19\pm 1.17$ & $0.51 \pm 0.07$    \\
    &$\x_{t+1}=\phantom{\x_t} - \phantom{\iu} \phantom{h}\fsna \x_t \lW$ & \second{$59.04 \pm 0.02$} & $0.44 \pm 0.02$  \\
    &$\x_{t+1}=\phantom{\x_t} -  \phantom{\iu} \phantom{h} \sna\phantom{^\alpha} \x_t \lW$  & $39.69 \pm 1.54$ & $0.20 \pm 0.02$  \\

    \bottomrule
    \end{tabular}
    \end{subtable}
\end{table}
\begin{table}[!htbp]
    \ContinuedFloat
    \centering
    \begin{subtable}{\linewidth}
    \centering
    \caption{Citeseer (undirected, homphilic).}
    \label{tab:ablation_citeseer}%
    \begin{tabular}{cccc}
    \toprule
    &Update Rule & Test Accuracy & Dirichlet Energy \\
    \midrule
     \multirow{4}{*}{\phantom{U}}&$\x_{t+1}=\x_t -\iu  h \fsna \x_t \lW$ & \first{$78.07\pm1.62$} & $0.021$ (t=5)    %running under 15636
     \\ 
     &$\x_{t+1}=\x_t - \iu h \sna\phantom{^\alpha} \x_t \lW$ & \second{$77.97 \pm 2.29$} & $0.019$ (t=4)  %running under 15637 
     \\ 
    &$\x_{t+1}=\phantom{\x_t} - \iu \phantom{h}\fsna \x_t \lW$ & $77.27 \pm 2.10$ & $0.011$ (t=6)%running under 15637 
     \\ 
    &  $\x_{t+1}=\phantom{\x_t} -  \iu \phantom{h} \sna\phantom{^\alpha} \x_t \lW$ & \second{$77.97 \pm 2.23$} & $0.019$ (t=4)  %running under 15637
      \\
    \bottomrule
    \end{tabular}
    \end{subtable}   
\end{table}
\begin{table}[!htbp]
    %\ContinuedFloat
    \centering
    \caption{Learned $\alpha$ and spectrum of $\lW$. According to \cref{thm:frequency_dominance_normal_sna}, we denote ${\mathrm{FD}\coloneqq\lambda_K(\lW)\pow(\lambda_1(\sna))-\lambda_1(\lW)}$ and ${\mathrm{FD}\coloneqq\Im(\lambda_K(\lW))\pow(\lambda_1(\sna))-\Im(\lambda_1(\lW))}$ for the fractional heat (H) and Schrödinger (S) graph \acrshort{ode}s, respectively. The heterophilic graphs Squirrel and Chameleon exhibit \acrshort{hfd} since $\mathrm{FD}<0$, while the homophilic Cora, Citeseer, Pubmed exhibit \acrshort{lfd} since $\mathrm{FD}>0$.}
    \label{tab:undirected_HFD}
    \tiny

\begin{tabular}{ll ccccccccc}
\toprule 
     & \makecell[l]{ \\ $\lambda_1(\sna)$}&
     \makecell{\textbf{Film}\\ $-0.9486$} &
     \makecell{\textbf{Squirrel} \\ $-0.8896$} &
     \makecell{\textbf{Chameleon} \\ $-0.9337$} &
     \makecell{\textbf{Citeseer}  \\ $-0.5022$} & 
     \makecell{\textbf{Pubmed} \\ $-0.6537$} & 
     \makecell{\textbf{Cora} \\ $-0.4826$} \\
      
     \midrule
     \multirow{4}{*}{H} 
     %& Test acc. & $35.82\pm 1.48$ & $63.03\pm 2.04$ & $72.70\pm 1.28$ & $77.35 \pm 0.0196$ & $89.15\pm 0.46$ &  $84.39 \pm 1.51$\\
     & $\alpha$ & $\phantom{-} 1.008 \pm 0.007$ & $\phantom{-}0.19\pm 0.05$ & $\phantom{-} 0.37 \pm 0.14$ & $\phantom{-} 0.89\pm 0.06$ & $\phantom{-} 1.15\pm 0.08$ &  $\phantom{-}0.89 \pm 0.01$\\
     & $\lambda_1(\lW)$ & $-2.774 \pm 0.004$ & $-1.62 \pm 0.03$ & $-1.81\pm 0.02$ & $-1.76 \pm 0.01$ & $-1.66 \pm 0.06$  &  $-1.81 \pm 0.01$\\
     & $\lambda_K(\lW)$ & $\phantom{-}2.858 \pm 0.009$ & $\phantom{-} 2.21 \pm 0.03$ &  $\phantom{-}2.29\pm  0.05$ & $\phantom{-}2.28\pm 0.06$ & $\phantom{-}1.1\pm 0.3$ &  $\phantom{-}2.32 \pm 0.01$ \\
     & FD & $\phantom{-}0.367 \pm 0.001$ & $-0.54 \pm 0.02$ & $ -0.42\pm 0.04$ & $\phantom{-} 0.52 \pm 0.02$ & $\phantom{-} 0.97 \pm 0.09$ &  $\phantom{-} 0.60 \pm 0.01$\\
     
    \midrule
    
     \multirow{5}{*}{S} 
     %& Test acc. & $37.22 \pm 1.32$ & $64.15\pm 1.95$ & $73.18 \pm 1.64$ & $77.81 \pm 2.49$ & $89.90 \pm 0.27$ & $86.04 \pm 1.11$\\
     & $\alpha$ & $\phantom{-} 1.000 \pm 0.002$ & $\phantom{-}0.17 \pm 0.03$ & $\phantom{-}0.34 \pm 0.11$& $\phantom{-}0.90 \pm 0.07$ & $\phantom{-}0.76\pm 0.07$ & $\phantom{-}0.90 \pm 0.02$\\
     & $\Im(\lambda_1(\lW))$ & $-2.795 \pm 0.001$ & $-1.68 \pm 0.01$  & $-1.79 \pm 0.01$ & $-1.70 \pm 0.04$ & $-1.74 \pm 0.01$ & $-1.78 \pm 0.01$\\
     & $\Im(\lambda_K(\lW))$ & $\phantom{-} 2.880 \pm 0.002$& $\phantom{-}2.21 \pm 0.03$ & $\phantom{-}2.46 \pm 0.02$ & $\phantom{-}2.29 \pm 0.07$ & $\phantom{-}0.98 \pm 0.09$ & $\phantom{-}2.30 \pm 0.02$\\
     & FD & $\phantom{-}0.4945 \pm 0.0001$ & $-0.48 \pm 0.03$ & $- 0.62\pm 0.03$ & $\phantom{-}0.46\pm 0.06$ & $\phantom{-}1.03 \pm 0.05$ & $\phantom{-}0.59 \pm 0.01$\\

    \bottomrule
\end{tabular} 
\end{table}

\begin{figure}
    \centering
    \includegraphics{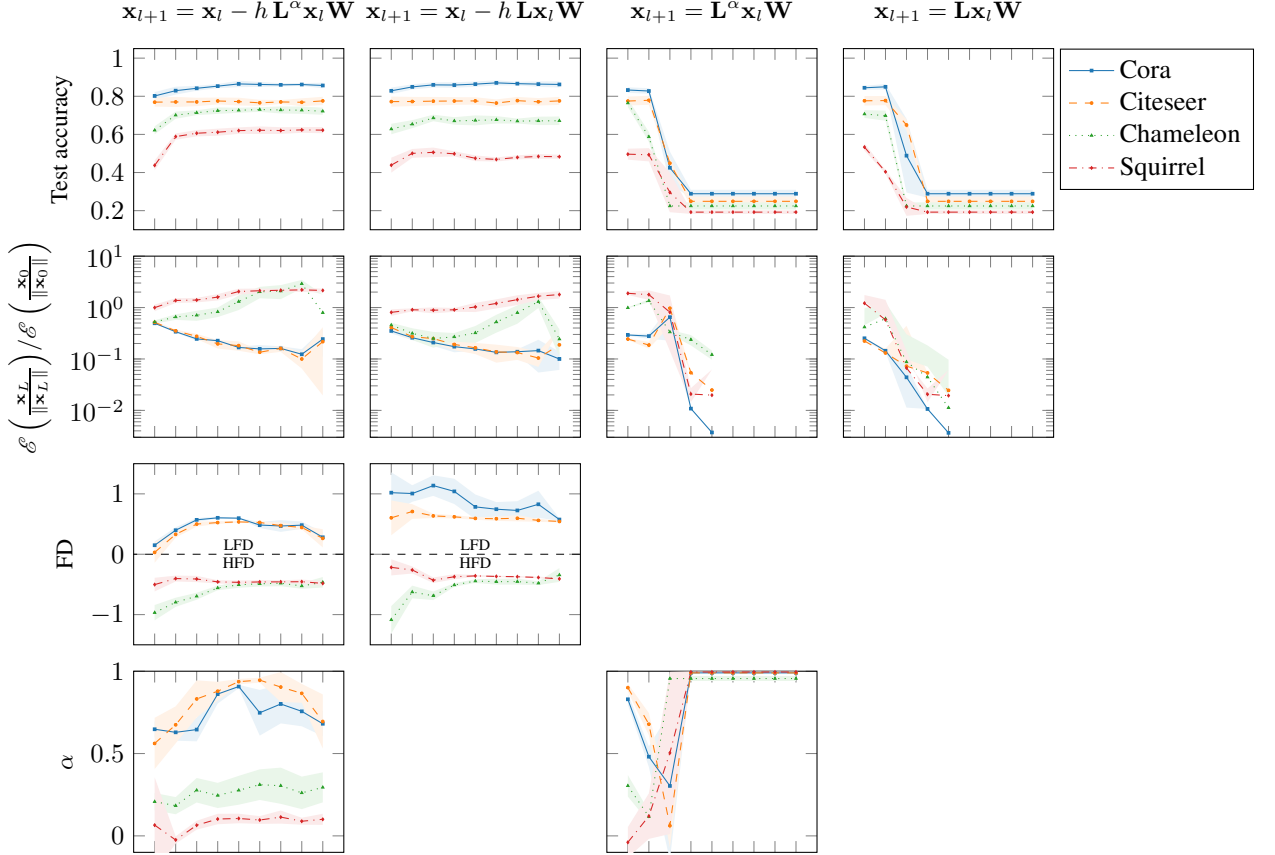}
    \caption{Ablation study on the effect of different update rules and different number of layers on undirected datasets. The x-axis shows the number of layers $2^L$ for $L\in\{0, \dots, 8\}$. FD is calculated according to \cref{thm:frequency_dominance_normal_sna}.}
    \label{fig:effect_num_layers}
\end{figure}
    \newpage
    \section{Appendix for \texorpdfstring{\cref{sec:directed_sna}}{Section 3}}
\label{sec:sna_properties}

\begin{namedtheorem}[\cref{thm:directed_sna_spectrum}]
\label{thm:sna_spectrum}
Let $\mathcal{G}$ be a directed graph with \acrshort{sna} $\sna$.  For every $\lambda\in\lambda(\sna)$, it holds
$\abs{\lambda} \leq 1$ and $\lambda(\snl) = 1 - \lambda(\sna)$.
\end{namedtheorem}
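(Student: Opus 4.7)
The second claim, $\lambda(\snl)=1-\lambda(\sna)$, is just a spectral shift: any eigenpair $(\lambda,\mathbf{v})$ of $\sna$ gives $(\mathbf{I}-\sna)\mathbf{v}=(1-\lambda)\mathbf{v}$, and since $\mathbf{I}$ and $\sna$ are simultaneously triangularizable, this yields the whole spectrum. So the real content is the bound $|\lambda|\leq 1$.

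For this I would argue via the spectral norm, establishing $\|\sna\|\leq 1$ so that $|\lambda|\leq\rho(\sna)\leq\|\sna\|\leq 1$ for every $\lambda\in\lambda(\sna)$. The plan is to bound the bilinear form directly: for arbitrary $\mathbf{u},\mathbf{v}\in\mathbb{C}^N$,
\[
  \bigl|\mathbf{u}\herm \sna\, \mathbf{v}\bigr|
  \;=\; \Bigl|\sum_{i,j} \tfrac{a_{i,j}\,\overline{u_i}\,v_j}{\sqrt{d_i^{\mathrm{in}}\, d_j^{\mathrm{out}}}}\Bigr|
  \;\leq\; \sum_{i,j} a_{i,j}\,\tfrac{|u_i|\,|v_j|}{\sqrt{d_i^{\mathrm{in}}\, d_j^{\mathrm{out}}}}.
\]
Then I would split $a_{i,j}=\sqrt{a_{i,j}}\cdot\sqrt{a_{i,j}}$ and apply Cauchy--Schwarz to the double sum, viewed as the standard inner product of the arrays $\bigl(\sqrt{a_{i,j}}\,|u_i|/\sqrt{d_i^{\mathrm{in}}}\bigr)_{i,j}$ and $\bigl(\sqrt{a_{i,j}}\,|v_j|/\sqrt{d_j^{\mathrm{out}}}\bigr)_{i,j}$, to obtain
\[
  \bigl|\mathbf{u}\herm \sna\, \mathbf{v}\bigr|^{2}
  \;\leq\;
  \Bigl(\sum_{i,j} a_{i,j}\tfrac{|u_i|^{2}}{d_i^{\mathrm{in}}}\Bigr)
  \Bigl(\sum_{i,j} a_{i,j}\tfrac{|v_j|^{2}}{d_j^{\mathrm{out}}}\Bigr).
\]
The crucial collapse is the degree identity: $\sum_j a_{i,j}=d_i^{\mathrm{in}}$ and $\sum_i a_{i,j}=d_j^{\mathrm{out}}$, which reduces each parenthesized factor to $\|\mathbf{u}\|^{2}$ and $\|\mathbf{v}\|^{2}$ respectively. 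Taking the supremum over unit $\mathbf{u},\mathbf{v}$ then yields $\|\sna\|\leq 1$.

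There is no real obstacle; the only subtle point is choosing the right Cauchy--Schwarz splitting so that the normalization by $\sqrt{d_i^{\mathrm{in}} d_j^{\mathrm{out}}}$ precisely cancels the row and column sums of $\mathbf{A}$. Nodes with zero in- or out-degree are excluded by convention (the corresponding row/column of $\sna$ is zero), so the inverse square roots are well defined. Equivalently, one could observe that $\indeg^{-1}\mathbf{A}$ is row-stochastic and $\mathbf{A}\outdeg^{-1}$ is column-stochastic, express $\sna$ as a symmetric rebalancing of these and invoke submultiplicativity of the spectral norm; but the direct Cauchy--Schwarz argument is shorter and gives a self-contained bound.
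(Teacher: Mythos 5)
Your proof is correct, and its computational core coincides with the paper's: a Cauchy--Schwarz estimate on the degree-normalized form that collapses via the identities $\sum_j a_{i,j}=d_i^{\text{in}}$ and $\sum_i a_{i,j}=d_j^{\text{out}}$. The only substantive difference is the vehicle. The paper bounds the numerical range, showing $\abs{\mathbf{x}\herm\sna\mathbf{x}}\le 1$ for unit $\mathbf{x}$ via two successive Cauchy--Schwarz applications and concluding because the numerical range contains the spectrum; you bound the bilinear form $\abs{\mathbf{u}\herm\sna\mathbf{v}}$ with independent $\mathbf{u},\mathbf{v}$ using a single Cauchy--Schwarz on the double sum, which yields $\norm{\sna}\le 1$ and then $\abs{\lambda}\le\rho(\sna)\le\norm{\sna}$. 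Your route is marginally stronger: the spectral-norm bound $\norm{\sna}\le 1$ is exactly what the paper later invokes in the proof of \cref{thm:fractional_edges} (where it is deduced from the eigenvalue bound, which would not suffice for a non-normal $\sna$), so your argument supplies that fact directly. Your treatment of the second claim via simultaneous triangularization is also fine and, if anything, slightly more careful than the paper's eigenvector computation, which by itself only gives the inclusion $1-\lambda(\sna)\subseteq\lambda(\snl)$ when $\sna$ is not diagonalizable.
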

\begin{proof}%[Proof of \cref{thm:directed_sna_spectrum}]
We show that the numerical range $\mathcal{W}(\sna) = \{\mathbf{x}\herm \sna \mathbf{x}\, : \, \mathbf{x}\herm\mathbf{x}=1\}$ satisfies $\mathcal{W}(\sna) \subset [-1,1]$. As $\mathcal{W}(\sna)$ contains all eigenvalues of $\sna$ the thesis follows.

 Let $\mathbf{A}$ be the adjacency matrix of $\mathcal{G}$ and $\mathbf{x} \in \mathbb{C}^N$ with $\mathbf{x}\herm\mathbf{x}=1$.
 Applying the Cauchy-Schwartz inequality in $(2)$ and $(3)$, we get
\begin{align*}
    \abs{\mathbf{x}\herm \sna \mathbf{x}} & \stackrel{(1)}{\leq} \sum_{i=1}^N\sum_{j=1}^N a_{i,j} \frac{\abs{x_i}\abs{x_j}}{\sqrt{d_i^{in}d_j^{out}}} \\
    &= \sum_{i=1}^N \frac{\abs{x_i}}{\sqrt{d_i^{in}}} \sum_{j=1}^N a_{i,j} \frac{\abs{x_j}}{\sqrt{d_j^{out}}}  \\
    & \stackrel{(2)}{\leq} \sum_{i=1}^N \frac{\abs{x_i}}{\sqrt{d_i^{in}}} \sqrt{\sum_{j=1}^N a_{i,j} \frac{\abs{x_j}^2}{d_j^{out}}\sum_{j=1}^N a_{i,j}} \\
    & = \sum_{i=1}^N \abs{x_i}\sqrt{\sum_{j=1}^N a_{i,j} \frac{\abs{x_j}^2}{d_j^{out}}} \\
    & \stackrel{(3)}{\leq} \sqrt{  \sum_{i=1}^N   \abs{x_i}^2  \sum_{i=1}^N \sum_{j=1}^N a_{i,j} \frac{\abs{x_j}^2}{d_j^{out}} } \\
    & = \sum_{i=1}^N   \abs{x_i}^2\, ,
\end{align*}
where we used $a_{i, j}^2=a_{i, j}$. We have $\sum_{i=1}^N   \abs{x_i}^2 = \mathbf{x}\herm\mathbf{x}=1$ such that $\mathcal{W}(\sna) \subset [-1,1]$ follows. The second claim follows directly by $\(\id - \sna\) \mathbf{v} = \mathbf{v} - \lambda \mathbf{v} = (1-\lambda) \mathbf{v}$.
\end{proof}

\begin{namedtheorem}[\cref{thm:directed_sna_one_eigenvalue}]
     Let $\mathcal{G}$ be a directed graph with \acrshort{sna} $\sna$. Then $1\in\lambda(\sna)$ if and only if the graph is weakly balanced. Suppose the graph is strongly connected; then $-1\in\lambda(\sna)$ if and only if the graph is weakly balanced with an even period.
\end{namedtheorem}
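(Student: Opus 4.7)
The plan is to first handle the $1$-eigenvalue case by direct computation, and then reduce the $-1$-eigenvalue case to Perron-Frobenius theory for non-negative irreducible matrices.

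For the first equivalence, I would expand the eigenvalue equation $\sna \mathbf{v} = \mathbf{v}$ componentwise: dividing by $\sqrt{d_i^{\text{in}}}$ gives
\begin{equation*}
    \sum_{j=1}^N \frac{a_{i,j}}{\sqrt{d_i^{\text{in}} d_j^{\text{out}}}}\,v_j \;=\; v_i \quad \forall i.
\end{equation*}
Using $\sum_j a_{i,j} = d_i^{\text{in}}$, multiplying by $\sqrt{d_i^{\text{in}}}$ and rearranging, this is equivalent to
\begin{equation*}
    \sum_{j=1}^N a_{i,j}\left(\frac{v_j}{\sqrt{\smash[b]{d_j^{\text{out}}}}} - \frac{v_i}{\sqrt{d_i^{\text{in}}}}\right) = 0 \quad \forall i,
\end{equation*}
which is the weakly balanced condition with $\mathbf{k}=\mathbf{v}$. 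A minor but necessary remark: since the equation has real coefficients, a nonzero complex eigenvector $\mathbf{v}$ yields a nonzero real solution by taking real/imaginary parts, so complex eigenvectors and real witnesses $\mathbf{k}$ are interchangeable.

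For the $-1$ statement, assume $\mathcal{G}$ is strongly connected, so that $\sna$ is a non-negative irreducible matrix whose associated digraph inherits the period $p$ of $\mathcal{G}$. Perron-Frobenius theory yields a simple spectral-radius eigenvalue $\rho(\sna)\geq 0$ with positive eigenvector, and moreover the eigenvalues of $\sna$ of maximal modulus are exactly $\{\rho(\sna)\,\omega^k : k=0,\dots,p-1\}$ with $\omega = e^{2\pi \iu / p}$. Combining this with \cref{thm:directed_sna_spectrum} ($\abs{\lambda}\leq 1$) gives both directions: if $\mathcal{G}$ is weakly balanced with even period, then $1\in\lambda(\sna)$ forces $\rho(\sna)=1$, and $p$ even implies $-1=\omega^{p/2}\in\lambda(\sna)$; conversely, $-1\in\lambda(\sna)$ together with $\abs{\lambda}\leq 1$ forces $\rho(\sna)=1$, hence $1\in\lambda(\sna)$ (weakly balanced) and $-1$ being a peripheral root of unity of order $p$ forces $p$ to be even.

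The main obstacle, I expect, is the correct invocation of Perron-Frobenius for $\sna$: this matrix is non-negative and irreducible (because strong connectivity of $\mathcal{G}$ carries over to the directed graph associated with $\sna$, since the entries are strictly positive precisely where $a_{i,j}>0$), but it is generally not stochastic, so one must be careful to use the general non-negative-matrix form of the theorem and to verify that the period of $\sna$ as a non-negative matrix coincides with the graph period of $\mathcal{G}$. Once this is in place, the characterization of the peripheral spectrum closes the argument.
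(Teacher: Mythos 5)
Your proposal is correct and follows essentially the same route as the paper: the $1$-eigenvalue equivalence is obtained by the same componentwise rearrangement of $\sna\mathbf{v}=\mathbf{v}$ into the weakly balanced condition, and the $-1$ case is handled by the same appeal to Perron--Frobenius for irreducible non-negative matrices and the structure of the peripheral spectrum. Your additional remarks on passing between complex eigenvectors and real witnesses $\mathbf{k}$, and on checking that the period of $\sna$ coincides with that of $\mathcal{G}$, are careful points that the paper's proof leaves implicit.
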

\begin{proof}
Since the numerical range is only a superset of the set of eigenvalues, we cannot simply consider when the inequalities $(1)-(3)$ in the previous proof are actual equalities. Therefore, we have to find another way to prove the statement. Suppose that the graph is weakly balanced, then
\begin{equation*}
    \sum\limits_{j=1}^Na_{i, j} \(\dfrac{k_j}{\sqrt{\smash[b]{d_j^\text{out}}}}-\dfrac{k_i}{\sqrt{d_i^\text{in}}}\)=0 \, , \; \forall j\in\{1, \dots, N\}\, .
\end{equation*}
We will prove that $\mathbf{k}=(k_i)_{i=1}^N$ is an eigenvector corresponding to the eigenvalue $1$,
\begin{align*}
    \(\mathbf{L}\mathbf{k}\)_i = \sum\limits_{j=1}^N \dfrac{a_{i, j}}{\sqrt{d_i^\text{in}d_j^\text{out}}}k_j  = \dfrac{1}{\sqrt{d_i^\text{in}}}\sum\limits_{j=1}^N \dfrac{a_{i, j}}{\sqrt{\smash[b]{d_j^\text{out}}}}k_j
    = \dfrac{1}{\sqrt{d_i^\text{in}}}\sum\limits_{j=1}^N \dfrac{a_{i, j}}{\sqrt{d_i^\text{in}}}k_i = \dfrac{1}{d_i^\text{in}}\(\sum\limits_{j=1}^N a_{i, j}\)k_i = k_i\, .
\end{align*}
For the other direction, suppose that there exists $\x\in\mathbb{R}^N$ such that $\mathbf{x}\neq 0$ and $\mathbf{x}=\sna \mathbf{x}$. Then for all $i \in\{1, \dots, N\}$
\begin{align*}
    0 &= \(\sna \mathbf{x}\)_i - x_i = \sum\limits_{j=1}^N \dfrac{a_{i, j}}{\sqrt{d_i^\text{in} d_j^\text{out}}}x_j - x_i = \sum\limits_{j=1}^N \dfrac{a_{i, j}}{\sqrt{d_i^\text{in} d_j^\text{out}}}x_j - \sum\limits_{j=1}^N \dfrac{a_{i, j}}{d_i^\text{in}}x_i\\
    &= \sum\limits_{j=1}^N \dfrac{a_{i, j}}{\sqrt{d_i^\text{in}}}\(\dfrac{x_j}{\sqrt{\smash[b]{d_j^\text{out}}}}-  \dfrac{x_{i}}{\sqrt{d_i^\text{in}}}\)\, ,
\end{align*}
hence, the graph is weakly balanced.

By Perron-Frobenius theorem for irreducible non-negative matrices, one gets that $\sna$ has exactly $h$ eigenvalues with maximal modulus corresponding to the $h$ roots of the unity, where $h$ is the period of $\sna$. Hence, $-1$ is an eigenvalue of $\sna$ if and only if the graph is weakly balanced and $h$ is even.
\end{proof}

\begin{namedtheorem}[\cref{thm:rayleigh_quotient_directed_sna}]
For every $\mathbf{x}\in \mathbb{C}^{N \times K}$, we have
\begin{align*}
    \Re\( \tr \( \mathbf{x}\herm \(\snl\) \mathbf{x}\)\)&= \dfrac{1}{2}\sum_{i,j=1}^N a_{i,j}\norm{\dfrac{\mathbf{x}_i }{\sqrt{d_i^\text{in}}} -\dfrac{\mathbf{x}_j}{\sqrt{\smash[b]{d_j^\text{out}}}}}^2_2\, ,\\
\end{align*}
Moreover, there exists $\x \neq 0$ such that $\dir(\x)=0$ if and only if the graph is weakly balanced.
\end{namedtheorem}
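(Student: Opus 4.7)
The plan is to prove the identity by a direct expansion of the squared norms, and then to deduce the ``moreover'' claim as an immediate consequence of the identity combined with \cref{thm:directed_sna_one_eigenvalue}.

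For the identity, I would expand
\begin{equation*}
\left\|\tfrac{\mathbf{x}_i}{\sqrt{d_i^{\text{in}}}}-\tfrac{\mathbf{x}_j}{\sqrt{d_j^{\text{out}}}}\right\|_2^2 = \tfrac{\|\mathbf{x}_i\|_2^2}{d_i^{\text{in}}} - 2\Re\left\langle \tfrac{\mathbf{x}_i}{\sqrt{d_i^{\text{in}}}},\, \tfrac{\mathbf{x}_j}{\sqrt{d_j^{\text{out}}}}\right\rangle + \tfrac{\|\mathbf{x}_j\|_2^2}{d_j^{\text{out}}},
\end{equation*}
multiply by $a_{i,j}$, and sum over $i,j$. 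Using $\sum_j a_{i,j}=d_i^{\text{in}}$ collapses the first piece to $\sum_i \|\mathbf{x}_i\|_2^2 = \tr(\mathbf{x}\herm\mathbf{x})$, and the symmetric identity $\sum_i a_{i,j}=d_j^{\text{out}}$ gives the same value for the third piece. The cross term regroups exactly as $-2\Re\tr(\mathbf{x}\herm\sna\,\mathbf{x})$ by definition of $\sna$. Summing and dividing by two yields $\Re\tr(\mathbf{x}\herm\snl\,\mathbf{x})$, as claimed.

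For the ``moreover'' part, note first that the right-hand side of the identity is a sum of non-negative terms, so $\dir(\mathbf{x})=0$ holds if and only if the pointwise condition $\mathbf{x}_i/\sqrt{d_i^{\text{in}}} = \mathbf{x}_j/\sqrt{d_j^{\text{out}}}$ is satisfied for every edge $(i,j)$ with $a_{i,j}=1$. For the forward direction, I would pick a column index $c$ such that the column $\mathbf{x}^{(c)}\in\mathbb{C}^N$ is nonzero, and then set $\mathbf{k}\in\mathbb{R}^N$ to be whichever of $\Re\mathbf{x}^{(c)}$ or $\Im\mathbf{x}^{(c)}$ is nonzero; taking real/imaginary parts preserves the pointwise equality, so each summand in the weak-balance condition vanishes. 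For the backward direction, \cref{thm:directed_sna_one_eigenvalue} produces an eigenvector $\mathbf{v}\in\mathbb{C}^N$ of $\sna$ with eigenvalue $1$; then $\snl\,\mathbf{v}=0$ gives $\mathbf{v}\herm\snl\,\mathbf{v}=0$, and the identity above delivers $\dir(\mathbf{v})=0$.

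The algebraic identity is routine; the only mildly delicate step is the extraction of a real witness $\mathbf{k}\in\mathbb{R}^N$ from the (possibly complex) $\mathbf{x}$ in the forward direction, which however is immediate once one notes that the equality $\mathbf{x}_i/\sqrt{d_i^{\text{in}}} = \mathbf{x}_j/\sqrt{d_j^{\text{out}}}$ in $\mathbb{C}^K$ reduces componentwise to separate real and imaginary equalities, at least one of which must be nontrivial.
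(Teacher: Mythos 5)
Your proof is correct. The expansion establishing the trace identity is exactly the paper's computation, so that part matches. Where you genuinely diverge is in the ``moreover'' claim. For the direction ``$\dir(\mathbf{x})=0$ for some $\mathbf{x}\neq 0$ implies weak balance,'' the paper argues by contradiction: using Cauchy--Schwarz and the triangle inequality it lower-bounds $\dir(\mathbf{x})$ by $\tfrac14\sum_i (d_i^{\text{in}})^{-1}\bigl|\sum_j a_{i,j}\bigl(x_i/\sqrt{d_i^{\text{in}}}-x_j/\sqrt{\smash[b]{d_j^{\text{out}}}}\bigr)\bigr|^2$, which is positive when the graph is not weakly balanced, and it restricts to real single-column $\mathbf{x}$ ``for simplicity.'' You instead exploit that the Dirichlet energy is a sum of nonnegative terms, so its vanishing forces the stronger pointwise condition $\mathbf{x}_i/\sqrt{d_i^{\text{in}}}=\mathbf{x}_j/\sqrt{\smash[b]{d_j^{\text{out}}}}$ on every edge, from which the weak-balance row sums vanish term by term once a nonzero real or imaginary part of a nonzero column is extracted. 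This is more elementary (no inequalities needed) and honestly covers the complex, multi-column case that the paper sidesteps. For the converse, the paper simply says ``choose $\mathbf{x}=\mathbf{k}$,'' which implicitly relies on $\mathbf{k}$ being a $1$-eigenvector of $\sna$ (shown inside the proof of \cref{thm:directed_sna_one_eigenvalue}) together with the trace identity; your route through \cref{thm:directed_sna_one_eigenvalue} makes that dependence explicit and is equally valid. In short: same identity, cleaner and more general handling of the equality case.
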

\begin{proof}
By direct computation, it holds
\begin{align*}
    &\dfrac{1}{2} \sum_{i, j=1}^N a_{i, j} \left\|\dfrac{x_{i,:}}{\sqrt{d^\text {in}_i}}-\dfrac{x_{j,:}}{\sqrt{\smash[b]{d_j^\text{out}}}}\right\|_2^2 \\
    & = \dfrac{1}{2} \sum_{i, j=1}^N a_{i, j} \sum_{k=1}^K \left|\dfrac{x_{i,k}}{\sqrt{d^\text {in}_i}}-\dfrac{x_{j,k}}{\sqrt{\smash[b]{d_j^\text{out}}}}\right|^2 \\
    & = \dfrac{1}{2} \sum_{i, j=1}^N a_{i, j} \sum_{k=1}^K \left(\dfrac{x_{i,k}}{\sqrt{d^\text {in}_i}}-\dfrac{x_{j,k}}{\sqrt{\smash[b]{d_j^\text{out}}}}\right)\conj\left(\dfrac{x_{i,k}}{\sqrt{d^\text {in}_i}}-\dfrac{x_{j,k}}{\sqrt{\smash[b]{d_j^\text{out}}}}\right) \\
    &=\dfrac{1}{2}\sum_{i, j=1}^N\sum_{k=1}^K a_{i, j} \dfrac{\abs{x_{i,k}}^2}{d^\text{in}_i}+\dfrac{1}{2}\sum_{i, j=1}^N \sum_{k=1}^K a_{i, j}\dfrac{\abs{x_{j,k}}^2}{d^\text{out}_j} \\
    & - \dfrac{1}{2}\sum_{i, j=1}^N \sum_{k=1}^K a_{i, j} \dfrac{x_{i,k}\conj x_{j,k}}{\sqrt{d^\text{in}_i\, d^\text{out}_j}} - \dfrac{1}{2}\sum_{i, j=1}^N \sum_{k=1}^K a_{i, j} \dfrac{x_{i,k} x_{j,k}\conj}{\sqrt{d^\text{in}_i\, d^\text{out}_j}} \\
    & = \dfrac{1}{2}\sum_{i=1}^N\sum_{k=1}^K  \abs{x_{i,k}}^2 +\dfrac{1}{2}\sum_{j=1}^N \sum_{k=1}^K \abs{x_{j,k}}^2 - \dfrac{1}{2}\sum_{i, j=1}^N \sum_{k=1}^K a_{i, j} \dfrac{x_{i,k}\conj x_{j,k}}{\sqrt{d^\text{in}_i\, d^\text{out}_j}} - \dfrac{1}{2}\sum_{i, j=1}^N \sum_{k=1}^K a_{i, j} \dfrac{x_{i,k} x_{j,k}\conj}{\sqrt{d^\text{in}_i\, d^\text{out}_j}} \\
    & = \sum_{i=1}^N\sum_{k=1}^K  \abs{x_{i,k}}^2   - \dfrac{1}{2}\sum_{i, j=1}^N \sum_{k=1}^K a_{i, j} \dfrac{(\mathbf{x}\herm)_{k,i} x_{j,k}}{\sqrt{d^\text{in}_i\, d^\text{out}_j}} - \dfrac{1}{2}\sum_{i, j=1}^N \sum_{k=1}^K a_{i, j} \dfrac{x_{i,k} (\mathbf{x}\herm)_{k,j}}{\sqrt{d^\text{in}_i\, d^\text{out}_j}} \\
        & =  \sum_{i=1}^N\sum_{k=1}^K  \abs{x_{i,k}}^2   - \dfrac{1}{2}\sum_{i, j=1}^N \sum_{k=1}^K a_{i, j} \dfrac{(\mathbf{x}\herm)_{k,i} x_{j,k}}{\sqrt{d^\text{in}_i\, d^\text{out}_j}} - \dfrac{1}{2}\left(\sum_{i, j=1}^N \sum_{k=1}^K a_{i, j} \dfrac{(\mathbf{x}\herm)_{k,i} x_{j,k}}{\sqrt{d^\text{in}_i\, d^\text{out}_j}}\right)\conj  \\
         & = \Re \left( \sum_{i=1}^N\sum_{k=1}^K  \abs{x_{i,k}}^2   -  \sum_{i, j=1}^N \sum_{k=1}^K a_{i, j} \dfrac{x\conj_{i,k} x_{j,k}}{\sqrt{d^\text{in}_i\, d^\text{out}_j}}  \right) \\
    & = \Re\( \tr \(\mathbf{x}\herm \(\snl\) \mathbf{x}\)\)\, .
\end{align*}
The last claim can be proved as follows. For simplicity, suppose $\mathbf{x}\in\mathbb{R}^N$. The ``$\impliedby$'' is clear since one can choose $\mathbf{x}$ to be $\mathbf{k}$. To prove the ``$\implies$'', we reason by contradiction. Suppose there exists a $\x\neq0$ such that $\dir(\mathbf{x})=0$ and the underlying graph is not weakly connected, i.e., 
\begin{equation*}
    \forall \tilde{\mathbf{x}}\neq \mathbf{0}\, , \; \abs{\sum\limits_{j=1}^N a_{i, j} \(\dfrac{\tilde{x}_j}{\sqrt{\smash[b]{d_j^\text{out}}}}-\dfrac{\tilde{x}_i}{\sqrt{d_i^\text{in}}}\)}>0 \, , \; \forall i\in\{1, \dots, N\}\, ,
\end{equation*}
Then, since $\x \neq 0$, 
\begin{align*}
    0=\dir(\mathbf{x}) &= \dfrac{1}{4} \sum_{i, j=1}^N a_{i, j} \abs{ \dfrac{x_{i}}{\sqrt{d^\text {in}_i}}-\dfrac{x_{j}}{\sqrt{\smash[b]{d_j^\text{out}}}} }^2 \\
    %&= \dfrac{1}{4} \sum_{i, j=1}^N \dfrac{a_{i, j}}{d_i^\text{in}} \abs{ \dfrac{x_{i}}{\sqrt{d^\text {in}_i}}-\dfrac{x_{j}}{\sqrt{\smash[b]{d_j^\text{out}}}} }^2 \sum\limits_{k=1}^N a_{i, k}\\
    &\geq \dfrac{1}{4} \sum_{i=1}^N \dfrac{1}{d_i^\text{in}}\(\sum_{j=1}^N a_{i, j} \abs{ \dfrac{x_{i}}{\sqrt{d^\text {in}_i}}-\dfrac{x_{j}}{\sqrt{\smash[b]{d_j^\text{out}}}} }^2\) \(\sum\limits_{j=1}^N a_{i, j}\)\\
    &\geq \dfrac{1}{4} \sum_{i=1}^N \dfrac{1}{d_i^\text{in}} \(\sum_{j=1}^N a_{i, j} \abs{\dfrac{x_{i}}{\sqrt{d^\text {in}_i}}-\dfrac{x_{j}}{\sqrt{\smash[b]{d_j^\text{out}}}}}\)^2 \\
    &\geq \dfrac{1}{4} \sum_{i=1}^N \dfrac{1}{d_i^\text{in}}\abs{ \sum_{j=1}^N a_{i, j} \(\dfrac{x_{i}}{\sqrt{d^\text {in}_i}}-\dfrac{x_{j}}{\sqrt{\smash[b]{d_j^\text{out}}}}\)}^2 \\
    &>0\, ,
\end{align*}
where we used Cauchy-Schwartz and triangle inequalities.
\end{proof}

We give the following simple corollary.
\begin{corollary}
For every $\x \in \mathbb{R}^{N \times K}$, it holds
$
        \dir(\mathbf{x}) =  \frac{1}{2} \Re \left(
\mathrm{vec} (\mathbf{x})\herm (\id \otimes \(\snl\)) \mathrm{vec} (\mathbf{x})
    \right)
$.
\end{corollary}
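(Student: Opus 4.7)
The plan is to reduce the corollary directly to \cref{thm:rayleigh_quotient_directed_sna} via the standard vectorization identity. By that proposition, we already know
\begin{equation*}
    \dir(\mathbf{x}) = \tfrac{1}{2}\Re\bigl(\tr(\mathbf{x}\herm (\snl) \mathbf{x})\bigr),
\end{equation*}
so it suffices to rewrite the trace as a quadratic form in $\mathrm{vec}(\mathbf{x})$.

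First I would recall the Kronecker/vec identity $\mathrm{vec}(AXB) = (B\tran \otimes A)\,\mathrm{vec}(X)$. Applied to $M := (\snl)\mathbf{x} = (\snl)\mathbf{x}\id$, this yields
\begin{equation*}
    \mathrm{vec}\bigl((\snl)\mathbf{x}\bigr) = (\id \otimes (\snl))\,\mathrm{vec}(\mathbf{x}).
\end{equation*}
Next, I would use the Frobenius inner-product identity $\tr(A\herm B) = \mathrm{vec}(A)\herm \mathrm{vec}(B)$ with $A = \mathbf{x}$ and $B = (\snl)\mathbf{x}$, obtaining
\begin{equation*}
    \tr\bigl(\mathbf{x}\herm (\snl)\mathbf{x}\bigr) = \mathrm{vec}(\mathbf{x})\herm\,\mathrm{vec}\bigl((\snl)\mathbf{x}\bigr) = \mathrm{vec}(\mathbf{x})\herm (\id \otimes (\snl))\,\mathrm{vec}(\mathbf{x}).
\end{equation*}
Substituting into the expression from \cref{thm:rayleigh_quotient_directed_sna} and taking real parts gives the claim.

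There is no real obstacle: the step is a two-line manipulation, and the only subtlety is keeping the order of factors in the Kronecker identity consistent with the transpose (here harmless, since $\id\tran = \id$). The statement is essentially a reformulation of \cref{thm:rayleigh_quotient_directed_sna} in vectorized form, convenient for later use when analysing the node-wise \acrshort{ode} $\mathrm{vec}(\mathbf{x})'(t) = -(\lW \otimes \fsna)\,\mathrm{vec}(\mathbf{x}(t))$.
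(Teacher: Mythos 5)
Your proof is correct and matches the paper's intent: the paper states this as a ``simple corollary'' of \cref{thm:rayleigh_quotient_directed_sna} without writing out the details, and the two vectorization identities you invoke ($\tr(A\herm B)=\mathrm{vec}(A)\herm\mathrm{vec}(B)$ and $\mathrm{vec}(AXB)=(B\tran\otimes A)\mathrm{vec}(X)$ with $B=\id_K$) are exactly the intended two-line reduction.
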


    \section{Appendix for \texorpdfstring{\cref{section:fractional_graph_laplacians}}{Section 4}}
\label{sec:long_range_dependencies}

In this section, we provide some properties about \acrshort{fgl}s. The first statement shows that the \acrshort{fgl} of a normal \acrshort{sna} $\sna$ only changes the magnitude of the eigenvalues of $\sna$.

\begin{lemma}
    \label{thm:normal_matrices}
    Let $\mathbf{M}$ be a normal matrix with eigenvalues $\lambda_1, \dots, \lambda_N$ and corresponding eigenvectors $\mathbf{v}_1, \dots, \mathbf{v}_N$. Suppose $\mathbf{M}=\mathbf{L}\mathbf{\Sigma}\mathbf{R}\herm$ is its singular value decomposition. Then it holds
    \begin{equation*}
        \mathbf{\Sigma} = \abs{\mathbf{\Lambda}}\, , \; \mathbf{L}=\mathbf{V}\, ,\; \mathbf{R} = \mathbf{V}\exp\(\iu \mathbf{\Theta}\)\, , \;\mathbf{\Theta} = \diag\(\{\theta_i\}_{i=1}^N\)\, , \; \theta_i=\atan\(\Re\lambda_i, \Im\lambda_i\)\, .
    \end{equation*}
\end{lemma}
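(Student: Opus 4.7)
The plan is to derive the SVD directly from the spectral decomposition, which exists because $\mathbf{M}$ is normal. By the spectral theorem, there is a unitary $\mathbf{V}=[\mathbf{v}_1,\dots,\mathbf{v}_N]$ such that $\mathbf{M}=\mathbf{V}\mathbf{\Lambda}\mathbf{V}\herm$ with $\mathbf{\Lambda}=\diag(\lambda_1,\dots,\lambda_N)$. The whole argument is just a rearrangement of this identity into a product of the form $\mathbf{L}\mathbf{\Sigma}\mathbf{R}\herm$ with $\mathbf{\Sigma}\ge 0$ diagonal and $\mathbf{L},\mathbf{R}$ unitary.

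First, I would polar-decompose the spectrum entry-wise: write $\lambda_k=|\lambda_k|\,e^{\iu\phi_k}$ with $\phi_k\in(-\pi,\pi]$ the usual argument of $\lambda_k$. Since $\mathbf{\Lambda}$ is diagonal, this factorises cleanly as $\mathbf{\Lambda}=|\mathbf{\Lambda}|\exp(\iu\mathbf{\Phi})$, where $|\mathbf{\Lambda}|\coloneqq\diag(|\lambda_1|,\dots,|\lambda_N|)$ and $\mathbf{\Phi}\coloneqq\diag(\phi_1,\dots,\phi_N)$; the two diagonal factors commute. Substituting,
\[
\mathbf{M}=\mathbf{V}\,|\mathbf{\Lambda}|\exp(\iu\mathbf{\Phi})\,\mathbf{V}\herm
=\mathbf{V}\,|\mathbf{\Lambda}|\bigl(\mathbf{V}\exp(-\iu\mathbf{\Phi})\bigr)\herm,
\]
using that $\exp(\iu\mathbf{\Phi})\herm=\exp(-\iu\mathbf{\Phi})$ for a real-diagonal $\mathbf{\Phi}$.

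Next I would identify $\mathbf{L}\coloneqq\mathbf{V}$, $\mathbf{\Sigma}\coloneqq|\mathbf{\Lambda}|$, and $\mathbf{R}\coloneqq\mathbf{V}\exp(-\iu\mathbf{\Phi})$, and check that this is a bona fide SVD: $\mathbf{\Sigma}$ is diagonal with non-negative entries by construction; $\mathbf{L}$ is unitary since $\mathbf{V}$ is; and $\mathbf{R}$ is unitary as the product of a unitary matrix and a diagonal unitary. Reconciling with the statement in the lemma amounts to matching the sign convention of the angles: with the operator $\atan(\Re\lambda_i,\Im\lambda_i)$ as defined in the paper one has $\theta_i=-\phi_i$ (up to a constant shift), so $\mathbf{R}=\mathbf{V}\exp(\iu\mathbf{\Theta})$ exactly as claimed. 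I would flag this sign/convention check as the only delicate point of the proof, since everything else is routine linear algebra.

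Finally, I would add a brief remark that the SVD is unique only up to the usual block-unitary freedom in the column spaces associated to repeated singular values (and up to a common diagonal phase on $\mathbf{L}$ and $\mathbf{R}$ in the simple-value case). The displayed choice is therefore a valid SVD, which is what the statement asserts; it is not the unique one when $\mathbf{M}$ has repeated $|\lambda_i|$ or vanishing singular values, but any other SVD is obtained from this one by such a unitary change of basis and does not contradict the formulas.
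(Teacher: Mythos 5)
Your proof is correct, and it reaches the conclusion by a genuinely different (and cleaner) route than the paper. The paper's proof computes $\mathbf{M}\herm\mathbf{M}$ twice --- once from the spectral decomposition, giving $\mathbf{V}\abs{\mathbf{\Lambda}}^2\mathbf{V}\herm$, and once from the SVD --- and identifies $\mathbf{\Sigma}=\abs{\mathbf{\Lambda}}$ and the singular-vector matrices by matching the two diagonalizations, before asserting that the remaining factor ``must'' be the diagonal phase $\exp(\iu\mathbf{\Theta})$. You instead factor the diagonal matrix entrywise, $\mathbf{\Lambda}=\abs{\mathbf{\Lambda}}\exp(\iu\mathbf{\Phi})$, and absorb the phase into the right factor, exhibiting $\mathbf{M}=\mathbf{V}\,\abs{\mathbf{\Lambda}}\,\bigl(\mathbf{V}\exp(-\iu\mathbf{\Phi})\bigr)\herm$ directly as a valid SVD. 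Your construction buys two things the paper glosses over: first, matching two diagonalizations of $\mathbf{M}\herm\mathbf{M}$ only pins down the singular vectors up to block-unitary freedom on repeated singular values (and the paper's displayed identification of $\mathbf{L}$ versus $\mathbf{R}$ there is itself garbled), a caveat you state explicitly and the paper does not; second, your derivation makes the phase unambiguous --- the identity that actually reproduces $\mathbf{M}$ is $\mathbf{R}=\mathbf{V}\exp(-\iu\,\mathbf{\Phi})$ with $\phi_i=\arg\lambda_i$, so the lemma's $\theta_i=\atan(\Re\lambda_i,\Im\lambda_i)$ has to be read as $-\arg\lambda_i$, a convention issue in the statement rather than a defect of your argument. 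The only point to tighten is your phrase ``up to a constant shift'': a nonzero constant shift in $\mathbf{\Theta}$ would multiply $\mathbf{L}\mathbf{\Sigma}\mathbf{R}\herm$ by a global phase and it would no longer equal $\mathbf{M}$, so one needs $\theta_i\equiv-\arg\lambda_i\pmod{2\pi}$ exactly --- which is precisely what your construction delivers.
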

\begin{proof}
By hypothesis, there exist a unitary matrix $\mathbf{V}$ such that $\mathbf{M} = \mathbf{V}\mathbf{\Lambda}\mathbf{V}\herm$, then
\begin{align*}
    \mathbf{M}\herm\mathbf{M} &=  \mathbf{V}\mathbf{\Lambda}\conj\mathbf{V}\herm \mathbf{V} \mathbf{\Lambda}\mathbf{V}\herm = \mathbf{V} \abs{\mathbf{\Lambda}}^2 \mathbf{V}\herm\, ,\\
    \mathbf{M}\herm\mathbf{M} &=  \mathbf{R}\mathbf{\Sigma}\mathbf{L}\herm \mathbf{L} \mathbf{\Sigma}\mathbf{R}\herm = \mathbf{L} \mathbf{\Sigma}^2 \mathbf{L}\herm\, .
\end{align*}
Therefore, $\mathbf{\Sigma}=\abs{\mathbf{\Lambda}}$ and $\mathbf{L} = \mathbf{V}$
\begin{align*}
    \mathbf{M} = \mathbf{R} \abs{\mathbf{\Lambda}}\mathbf{V}\herm
\end{align*}
Finally, we note that it must hold $\mathbf{R}=\mathbf{V} \exp(\iu \mathbf{\Theta})$ where $\Theta=\diag\(\{\atan(\Re\lambda_i, \Im\lambda_i)\}_{i=1}^N\)$ and $\atan$  is the 2-argument arctangent.
\end{proof}

We proceed by proving \cref{thm:fractional_edges}, which follows the proof of a similar result given in \citep{benziNonlocalNetworkDynamics2020} for the fractional Laplacian defined in the spectral domain of an in-degree normalized graph Laplacian. However, our result also holds for directed graphs and in particular for fractional Laplacians that are defined via the \acrshort{svd} of a graph \acrshort{sna}.  %in contrast to the result in \citep{benziNonlocalNetworkDynamics2020}. 
%We begin with 

\begin{lemma}
\label{thm:singular_value_calculus_and_modulus_of_continuity}
Let $\mathbf{M}\in \mathbb{R}^{n \times n}$ with singular values $\sigma(\mathbf{M}) \subset [a,b]$. For $f:[a,b] \to \mathbb{R}$, define $f(\mathbf{M}) = \mathbf{U} f(\mathbf{\Sigma}) \mathbf{V}\herm$, where $\mathbf{M} = \mathbf{U} \mathbf{\Sigma} \mathbf{V}\herm$ is the singular value decomposition of $\mathbf{M}$. 
If $f$ has modulus of continuity $\omega$ and $d(i,j) \geq 2$, it holds 
\begin{equation*}
    \abs{f(\mathbf{M})}_{i,j} \leq \(1 + \dfrac{\pi^2}{2}\) \omega  \( 
    \frac{b-a}{2} \abs{d(i,j)-1}^{-1} \)\, .
\end{equation*}
\end{lemma}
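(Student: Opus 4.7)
My strategy is a standard Jackson approximation plus matrix--polynomial sparsity argument, combined with the Jordan--Wielandt symmetrization to cope with the fact that the SVD-defined functional calculus is not literally a matrix polynomial in $\mathbf{M}$.

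I would start by setting $k = d(i,j)-1 \geq 1$ and invoking the Jackson--Stechkin version of Jackson's theorem on $[a,b]$: there exists a polynomial $p_k$ of degree at most $k$ with $\|f-p_k\|_{\infty,[a,b]} \leq (1+\pi^{2}/2)\,\omega((b-a)/(2k))$, which is precisely the quantity on the right-hand side of the lemma. Since $\mathbf{U}$ and $\mathbf{V}$ are unitary and $f(\mathbf{\Sigma})-p_k(\mathbf{\Sigma})$ is diagonal, the spectral-norm identity gives
\begin{equation*}
    |f(\mathbf{M})_{i,j} - p_k(\mathbf{M})_{i,j}| \leq \|\mathbf{U}(f(\mathbf{\Sigma})-p_k(\mathbf{\Sigma}))\mathbf{V}\herm\|_2 = \max_\ell|f(\sigma_\ell)-p_k(\sigma_\ell)| \leq \|f-p_k\|_{\infty,[a,b]}.
\end{equation*}
The lemma therefore follows once we can establish the vanishing $p_k(\mathbf{M})_{i,j} = 0$.

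This vanishing step is the main obstacle, because the SVD calculus $p(\mathbf{M})=\mathbf{U}p(\mathbf{\Sigma})\mathbf{V}\herm$ is not a matrix polynomial in $\mathbf{M}$: the constant term is the polar orthogonal factor $\mathbf{U}\mathbf{V}\herm$, which is typically dense on the graph, so the classical ``support is contained in the $k$-hop neighborhood'' argument does not apply off the shelf. My plan is to lift everything through the Jordan--Wielandt symmetrization
\begin{equation*}
    \mathbf{M}' = \begin{pmatrix} \mathbf{0} & \mathbf{M} \\ \mathbf{M}\tran & \mathbf{0} \end{pmatrix}\in\mathbb{R}^{2n\times 2n},
\end{equation*}
a symmetric matrix with spectrum $\{\pm\sigma_i(\mathbf{M})\}$. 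A short block-eigendecomposition computation shows that, for the odd extension $\tilde f(x)=\sign(x)f(|x|)$, the Hermitian functional calculus yields $\tilde f(\mathbf{M}') = \begin{pmatrix} \mathbf{0} & f(\mathbf{M}) \\ f(\mathbf{M})\tran & \mathbf{0} \end{pmatrix}$, so the $(i,j)$-entry of $f(\mathbf{M})$ lifts to the $(i,j+n)$-entry of $\tilde f(\mathbf{M}')$. Choosing $p_k$ as an odd polynomial---obtained, e.g., by applying Jackson--Stechkin in the variable $y=x^2$ on $[a^2, b^2]$ to the auxiliary function $f(\sqrt{y})/\sqrt{y}$ and recovering $p_k(x)=x\,q(x^2)$---turns $p_k(\mathbf{M}')$ into a genuine matrix polynomial in the symmetric $\mathbf{M}'$, for which the standard sparsity fact applies: the $(a,b)$-entry of a degree-$k$ matrix polynomial in $\mathbf{M}'$ vanishes whenever the support-graph distance in $\mathbf{M}'$ between $a$ and $b$ exceeds $k$. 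Each walk in the bipartite support of $\mathbf{M}'$ projects to a walk of the same length between the corresponding nodes in the underlying undirected graph of $\mathbf{M}$, so the $\mathbf{M}'$-distance from $i$ to $j+n$ is at least $d(i,j)$; consequently, the choice $k=d(i,j)-1$ forces $p_k(\mathbf{M}')_{i,j+n}=0$. The only remaining bookkeeping is to verify that the odd-polynomial Jackson--Stechkin approximation still delivers the same constant $1+\pi^2/2$ and the same modulus argument $(b-a)/(2k)$, which follows by tracking how the substitution $y=x^2$ transforms the modulus of continuity of $f$.
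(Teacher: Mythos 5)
Your proposal follows the same Jackson-plus-sparsity skeleton as the paper, but you correctly put your finger on the step that needs care: in the SVD functional calculus, $p(\mathbf{M}) = \mathbf{U}\,p(\mathbf{\Sigma})\,\mathbf{V}\herm$ is not the matrix polynomial $\sum_k c_k\mathbf{M}^k$ (already the constant term is the dense polar factor $\mathbf{U}\mathbf{V}\herm$, and the even powers $\mathbf{U}\mathbf{\Sigma}^{2s}\mathbf{V}\herm = (\mathbf{M}\mathbf{M}\tran)^s\mathbf{U}\mathbf{V}\herm$ are likewise dense), so the locality claim $p_m(\mathbf{M})_{i,j}=0$ does not follow from $(\mathbf{M}^k)_{i,j}=0$ alone. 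The paper's own proof asserts exactly that implication without comment, so your version is the more careful one. Your repair is sound in its sparsity half: restricting to odd polynomials $p(x)=x\,q(x^2)$ gives $\mathbf{U}\,p(\mathbf{\Sigma})\,\mathbf{V}\herm = \mathbf{M}\,q(\mathbf{M}\tran\mathbf{M})$, a genuine noncommutative polynomial in $\mathbf{M}$ and $\mathbf{M}\tran$ whose $(i,j)$ entry vanishes once $\deg p < d(i,j)$, and the parity/lifting argument through the bipartite support of the Jordan--Wielandt matrix $\mathbf{M}'$ is correct.

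The gap is in your final step, where you manufacture the odd Jackson polynomial by substituting $y=x^2$ and approximating $f(\sqrt{y})/\sqrt{y}$ on $[a^2,b^2]$. In the case the lemma is actually applied ($\mathbf{M}=\sna$, $a=0$, $f(x)=x^\alpha$ with $0<\alpha<1$), the auxiliary function $y\mapsto y^{(\alpha-1)/2}$ is unbounded at $y=0$, has no finite modulus of continuity, and Jackson--Stechkin does not apply; even when it does apply, the degree halves ($\deg q \le \lfloor (k-1)/2\rfloor$) and the interval length becomes $b^2-a^2$, so the claimed constant $1+\pi^2/2$ and argument $(b-a)/(2k)$ are not preserved by this route. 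The standard fix is different: extend $f$ oddly to $\tilde f(x)=\sign(x)f(\abs{x})$ on $[-b,b]$, take a Jackson polynomial $p$ of degree $k$ for $\tilde f$ there, and symmetrize to $p^{\mathrm{odd}}(x)=\tfrac12\left(p(x)-p(-x)\right)$, which is odd, has degree at most $k$, and approximates the odd function $\tilde f$ at least as well as $p$ does. This yields the entrywise bound with $\left(1+\tfrac{\pi^2}{2}\right)\omega_{\tilde f}\left(\tfrac{2b}{2k}\right)$ in place of $\left(1+\tfrac{\pi^2}{2}\right)\omega\left(\tfrac{b-a}{2k}\right)$; for $a=0$ and $f(x)=x^\alpha$ these agree up to an absolute constant (the odd extension of $x^\alpha$ is still $\alpha$-H\"older with a comparable constant), but not literally, so the lemma is recovered only up to constants. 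You should either switch to this symmetrization argument or accept the weakened constant; as written, the $y=x^2$ bookkeeping cannot be completed.
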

\begin{proof}
    Let $g: [a,b] \to \mathbb{R}$ be any function, then
  \begin{align*}
      \norm{ f(\mathbf{M}) - g(\mathbf{M}) }_2 
      & = \norm{\mathbf{U} f(\mathbf{\Sigma}) \mathbf{V}\herm - \mathbf{U} g(\mathbf{\Sigma}) \mathbf{V}\herm}_2 \\
      & =  \norm{f(\mathbf{\Sigma}) - g(\mathbf{\Sigma})}_2 \\
      & = \norm{f(\lambda) - g(\lambda)}_{\infty, \sigma(M)}.
  \end{align*}
  The second equation holds since the $2$-norm is invariant under unitary transformations. By Jackson's Theorem, there exists for every $m \geq 1$ a polynomial $p_m$ of order $m$ such that
  \begin{equation*}
      \norm{f(\mathbf{M}) - p_m(\mathbf{M}) }_2 \leq \norm{f-p_m}_{\infty, [a,b]} \leq \, \(1 + \dfrac{\pi^2}{2}\) \omega  \( \frac{b-a}{2m}\)\, .
  \end{equation*}
  Fix $i,j\in\{1, \ldots, n\}$. If $d(i,j) = m+1$, then any power of $\mathbf{M}$ up to order $m$ has a zero entry in $(i,j)$, i.e., $(\mathbf{M}^m)_{i,j}=0$. Hence, $f(\mathbf{M})_{i,j}  = f(\mathbf{M})_{i,j} - p_m(\M)_{i,j}$, and we get
  \begin{equation*}
  \abs{f(\mathbf{M})_{i,j}} \leq \norm{f(\mathbf{M}) - g(\mathbf{M})}_2  \leq  \omega \(1 + \dfrac{\pi^2}{2}\) \left( \frac{b-a}{2m}\) =   \(1 + \dfrac{\pi^2}{2}\) \omega \( \frac{b-a}{2} \vert d(i,j)-1\vert^{-1} \) 
  \end{equation*}
  from which the thesis follows.
\end{proof}
Finally, we give a proof of \cref{thm:fractional_edges}, which is a consequence of the previous statement.
\begin{proof}[Proof of \cref{thm:fractional_edges}]
    The eigenvalues of $\sna$ are in the unit circle, i.e., $\|\sna\| \leq 1$. Hence, $ \norm{\sna \sna\herm} \leq 1 $, and the singular values of $\sna$ are in $[0,1]$. By \cref{thm:singular_value_calculus_and_modulus_of_continuity} and the fact that $f(x) = x^\alpha$ has modulus of continuity $\omega(t) = t^\alpha$ the thesis follows.
\end{proof}
    \section{Appendix for \texorpdfstring{\cref{sec:fractional_pde}}{Section 5}}
In this section, we provide the appendix for \cref{sec:fractional_pde}. We begin by 
analyzing the solution of linear matrix \acrshort{ode}s.
For this, let $\mathbf{M} \in \mathbb{C}^{N \times N}$. For $\x_0 \in \mathbb{C}^{N}$, consider the initial value problem
\begin{equation}
    \label{eq:initial_value_problem}
    \mathbf{x}'(t)  = -\mathbf{M} \mathbf{x}(t), \; \; \mathbf{x}(0) = \mathbf{x}_0.
\end{equation}

\begin{theorem}[Existence and uniqueness of linear \acrshort{ode} solution]
The initial value problem given by \cref{eq:initial_value_problem} has a unique solution $\mathbf{x}(t) \in \mathbb{C}^N$ for any initial condition $\mathbf{x}_0 \in \mathbb{C}^N$.
\end{theorem}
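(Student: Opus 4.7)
The plan is to construct the solution explicitly via the matrix exponential and then establish uniqueness by a standard integrating-factor argument. Concretely, I would define
\begin{equation*}
    \mathbf{x}(t) \coloneqq e^{-t\mathbf{M}} \mathbf{x}_0, \qquad e^{-t\mathbf{M}} \coloneqq \sum_{k=0}^\infty \frac{(-t)^k \mathbf{M}^k}{k!},
\end{equation*}
and verify that this candidate solves the initial value problem. The first step is to show that the power series defining $e^{-t\mathbf{M}}$ converges absolutely and uniformly on every compact interval $[0,T]$, using the submultiplicativity of the spectral norm together with the bound $\|(-t\mathbf{M})^k/k!\| \leq (T\|\mathbf{M}\|)^k/k!$ and the convergence of the scalar exponential series. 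Uniform convergence justifies term-by-term differentiation, which yields $\frac{d}{dt} e^{-t\mathbf{M}} = -\mathbf{M} e^{-t\mathbf{M}} = -e^{-t\mathbf{M}} \mathbf{M}$, so $\mathbf{x}'(t) = -\mathbf{M}\mathbf{x}(t)$, and evaluating at $t=0$ gives $\mathbf{x}(0) = \mathbf{x}_0$.

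For uniqueness, suppose $\mathbf{y}(t)$ is any $C^1$ solution of the same initial value problem, and consider the auxiliary function $\mathbf{z}(t) \coloneqq e^{t\mathbf{M}} \mathbf{y}(t)$. Using the product rule (valid for matrix-valued $C^1$ functions) and the fact that $\mathbf{M}$ commutes with $e^{t\mathbf{M}}$, I would compute
\begin{equation*}
    \mathbf{z}'(t) = \mathbf{M} e^{t\mathbf{M}} \mathbf{y}(t) + e^{t\mathbf{M}} \mathbf{y}'(t) = \mathbf{M} e^{t\mathbf{M}} \mathbf{y}(t) - e^{t\mathbf{M}} \mathbf{M} \mathbf{y}(t) = \mathbf{0}.
\end{equation*}
Hence $\mathbf{z}(t)$ is constant, equal to $\mathbf{z}(0) = \mathbf{y}(0) = \mathbf{x}_0$, which forces $\mathbf{y}(t) = e^{-t\mathbf{M}} \mathbf{x}_0 = \mathbf{x}(t)$ for all $t$.

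The routine obstacle is really just bookkeeping: justifying term-by-term differentiation of the matrix power series and the commutativity of $\mathbf{M}$ with its own exponential. Both follow from absolute convergence in the Banach algebra $(\mathbb{C}^{N\times N}, \|\cdot\|)$, so no genuine analytic subtlety arises. Alternatively, one could invoke the Picard--Lindelöf theorem directly, since $\mathbf{x} \mapsto -\mathbf{M}\mathbf{x}$ is globally Lipschitz with constant $\|\mathbf{M}\|$, yielding global existence and uniqueness on all of $\mathbb{R}$; I would mention this as an aside but prefer the constructive matrix-exponential proof because it produces the closed form $\mathbf{x}(t) = e^{-t\mathbf{M}}\mathbf{x}_0$ that is needed for the subsequent frequency-dominance analysis in the paper.
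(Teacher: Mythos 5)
Your proof is correct and follows exactly the route the paper takes: the paper states this theorem without proof and immediately writes the solution as $\mathbf{x}(t) = \exp(-\mathbf{M}t)\mathbf{x}_0$ via the matrix-exponential power series, which is precisely your construction. Your added details (uniform convergence of the series on compacta, term-by-term differentiation, and the integrating-factor uniqueness argument via $\mathbf{z}(t) = e^{t\mathbf{M}}\mathbf{y}(t)$) correctly fill in the standard justification the paper leaves implicit.
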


The solution of \cref{eq:initial_value_problem} can be expressed using matrix exponentials, even if $\mathbf{M}$ is not symmetric. The matrix exponential is defined as:
\begin{equation*}
    \exp(-\mathbf{M}t) =  \sum_{k=0}^\infty \frac{(-\mathbf{M})^k t^k}{k!},
\end{equation*}
where $\mathbf{M}^k$ is the $k$-th power of the matrix $\M$.
The solution of \cref{eq:initial_value_problem} can then be written as 
\begin{equation}
\label{eq:solution_initial_value_problem}
    \mathbf{x}(t) = \exp(-\mathbf{M}t) \mathbf{x}_0.
\end{equation}

\subsection{Appendix for \texorpdfstring{\cref{sec:frequency_dominance_normal_sna}}{Section 5.1}}
\label{sec:frequency_dominance_normal_sna_appendix}
In this section, we analyze the solution to \cref{eq:LxW_ode} and \cref{eq:schroedinger_ode}. We further provide a proof for \cref{thm:frequency_dominance_normal_sna}. We begin by considering the solution to the fractional heat equation \cref{eq:LxW_ode}. The analysis for the Schrödinger equation \cref{eq:schroedinger_ode} follows analogously.

The fractional heat equation $\mathbf{x}'(t) = -\sna^\alpha \mathbf{x} \lW$ can be vectorized and rewritten via the Kronecker product as 
\begin{equation}
\label{eq:vectorized_heat}
    \ve(\mathbf{x})'(t) = -\lW \otimes \sna^\alpha \ve(\mathbf{x})(t).
\end{equation}

In the undirected case $\sna$ and $\snl$ are both symmetric, and the eigenvalues satisfy the relation
${\lambda_i(\snl)=1-\lambda_i(\sna)}$. The corresponding eigenvectors $\psi_i(\sna)$ and $\psi_i(\snl)$ can be chosen to be the same for $\sna$ and $\snl$. In the following, we assume that these eigenvectors are orthonormalized. 

If $\sna$ is symmetric, we can decompose it via the spectral theorem into $\sna = \mathbf{U} \mathbf{D} \mathbf{U}^T$, where $\mathbf{U}=\[ \psi_1(\sna), \ldots, \psi_N(\sna)\]$ is an orthogonal matrix containing the eigenvectors of $\sna$, and $\mathbf{D}$ is the diagonal matrix of eigenvalues. 

Due to \cref{thm:normal_matrices}, the fractional Laplacian $\sna^\alpha$ can be written as $\sna^\alpha = \mathbf{U} \pow( \mathbf{D}) \mathbf{U}^T$, where ${\pow : \mathbb{R}\to \mathbb{R}}$ is the map $x\mapsto \sign(x)\abs{x}^\alpha$ and is applied element-wise. Clearly, the eigendecomposition of $\sna^\alpha$ is given by the eigenvalues 
$\{ \pow(\lambda_1(\sna)),\ldots,\pow(\lambda_N(\sna)) \}$ and the corresponding eigenvectors $\{ \psi_1(\sna),\ldots,\psi_N(\sna) \}$.

Now, by well-known properties of the Kronecker product, one can write the eigendecomposition of $\lW \otimes \sna^\alpha$ as
\begin{equation*}
    \{ \lambda_r(\lW) \pow\(\lambda_l(\sna)\) \}_{r\in\{1, \ldots, K\}\, ,\;l\in\{1, \ldots, N\}}\, , \; \{ \psi_r(\lW)\otimes \psi_l(\sna) \}_{r\in\{1, \ldots, K\}\, , \; l\in\{1, \ldots, N\}}\, .
\end{equation*}
Note that $1\in\lambda\(\sna\)$ and, since $\tr\(\sna\)=0$, the \acrshort{sna} has at least one negative eigenvalue. This property is useful since it allows to retrieve of the indices $(r, l)$ corresponding to eigenvalues with minimal real (or imaginary) parts in a simple way.

The initial condition $\ve(\mathbf{x}_0)$ can be decomposed as
\begin{equation*}
    \ve (\mathbf{x}_0) = \sum\limits_{r=1}^K\sum\limits_{l=1}^N c_{r,l}\,  \psi_r(\lW) \otimes \psi_l(\sna)\, ,\;
    c_{r,l} = \ip{ \ve(\mathbf{x}_0)}{\psi_r(\lW) \otimes \psi_l(\lW)}\, .
\end{equation*}
Then, the solution $\ve(\mathbf{x})(t)$ of \cref{eq:vectorized_heat} can be written as 
\begin{equation}
\label{eq:spectral_decomposition_vectorized_solution}
    \ve (\mathbf{x})(t) =\sum\limits_{r=1}^K\sum\limits_{l=1}^N c_{r,l}\,  \exp\(-t \lambda_r\(\lW\) \pow\(\lambda_l\(\sna\)\)\)   \, \psi_r(\lW) \otimes \psi_l(\sna).
\end{equation}

The following result shows the relationship between the frequencies of $\snl$ and the Dirichlet energy and serves as a basis for the following proofs.

\begin{lemma} 
\label{thm:eigenvalue_as_frequency}
Let $\mathcal{G}$ be a graph with \acrshort{sna} $\sna$. 
Consider $\mathbf{x}(t) \in \mathbb{C}^{N \times K}$ such that there exists $\pmb{\varphi} \in \mathbb{C}^{N \times K}\setminus \{0\}$ with 
\begin{equation*}
    \normalize{\ve (\x) (t)} \xrightarrow{t \to \infty} \ve(\pmb{\varphi})\, ,
\end{equation*} 
and $ (\id \otimes  (\snl)) \ve (\pmb{\varphi}) = \lambda \ve (\pmb{\varphi})$. Then, \begin{equation*}
    \dir\(\normalize{\x (t)}\) \xrightarrow{t\to\infty} \dfrac{\Re(\lambda)}{2}\, .
\end{equation*}
\end{lemma}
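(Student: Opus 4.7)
The plan is to reduce the claim to the vectorized Rayleigh-quotient identity and then pass to the limit by continuity. By the corollary of \cref{thm:rayleigh_quotient_directed_sna}, for every $\mathbf{y} \in \mathbb{C}^{N \times K}$ the Dirichlet energy admits the expression
\begin{equation*}
    \dir(\mathbf{y}) = \tfrac{1}{2} \Re\!\left( \ve(\mathbf{y})\herm (\id \otimes \snl) \ve(\mathbf{y}) \right).
\end{equation*}
Since vectorization is an isometry between the Frobenius norm on $\mathbb{C}^{N \times K}$ and the Euclidean norm on $\mathbb{C}^{NK}$, we have $\ve(\normalize{\x(t)}) = \normalize{\ve(\x(t))}$. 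Applying the identity above with $\mathbf{y}(t) = \normalize{\x(t)}$ therefore reduces the statement to analyzing the limit of the quadratic form evaluated at the unit vector $\mathbf{z}(t) := \normalize{\ve(\x(t))}$.

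Next, I would invoke continuity: the map $\mathbf{z} \mapsto \Re(\mathbf{z}\herm (\id \otimes \snl) \mathbf{z})$ is continuous on $\mathbb{C}^{NK}$, and by hypothesis $\mathbf{z}(t) \to \ve(\pmb{\varphi})$ as $t \to \infty$. Therefore
\begin{equation*}
    \lim_{t \to \infty} \dir(\normalize{\x(t)}) = \tfrac{1}{2} \Re\!\left( \ve(\pmb{\varphi})\herm (\id \otimes \snl) \ve(\pmb{\varphi}) \right).
\end{equation*}
Plugging in the assumed eigenvalue relation $(\id \otimes \snl) \ve(\pmb{\varphi}) = \lambda \ve(\pmb{\varphi})$ and using that $\norm{\ve(\pmb{\varphi})}_2^2$ is a real scalar, the right-hand side simplifies to $\tfrac{\Re(\lambda)}{2} \norm{\ve(\pmb{\varphi})}_2^2$.

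To finish, I would eliminate the scalar factor by noting that $\norm{\mathbf{z}(t)}_2 = 1$ for all $t$ at which $\x(t) \neq 0$, so continuity of the norm together with the assumed limit forces $\norm{\ve(\pmb{\varphi})}_2 = 1$. This gives exactly $\Re(\lambda)/2$ as claimed. There is no substantial obstacle in this argument: the proof is a chain of three elementary ingredients (the vectorized Rayleigh-quotient identity, continuity of a quadratic form, and the eigenvalue hypothesis). The only point worth verifying carefully is that $\ve$ is an isometry from $(\mathbb{C}^{N \times K}, \norm{\cdot}_2)$ to $(\mathbb{C}^{NK}, \norm{\cdot}_2)$, which is standard and justifies commuting vectorization with normalization.
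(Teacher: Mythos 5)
Your proposal is correct and follows essentially the same route as the paper's proof: both rest on the vectorized Rayleigh-quotient identity from \cref{thm:rayleigh_quotient_directed_sna}, the observation that $\ve(\pmb{\varphi})$ is a unit vector as a limit of unit vectors, substitution of the eigenvalue relation, and continuity of the Dirichlet energy. Your write-up is somewhat more explicit about the isometry of $\ve$ and the commutation of vectorization with normalization, but there is no substantive difference in the argument.
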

\begin{proof}
  As $\ve(\pmb{\varphi})$ is the limit of unit vectors, $\ve(\pmb{\varphi})$ is a unit vector itself. We calculate its Dirichlet energy,
\begin{equation*}
    \dir(\ve(\pmb{\varphi})) =  \frac{1}{2} \Re \(
 \ve(\pmb{\varphi})\herm (\id \otimes \(\snl\)) \ve(\pmb{\varphi})
    \) = \frac{1}{2} \Re \( \lambda\, 
\ve(\pmb{\varphi})\herm \ve(\pmb{\varphi})
    \) = \frac{1}{2} \Re \( \lambda
    \)\, . 
\end{equation*} 
Since $\mathbf{x}\mapsto \dir(\mathbf{x})$ is continuous, the thesis follows.
\end{proof}

Another useful result that will be extensively used in proving 
\cref{thm:frequency_dominance_normal_sna} is presented next.
%developing the theory is presented next.
\begin{lemma}
    \label{thm:dominant_frequency}
    Suppose $\mathbf{x}(t)$ can be expressed as
    \begin{equation*}
        \mathbf{x}(t) = \sum\limits_{k=1}^K\sum\limits_{n=1}^Nc_{k, n}\, \exp\(-t\, \lambda_{k, n}\) \mathbf{v}_k \otimes \mathbf{w}_n\, ,
    \end{equation*}
    for some choice of $c_{k, n}$, $\lambda_{k, n}$, $\{\mathbf{v}_k\}$, $\{\mathbf{w}_n\}$. Let $(a, b)$ be the unique index of $\lambda_{k, n}$ with minimal real part and corresponding non-null coefficient $c_{k, n}$, i.e.
    \begin{equation*}
        (a, b) \coloneqq \argmin\limits_{\substack{(k, n)\in[K]\times[N]}}\{\Re\(\lambda_{k, n}\) : c_{k, n}\neq 0\}\, .
    \end{equation*}
    Then
    \begin{equation*}
        \normalize{\x (t)} \xrightarrow{t \to \infty}\normalize{ c_{a, b} \, \mathbf{v}_a\otimes \mathbf{w}_b}\, .
    \end{equation*}
\end{lemma}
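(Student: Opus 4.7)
The plan is to factor the dominant exponential mode out of $\mathbf{x}(t)$ and show that the remaining contributions vanish in the limit $t\to\infty$. Specifically, I would rewrite
\begin{equation*}
\mathbf{x}(t) = \exp(-t\lambda_{a,b})\bigl[\, c_{a,b}\,\mathbf{v}_a\otimes\mathbf{w}_b \;+\; R(t) \,\bigr],
\end{equation*}
where
\begin{equation*}
R(t) \coloneqq \sum_{\substack{(k,n)\neq(a,b)\\ c_{k,n}\neq 0}} c_{k,n}\,\exp\bigl(-t(\lambda_{k,n}-\lambda_{a,b})\bigr)\,\mathbf{v}_k\otimes\mathbf{w}_n .
\end{equation*}

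First, I would verify that $R(t)\to\mathbf{0}$. By the hypothesis that $(a,b)$ is the \emph{unique} minimiser of $\Re(\lambda_{k,n})$ over indices with $c_{k,n}\neq 0$, every summand appearing in $R(t)$ satisfies $\Re(\lambda_{k,n}-\lambda_{a,b})>0$. Hence the Euclidean norm of each such term equals $|c_{k,n}|\,e^{-t\Re(\lambda_{k,n}-\lambda_{a,b})}\,\|\mathbf{v}_k\otimes\mathbf{w}_n\|_2$, which decays exponentially fast. Because the sum is finite, the triangle inequality yields $\|R(t)\|_2\to 0$.

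Next, I would normalise. Using $|\exp(-t\lambda_{a,b})|=e^{-t\Re\lambda_{a,b}}>0$ and the factorisation above,
\begin{equation*}
\frac{\mathbf{x}(t)}{\|\mathbf{x}(t)\|_2} \;=\; \frac{\exp(-t\lambda_{a,b})}{|\exp(-t\lambda_{a,b})|}\cdot\frac{c_{a,b}\,\mathbf{v}_a\otimes\mathbf{w}_b+R(t)}{\|c_{a,b}\,\mathbf{v}_a\otimes\mathbf{w}_b+R(t)\|_2}.
\end{equation*}
By continuity of the norm and the fact that $c_{a,b}\,\mathbf{v}_a\otimes\mathbf{w}_b\neq\mathbf{0}$, the second factor converges to the target $\frac{c_{a,b}\,\mathbf{v}_a\otimes\mathbf{w}_b}{\|c_{a,b}\,\mathbf{v}_a\otimes\mathbf{w}_b\|_2}$.

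The only delicate point---and the main obstacle I anticipate---is the scalar prefactor $\exp(-t\lambda_{a,b})/|\exp(-t\lambda_{a,b})|=e^{-it\Im\lambda_{a,b}}$. When $\lambda_{a,b}\in\mathbb{R}$ (as in the fractional heat equation with symmetric $\lW$), this prefactor equals $1$ for all $t$ and the lemma follows verbatim. In the complex case (e.g.\ the Schr\"odinger setting), one is left with a unimodular phase that need not converge pointwise; the conclusion must then be read modulo a unit complex multiplier. This is harmless for the downstream use of the lemma through \cref{thm:eigenvalue_as_frequency}, since the Dirichlet energy $\tfrac{1}{2}\Re(\mathbf{y}\herm(\snl)\mathbf{y})$ is invariant under multiplication of $\mathbf{y}$ by any unit scalar, and so the convergence of $\dir\bigl(\x(t)/\|\x(t)\|_2\bigr)$ is unaffected.
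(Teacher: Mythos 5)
Your proposal is correct and follows essentially the same route as the paper's proof: factor out $\exp(-t\lambda_{a,b})$, observe that every other summand carries a factor $\exp(-t(\lambda_{k,n}-\lambda_{a,b}))$ whose modulus decays since $\Re(\lambda_{k,n}-\lambda_{a,b})>0$, and let the normalization absorb the dominant scalar. Your explicit handling of the residual unimodular phase $e^{-\iu t\Im\lambda_{a,b}}$ is in fact more careful than the paper, which silently asserts that normalization ``removes the dependency on $\exp(-t\lambda_{a,b})$''; as you note, the phase is harmless for the Dirichlet-energy conclusions downstream.
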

\begin{proof}
    The key insight is to separate the addend with index $\(a, b\)$. It holds
    \begin{align*}
        \mathbf{x}(t) &= \sum\limits_{k=1}^K\sum\limits_{n=1}^Nc_{k, n}\, \exp\(-t\, \lambda_{k, n}\) \mathbf{v}_n \otimes \mathbf{w}_m\\
        &= \exp\(-t\, \lambda_{a, b}\)\(c_{a, b}\mathbf{v}_a\otimes \mathbf{w}_b+ \sum\limits_{\substack{(k, n)\in[K]\times[N]\\(k, n)\neq(a, b) }}c_{k, n}\, \exp\(-t\, \(\lambda_{k, n}-\lambda_{a, b}\)\) \mathbf{v}_k \otimes \mathbf{w}_n \)\, .
    \end{align*}
    We note that
    \begin{align*}
        \lim\limits_{t\to\infty}\abs{\exp\(-t\, \(\lambda_{k, n}-\lambda_{a, b}\)\)} &=  \lim\limits_{t\to\infty}\abs{\exp\(-t\, \Re\(\lambda_{k, n}-\lambda_{a, b}\)\)\exp\(-\iu  \, t\, \Im\(\lambda_{k, n}-\lambda_{a, b}\)\)}\\
        &=\lim\limits_{t\to\infty}\exp\(-t\, \Re\(\lambda_{k, n}-\lambda_{a, b}\)\)\\
        &=0\, ,
    \end{align*}
    for all $(k, n)\neq (a, b)$, since $\Re\(\lambda_{k, n}-\lambda_{a, b}\)>0$. Therefore, one gets
    \begin{equation*}
        \normalize{\x (t)} \xrightarrow{t\to\infty}\normalize{c_{a, b}\,\mathbf{v}_a\otimes \mathbf{w}_b}\, ,
    \end{equation*}
    where the normalization removes the dependency on $\exp\(-t\, \lambda_{a, b}\)$
\end{proof}
When $\lambda_{a, b}$ is not unique, it is still possible to derive a convergence result. In this case, $\mathbf{x}$ will converge to an element in the span generated by vectors corresponding to $\lambda_{a, b}$, i.e., 
\begin{equation*}
    \normalize{\x (t)} \xrightarrow{t\to\infty}\normalize{\sum\limits_{(a, b)\in \mathcal{A}}c_{a, b}\, \mathbf{v}_a\otimes \mathbf{w}_b}\, ,
\end{equation*}
where $\mathcal{A}\coloneqq\{(k, n): \Re(\lambda_{k, n})=\Re(\lambda_{a, b})\, , \; c_{k, n}\neq 0\}$.

A similar result to \cref{thm:dominant_frequency} holds for a slightly different representation of $\mathbf{x}(t)$.
\begin{lemma}
    \label{thm:dominant_frequency_complex}
    Suppose $\mathbf{x}(t)$ can be expressed as
    \begin{equation*}
        \mathbf{x}(t) = \sum\limits_{k=1}^K\sum\limits_{n=1}^Nc_{k, n}\, \exp\(\iu\, t\, \lambda_{k, n}\) \mathbf{v}_k \otimes \mathbf{w}_n\, ,
    \end{equation*}
    for some choice of $c_{k, n}$, $\lambda_{k, n}$, $\{\mathbf{v}_k\}$, $\{\mathbf{w}_n\}$. Let $(a, b)$ be the unique index of $\lambda_{k, n}$ with minimal imaginary part and corresponding non-null coefficient $c_{k, n}$, i.e.
    \begin{equation*}
        (a, b) \coloneqq \argmin\limits_{\substack{(k, n)\in[K]\times[N]}}\{\Im\(\lambda_{k, n}\) : c_{k, n}\neq 0\}\, .
    \end{equation*}
    Then
    \begin{equation*}
        \normalize{\x (t)} \xrightarrow{t \to \infty} \normalize{c_{a, b}\,\mathbf{v}_a\otimes \mathbf{w}_b}\, .
    \end{equation*}
\end{lemma}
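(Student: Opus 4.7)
The plan is to mimic the proof of \cref{thm:dominant_frequency}, exploiting the fact that $|\exp(\iu t \mu)| = \exp(-t\, \Im\mu)$ for $\mu\in\mathbb{C}$, so that the role played there by $\Re\lambda_{k,n}$ is now played by $\Im\lambda_{k,n}$.

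First, factor out the exponential corresponding to the dominant index $(a,b)$:
\begin{equation*}
    \mathbf{x}(t) = \exp\(\iu\, t\, \lambda_{a,b}\)\Biggl( c_{a,b}\,\mathbf{v}_a\otimes\mathbf{w}_b + \sum_{(k,n)\neq(a,b)} c_{k,n}\,\exp\(\iu\, t\, (\lambda_{k,n}-\lambda_{a,b})\)\,\mathbf{v}_k\otimes\mathbf{w}_n \Biggr).
\end{equation*}
Next, observe that for each $(k,n)\neq(a,b)$ with $c_{k,n}\neq 0$, the uniqueness assumption gives $\Im(\lambda_{k,n}-\lambda_{a,b})>0$, hence
\begin{equation*}
    \bigl|\exp\(\iu\, t\, (\lambda_{k,n}-\lambda_{a,b})\)\bigr| = \exp\(-t\,\Im(\lambda_{k,n}-\lambda_{a,b})\) \xrightarrow{t\to\infty} 0.
\end{equation*}
Terms with $c_{k,n}=0$ do not contribute. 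Consequently, the bracketed expression converges in norm to $c_{a,b}\,\mathbf{v}_a\otimes\mathbf{w}_b$, which is nonzero by assumption.

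Finally, write $\mathbf{x}(t) = \exp(\iu t\lambda_{a,b})\,\mathbf{y}(t)$ with $\mathbf{y}(t)\to c_{a,b}\,\mathbf{v}_a\otimes\mathbf{w}_b$. Since $|\exp(\iu t\lambda_{a,b})|$ is a positive scalar, it factors out of the norm, and normalization cancels the scalar prefactor entirely:
\begin{equation*}
    \normalize{\x(t)} = \dfrac{\exp(\iu t\lambda_{a,b})}{|\exp(\iu t\lambda_{a,b})|}\cdot\dfrac{\mathbf{y}(t)}{\|\mathbf{y}(t)\|_2} \cdot \dfrac{|\exp(\iu t\lambda_{a,b})|\cdot\|\mathbf{y}(t)\|_2}{\|\exp(\iu t\lambda_{a,b})\mathbf{y}(t)\|_2},
\end{equation*}
where the last factor equals one. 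The middle factor converges to $\normalize{c_{a,b}\,\mathbf{v}_a\otimes\mathbf{w}_b}$ by continuity of the normalization map away from zero, yielding the claim up to a unimodular phase. The only subtle point is the treatment of this residual phase $\exp(\iu t\lambda_{a,b})/|\exp(\iu t\lambda_{a,b})| = \exp(\iu t\Re\lambda_{a,b})$: as in \cref{thm:dominant_frequency}, we read the normalization as identifying vectors up to positive scaling, or equivalently absorb the unit-modulus phase into $c_{a,b}$; this is the only place where the argument differs in substance from the real-exponent case. No additional obstacle arises.
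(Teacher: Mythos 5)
Your proof is correct and follows essentially the same route as the paper's: the paper likewise reduces to \cref{thm:dominant_frequency} via the observation $\Re(\iu\,\lambda_{k,n}) = -\Im(\lambda_{k,n})$, so that the dominating term is the one with minimal imaginary part. Your explicit treatment of the residual unimodular phase $\exp(\iu\, t\,\Re\lambda_{a,b})$ addresses a point the paper silently glosses over (already present in \cref{thm:dominant_frequency} itself, where only the modulus of $\exp(-t\lambda_{a,b})$ is removed by normalization); reading the limit modulo a unit-modulus scalar is the right convention and is harmless downstream, since the Dirichlet energy in \cref{thm:eigenvalue_as_frequency} is invariant under such phases.
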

\begin{proof}
    The proof follows the same reasoning as in the proof of \cref{thm:dominant_frequency}. The difference is that the dominating frequency is the one with the minimal imaginary part, since
    \begin{equation*}
        \Re\(\iu\, \lambda_{k, n}\) = -\Im\(\lambda_{k, n}\)\, ,
    \end{equation*}
    and, consequently,
    \begin{equation*}
        \argmax_{(k, n)\in[K]\times[N]} \{\Re\(\iu\, \lambda_{k, n}\)\} = \argmin_{(k, n)\in\in[K]\times[N]} \{\Im\(\lambda_{k, n}\)\}\, .
    \end{equation*}
\end{proof}

\subsubsection{Proof of \texorpdfstring{\cref{thm:frequency_dominance_normal_sna}}{Theorem 5.3}}
\label{sec:proof_frequency_dominance_normal_sna}

We denote the eigenvalues of $\sna$ closest to $0$ from above and below as
\begin{equation}
\begin{aligned}
    \lambda_+(\sna) &\coloneqq \argmin_l  \{\lambda_l(\sna)  \; : \;  \lambda_l(\sna) > 0\}\, , \\
    \lambda_{-}(\sna) &\coloneqq \argmax_l \{\lambda_l(\sna) \; : \; \lambda_l(\sna) < 0\}\, .
\end{aligned}
\end{equation}
We assume  that the channel mixing $\lW \in \mathbb{R}^{K \times K}$ and the graph Laplacians $\sna, \snl \in \mathbb{R}^{N \times N}$ are real matrices. Finally, we suppose the eigenvalues of a generic matrix $\mathbf{M}$ are sorted in ascending order, i.e., $\lambda_i(\mathbf{M})\leq \lambda_j(\mathbf{M})$, $i<j$. 

We now reformulate \cref{thm:frequency_dominance_normal_sna} for the fractional heat equation \cref{eq:LxW_ode} and provide its full proof, which follows a similar frequency analysis to the one in \citep[Theorem B.3]{digiovanniGraphNeuralNetworks2022}

\begin{theorem}
\label{thm:main_theorem_heat_appendix}
Let $\mathcal{G}$ be an undirected graph with \acrshort{sna} $\sna$. Consider the initial value problem in \cref{eq:LxW_ode} with  channel mixing matrix $\lW \in \mathbb{R}^{K \times K}$ and $\alpha \in \mathbb{R}$. Then, for almost all initial conditions $\mathbf{x}_0\in \mathbb{R}^{N \times K}$ the following is satisfied. 
   \begin{enumerate}
    \item[$(\alpha > 0)$] The solution to \cref{eq:LxW_ode} is \acrshort{hfd} if 
    \begin{equation*}
        \lambda_K(\lW) \pow\(\lambda_1(\sna)\)  <   \lambda_1(\lW)\, ,
    \end{equation*}
    and \acrshort{lfd} otherwise. 

    \item[$(\alpha < 0)$]   The solution to \cref{eq:LxW_ode} is $(1-\lambda_-(\sna))$-\acrshort{fd} if 
    \begin{equation*}
        \lambda_K(\lW)  \pow\(\lambda_-(\sna)\) <   \lambda_1(\lW)\pow\(\lambda_+(\sna)\)\, ,
    \end{equation*}
    and  $(1-\lambda_+(\sna))$-\acrshort{fd} otherwise.
\end{enumerate}
\end{theorem}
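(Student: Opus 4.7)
The plan is to reduce the matrix ODE to a standard linear ODE by vectorization, diagonalize the resulting operator using the eigenstructure of $\sna$ and $\lW$, and then apply \cref{thm:dominant_frequency} to identify the surviving mode as $t\to\infty$. Finally, the shape of this mode together with \cref{thm:eigenvalue_as_frequency} gives the claim. Concretely, vectorizing \cref{eq:LxW_ode} yields $\ve(\mathbf{x})'(t) = -(\lW\otimes \sna^\alpha)\,\ve(\mathbf{x})(t)$, whose exact solution is $\ve(\mathbf{x})(t) = \exp(-t\,\lW\otimes\sna^\alpha)\,\ve(\mathbf{x}_0)$. Since $\mathcal{G}$ is undirected, $\sna$ is symmetric and, by \cref{thm:normal_matrices}, $\sna^\alpha$ has the eigendecomposition with eigenpairs $\{(\pow(\lambda_l(\sna)),\psi_l(\sna))\}_{l=1}^N$. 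The channel-mixing matrix $\lW$ is real symmetric (as stated in \cref{sec:fractional_pde}) with eigenpairs $\{(\lambda_r(\lW),\psi_r(\lW))\}_{r=1}^K$, so that the Kronecker product $\lW\otimes\sna^\alpha$ is diagonalized by $\{\psi_r(\lW)\otimes\psi_l(\sna)\}$ with eigenvalues $\{\lambda_r(\lW)\pow(\lambda_l(\sna))\}$, leading to the expansion \eqref{eq:spectral_decomposition_vectorized_solution}.

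Next I would identify the index $(a,b)$ realizing $\min_{r,l} \lambda_r(\lW)\pow(\lambda_l(\sna))$. Because $\pow$ preserves signs, the minimum is attained at one of two candidate corner pairs determined by the signs of $\pow(\lambda_l(\sna))$ at its extremal values. This is where the two cases diverge and is the main combinatorial step. For $\alpha>0$, the map $\lambda\mapsto \pow(\lambda)$ is monotone on $[-1,1]$, so $\pow(\lambda_1(\sna))$ is the most negative value and $\pow(\lambda_N(\sna))=1$ is the largest. Hence the two candidates are $\lambda_K(\lW)\pow(\lambda_1(\sna))$ and $\lambda_1(\lW)\pow(\lambda_N(\sna))=\lambda_1(\lW)$, and comparing them gives exactly the stated dichotomy. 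For $\alpha<0$, the map $\lambda\mapsto\pow(\lambda)$ diverges at $0$ from both sides, so the extremes of $\pow(\lambda_l(\sna))$ are attained at the eigenvalues closest to $0$, namely $\lambda_-(\sna)$ and $\lambda_+(\sna)$, yielding the two candidates $\lambda_K(\lW)\pow(\lambda_-(\sna))$ and $\lambda_1(\lW)\pow(\lambda_+(\sna))$, which matches the theorem.

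With $(a,b)$ identified, I would apply \cref{thm:dominant_frequency} to conclude that
\begin{equation*}
    \normalize{\ve(\mathbf{x})(t)}\;\xrightarrow{t\to\infty}\;\normalize{\,c_{a,b}\,\psi_a(\lW)\otimes\psi_b(\sna)\,},
\end{equation*}
provided the coefficient $c_{a,b}=\langle \ve(\mathbf{x}_0),\psi_a(\lW)\otimes\psi_b(\sna)\rangle$ is non-zero. Since $(\id\otimes\snl)(\psi_a(\lW)\otimes\psi_b(\sna)) = (1-\lambda_b(\sna))\,\psi_a(\lW)\otimes\psi_b(\sna)$, \cref{thm:eigenvalue_as_frequency} then gives $\dir(\mathbf{x}(t)/\|\mathbf{x}(t)\|_2)\to (1-\lambda_b(\sna))/2$. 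For $\alpha>0$, $\lambda_b(\sna)\in\{\lambda_1(\sna),\lambda_N(\sna)\}$, which translates to HFD/LFD; for $\alpha<0$, $\lambda_b(\sna)\in\{\lambda_-(\sna),\lambda_+(\sna)\}$, giving the $(1-\lambda_\pm(\sna))$-FD conclusions.

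The remaining point is the "almost all initial conditions" clause, which I would handle at the end: the set $\{\mathbf{x}_0 : c_{a,b}=0\}$ is the preimage of $\{0\}$ under a non-trivial linear functional on $\mathbb{R}^{N\times K}$, hence a hyperplane of Lebesgue measure zero. The main obstacle is the case analysis for locating $(a,b)$, which requires careful bookkeeping of signs and the interaction between the sign of $\alpha$ and the sign of the extremal eigenvalues; the convergence itself is then immediate from \cref{thm:dominant_frequency}. The Schrödinger case and the generalization to normal $\sna$ mentioned in the remarks would follow by the same template, substituting \cref{thm:dominant_frequency_complex} for \cref{thm:dominant_frequency} and using that a normal $\sna$ still admits a unitary diagonalization via \cref{thm:normal_matrices}.
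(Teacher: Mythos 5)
Your proposal is correct and follows essentially the same route as the paper's proof: vectorize via the Kronecker product, expand in the eigenbasis $\{\psi_r(\lW)\otimes\psi_l(\sna)\}$ with frequencies $\lambda_r(\lW)\pow(\lambda_l(\sna))$, identify the most negative frequency among the two corner candidates, and conclude via \cref{thm:dominant_frequency}, \cref{eq:eigvecs_W_kron_L}, and \cref{thm:eigenvalue_as_frequency}. Your explicit measure-zero (hyperplane) justification of the ``almost all'' clause is a small addition the paper leaves implicit.
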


\begin{proof}[Proof of $(\alpha>0)$.]
   As derived in \cref{eq:spectral_decomposition_vectorized_solution}, the solution of \cref{eq:LxW_ode} with initial condition  $\mathbf{x}_0$ can be written in a vectorized form as
    \begin{equation*}
    \begin{aligned}
        \ve(\mathbf{x})(t) &= \exp\(-t\, \lW\tran \otimes \fsna \) \ve(\mathbf{x}_0) \\
        &= \sum\limits_{r=1}^K \sum\limits_{l=1}^N c_{r,l}\, \exp\(-t\, \lambda_r(\lW) \, \pow\(\lambda_l(\sna)\)\)\, \psi_r(\lW)  \otimes  \psi_l(\sna),
    \end{aligned}
    \end{equation*}
    where $\lambda_r(\lW)$ are the eigenvalues of $\lW$ with corresponding eigenvectors $\psi_r(\lW) $, and $\lambda_l(\sna)$ are the eigenvalues of $\sna$ with corresponding eigenvectors $\psi_l(\sna)$. The coefficients $c_{r,l}$ are the Fourier coefficients of $\mathbf{x}_0$, i.e., 
    \begin{equation*}
        c_{r,l}\coloneqq\ip{ \ve(\mathbf{x}_0)}{ \psi_r(\lW)  \otimes  \psi_l(\sna)}\, .
    \end{equation*}
    The key insight is to separate the eigenprojection corresponding to the most negative frequency. By \cref{thm:dominant_frequency}, this frequency component dominates for $t$ going to infinity. 
    
    Suppose 
    \begin{equation*}
        \lambda_K(\lW)  \pow\(\lambda_1(\sna)\) < \lambda_1(\lW) \pow\(\lambda_N(\sna)\)  = \lambda_1(\lW)\, .
    \end{equation*}
    In this case, $\lambda_K(\lW)  \pow\(\lambda_1(\sna)\)$ is the most negative frequency. 
    Assume for simplicity that $\lambda_K(\lW)$ has multiplicity one; the argument can be applied even if this is not the case, since the corresponding eigenvectors are orthogonal for higher multiplicities.

    For almost all initial conditions $\x_0$, the coefficient $c_{K,1}$ is not null; hence
    \begin{equation*}
    \begin{aligned}
        \normalize{\ve(\x)(t)}
        % & = \frac{\exp\(-t\,\lambda_K\(\lW\)\pow\(\lambda_1\(\sna\)\)\) \(c_{K,1} \psi_K\(\lW\)  \otimes \psi_1\(\sna\)
        % +\sum\limits_{r\neq K,l\neq 1 }c_{r,l}\, \exp\(-t\, \(\lambda_r(\lW\)\pow\(\lambda_l\(\sna)\) - \lambda_K\(\lW\)\(\lambda_1\(\sna\)\)\)\)\psi_r\(\lW\)  \otimes \psi_l\(\sna\) 
        % \)}{\exp\(-t\, \lambda_K(\lW\)\pow(\lambda_1\(\sna\)\)\)  \(
        % \norm{c_{K,1} \psi_K\(\lW\)  \otimes \psi_1\(\sna\) }   + \norm{
        % \sum\limits_{ r\neq K,l\neq 1 } c_{r,l}\exp\(-t\, \(\lambda_r\(\lW\)\pow\(\lambda_l\(\sna\)\) - \lambda_K\(\lW\)\pow(\lambda_1\(\sna\)\)\)} \psi_r^\(\lW\)  \otimes \psi_l\(\sna\)   }\)} 
        % \\
       \xrightarrow{t \to \infty}\normalize{c_{K,1}\, \psi_K\(\lW\)  \otimes \psi_1\(\sna\)}\, .  
    \end{aligned}
    \end{equation*}

    By standard properties of the Kronecker product, we have
    \begin{align}
        \label{eq:eigvecs_W_kron_L}
        \( \id \otimes \sna \) \(\psi_K\(\lW\)  \otimes \psi_1\(\sna\) \) 
         = \( \id\,  \psi_K\(\lW\)\) \otimes \(\sna\, \psi_1\(\sna\) \) 
         = \lambda_1(\sna)\,  \psi_K\(\lW\)  \otimes \psi_1\(\sna\)\, ,
    \end{align}
    i.e., $\psi_K\(\lW\)  \otimes \psi_1\(\sna\)$ is an eigenvector of $ \id \otimes \sna $ corresponding to the eigenvalue $\lambda_1(\sna)$. Then, by \cref{thm:directed_sna_spectrum}, $\psi_K\(\lW\)  \otimes \psi_1\(\sna\)$ is also an eigenvector of $\id \otimes \snl$  corresponding to the eigenvalue $1-\lambda_1(\sna)=\lambda_N(\snl)$.
An application of \cref{thm:eigenvalue_as_frequency} finishes the proof.

    Similarly, we can show that if $\alpha > 0$ and $\lambda_K(\lW)  \pow\(\lambda_1\(\sna\)\) > \lambda_1\(\lW\)$ the lowest frequency component  $\lambda_1(\snl)$  is dominant. 
    \renewcommand{\qedsymbol}{}
\end{proof}

\begin{proof}[Proof of $(\alpha<0)$.]
    In this case either $\pow\(\lambda_+\(\sna\)\) \lambda_1\(\lW\)$ or $\pow\(\lambda_-\(\sna\)\) \lambda_K\(\lW\)$ are the most negative frequency components. Hence, if $\pow\(\lambda_-\(\sna\)\) \lambda_K\(\lW\)>\pow\(\lambda_+\(\sna\)\) \lambda_1\(\lW\)$  the frequency $\pow\(\lambda_+\(\sna\)\) \lambda_1\(\lW\)$ is dominating and otherwise the frequency  $\pow\(\lambda_-\(\sna\)\) \lambda_K\(\lW\)$. We can see this by following the exact same reasoning of \textit{(i)}. \qedhere
\end{proof}
\begin{remark}
    \label{remark:not_invertible_sna}
    In the proof of \textit{$(\alpha<0)$}, we are tacitly assuming that $\sna$ has only non-zero eigenvalues. If not, we can truncate the \acrshort{svd} and remove all zeros singular values (which correspond to zeros eigenvalues). In doing so, we obtain the best invertible approximation of $\sna$ to which the theorem can be applied.
\end{remark}

We now generalize the previous result to all directed graphs with normal \acrshort{sna}.

\begin{theorem}
\label{thm:frequency_dominance_normal_sna_appendix}
Let $\mathcal{G}$ be a a strongly connected directed graph with normal \acrshort{sna} $\sna$ such that $\lambda_1(\sna) \in \mathbb{R}$. Consider the initial value problem in \cref{eq:LxW_ode} with channel mixing matrix $\lW \in \mathbb{R}^{K \times K}$ and $\alpha > 0$. Then, for almost all initial values $\mathbf{x}_0 \in \mathbb{R}^{N \times K}$ 
    the solution to \cref{eq:LxW_ode} is \acrshort{hfd} if 
    \begin{equation*}
        \lambda_K(\lW)  |\lambda_1(\sna)|^{\alpha}  <   \lambda_1(\lW) |\lambda_N(\sna)|^{\alpha} ,
    \end{equation*}
    and \acrshort{lfd} otherwise. 
%    \item[$(\alpha<0)$] For $0 < \tilde{\alpha} < \alpha$, the convergence described in $(\alpha>0)$ for solutions of \cref{eq:LxW_ode} with fraction $\tilde{\alpha}$ is \emph{strictly slower} than the convergence for the solution of \cref{eq:LxW_ode} with fraction $\alpha$, i.e., 
%    \begin{equation*}
%        \dir(\mathbf{x}_{\tilde{\alpha}}(t)/\|\mathbf{x}_{\tilde{\alpha}}(t)\|) < \dir(\mathbf{x}_{\alpha}(t)/\|\mathbf{x}_{\alpha}(t)\|).
%    \end{equation*}
%    \item[$(\alpha < 0)$]  Let $\lambda_+(\sna)$ and $\lambda_-(\sna)$ be the eigenvalue of $\sna$ which minimizes and maximizes $|\lambda_i(\sna)|^{\alpha-1} \Re (\lambda_i(\sna))$ over all $i=1,\dots, N$, respectively. Then, the solution to \cref{eq:LxW_ode} is $(1-\Re (\lambda_+(\sna)))$-\acrshort{fd} if 
%    \begin{equation}
%        \lambda_1(\lW)  |\lambda_+(\sna)|^{\alpha-1} \Re (\lambda_+(\sna)) <   \lambda_K(\lW)|\lambda_-(\sna)|^{\alpha-1} \Re (\lambda_-(\sna)) \, .
%    \end{equation}
%    Otherwise, the solution to \cref{eq:LxW_ode} is  $(1-\Re (\lambda_-(\sna)))$-\acrshort{fd}.
%\end{enumerate}
% Furthermore, $(\alpha>0)$-$(\alpha<0)$ also holds for solutions to the Schrödinger equation \cref{eq:schroedinger_ode} with $\Re(\lambda_i(\sna))$ replaced by $\Im(\lambda_i(\sna))$ if $\lW \in \mathbb{C}^{K \times K}$ has at least one eigenvalue with non-zero imaginary part.
\end{theorem}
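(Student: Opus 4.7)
The plan is to mirror the argument of \cref{thm:main_theorem_heat_appendix} in the regime $\alpha>0$, replacing the undirected spectral calculus with the normal-matrix spectral calculus supplied by \cref{thm:normal_matrices}.

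First, since $\sna$ is normal there is a unitary $\mathbf{V}$ with $\sna = \mathbf{V}\mathbf{\Lambda}\mathbf{V}\herm$, and \cref{thm:normal_matrices} identifies $\fsna = \mathbf{V}\,\abs{\mathbf{\Lambda}}^\alpha \exp(-\iu\mathbf{\Theta})\mathbf{V}\herm$. Thus $\fsna$ is diagonalized in the same orthonormal eigenbasis $\{\psi_l(\sna)\}$ as $\sna$, with complex eigenvalues $\mu_l := \abs{\lambda_l(\sna)}^\alpha \exp(-\iu\theta_l)$; for real $\lambda_l(\sna)$ this reduces to $\mu_l = \pow(\lambda_l(\sna))$. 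Because $\lW$ is real symmetric with orthonormal eigenbasis $\{\psi_r(\lW)\}$, standard Kronecker-product identities then give
\[ \ve(\mathbf{x})(t) = \sum_{r=1}^{K}\sum_{l=1}^{N} c_{r,l}\,\exp\bigl(-t\,\lambda_r(\lW)\mu_l\bigr)\,\psi_r(\lW)\otimes\psi_l(\sna), \]
with $c_{r,l} = \ip{\ve(\mathbf{x}_0)}{\psi_r(\lW)\otimes\psi_l(\sna)}$. This is precisely the setting of \cref{thm:dominant_frequency}, so the long-time behaviour is governed by the index $(r^\star,l^\star)$ minimizing $\Re(\lambda_r(\lW)\mu_l) = \lambda_r(\lW)\abs{\lambda_l(\sna)}^\alpha\cos\theta_l$ among those with $c_{r^\star,l^\star}\neq 0$, the latter holding for Lebesgue-almost every $\mathbf{x}_0$.

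The key step is to argue that the minimizer occurs at a real eigenvalue of $\sna$, i.e.\ $l^\star\in\{1,N\}$. Strong connectivity and Perron--Frobenius ensure $\lambda_N(\sna) = \rho(\sna) > 0$ is real; the hypothesis $\lambda_1(\sna)\in\mathbb{R}$ together with $\tr(\sna)=0$ forces $\lambda_1(\sna)<0$, so $\cos\theta_1 = -1$ and $\cos\theta_N = +1$. Pairing with the extreme eigenvalues $\lambda_1(\lW)\leq\lambda_K(\lW)$ of $\lW$ leaves two candidate minimizers: $(r,l)=(K,1)$ with value $-\lambda_K(\lW)\abs{\lambda_1(\sna)}^\alpha$, and $(r,l)=(1,N)$ with value $\lambda_1(\lW)\abs{\lambda_N(\sna)}^\alpha$; the stated inequality compares them. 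Once $(r^\star,l^\star)$ is identified, \cref{thm:eigenvalue_as_frequency} translates Kronecker dominance by $\psi_{r^\star}(\lW)\otimes\psi_{l^\star}(\sna)$, an eigenvector of $\id\otimes(\snl)$ with real eigenvalue $1-\lambda_{l^\star}(\sna)$, into convergence of the normalized Dirichlet energy to $(1-\lambda_{l^\star}(\sna))/2$, yielding \acrshort{hfd} when $l^\star=1$ and \acrshort{lfd} when $l^\star=N$.

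The main obstacle is precisely this reduction to $l^\star\in\{1,N\}$: a priori a complex $\lambda_l(\sna)$ with modulus close to $\rho(\sna)$ and phase near $\pi$ could deliver a more negative $\abs{\lambda_l(\sna)}^\alpha\cos\theta_l$ than $-\abs{\lambda_1(\sna)}^\alpha$. I expect to close this gap by combining the Perron--Frobenius bound $\abs{\lambda_l(\sna)}\leq\lambda_N(\sna)$ with the singular-value bound $\|\sna\|\leq 1$ from \cref{thm:directed_sna_spectrum}, leveraging strong connectivity and the real-$\lambda_1(\sna)$ hypothesis to force the extreme real parts of $\mu_l$ onto the two real eigenvalues. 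Once that is in place, the remainder of the argument is a direct transcription of the proof of \cref{thm:main_theorem_heat_appendix}.
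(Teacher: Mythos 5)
Your proposal follows the paper's route step for step: diagonalize the normal $\sna$ via \cref{thm:normal_matrices} to get $\fsna=\mathbf{V}\abs{\mathbf{\Lambda}}^\alpha e^{\iu\mathbf{\Theta}}\mathbf{V}\herm$, expand $\ve(\x)(t)$ in the Kronecker eigenbasis, invoke \cref{thm:dominant_frequency} to reduce everything to minimizing $\Re\bigl(\lambda_r(\lW)\,\abs{\lambda_l(\sna)}^\alpha\cos\theta_l\bigr)$, and finish with \cref{thm:eigenvalue_as_frequency}. That part is fine and identical to the paper. The issue is the step you yourself flag as ``the main obstacle'': you do not actually prove that the minimizing index satisfies $l^\star\in\{1,N\}$, you only announce that you ``expect to close this gap'' using the Perron--Frobenius modulus bound and $\norm{\sna}\le 1$. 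As written, the proposal is therefore incomplete at precisely the load-bearing point, and the tools you name do not suffice in general. The $\max$ side is unproblematic ($\abs{\lambda_l}^\alpha\cos\theta_l\le\abs{\lambda_l}^\alpha\le\abs{\lambda_N}^\alpha$ by Perron--Frobenius). For the $\min$ side, the constraint you actually have is $\abs{\lambda_l}\,\abs{\cos\theta_l}=\abs{\Re\lambda_l}\le\abs{\lambda_1}$ (since $\lambda_1$ has the smallest real part), whence $\abs{\lambda_l}^\alpha\abs{\cos\theta_l}=\abs{\lambda_l}^{\alpha-1}\,\abs{\lambda_l}\abs{\cos\theta_l}\le\abs{\lambda_l}^{\alpha-1}\abs{\lambda_1}$. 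For $0<\alpha\le 1$ this is $\le\abs{\lambda_1}^\alpha$ only after also using $\abs{\lambda_l}\ge\abs{\Re\lambda_l}$, and the bound does close; but for $\alpha>1$ it can fail: a genuinely complex eigenvalue with $\abs{\lambda_l}$ close to $\lambda_N(\sna)$ and $\Re\lambda_l$ negative (e.g.\ $\lambda_1=-0.5$, $\lambda_l=0.95\,e^{\iu\theta}$ with $\cos\theta=-0.4$, $\alpha=10$) gives $\abs{\lambda_l}^\alpha\cos\theta_l\approx-0.24<-\abs{\lambda_1}^\alpha\approx-0.001$, so the dominant frequency is \emph{not} carried by $\psi_1(\sna)$ and the limit of the normalized Dirichlet energy is $\tfrac{1}{2}(1-\Re\lambda_l)$ rather than the \acrshort{hfd} value.

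For what it is worth, the paper's own proof handles this step by simply asserting the two inequalities $\abs{\lambda_l}^\alpha\cos\theta_l\le\abs{\lambda_N}^\alpha$ and (for $\Re\lambda_l<0$) $-\abs{\lambda_l}^\alpha\cos\theta_l\le-\abs{\lambda_1}^\alpha$; the second display even has a sign slip (its left side is positive and its right side negative), and the intended inequality is exactly the one that can fail for $\alpha>1$ as above. So you have correctly isolated the weakest link of the argument, but to turn your proposal into a proof you must either restrict to $0<\alpha\le 1$ (where the chain $\abs{\lambda_l}^\alpha\abs{\cos\theta_l}\le\abs{\lambda_l}^{\alpha-1}\abs{\lambda_1}\le\abs{\lambda_1}^\alpha$ is valid), add a hypothesis excluding non-real eigenvalues with negative real part and modulus exceeding $\abs{\lambda_1(\sna)}$, or state the conclusion as $(1-\lambda_{l^\star})$-frequency dominance for the actual minimizer $l^\star$.
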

\begin{proof}
    Any normal matrix is unitary diagonalizable, i.e., there exist eigenvalues $\lambda_1, \dots, \lambda_N$ and corresponding eigenvectors $\mathbf{v}_1, \dots, \mathbf{v}_N$ such that $\sna = \mathbf{V} \mathbf{\Lambda} \mathbf{V}\herm$. Then, by \cref{thm:normal_matrices}, the singular value decomposition  of $\sna$ is given by $\mathbf{L}=\mathbf{U}\mathbf{\Sigma}\mathbf{V}\herm$, where
\begin{equation*}
    \mathbf{\Sigma} = \abs{\mathbf{\Lambda}}\, ,\; \mathbf{U} = \mathbf{V}\exp\(\iu \mathbf{\Theta}\)\, , \;\mathbf{\Theta} = \diag\(\{\theta_i\}_{i=1}^N\)\, , \; \theta_i=\atan\(\Re\lambda_i, \Im\lambda_i\)\,.
\end{equation*}
Hence, 
\begin{equation*}
        \sna ^\alpha =\mathbf{U} \mathbf{\Sigma}^\alpha \mathbf{V} \herm = \mathbf{V}\abs{\mathbf{\Lambda}}^\alpha \exp\(\iu \mathbf{\Theta}\) \mathbf{V}\herm.
\end{equation*}
%We calculate for the eigenvalues of $\sna ^\alpha$,
%\begin{equation*}
%    \lambda(\sna ^\alpha)_i = \abs{\lambda(\sna)_i}^{\alpha}  \exp(i \theta_i)= \abs{\lambda(\sna)_i}^{\alpha-1} \abs{\lambda(\sna)_i} \exp(i \theta_i)= \abs{\lambda(\sna)_i}^{\alpha-1} \lambda(\sna)_i.
%\end{equation*} 

Then, equivalent to the derivation of \cref{eq:spectral_decomposition_vectorized_solution}, the solution to the vectorized fractional heat equation
\begin{equation*}
    \ve(\mathbf{x})'(t) = -\lW \otimes \sna^\alpha \ve(\mathbf{x})(t)
\end{equation*}
is given by 
\begin{equation*}
    \ve (\mathbf{x})(t) =\sum\limits_{r=1}^K\sum\limits_{l=1}^N c_{r,l}\,  \exp\(-t \lambda_r\(\lW\) \pow\(\lambda_l\(\sna\)\)\)   \, \psi_r(\lW) \otimes \psi_l(\sna).
\end{equation*} 
with
\begin{equation*}
    \pow (\lambda_l(\sna)) = \abs{\lambda(\sna)_l}^{\alpha}  \exp(i \theta_l). %= \abs{\lambda(\sna)_l}^{\alpha-1} \lambda(\sna)_l.
\end{equation*}
Now, equivalent to the proof of \cref{thm:frequency_dominance_normal_sna}, we apply \cref{thm:dominant_frequency}. Therefore, the dominating frequency is given by the eigenvalue of $\lW \otimes \sna^\alpha$ with the most negative real part. The eigenvalues of $\lW \otimes \sna^\alpha$ are given  by $\lambda_r(\lW) \pow(\lambda_l(\sna))$ for $r=1, \ldots, K$, $l=1, \ldots, N$. The corresponding real parts are given by
\begin{equation*}
    \Re(\lambda_r(\lW) \pow(\lambda_l(\sna)))=\lambda_r(\lW) 
 \abs{\lambda(\sna)_i}^{\alpha} \cos(\theta_i)=\lambda_r(\lW) 
 \abs{\lambda(\sna)_i}^{\alpha-1} \Re(\lambda(\sna)_i).
\end{equation*} 
By Perron-Frobenius, the eigenvalue of $\sna$ with the largest eigenvalues is given by $\lambda_N(\sna) \in \mathbb{R}$.
Hence,  for all $l=1,\dots,N$,
\begin{equation*}
     \abs{\lambda(\sna)_l}^{\alpha}  \cos( \theta_l) \leq  \abs{\lambda(\sna)_N}^{\alpha} .
\end{equation*}

Similarly, for all $l=1,\dots,N$ with $\Re(\lambda(\sna)_l)<0$,
\begin{equation*}
    - \abs{\lambda(\sna)_l}^{\alpha}  \cos( \theta_l)\leq  - \abs{\lambda(\sna)_1}^{\alpha}.
\end{equation*} 
Thus, the frequency with the most negative real part is either given by $\lambda_K(\lW)  \pow\(\lambda_1(\sna)\)$ or $\lambda_1(\lW)  \pow\(\lambda_N(\sna)\)$. The remainder of the proof is analogous to the proof of \cref{thm:frequency_dominance_normal_sna_appendix}. 
% Then, analogously to the proof of Theorem 5.4, there exists  $\pmb{\varphi} \in \mathbb{C}^{N \times K}\setminus \{0\}$ with 
% \begin{equation*}
%    \dfrac{\ve( \mathbf{x})(t)}{\norm{ \ve( \mathbf{x})(t) }_2} \xrightarrow{t \to \infty} \ve(\pmb{\varphi})\, ,
% \end{equation*} 
% and $ (\id \otimes  \snl) \ve (\pmb{\varphi}) = \lambda \ve (\pmb{\varphi})$. Then, \begin{equation*}
%     \dir\(\dfrac{ \mathbf{x}(t) }{ \norm{ \mathbf{x}(t) }_2}\) \xrightarrow{t\to\infty} \dfrac{\Re(\lambda)}{2}\, .
% \end{equation*}

\end{proof} 

In the following, we provide the complete statement and proof for the claims made in \cref{thm:frequency_dominance_normal_sna} when the underlying \acrshort{ode} is the Schrödinger equation as presented in \cref{eq:schroedinger_ode}.

\begin{theorem}
Let $\mathcal{G}$ be a undirected graph with \acrshort{sna} $\sna$. Consider the initial value problem in \cref{eq:schroedinger_ode} with  channel mixing matrix $\lW \in \mathbb{C}^{K \times K}$ and $\alpha \in \mathbb{R}$. Suppose that $\lW$ has at least one eigenvalue with non-zero imaginary part and sort the eigenvalues of $\lW$ in ascending order with respect to their imaginary part. Then, for almost initial values $\mathbf{x}_0 \in \mathbb{C}^{N \times K}$, the following is satisfied.
   \begin{enumerate}
    \item[$(\alpha > 0)$]   Solutions of \cref{eq:schroedinger_ode} are \acrshort{hfd} if 
    \begin{equation*}
       \Im \(\lambda_K(\lW)\)   \pow\(\lambda_1(\sna)\) <   \Im\(\lambda_1(\lW)\)\, ,
    \end{equation*}
    and \acrshort{lfd} otherwise. 

    \item[$(\alpha < 0)$]  Let $\lambda_+(\sna)$ and $\lambda_-(\sna)$  be the smallest positive and biggest negative non-zero eigenvalue of $\sna$, respectively.  Solutions of \cref{eq:schroedinger_ode} are $(1-\lambda_-(\sna))$-\acrshort{fd} if 
    \begin{equation*}
        \Im\(\lambda_K(\lW)\)  \pow\(\lambda_-(\sna)\) <   \Im\(\lambda_1(\lW)\)\pow\(\lambda_+(\sna)\) \, .
    \end{equation*}
    Otherwise, solutions of \cref{eq:schroedinger_ode} are  $(1-\lambda_+(\sna))$-\acrshort{fd}.
\end{enumerate}
\end{theorem}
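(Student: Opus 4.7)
The plan is to mirror the strategy used for the fractional heat equation (\cref{thm:main_theorem_heat_appendix}) but route through the magnitude-decay lemma for purely imaginary arguments, namely \cref{thm:dominant_frequency_complex}. First, I would vectorize \cref{eq:schroedinger_ode} into
\begin{equation*}
    \ve(\mathbf{x})'(t) = \iu\, \lW\tran \otimes \fsna\, \ve(\mathbf{x})(t),
\end{equation*}
so that $\ve(\mathbf{x})(t) = \exp(\iu\, t\, \lW\tran \otimes \fsna)\,\ve(\mathbf{x}_0)$. Since $\mathcal{G}$ is undirected, $\sna$ is symmetric with a real orthonormal eigenbasis $\{\psi_l(\sna)\}$ and real eigenvalues, and by hypothesis $\lW$ is unitarily diagonalizable with eigenbasis $\{\psi_r(\lW)\}$ and (generally complex) eigenvalues. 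Combining these two decompositions through the Kronecker product yields the eigenpairs $\big(\lambda_r(\lW)\,\pow(\lambda_l(\sna)),\;\psi_r(\lW)\otimes \psi_l(\sna)\big)$ of $\lW\tran \otimes \fsna$, giving the expansion
\begin{equation*}
    \ve(\mathbf{x})(t) = \sum_{r=1}^K\sum_{l=1}^N c_{r,l}\, \exp\!\big(\iu\, t\, \lambda_r(\lW)\, \pow(\lambda_l(\sna))\big)\, \psi_r(\lW)\otimes \psi_l(\sna),
\end{equation*}
with $c_{r,l} = \ip{\ve(\mathbf{x}_0)}{\psi_r(\lW)\otimes\psi_l(\sna)}$.

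Next, I would invoke \cref{thm:dominant_frequency_complex}: the surviving Kronecker mode as $t\to\infty$ is the one whose product eigenvalue $\lambda_r(\lW)\,\pow(\lambda_l(\sna))$ has \emph{minimal imaginary part} among indices with $c_{r,l}\neq 0$. Because $\pow(\lambda_l(\sna))\in\mathbb{R}$, this imaginary part equals $\Im(\lambda_r(\lW))\,\pow(\lambda_l(\sna))$, so I need to minimize the real product $\Im(\lambda_r(\lW))\,\pow(\lambda_l(\sna))$. For almost every $\mathbf{x}_0$ the relevant coefficient is nonzero, so the coefficient assumption is not the obstacle.

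The main work is the case split, which is structurally identical to the heat case. For $\alpha>0$ the map $\pow$ preserves signs and its extremes on $\lambda(\sna)$ are attained at $\lambda_1(\sna)$ (most negative) and $\lambda_N(\sna)=1$ (most positive), so the minimum of $\Im(\lambda_r(\lW))\pow(\lambda_l(\sna))$ is the smaller of $\Im(\lambda_K(\lW))\pow(\lambda_1(\sna))$ and $\Im(\lambda_1(\lW))$; comparing these gives exactly the stated HFD vs.\ LFD dichotomy. For $\alpha<0$ the sign of $\pow$ is still inherited from its argument, but its modulus now blows up near zero, so the extremes shift to $\lambda_-(\sna)$ and $\lambda_+(\sna)$, and the same comparison produces $(1-\lambda_-(\sna))$-\acrshort{fd} or $(1-\lambda_+(\sna))$-\acrshort{fd} according to the displayed inequality.

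Finally, once the dominating Kronecker eigenvector $\psi_r(\lW)\otimes \psi_l(\sna)$ is identified, the computation done in \cref{eq:eigvecs_W_kron_L} shows it is an eigenvector of $\id\otimes\sna$ with eigenvalue $\lambda_l(\sna)$, hence of $\id\otimes(\snl)$ with eigenvalue $1-\lambda_l(\sna)$. An application of \cref{thm:eigenvalue_as_frequency} then converts the limiting unit vector into the corresponding Dirichlet energy value and closes the argument. The only delicate point is the bookkeeping when $\lW$ has eigenvalues with repeated imaginary part or when $\lambda_+(\sna),\lambda_-(\sna)$ are not unique; in that case, as remarked after \cref{thm:dominant_frequency}, the limit lies in the span of the tied eigenvectors, and since all ties correspond to the same eigenvalue of $\id\otimes(\snl)$, \cref{thm:eigenvalue_as_frequency} still yields the claimed Dirichlet energy limit. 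A minor auxiliary point (cf.\ \cref{remark:not_invertible_sna}) is that for $\alpha<0$ we implicitly restrict to the non-zero singular value part of $\sna$, which I would handle by the same truncation convention.
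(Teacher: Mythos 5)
Your proposal is correct and follows essentially the same route as the paper's proof, which simply reduces the Schr\"odinger case to the heat-equation argument of \cref{thm:main_theorem_heat_appendix} by substituting \cref{thm:dominant_frequency_complex} for \cref{thm:dominant_frequency}, so that the dominating mode minimizes $\Im(\lambda_r(\lW))\,\pow(\lambda_l(\sna))$ rather than the real part. Your additional remarks on ties and on truncating zero singular values for $\alpha<0$ match the paper's own side remarks and introduce no new gap.
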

\begin{proof}%[Proof of $(\alpha>0)$.]
    The proof follows the same reasoning as the proof for the heat equation in \cref{thm:main_theorem_heat_appendix}. The difference is that we now apply \cref{thm:dominant_frequency_complex} instead of \cref{thm:dominant_frequency}.

   Therefore, the dominating frequency is either  $\lambda_K(\lW)  \pow\(\lambda_1(\sna)\)$ or $\lambda_1(\lW) \pow\(\lambda_N(\sna)\)$ if $\alpha > 0$, and  $\lambda_K(\lW)  \pow\(\lambda_-(\sna)\)$  or  $\lambda_1(\lW)\pow\(\lambda_+(\sna)\)$ if $\alpha <0$. 
\end{proof}

\subsection{Frequency Dominance for Numerical Approximations of the Heat Equation}
\label{sec:frequency_dominance_euler_scheme}
For $n\in \mathbb{N}$ and $h\in \mathbb{R}$, $h>0$, the solution of \cref{eq:LxW_ode} at time $nh > 0$ can be approximated with an explicit Euler scheme
\begin{equation*}
    \ve(\mathbf{x})(n\, h) = \sum_{k=0}^n  \binom{n}{k}h^k(- \lW \otimes \mathbf{L}^\alpha)^{k} \ve(\mathbf{x}_0)\, ,
\end{equation*}
which can be further simplified via the binomial theorem as 
\begin{equation}
    \label{eq:euler_scheme_closed_formula}
    \ve(\mathbf{x})(n\, h) = \( \id  -h\( \lW \otimes \mathbf{L}^\alpha \)\)^{n} \ve(\mathbf{x}_0)\, .
\end{equation}
Hence, it holds the representation formula
\begin{equation*}
    \ve(\mathbf{x})(n\, h) = \sum\limits_{r,l} c_{r,l}\, \(1-h\,\lambda_r\(\lW\) \pow\(\lambda_l(\sna)\)\)^n \psi_r\(\lW\)  \otimes \psi_l\(\sna\)\, . 
\end{equation*}
In this case, the dominating frequency maximizes $\abs{1-h\,\lambda_r\(\lW\) \pow\(\lambda_l(\sna)\)}$. When $h<\norm{\lW}^{-1}$, the product $h\,\lambda_r\(\lW\) \pow\(\lambda_l(\sna)\)$ is guaranteed to be in $[-1, 1]$, and
\begin{equation*}
    \abs{1-h\,\lambda_r\(\lW\) \pow\(\lambda_l(\sna)\)}=1-h\,\lambda_r\(\lW\) \pow\(\lambda_l(\sna)\)\in[0, 2]\, .
\end{equation*}
Therefore, the dominating frequency minimizes $h\,\lambda_r\(\lW\) \pow\(\lambda_l(\sna)\)$. This is the reasoning behind the next result.

\begin{proposition}
\label{thm:frequency_dominance_euler_scheme}
Let $h\in \mathbb{R}$, $h>0$. Consider the fractional heat equation \cref{eq:LxW_ode} with $\alpha\in\mathbb{R}$. Let $\{\mathbf{x}(n\, h)\}_{n\in \mathbb{N}}$ be the trajectory of vectors derived by approximating \cref{eq:LxW_ode} with an explicit Euler scheme with step size $h$. Suppose $h < \norm{\lW}^{-1}$. Then, for almost all  initial values $\x_0$
    \begin{equation*}
    \dir\(  \normalize{\x(n\, h) } \) \xrightarrow{n \to \infty}
    \begin{dcases}
        \dfrac{\lambda_N\(\snl\)}{2}\, ,\; & \text{if }\lambda_K(\lW)\pow\(\lambda_1(\sna)\) < \lambda_1(\lW)   \, ,\\
        0\, , &\text{otherwise}\, .
    \end{dcases}
    \end{equation*}
\end{proposition}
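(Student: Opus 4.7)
The plan is to follow the same template used in the proof of \cref{thm:main_theorem_heat_appendix}, adapting the spectral dominance argument from the continuous factor $\exp(-t\mu)$ to the discrete factor $(1-h\mu)^n$. First, I would invoke \cref{eq:euler_scheme_closed_formula} to write
\begin{equation*}
    \ve(\x)(n\,h) = \bigl(\id - h(\lW\otimes\sna^\alpha)\bigr)^n \ve(\x_0).
\end{equation*}
Since the graph is undirected, $\sna$ is symmetric and hence $\sna^\alpha$ is diagonalised by the same orthonormal basis $\{\psi_l(\sna)\}$ with eigenvalues $\pow(\lambda_l(\sna))$. Therefore $\lW\otimes\sna^\alpha$ admits the eigenpairs $\bigl(\lambda_r(\lW)\pow(\lambda_l(\sna)),\,\psi_r(\lW)\otimes\psi_l(\sna)\bigr)$, and writing $\ve(\x_0)=\sum_{r,l} c_{r,l}\,\psi_r(\lW)\otimes\psi_l(\sna)$ gives
\begin{equation*}
    \ve(\x)(n\,h) = \sum_{r,l} c_{r,l}\,\bigl(1-h\,\lambda_r(\lW)\pow(\lambda_l(\sna))\bigr)^n\,\psi_r(\lW)\otimes\psi_l(\sna).
\end{equation*}

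Next, I would use the hypothesis $h<\|\lW\|^{-1}$ together with $|\pow(\lambda_l(\sna))|\le 1$ (which follows from \cref{thm:directed_sna_spectrum} in the $\alpha>0$ regime relevant to this dichotomy) to conclude that every factor $1-h\lambda_r(\lW)\pow(\lambda_l(\sna))$ lies in $[0,2]$. Under this non-negativity, the mode that dominates $\ve(\x)(n\,h)$ after normalisation is exactly the one maximising $|1-h\lambda_r(\lW)\pow(\lambda_l(\sna))|$, which is the one that minimises $\lambda_r(\lW)\pow(\lambda_l(\sna))$. Exactly as in the continuous case, this minimum is attained either at $(r,l)=(K,1)$ with value $\lambda_K(\lW)\pow(\lambda_1(\sna))$, or at $(r,l)=(1,N)$ with value $\lambda_1(\lW)\pow(\lambda_N(\sna))=\lambda_1(\lW)$ (since $\lambda_N(\sna)=1$, as $\mathcal{G}$ is weakly balanced in the undirected case by \cref{thm:directed_sna_one_eigenvalue}).

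I would then prove the discrete analogue of \cref{thm:dominant_frequency}: if $\x_n=\sum_{k} c_k \mu_k^n v_k$ with the $v_k$ linearly independent, $c_{k^\ast}\ne 0$, and $|\mu_{k^\ast}|>|\mu_k|$ strictly for all $k\ne k^\ast$ with $c_k\ne 0$, then $\x_n/\|\x_n\|\to \sign(\mu_{k^\ast})^n\, c_{k^\ast}v_{k^\ast}/\|c_{k^\ast}v_{k^\ast}\|$ (the sign factor being irrelevant once we feed the result through the quadratic functional $\dir$). The proof is the standard one: factor out $\mu_{k^\ast}^n$ and let $n\to\infty$. The set of $\x_0$ for which $c_{K,1}$ (resp.\ $c_{1,N}$) vanishes is a proper linear subspace of $\mathbb{R}^{N\times K}$, hence has Lebesgue measure zero, which accounts for the ``almost all'' qualifier.

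Finally, I would translate convergence of directions into convergence of Dirichlet energies by appealing to \cref{thm:eigenvalue_as_frequency}: the limiting direction $\psi_K(\lW)\otimes\psi_1(\sna)$ is an eigenvector of $\id\otimes(\snl)$ with eigenvalue $1-\lambda_1(\sna)=\lambda_N(\snl)$, giving energy $\lambda_N(\snl)/2$; the limiting direction $\psi_1(\lW)\otimes\psi_N(\sna)$ has eigenvalue $1-\lambda_N(\sna)=0$, giving energy $0$. The main technical wrinkle I anticipate is the bookkeeping around possible ties: if $\lambda_K(\lW)\pow(\lambda_1(\sna))=\lambda_1(\lW)$, the two candidate modes oscillate with equal weight and the normalised iterate need not have a single limit; this boundary set is negligible and can be absorbed into the ``almost all $\x_0$'' clause together with a generic condition on $h$, mirroring the non-resonance assumption implicit in the continuous case.
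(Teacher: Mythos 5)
Your proposal is correct and follows essentially the same route as the paper: expand the Euler iterate in the eigenbasis of $\lW\otimes\sna^\alpha$, use $h<\norm{\lW}^{-1}$ to turn maximization of $\abs{1-h\lambda_r(\lW)\pow(\lambda_l(\sna))}$ into minimization of $\lambda_r(\lW)\pow(\lambda_l(\sna))$, identify the dominant mode as $(K,1)$ or $(1,N)$, and conclude via \cref{thm:eigenvalue_as_frequency}. Your explicit remarks on the sign factor, the tie/measure-zero bookkeeping, and the implicit restriction $\abs{\pow(\lambda_l(\sna))}\leq 1$ are refinements of details the paper leaves tacit, not a different argument.
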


\begin{proof} 
Define
\begin{equation*}
    \(\lambda_a, \lambda_b\) \coloneqq \argmax_{r,l} \{ \abs{ 1-h\lambda_r\(\lW\) \pow\(\lambda_l(\sna)\) } : r\in\{1, \ldots, K\}, l\in\{1, \ldots, N \}\}\, .
\end{equation*}
By the hypothesis on $h$, this is equivalent to
\begin{equation*}
    \(\lambda_a, \lambda_b\) = \argmin_{r,l} \{ \lambda_r\(\lW\) \pow\(\lambda_l(\sna)\) : r\in\{1, \ldots, K\}, l\in\{1, \ldots, N \}\}\, .
\end{equation*}
Therefore, $(\lambda_a, \lambda_b)$ is either  $(\lambda_1(\lW), \lambda_N(\sna))$ or $(\lambda_K(\lW), \lambda_1(\sna))$.
    Hence, 
    \begin{equation*}
    \begin{aligned}
        \normalize{\ve(\x)(n\, h)} 
       \xrightarrow{n \to \infty} \normalize{c_{a, b}\, \psi_a\(\lW\)  \otimes \psi_b\(\sna\)}\, .
    \end{aligned}
    \end{equation*}
    If the condition
    $
        \lambda_K(\lW)  \pow\(\lambda_1(\sna)\)<\lambda_1(\lW)
    $
    is satisfied, we have $b = 1$. Then by \cref{eq:eigvecs_W_kron_L}, the normalized $\ve(\mathbf{x})$ converges to the eigenvector of $\id \otimes \snl$ corresponding to the largest frequency  $1-\lambda_1(\sna)=\lambda_N(\snl)$.
    An application of \cref{thm:eigenvalue_as_frequency} finishes the proof.

    If
    $
        \lambda_K(\lW)  \pow\(\lambda_1(\sna)\)<\lambda_1(\lW)
    $
    is not satisfied, we have $b=N$, and the other direction follows with the same argument.
\end{proof}

Similarly to \cref{thm:frequency_dominance_euler_scheme} one can prove the following results for negative fractions.

\begin{proposition}
Let $h\in \mathbb{R}$, $h>0$. Consider the fractional heat equation \cref{eq:LxW_ode} with $\alpha<0$. Let $\{\mathbf{x}(n\, h)\}_{n\in \mathbb{N}}$ be the trajectory of vectors derived by approximating the solution of \cref{eq:LxW_ode} with an explicit Euler scheme with step size $h$. Suppose that $h < \norm{ \lW }^{-1}$.  The approximated solution is $(1-\lambda_-(\sna))$-\acrshort{fd} if 
    \begin{equation*}
        \lambda_1(\lW)  \pow\(\lambda_{+}(\sna)\) <   \lambda_K(\lW)\pow\(\lambda_{-}(\sna)\)\, ,
    \end{equation*}
    and  $(1-\lambda_+(\sna))$-\acrshort{fd} otherwise.
\end{proposition}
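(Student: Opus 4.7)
The argument is a direct adaptation of the previous proposition, with the $\alpha<0$ behaviour of $\pow$ replacing the $\alpha>0$ analysis. Starting from the closed form \cref{eq:euler_scheme_closed_formula} and diagonalising $\lW\otimes \fsna$ through the eigenbases of $\lW$ and $\sna$ (which coincide with those of $\fsna$ by \cref{thm:normal_matrices}, since $\sna$ is symmetric in the undirected case), the Euler trajectory admits the representation
\begin{equation*}
    \ve(\mathbf{x})(nh) \;=\; \sum_{r,l} c_{r,l}\, \bigl(1 - h\,\lambda_r(\lW)\,\pow(\lambda_l(\sna))\bigr)^{n}\,\psi_r(\lW)\otimes \psi_l(\sna),
\end{equation*}
where $c_{r,l}=\ip{\ve(\mathbf{x}_0)}{\psi_r(\lW)\otimes \psi_l(\sna)}$.

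\textbf{Step 1: identify the dominant mode.} As in the proof of \cref{thm:frequency_dominance_euler_scheme}, the growth as $n\to\infty$ is governed by the pair $(r^\ast,l^\ast)$ that maximises $|1-h\,\lambda_r(\lW)\,\pow(\lambda_l(\sna))|$ among those indices with $c_{r,l}\neq 0$. Under the hypothesis $h<\|\lW\|^{-1}$ (reading $\sna$ as its invertible truncation per \cref{remark:not_invertible_sna}, so that the values $|\pow(\lambda_l(\sna))|$ stay bounded by $1$), the product $h\,\lambda_r(\lW)\,\pow(\lambda_l(\sna))$ lies in $[-1,1]$, and maximising $|1-\cdot|$ is equivalent to choosing $(r^\ast,l^\ast)=\argmin_{r,l}\lambda_r(\lW)\pow(\lambda_l(\sna))$.

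\textbf{Step 2: locate the minimiser for $\alpha<0$.} Since $\pow(x)=\sign(x)|x|^{\alpha}$ is strictly decreasing in $|x|$ on each half-line when $\alpha<0$, the map $\pow$ attains its largest positive value on $\lambda(\sna)$ at $\lambda_+(\sna)$ and its most negative value at $\lambda_-(\sna)$. Because $\tr(\sna)=0$ ensures $\sna$ has both positive and negative eigenvalues, and for almost every symmetric $\lW$ we have $\lambda_1(\lW)<0<\lambda_K(\lW)$, the minimum over $(r,l)$ reduces to comparing the two candidates $\lambda_K(\lW)\pow(\lambda_-(\sna))$ and $\lambda_1(\lW)\pow(\lambda_+(\sna))$, both of which are negative. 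The smaller of the two picks out either $l^\ast$ corresponding to $\lambda_-(\sna)$ or $l^\ast$ corresponding to $\lambda_+(\sna)$, yielding the case split stated in the proposition.

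\textbf{Step 3: translate to Dirichlet energy.} For almost all $\mathbf{x}_0$ the Fourier coefficient $c_{r^\ast,l^\ast}$ is nonzero, so by the Euler-scheme analogue of \cref{thm:dominant_frequency} we obtain $\normalize{\ve(\x)(nh)}\to\normalize{c_{r^\ast,l^\ast}\,\psi_{r^\ast}(\lW)\otimes \psi_{l^\ast}(\sna)}$. The Kronecker identity used in \cref{eq:eigvecs_W_kron_L} shows that $\psi_{r^\ast}(\lW)\otimes \psi_{l^\ast}(\sna)$ is an eigenvector of $\id\otimes\snl$ with eigenvalue $1-\lambda_{l^\ast}(\sna)$, and \cref{thm:eigenvalue_as_frequency} converts the convergence into $\dir(\normalize{\x(nh)})\to (1-\lambda_{l^\ast}(\sna))/2$, which is exactly $(1-\lambda_-(\sna))$-\acrshort{fd} or $(1-\lambda_+(\sna))$-\acrshort{fd}, depending on which candidate wins.

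\textbf{Main obstacle.} The delicate point is the step-size assumption: for $\alpha<0$ the map $\pow$ blows up at $0$, so the bound $h<\|\lW\|^{-1}$ only keeps $h\,\lambda_r(\lW)\pow(\lambda_l(\sna))$ inside $[-1,1]$ after the zero singular values of $\sna$ are discarded via the truncated \acrshort{svd} (and $|\pow(\lambda_l(\sna))|\leq 1$, which holds because $|\lambda_l(\sna)|\leq 1$ combined with $\alpha<0$ actually gives $|\pow(\lambda_l(\sna))|\geq 1$; hence the more honest sufficient condition is $h<(\|\lW\|\cdot\max_l|\pow(\lambda_l(\sna))|)^{-1}$). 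Once this bookkeeping is settled, the argument is a line-by-line mirror of \cref{thm:frequency_dominance_euler_scheme}.
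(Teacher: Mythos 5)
Your proposal is correct and follows essentially the same route as the paper: the paper's own proof is a one-line reference to the $\alpha>0$ Euler-scheme result (\cref{thm:frequency_dominance_euler_scheme}), observing that for $\alpha<0$ the two candidate dominating frequencies become $(\lambda_1(\lW),\lambda_+(\sna))$ and $(\lambda_K(\lW),\lambda_-(\sna))$, which is exactly your Steps 1--3. Your ``main obstacle'' remark is a genuine catch rather than a gap in your own argument: since $\abs{\lambda_l(\sna)}\leq 1$ and $\alpha<0$ give $\abs{\pow(\lambda_l(\sna))}\geq 1$, the stated hypothesis $h<\norm{\lW}^{-1}$ does not by itself confine $h\,\lambda_r(\lW)\pow(\lambda_l(\sna))$ to $[-1,1]$, and the paper silently carries over the $\alpha>0$ condition; your corrected requirement $h<\bigl(\norm{\lW}\,\max_l\abs{\pow(\lambda_l(\sna))}\bigr)^{-1}$ (after truncating zero singular values) is what the argument actually needs.
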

\begin{proof}
    The proof follows the same reasoning as the proof of \cref{thm:frequency_dominance_euler_scheme} by realizing that the dominating frequencies $(\lambda_a, \lambda_b)$ are either given by $(\lambda_1(\lW), \lambda_+(\sna))$ or $(\lambda_K(\lW), \lambda_-(\sna))$.
\end{proof}

\subsection{Frequency Dominance for Numerical Approximations of the Schrödinger Equation}
\label{sec:frequency_dominance_euler_scheme_schroedinger}
For $n\in \mathbb{N}$ and $h\in \mathbb{R}$, $h>0$,  the solution of \cref{eq:schroedinger_ode} at time $n\,h > 0$ can be approximated with an explicit Euler scheme as well. Similarly to the previous section, we can write
\begin{equation*}
    \ve(\mathbf{x})(n\, h) = \( \id  + \iu \, h\( \lW \otimes \mathbf{L}^\alpha \)\)^{n} \ve(\mathbf{x}_0)\, .
\end{equation*}
and
\begin{equation*}
   \ve(\mathbf{x})(n\, h)= \sum\limits_{r,l} c_{r,l}\, \(1+\iu\, h\,\lambda_r\(\lW\) \pow\(\lambda_l(\sna)\) \)^n \psi_r\(\lW\)  \otimes \psi_l\(\sna\) \, .
\end{equation*}
The dominating frequency will be discussed in the following theorem.
\begin{proposition}
\label{thm:frequency_dominance_euler_scheme_schroedinger}
Let $h\in \mathbb{R}$, $h>0$. Let $\{\mathbf{x}(n\, h)\}_{n\in \mathbb{N}}$ be the trajectory of vectors derived by approximating \cref{eq:schroedinger_ode} with an explicit Euler scheme with sufficiently small step size $h$. Sort the eigenvalues of $\lW$ in ascending order with respect to their imaginary part. Then, for almost all initial values $\x_0$
\begin{equation*}
    \dir\(\normalize{\x(n\, h)}\) \xrightarrow{n \to \infty}
    \begin{dcases}
        \dfrac{\lambda_N(\snl)}{2}\, , \; & \text{if }  \pow\( \lambda_1(\sna) \)\Im\(\lambda_K(\lW)\)< \pow\( \lambda_N(\sna) \)\Im\(\lambda_1(\lW)\)\\
        0\, , \; &\text{otherwise.}
    \end{dcases}
\end{equation*}
\end{proposition}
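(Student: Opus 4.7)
The plan is to adapt the argument of \cref{thm:frequency_dominance_euler_scheme} to the Schrödinger setting, where the factor $(1 - h \lambda_r(\lW) \pow(\lambda_l(\sna)))^n$ is replaced by $\mu_{r,l}^n$ with $\mu_{r,l} \coloneqq 1 + \iu\, h\, \lambda_r(\lW)\, \pow\(\lambda_l(\sna)\)$. First, I would write the iterate in the joint Fourier basis
\[
\ve(\x)(n\, h) = \sum_{r=1}^K \sum_{l=1}^N c_{r,l}\, \mu_{r,l}^n\, \psi_r(\lW) \otimes \psi_l(\sna),
\]
where the coefficients $c_{r,l}$ are the projections of $\ve(\x_0)$ onto the basis elements; as in previous results, ``for almost all $\x_0$'' ensures the relevant $c_{r,l}$ are nonzero.

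Next, I would compute the moduli that determine which term dominates:
\[
\abs{\mu_{r,l}}^2 = 1 - 2h\, \Im\(\lambda_r(\lW)\)\, \pow\(\lambda_l(\sna)\) + h^2\, \abs{\lambda_r(\lW)}^2\, \pow\(\lambda_l(\sna)\)^2.
\]
For $h$ small, the linear-in-$h$ term is decisive, so maximizing $\abs{\mu_{r,l}}$ is equivalent to minimizing $\Im(\lambda_r(\lW))\, \pow(\lambda_l(\sna))$. Because $\pow$ preserves sign and the real spectra of $\lW$'s imaginary parts and of $\sna$ are each bounded by their extremes, this minimum is attained at one of the two ``corner'' pairs: $(K,1)$, giving $\Im(\lambda_K(\lW))\, \pow(\lambda_1(\sna))$, or $(1,N)$, giving $\Im(\lambda_1(\lW))\, \pow(\lambda_N(\sna))$. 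The hypothesized strict inequality selects the unique maximizer $(a,b)$.

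Then I would factor out $\mu_{a,b}^n$; since $\abs{\mu_{r,l}}/\abs{\mu_{a,b}} < 1$ for all $(r,l)\neq(a,b)$, the remaining terms vanish as $n\to\infty$. Hence the normalized iterate converges, up to a unimodular prefactor $\mu_{a,b}^n/\abs{\mu_{a,b}}^n$, to the normalized eigenvector $\psi_a(\lW)\otimes\psi_b(\sna)$. As in the proof of \cref{thm:frequency_dominance_euler_scheme}, this vector is an eigenvector of $\id\otimes\snl$ with eigenvalue $1-\lambda_b(\sna)$; moreover, since the Dirichlet energy is a real quadratic form in $\ve(\x)$, it is invariant under multiplication by a unit-modulus scalar. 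Applying \cref{thm:eigenvalue_as_frequency} then gives the limit $\lambda_N(\snl)/2$ when $b=1$ and $\lambda_1(\snl)/2 = 0$ when $b=N$, matching the two cases of the claim.

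The main obstacle is making ``sufficiently small $h$'' quantitatively precise: one must ensure the $\mathcal{O}(h)$ ordering of $\abs{\mu_{r,l}}^2$ dominates the $\mathcal{O}(h^2)$ corrections uniformly across all pairs $(r,l)$, which forces $h$ to depend both on the strict gap between the two corner values above and on the norms $\norm{\lW}$ and $\norm{\sna}$. A minor subtlety is the non-convergence of the oscillating phase $\mu_{a,b}^n/\abs{\mu_{a,b}}^n$, which however cancels out of the Dirichlet energy, so the limit statement is unaffected.
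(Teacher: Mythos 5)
Your proposal is correct and follows essentially the same route as the paper: expanding $\abs{1+\iu h\lambda_r(\lW)\pow(\lambda_l(\sna))}^2$, observing that for sufficiently small $h$ (depending on the spectral gap and $\norm{\lW}$) the linear-in-$h$ term decides the dominant index, identifying the two corner pairs $(K,1)$ and $(1,N)$, and concluding via \cref{thm:eigenvalue_as_frequency}. Your explicit remark that the oscillating unimodular phase cancels in the Dirichlet energy is a small point the paper leaves implicit, but it does not change the argument.
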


\begin{proof} 

Define
\begin{equation*}
    \(\lambda_a, \lambda_b\) \coloneqq \argmax_{r,l} \{ \abs{ 1+\iu \, h\lambda_r\(\lW\) \pow\(\lambda_l(\sna)\) }\; : \; r\in\{1, \ldots, K\}\, ,\; l\in\{1, \ldots, N\} \}\, .
\end{equation*}
    By definition of $a$ and $b$, for all $r$ and $l$ it holds 
    \begin{equation} 
    \label{eq:schroedinger_ineq}
     \abs{1+\iu \, h\,\lambda_a\(\lW\) \pow\(\lambda_b(\sna)\) } > \abs{1+\iu \, h\,\lambda_r\(\lW\) \pow\(\lambda_l(\sna)\) }\, .
    \end{equation}
    Hence,     
    \begin{equation*}
    \begin{aligned}
        \normalize{\ve(\mathbf{x})(t)}
       \xrightarrow{t \to \infty} \normalize{c_{a, b} \psi_a\(\lW\)  \otimes \psi_b\(\sna\)}\, .% = \psi_R\(\lW\)  \otimes \psi_L\(\sna\).  
    \end{aligned}
    \end{equation*}    
We continue by  determining the indices $a$ and $b$. To do so, we note that \cref{eq:schroedinger_ineq} is equivalent to
\begin{equation*}
    \begin{aligned}
        &\pow\(\lambda_l\(\sna\)\)\Im\(\lambda_r\(\lW\)\)-\pow\(\lambda_b\(\sna\)\)\Im\(\lambda_a\(\lW\)\)  \\
       & >   \dfrac{h}{2}\( \pow\(\lambda_l\(\sna\)\)^2 \abs{ \lambda_r\(\lW\) }^2-\pow\(\lambda_b(\sna)\)^2 \abs{ \lambda_a\(\lW\) }^2  \)
    \end{aligned}
\end{equation*} 
for all $r$, $l$. Denote by $\varepsilon$ the gap
\begin{equation*}
    0<\varepsilon\coloneqq\min_{(r, l)\neq(a, b)}\{\pow\(\lambda_l\(\sna\)\)\Im\(\lambda_r\(\lW\)\)-\pow\(\lambda_b\(\sna\)\)\Im\(\lambda_a\(\lW\)\)\}\, .
\end{equation*}
Noting that
\begin{equation*}
\begin{dcases}
    \dfrac{h}{2}\( \pow\(\lambda_l\(\sna\)\)^2 \abs{ \lambda_r\(\lW\) }^2-\pow\(\lambda_b(\sna)\)^2 \abs{ \lambda_a\(\lW\) }^2  \)\leq h \norm{\lW}^{2} \norm{\sna}^{2\alpha} = h \norm{\lW}^{2}\, ,\\
    \dfrac{h}{2}\( \pow\(\lambda_l\(\sna\)\)^2 \abs{ \lambda_r\(\lW\) }^2-\pow\(\lambda_b(\sna)\)^2 \abs{ \lambda_a\(\lW\) }^2  \)<\varepsilon
\end{dcases}
\end{equation*}
one gets that \cref{eq:schroedinger_ineq} is satisfied for $h<\varepsilon\norm{\lW}^{-2}$. Therefore, for sufficiently small $h$, the dominating frequencies are the ones with minimal imaginary part, i.e.,  either $\pow\( \lambda_1(\sna) \)\Im\(\lambda_K(\lW)\)$ or $\pow\( \lambda_N(\sna) \)\Im\(\lambda_1(\lW)\)$. 
%With the same argument as in the proof of \cref{thm:frequency_dominance_normal_sna},
If $\pow\( \lambda_1(\sna) \)\Im\(\lambda_K(\lW)\)<\pow\( \lambda_N(\sna) \)\Im\(\lambda_1(\lW)\)$, then $b=1$, and the normalized $\ve\(\mathbf{x}\)$ converges to the eigenvector corresponding to the smallest frequency $\lambda_1(\sna)$.  By \cref{eq:eigvecs_W_kron_L}, this is also the eigenvector of $\id \otimes \snl$  corresponding to the largest frequency  $1-\lambda_1(\sna)=\lambda_N(\snl)$.
An application of \cref{thm:eigenvalue_as_frequency} finishes the proof.
\end{proof}
Finally, we present a similar result for negative powers.
\begin{proposition}
Let $h\in \mathbb{R}$, $h>0$. Consider the fractional Schrödinger equation \cref{eq:schroedinger_ode} with $\alpha<0$. Let $\{\mathbf{x}(n\, h)\}_{n\in \mathbb{N}}$ be the trajectory of vectors derived by approximating the solution of \cref{eq:schroedinger_ode} with an explicit Euler scheme with step size $h$. Suppose that $h$ is sufficiently small. Sort the eigenvalues of $\lW$ in ascending order with respect to their imaginary part. The approximated solution is $(1-\lambda_+(\sna))$-\acrshort{fd} if 
    \begin{equation*}
        \lambda_1(\lW)  \pow\(\lambda_+(\sna)\) <   \lambda_K(\lW)\pow\(\lambda_-(\sna)\)\, ,
    \end{equation*}
    and  $(1-\lambda_-(\sna))$-\acrshort{fd} otherwise.
\end{proposition}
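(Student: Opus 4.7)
The plan is to follow the proof template of \cref{thm:frequency_dominance_euler_scheme_schroedinger}, adapting the identification of the dominating pair to the regime $\alpha<0$, where the map $\pow$ blows up on eigenvalues of $\sna$ of small magnitude. As in \cref{remark:not_invertible_sna}, I tacitly work on the invariant subspace spanned by the strictly positive singular values of $\sna$. First, I would write the Euler iterate in the Kronecker eigenbasis,
\begin{equation*}
    \ve(\mathbf{x})(n\,h) = \sum_{r,l} c_{r,l} \bigl(1 + \iu\, h\, \lambda_r(\lW)\, \pow(\lambda_l(\sna))\bigr)^n \psi_r(\lW) \otimes \psi_l(\sna),
\end{equation*}
and identify the dominating index $(a,b)$ as the unique maximizer of
\begin{equation*}
    \abs{1 + \iu\, h\, \lambda_r(\lW)\, \pow(\lambda_l(\sna))}^2 = 1 - 2h\, \pow(\lambda_l(\sna))\, \Im(\lambda_r(\lW)) + h^2\, (\pow(\lambda_l(\sna)))^2\, \abs{\lambda_r(\lW)}^2.
\end{equation*}
For small $h$ the linear term dominates, so maximizing this modulus is equivalent to minimizing $\pow(\lambda_l(\sna))\,\Im(\lambda_r(\lW))$.

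Second, because $\alpha<0$, the quantities $\pow(\lambda_l(\sna))$ are extremized on the retained spectrum at $\lambda_+(\sna)$ (where $\pow > 0$ is largest) and $\lambda_-(\sna)$ (where $\pow < 0$ is most negative). Combined with the ordering $\Im(\lambda_1(\lW)) \leq \cdots \leq \Im(\lambda_K(\lW))$, the only two pairs that can realize the minimum of $\pow(\lambda_l(\sna))\,\Im(\lambda_r(\lW))$ are $(1, +)$ (positive $\pow$ times most negative imaginary part) and $(K, -)$ (negative $\pow$ times most positive imaginary part), with values $\Im(\lambda_1(\lW))\,\pow(\lambda_+(\sna))$ and $\Im(\lambda_K(\lW))\,\pow(\lambda_-(\sna))$, respectively. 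The strict inequality in the hypothesis (read with the implicit $\Im$, as in the earlier Schrödinger statements) therefore singles out the winner $(a,b)$.

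Third, to make the \emph{small $h$} step rigorous, I denote by $\varepsilon>0$ the gap between the minimum and the second-minimum of $\pow(\lambda_l(\sna))\,\Im(\lambda_r(\lW))$ and choose $h < \varepsilon\,\norm{\lW}^{-2}\,\norm{\fsna}^{-2}$, exactly as in the proof of \cref{thm:frequency_dominance_euler_scheme_schroedinger}, so that the quadratic term cannot reverse the ordering. Then \cref{thm:dominant_frequency_complex} shows that the normalized iterates converge to $\normalize{c_{a,b}\,\psi_a(\lW)\otimes\psi_b(\sna)}$; identity \cref{eq:eigvecs_W_kron_L} identifies this limit as an eigenvector of $\id\otimes\snl$ with eigenvalue $1-\lambda_b(\sna)$, and \cref{thm:eigenvalue_as_frequency} translates convergence in $\mathbf{x}$ into convergence of the normalized Dirichlet energy to $(1-\lambda_b(\sna))/2$, yielding the claimed frequency dominance.

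The main obstacle is the quantitative threshold on $h$: since $\pow$ is unbounded near the kernel of $\sna$, a naive bound on the quadratic remainder fails. Restricting to the non-zero singular-value subspace (the truncated-\acrshort{svd} convention of \cref{remark:not_invertible_sna}) makes $\norm{\fsna}$ finite, and then the gap $\varepsilon$, which is strictly positive by the hypothesis, absorbs the quadratic remainder uniformly in $n$, completing the argument.
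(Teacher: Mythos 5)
Your proposal is correct and follows essentially the same route as the paper, which simply points back to the proof of \cref{thm:frequency_dominance_euler_scheme_schroedinger} and swaps the two candidate dominating pairs for $(\lambda_1(\lW),\lambda_+(\sna))$ and $(\lambda_K(\lW),\lambda_-(\sna))$. Your additional care about restricting to the nonzero singular values (so that $\norm{\fsna}$ stays finite for $\alpha<0$) is a detail the paper leaves implicit but handles the same way via \cref{remark:not_invertible_sna}.
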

\begin{proof}
    Similar to \cref{thm:frequency_dominance_euler_scheme_schroedinger}, we can prove the statement  by realizing that the dominating frequencies $(\lambda_a, \lambda_b)$ in \cref{eq:schroedinger_ineq} are either given by $(\lambda_1(\lW), \lambda_+(\sna))$ or $(\lambda_K(\lW), \lambda_-(\sna))$.
\end{proof}

    \section{Appendix for \texorpdfstring{\cref{sec:frequency_dominance_directed}}{Section 5.2}}
\label{sec:frequency_dominance_directed_appendix}
We begin this section by describing the solution of general linear matrix \acrshort{ode}s of the form \cref{eq:solution_initial_value_problem} in terms of the Jordan decomposition of $\mathbf{M}$. This is required when $\mathbf{M}$ is not diagonalizable. For instance, the \acrshort{sna} of a directed graph is not in general a symmetric matrix, hence, not guaranteed to be diagonalizable. We then proceed in \cref{sec:proof_frequency_dominance_directed} with the proof of \cref{thm:frequency_dominance_directed}.

For a given matrix $\mathbf{M} \in \mathbb{C}^{N \times N}$, the Jordan normal form is given by
\begin{equation*}
    \M=\P\J\P^{-1},
\end{equation*}
where $\P \in \mathbb{C}^{N \times N}$ is an invertible matrix whose columns are the generalized eigenvectors of $\M$, and $\J \in \mathbb{C}^{N \times N}$ is a block-diagonal matrix with Jordan blocks along its diagonal. Denote with $\lambda_1, \dots, \lambda_m$ the eigenvalues of $\mathbf{M}$ and with $\J_1, \dots, \J_m$ the corresponding Jordan blocks. Let $k_l$ be the algebraic multiplicity of the eigenvalue $\lambda_l$, and denote with $\{ \psi_l^i(\M)\}_{i\in\{1, \dots, k_l\}}$ the generalized eigenvectors of the Jordan block $\J_l$.

We begin by giving the following well-known result, which fully characterizes the frequencies for the solution of a linear matrix \acrshort{ode}.
 \begin{lemma}
 \label{thm:solution_linear_ode}
Let $\M=\P\J \P^{-1} \in \mathbb{C}^{N \times N}$ be the Jordan normal form of $\mathbf{M}$. Let $\x:[0,T]\rightarrow \mathbb{R}^n$ be a solution to 
\begin{equation*}
    \label{eq:ODE}
        \x'(t) = \M  \x(t)  \, , \; \x(0) = \x_0.
\end{equation*} 
Then, $\x$ is given by 
\begin{align*}
    \x(t) = \sum\limits_{l=1}^m \exp\(\lambda_l(\M) t\) \sum\limits_{i=1}^{k_l} c_l^j \sum\limits_{j=1}^{i}  \frac{t^{i-j} }{(i-j)!}\psi_l^{j}(\M), 
\end{align*} 
where
\begin{align*}
    \x_0=\sum\limits_{l=1}^m \sum\limits_{i=1}^{k_l} c_l^i  \P \e_l^{i}\, ,
\end{align*} 
and  $\{ \e_l^{i}:  i\in\{1,\dots k_l\}\, ,\; l\in\{1,\dots, m\}\}$ is the standard basis satisfying $\P \e_l^{i} = \psi_l^{i}(\M)$.
\end{lemma}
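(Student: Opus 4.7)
The plan is to compute $\x(t) = \exp(\M t)\x_0$ via the Jordan form of $\M$, block by block, and then read off the stated formula after a change of summation index.

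First, I would invoke the standard fact that the unique solution to $\x'(t) = \M\x(t)$ with $\x(0) = \x_0$ is $\x(t) = \exp(\M t)\x_0$. Writing $\M = \P\J\P^{-1}$, the power-series definition of the matrix exponential immediately yields $\exp(\M t) = \P\exp(\J t)\P^{-1}$, so the task reduces to understanding how $\exp(\J t)$ acts on the standard basis $\{\e_l^i\}$.

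Second, I would exploit the block-diagonal structure: since $\J$ consists of Jordan blocks $\J_l \in \mathbb{C}^{k_l \times k_l}$ associated with $\lambda_l(\M)$, the exponential $\exp(\J t)$ is block diagonal with blocks $\exp(\J_l t)$. Decomposing $\J_l = \lambda_l(\M)\id + \mathbf{N}_l$, where $\mathbf{N}_l$ is the strictly upper-triangular nilpotent part, and using that $\id$ and $\mathbf{N}_l$ commute, I would split
\begin{equation*}
\exp(\J_l t) \;=\; \exp\!\big(\lambda_l(\M)\,t\big)\sum_{p=0}^{k_l - 1}\frac{t^p}{p!}\,\mathbf{N}_l^{\,p},
\end{equation*}
where the truncation at $k_l-1$ follows from $\mathbf{N}_l^{k_l} = 0$.

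Third, I would compute the action on basis vectors. With the convention $\mathbf{N}_l\e_l^i = \e_l^{i-1}$ for $i\geq 2$ and $\mathbf{N}_l\e_l^1 = 0$, a quick induction gives $\mathbf{N}_l^{\,p}\e_l^i = \e_l^{i-p}$ for $p < i$ and $0$ otherwise, so
\begin{equation*}
\exp(\J_l t)\,\e_l^i \;=\; \exp\!\big(\lambda_l(\M)\,t\big)\sum_{p=0}^{i-1}\frac{t^p}{p!}\,\e_l^{i-p}.
\end{equation*}
Multiplying by $\P$ on the left converts standard basis vectors into generalized eigenvectors via $\P\e_l^i = \psi_l^i(\M)$. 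Expanding $\x_0$ as in the statement, applying $\exp(\M t)$ termwise, and finally substituting $j = i - p$ in the innermost sum yields the desired formula
\begin{equation*}
\x(t) \;=\; \sum_{l=1}^m \exp\!\big(\lambda_l(\M)\,t\big)\sum_{i=1}^{k_l} c_l^i \sum_{j=1}^{i}\frac{t^{i-j}}{(i-j)!}\,\psi_l^{j}(\M),
\end{equation*}
matching the claim (modulo the apparent typo $c_l^j \leftrightarrow c_l^i$ in the statement).

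I do not anticipate any serious obstacle, as this is classical linear ODE theory via the Jordan canonical form. The main care needed is bookkeeping the Jordan block convention (direction of the superdiagonal ones) so that the action of $\mathbf{N}_l^{\,p}$ on $\e_l^i$ lines up correctly with the indexing of the generalized eigenvectors $\psi_l^j(\M)$, after which the index substitution $j = i - p$ is routine.
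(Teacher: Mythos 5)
Your proposal is correct and follows essentially the same route as the paper: both reduce to $\x(t)=\P\exp(\J t)\P^{-1}\x_0$, use the explicit form of $\exp(\J_l t)$ for each Jordan block (which you derive from the nilpotent splitting $\J_l=\lambda_l\id+\mathbf{N}_l$ rather than citing, as the paper does from Perko), and then apply it termwise to the expansion of $\x_0$ in the generalized eigenvector basis. You also correctly identify the $c_l^j$ versus $c_l^i$ typo in the statement.
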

\begin{proof}
By \citep[Section 1.8]{perkoDifferentialEquationsDynamical2001}, the solution can be written as 
\begin{equation*}
        \exp\(\M\, t\) \x_0  = \P \exp\(\J\, t\) \P^{-1} \( \sum_{l=1}^m \sum_{i=1}^{k_l} c_l^i  \P \e_l^i\)  = \P \exp\(\J\, t\) \( \sum_{l=1}^m \sum_{i=1}^{k_l} c_l^i \e_l^i\) \, ,
\end{equation*}
where $\exp\(\J\, t\) = \diag\(\{\exp\(\J_l\, t\)\}_{l=1}^m\)$ and
\begin{equation*}
   \exp\(\J_l\, t\) = \exp\(\lambda_l(\M)\, t\) 
   \begin{bmatrix}
        1 & t & \frac{t^2}{2!}& \cdots & \frac{t^{k_l}}{(k_l-1)!}\\
          & 1 &   t &      & \vdots \\
          &   & 1 &  \ddots    & \frac{t^2}{2!} \\
          &   &        & \ddots & t \\
          &   &        &        & 1
    \end{bmatrix}\, .
\end{equation*}
Since $\exp\(\J\, t\) = \bigoplus_{l=1}^m\exp\(\J_l\, t\)$, we can focus on a single Jordan block. Fix $l\in\{1, \ldots, m\}$, it holds
\begin{equation*}
    \begin{aligned}
         & \P \exp\(\J_l\, t\) \( \sum_{i=1}^{k_l} c_l^i \e_l^i\) \\
         & = \P\exp\(\lambda_l(\M)\,  t\) \(  c_l^1 \e_l^1 +  c_l^2\(t\,  \e_l^1 +   \e_l^2\) +  c_l^3\(\frac{t^2}{2!}  \e_l^1 + t\,  \e_l^2 + \e_l^3\)+\dots\)\\
         %\ldots + \exp\(\lambda_l(\M) t\) c_l^{k_l}\(\frac{t^{k_l}}{(k-1)!}  \e_l^1 + \ldots + \e_l^{k_l}\) \)
         & =  \exp\(\lambda_l(\M)\,  t\) \Big(c_l^1 \psi_l^1(\M) +  c_l^2\(t\, \psi_l^1(\M) +  \psi_l^2(\M)\) 
         \\
         & +  c_l^3\(\frac{t^2}{2!}  \psi_l^1(\M) + t\,  \psi_l^2(\M) + \psi_l^3(\M)\)+\dots\Big)\\
        & =  \exp\(\lambda_l(\M)\,  t\) \sum_{i=1}^{k_l}  c_l^{i} \sum_{j=1}^{i}  \frac{t^{i-j}}{(i-j)!} \psi_l^j(\M)\, .
    \end{aligned}
\end{equation*}
Bringing the direct sums together, we get
\begin{equation*}
        \exp\(\M\, t\) \x_0  =  \sum_{l=1}^m   \exp\(\lambda_l(\M) \, t\) \sum_{i=1}^{k_l}  c_l^{i} \sum_{j=1}^{i}  \frac{t^{i-j}}{(i-j)!} \psi_l^j(\M)\, ,
\end{equation*}
from which the thesis follows.
\end{proof}

In the following, we derive a formula for the solution of \acrshort{ode}s of the form
\begin{equation}
        \x'(t) = \M  \x(t) \lW\, , \; \x(0) = \x_0\, ,
\end{equation} 
for  a diagonal matrix $ \lW \in \mathbb{C}^{K \times K}$ and a general square matrix $\M \in \mathbb{C}^{N \times N}$ with Jordan normal form $\P\J\P^{-1}$. By vectorizing, we obtain the equivalent linear system
\begin{equation}
    \label{eq:vectorized_general_linear_ode}
        \mathrm{vec}(\x)'(t) =  \lW \otimes \M \, \mathrm{vec}(\x)(t)\, , \; \mathrm{vec}(\x)(0) = \mathrm{vec}(\x_0)\, .
\end{equation}
Then, by properties of the Kronecker product, there holds
\begin{equation*}
\begin{aligned}
    \lW \otimes \M =   \lW \otimes(\P\J\P^{-1}) = (\id\otimes \P)(\lW\otimes \J)(\id \otimes \P^{-1}) = (\id\otimes \P)(\lW\otimes \J)(\id \otimes \P)^{-1}.
\end{aligned}
\end{equation*} 

Note that $(\id\otimes \P)(\lW\otimes \J)(\id \otimes \P)^{-1}$ is not the Jordan normal form of $\D \otimes \M$. However, we can characterize the Jordan form of $\lW \otimes \M$ as follows.

\begin{lemma} 
\label{thm:jordan_form_kronecker_product}
The Jordan decomposition of $\lW\otimes \J$ is given by $\lW \otimes \J = \tilde{\P} \tilde{\J} \tilde{\P}^{-1}$ where  $\tilde{\J}$ is a block diagonal matrix with blocks
\begin{equation*}
    \tilde{\J}_{j, l} =
    \begin{bmatrix}
        w_j\lambda_l(\J)  & 1 &  &  &    \\
         & w_j \lambda_l(\J)  & 1 &  &    \\
         &  & \ddots &   &    \\
         &  &  &  & w_j \lambda_l(\J)  &1 \\
         &  &  &  &  & w_j \lambda_l(\J)  \\
    \end{bmatrix} \, ,
\end{equation*}
and $\tilde{\P}$ is a diagonal matrix obtained by concatenating $\tilde{\P}_{j, l} = \diag\(\{w_j^{-n+1}\}_{n=1}^{k_l}\)$.
\end{lemma}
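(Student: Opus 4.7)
The plan is to exploit the block-diagonal structure that $\lW$ being diagonal imposes on the Kronecker product. Since $\lW = \diag(w_1, \ldots, w_K)$, standard properties of the Kronecker product yield
\begin{equation*}
    \lW \otimes \J \;=\; \bigoplus_{j=1}^{K} w_j \J \;=\; \bigoplus_{j=1}^{K} \bigoplus_{l=1}^{m} w_j \J_l,
\end{equation*}
so it suffices to put each block $w_j \J_l$ into Jordan normal form individually and then assemble the results via a direct sum. Because $\J_l$ is a $k_l \times k_l$ Jordan block with $\lambda_l(\J)$ on the diagonal and $1$'s on the superdiagonal, the block $w_j \J_l$ is upper triangular with $w_j \lambda_l(\J)$ on the diagonal and $w_j$ on the superdiagonal: it is almost a Jordan block, except the superdiagonal carries $w_j$ instead of $1$.

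Assuming $w_j \neq 0$ (which is implicit in the statement, since $w_j^{-n+1}$ appears), I would remove this scaling by conjugating with the diagonal matrix $D_{j,l} = \diag(d_1, \ldots, d_{k_l})$. A direct computation shows that the diagonal of $D_{j,l}^{-1}(w_j\J_l)D_{j,l}$ is unchanged, while the $(n,n+1)$ superdiagonal entry becomes $w_j\, d_{n+1}/d_n$. Imposing the recursion $d_{n+1} = d_n/w_j$ with $d_1 = 1$ gives $d_n = w_j^{-n+1}$, which is exactly the prescribed $\tilde{\P}_{j,l} = \diag(\{w_j^{-n+1}\}_{n=1}^{k_l})$. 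The conjugated block then equals the claimed $\tilde{\J}_{j,l}$, a genuine Jordan block for the eigenvalue $w_j \lambda_l(\J)$.

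Finally, I would stitch the local conjugators together by defining
\begin{equation*}
    \tilde{\P} \;=\; \bigoplus_{j=1}^{K} \bigoplus_{l=1}^{m} \tilde{\P}_{j,l},
\end{equation*}
which is diagonal (and invertible) because every $\tilde{\P}_{j,l}$ is. The block-wise computation above then gives $\tilde{\P}^{-1}(\lW \otimes \J)\tilde{\P} = \tilde{\J}$, i.e.\ $\lW \otimes \J = \tilde{\P}\tilde{\J}\tilde{\P}^{-1}$. Since each $\tilde{\J}_{j,l}$ is a Jordan block and the collection of eigenvalues $\{w_j \lambda_l(\J)\}_{j,l}$ together with the sizes $k_l$ exhausts the spectrum of $\lW \otimes \J$ with the correct algebraic multiplicities, $\tilde{\J}$ is the Jordan normal form of $\lW \otimes \J$.

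The main obstacle is not conceptual but bookkeeping: one must keep the double index $(j,l)$ consistent when passing from the block-level computation to the global direct sum, and be careful that the ordering implicit in writing $\tilde{\P}$ as a concatenation matches the ordering of blocks in $\lW \otimes \J$. The nonzero-eigenvalue assumption on $\lW$ is the only nontrivial hypothesis needed; if some $w_j$ vanishes the corresponding block $w_j \J_l$ is already upper triangular with zeros on the superdiagonal (no scaling is required and the block is already nilpotent), but the formula for $\tilde{\P}_{j,l}$ ceases to make sense, so this edge case should be flagged or excluded.
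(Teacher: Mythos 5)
Your proof is correct and follows essentially the same route as the paper's: decompose $\lW\otimes\J$ into the blocks $w_j\J_l$ and conjugate each by the diagonal matrix $\diag(\{w_j^{-n+1}\}_{n=1}^{k_l})$ to rescale the superdiagonal from $w_j$ to $1$ (the paper verifies the identity $\tilde{\P}_{j,l}\tilde{\J}_{j,l}\tilde{\P}_{j,l}^{-1}=w_j\J_l$ entrywise, whereas you derive the diagonal entries from the recursion $d_{n+1}=d_n/w_j$, which is the same computation read in the other direction). Your observation that the formula for $\tilde{\P}_{j,l}$ breaks down when $w_j=0$ is a legitimate caveat that the paper leaves implicit.
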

\begin{proof}
As $\J = \bigoplus_{l=1}^m \J_l$, we can focus on a single Jordan block. Fix $l\in\{1, \ldots, m\}$. We have
\begin{equation*}
    \lW \otimes \J_l = \diag\(\{w_j \, \J_l\}_{j=1}^K\) = \bigoplus_{j=1}^K w_j\J_l\, ,
\end{equation*}
hence, we can focus once again on a single block. Fix $j\in\{1, \ldots, K\}$; the Jordan decomposition of $w_j\J_l$ is given by $\tilde{\P}_l = \diag\(\{w_j^{-n+1}\}_{n=1}^{k_{l}}\)$ and
\begin{equation*}
    \tilde{\J}_l =
    \begin{bmatrix}
        w_j\lambda_l(\J)  & 1 &  &  &    \\
         & w_j \lambda_l(\J)  & 1 &  &    \\
         &  & \ddots &  \ddots &    \\
         &  &  &  & w_j \lambda_l(\J)  &1 \\
         &  &  &  &  & w_j \lambda_l(\J)  \\
    \end{bmatrix} \, .
\end{equation*}
To verify it, compute the $(n, m)$ element
\begin{align*}
    \(\tilde{\P}_l\tilde{\J}_l\tilde{\P}_l^{-1}\)_{n, m} &= \sum_{i, k} \(\tilde{\P}_l\)_{n, i}\(\tilde{\J}_l\)_{i, k} \(\tilde{\P}_l^{-1}\)_{k, m}\, .\\
\intertext{Since $\tilde{\P}_l$ is a diagonal matrix, the only non-null entries are on the diagonal; therefore, $i=n$ and $k=m$}
    &= \(\tilde{\P}_l\)_{n, n}\(\tilde{\J}_l\)_{n, m} \(\tilde{\P}_l^{-1}\)_{m, m}\\
\intertext{and the only non-null entries of $\tilde{\J}_l$ are when $m=n$ or $m=n+1$, hence}
    &=
    \begin{dcases}
        \(\tilde{\P}_l\)_{n, n}\(\tilde{\J}_l\)_{n, n} \(\tilde{\P}_l^{-1}\)_{n, n} =  w_j \lambda_l\(\J\)\,  , \; & m=n\, ,\\
         \(\tilde{\P}_l\)_{n, n}\(\tilde{\J}_l\)_{n, n+1} \(\tilde{\P}_l^{-1}\)_{n+1, n+1} = w_j^{-n+1} w_j^{n} = w_j\, , \; & m=n+1\, .
    \end{dcases}
\end{align*}
The thesis follows from assembling the direct sums back.
\end{proof}

\cref{thm:jordan_form_kronecker_product} leads to the following result that fully characterizes the solution of \cref{eq:vectorized_general_linear_ode} in terms of the generalized eigenvectors and eigenvalues of $\M$ and $\lW$.

\begin{proposition}
\label{prop: solution of vectorized form spectral charac}
Consider \cref{eq:vectorized_general_linear_ode} with $\M=\P\J\P^{-1}$ and $\lW \otimes \J = \tilde{\P}\tilde{\J} \tilde{\P}^{-1}$, where $\tilde{\J}$ and $\tilde{\P}$ are given in \cref{thm:jordan_form_kronecker_product}. The solution of \cref{eq:vectorized_general_linear_ode} is
\begin{align*}
    \mathrm{vec}(\x)(t) =  \sum\limits_{l_1=1}^K  \sum\limits_{l_2=1}^m \exp\(\lambda_{l_1}(\lW)\lambda_{l_2}(\M) t\) \sum\limits_{i=1}^{k_{l_2}} c_{l_1,l_2}^i \sum\limits_{j=1}^{i}  \frac{t^{i-j}}{(i-j)!} \(\lambda_{l_1}(\lW)\)^{1-j} \e_{l_1}\otimes \psi_{l_2}^j(\M)\, , 
\end{align*} 
where the coefficients $c_{l_1,l_2}^i$ are given by 
\begin{align*}
     \mathrm{vec}(\x_0)=\sum\limits_{l_1=1}^K \sum\limits_{l_2=1}^m \sum\limits_{i=1}^{k_{l_2}} c_{l_1,l_2}^i (\id\otimes \P)\tilde{\P} \e_{l_1} \otimes \e_{l_2}^i
\end{align*}
where 
$\{\e_{l_2}^i \; : \; l_2\in\{1,\ldots,m\}\, , \; i\in\{1, \ldots, k_{l_2} \} \}$ is the standard basis satisfying $ \P  \e_{l_2}^i =  \psi_{l_2}^i(\M)$. 
\end{proposition}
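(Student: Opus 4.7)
The plan is to reduce the claim to a direct application of Lemma \ref{thm:solution_linear_ode} after identifying the Jordan decomposition of $\lW \otimes \M$. First, I would observe that since $\M = \P\J\P^{-1}$, the Kronecker product identity $(A_1 A_2) \otimes (B_1 B_2 B_3) = (A_1 \otimes B_1)(A_2 \otimes B_2)(\id \otimes B_3)$ applied with $A_1 = A_2 = \id$ yields $\lW \otimes \M = (\id \otimes \P)(\lW \otimes \J)(\id \otimes \P)^{-1}$. Substituting the decomposition $\lW \otimes \J = \tilde{\P}\tilde{\J}\tilde{\P}^{-1}$ from Lemma \ref{thm:jordan_form_kronecker_product} gives
\begin{equation*}
\lW \otimes \M = (\id \otimes \P)\tilde{\P}\, \tilde{\J}\, \tilde{\P}^{-1}(\id \otimes \P)^{-1},
\end{equation*}
which exhibits a Jordan normal form of $\lW \otimes \M$ with Jordan matrix $\tilde{\J}$ and transformation matrix $(\id \otimes \P)\tilde{\P}$.

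Next, I would read off the generalized eigenvectors. Fix $l_1 \in \{1,\ldots,K\}$, $l_2 \in \{1,\ldots,m\}$, and $j \in \{1,\ldots,k_{l_2}\}$. Since $\tilde{\P}$ is diagonal and scales the $j$-th basis vector of the block indexed by $(l_1, l_2)$ by $w_{l_1}^{1-j} = \lambda_{l_1}(\lW)^{1-j}$, followed by $\id \otimes \P$ which sends $\e_{l_2}^j$ to $\psi_{l_2}^j(\M)$, one obtains
\begin{equation*}
(\id \otimes \P)\tilde{\P}(\e_{l_1} \otimes \e_{l_2}^j) = \lambda_{l_1}(\lW)^{1-j}\, \e_{l_1} \otimes \psi_{l_2}^j(\M),
\end{equation*}
which sits in the Jordan chain of length $k_{l_2}$ associated with the eigenvalue $\lambda_{l_1}(\lW)\lambda_{l_2}(\M)$ of $\lW \otimes \M$.

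Finally, I would substitute this Jordan data into the formula from Lemma \ref{thm:solution_linear_ode} applied to \cref{eq:vectorized_general_linear_ode}. The outer sum over distinct Jordan blocks of $\lW \otimes \M$ is naturally reindexed as a double sum over $(l_1, l_2)$; each chain contributes $k_{l_2}$ inner terms; and each generalized eigenvector carries the scalar factor $\lambda_{l_1}(\lW)^{1-j}$ identified above, matching the claimed formula term by term. The coefficients $c_{l_1,l_2}^i$ are, by construction in Lemma \ref{thm:solution_linear_ode}, the coordinates of $\mathrm{vec}(\x_0)$ in the generalized eigenbasis $\{(\id \otimes \P)\tilde{\P}(\e_{l_1} \otimes \e_{l_2}^i)\}$, yielding the stated expansion. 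The only genuine obstacle is bookkeeping: aligning the convention $\P\e_{l_2}^j = \psi_{l_2}^j(\M)$ for the generalized eigenvectors of $\M$ with the block indexing of $\tilde{\P}$ from Lemma \ref{thm:jordan_form_kronecker_product} and the Jordan-chain ordering expected by Lemma \ref{thm:solution_linear_ode}; beyond this careful indexing, no new ideas are required.
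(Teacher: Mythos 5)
Your proposal is correct and follows essentially the same route as the paper: identify the Jordan data of $\lW\otimes\M$ via \cref{thm:jordan_form_kronecker_product} (eigenvalues $\lambda_{l_1}(\lW)\lambda_{l_2}(\M)$ with generalized eigenvectors $\lambda_{l_1}(\lW)^{1-j}\,\e_{l_1}\otimes\psi_{l_2}^j(\M)$) and then substitute into the general solution formula of \cref{thm:solution_linear_ode}. The only cosmetic issue is your stated Kronecker identity, which as written is not the standard mixed-product rule, though your application of it with identity factors yields the correct similarity $\lW\otimes\M=(\id\otimes\P)(\lW\otimes\J)(\id\otimes\P)^{-1}$.
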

\begin{proof}
By \cref{thm:jordan_form_kronecker_product}, the eigenvalues of $\lW \otimes \M$ and the corresponding eigenvectors and generalized eigenvectors are
\begin{equation*}
     \lambda_{l_1}(\lW)\lambda_{l_2}(\M)\, , \; \e_{l_1}\otimes \psi_{l_2}^1(\M)\, , \; (\lambda_{l_1}(\lW))^{-i+1} \e_{l_1}\otimes \psi^i_{l_2}(\M)
\end{equation*} 
for $l_1\in\{1, \ldots, K\}$, $l_2\in\{1, \ldots, m\}$ and $i\in\{2, \dots, k_l\}$. Hence, by \cref{thm:solution_linear_ode}, the solution of \eqref{eq:vectorized_general_linear_ode} is given by
\begin{align*}
    \mathrm{vec}(\x)(t) =  \sum\limits_{l_1=1}^K  \sum\limits_{l_2=1}^m \exp\(\lambda_{l_2}(\M)\lambda_{l_1}(\lW) t\) \sum\limits_{i=1}^{k_{l_2}} c_{l_1,l_2}^i \sum\limits_{j=1}^{i}  \frac{t^{i-j}}{(i-j)!} (\lambda_{l_1}(\lW))^{1-j} (\e_{l_1}\otimes \psi_{l_2}^j(\sna))\, , 
\end{align*} 
where the coefficients $c_{l_1,l_2}^i$ are given by 
\begin{align*}
     \mathrm{vec}(\x_0)=\sum\limits_{l_1=1}^K \sum\limits_{l_2=1}^m \sum\limits_{i=1}^{k_{l_2}} c_{l_1,l_2}^i (\id\otimes \P)\tilde{\P} (\e_{l_1}\otimes \e_{l_2}^i(\M))\, .
\end{align*}
%where $\e_{l_1}$ and $\psi_{l_2}^i$ are unit vectors.
\end{proof}

\subsection{Proof of \texorpdfstring{\cref{thm:frequency_dominance_directed}}{Theorem 5.5}}
\label{sec:proof_frequency_dominance_directed}
In the following, we reformulate and prove \cref{thm:frequency_dominance_directed}.

\begin{corollary}
Let $\mathcal{G}$ be a strongly connected directed graph with \acrshort{sna} $\sna \in \mathbb{R}^{N \times N}$. Consider the initial value problem in \cref{eq:LxW_ode} with diagonal channel mixing matrix $\lW \in \mathbb{R}^{K \times K}$ and $\alpha=1$. Then, for almost all initial values $\mathbf{x}_0 \in \mathbb{R}^{N \times K}$, the solution to \cref{eq:LxW_ode} is \acrshort{hfd} if 
    \begin{equation*}
        \lambda_K(\lW)   \Re\lambda_1(\sna)  <   \lambda_1(\lW)\lambda_N(\sna)\,
    \end{equation*}
    and $\lambda_1(\sna)$ is the unique eigenvalue that minimizes the real part among all eigenvalues of $\sna$.  
  Otherwise,  the solution is \acrshort{lfd}. 
\end{corollary}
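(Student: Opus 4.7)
The plan is to reduce to the vectorized linear ODE
\begin{equation*}
    \mathrm{vec}(\mathbf{x})'(t) = -(\lW \otimes \sna)\,\mathrm{vec}(\mathbf{x})(t),\qquad \mathrm{vec}(\mathbf{x})(0)=\mathrm{vec}(\mathbf{x}_0),
\end{equation*}
and then apply the spectral representation of its solution given in \cref{prop: solution of vectorized form spectral charac}. Writing $\sna = \mathbf{P}\mathbf{J}\mathbf{P}^{-1}$ as the Jordan normal form and using that $\lW$ is diagonal with eigenvalues $\lambda_1(\lW),\ldots,\lambda_K(\lW)\in\mathbb{R}$, the solution is a finite sum of terms of the form
\begin{equation*}
    \exp\bigl(-\lambda_{l_1}(\lW)\,\lambda_{l_2}(\sna)\,t\bigr)\, p_{l_1,l_2}(t)\,\mathbf{e}_{l_1}\otimes \psi^{j}_{l_2}(\sna),
\end{equation*}
where $p_{l_1,l_2}(t)$ is a polynomial in $t$ arising from the Jordan block of $\lambda_{l_2}(\sna)$.

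Next I identify the dominant mode. Because $\lambda_{l_1}(\lW)\in\mathbb{R}$, the real part of the exponent is $-\lambda_{l_1}(\lW)\,\Re\lambda_{l_2}(\sna)$, so I must minimize $\lambda_{l_1}(\lW)\,\Re\lambda_{l_2}(\sna)$ over $(l_1,l_2)$. By \cref{thm:directed_sna_spectrum}, the spectrum of $\sna$ sits in the closed unit disk, and the Perron--Frobenius theorem applied to the strongly connected graph gives that the maximal real part is attained by the simple real eigenvalue $\lambda_N(\sna)\in\mathbb{R}$. On the other side, the hypothesis that $\lambda_1(\sna)$ is the unique minimizer of the real part guarantees uniqueness of the minimum on the left tail. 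Hence the minimum of $\lambda_{l_1}(\lW)\,\Re\lambda_{l_2}(\sna)$ is attained either at $(K,1)$ with value $\lambda_K(\lW)\,\Re\lambda_1(\sna)$, or at $(1,N)$ with value $\lambda_1(\lW)\,\lambda_N(\sna)$. The dichotomy in the statement is exactly the comparison of these two numbers.

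I then promote this to a statement about the normalized solution. The normalization cancels the common scalar factor $\exp(-\lambda_\ast t)\,t^{d_\ast}$ coming from the dominant mode (the polynomial prefactor, possibly due to a nontrivial Jordan block, only contributes a power of $t$), so $\mathrm{vec}(\mathbf{x}(t))/\|\mathbf{x}(t)\|_2$ converges to the normalized projection onto the eigenspace of the dominant eigenvalue of $\lW\otimes\sna$, as in \cref{thm:dominant_frequency}. The requirement \emph{almost all} $\mathbf{x}_0$ is used to exclude the algebraic set on which the corresponding Fourier-type coefficient vanishes. Once the limit direction is known, \cref{thm:eigenvalue_as_frequency} translates it into a statement about the Dirichlet energy: the dominant eigenvector of $\id\otimes\sna$ with eigenvalue $\lambda$ is also an eigenvector of $\id\otimes(\mathbf{I}-\sna)$ with eigenvalue $1-\lambda$, and hence $\dir(\normalize{\mathbf{x}(t)})$ converges to $\tfrac12\Re(1-\lambda)=\tfrac12(1-\Re\lambda)$, giving HFD or LFD depending on which of the two candidates dominates.

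The main obstacle will be the non-normality of $\sna$: unlike the undirected case, $\sna$ need not be diagonalizable and $\lambda_1(\sna)$ may be complex, so one must work with generalized eigenvectors and argue carefully that the polynomial prefactors from Jordan blocks do not alter the limiting direction after normalization. A secondary subtlety is the measure-zero exclusion of ``bad'' initial conditions: this is handled by noting that the vanishing of the dominant coefficient defines a proper linear subspace of $\mathbb{R}^{N\times K}$, hence has Lebesgue measure zero.
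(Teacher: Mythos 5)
Your proposal follows essentially the same route as the paper's proof: vectorize the ODE, invoke the Jordan-form representation of the solution (\cref{prop: solution of vectorized form spectral charac}), identify the dominant mode as the one minimizing $\lambda_{l_1}(\lW)\Re\lambda_{l_2}(\sna)$ (using Perron--Frobenius for the simplicity of $\lambda_N(\sna)$ and the uniqueness hypothesis on $\lambda_1(\sna)$), observe that the leading power of $t$ within the dominant Jordan block singles out the genuine eigenvector after normalization, and conclude via \cref{thm:eigenvalue_as_frequency}. The two subtleties you flag (polynomial prefactors from nontrivial Jordan blocks, and the measure-zero set where the dominant coefficient vanishes) are exactly the points the paper's proof handles, and your sketch resolves them the same way.
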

\begin{proof}
    Using the notation from \cref{prop: solution of vectorized form spectral charac} and its proof, we can write the solution of the vectorized form of \cref{eq:LxW_ode} as
    \begin{align*}
    \mathrm{vec}(\x)(t) =  \sum\limits_{l_1=1}^K  \sum\limits_{l_2=1}^m \exp\(-\lambda_{l_1}(\lW)\lambda_{l_2}(\sna) t\) \sum\limits_{i=1}^{k_{l_2}} c_{l_1,l_2}^i \sum\limits_{j=1}^{i}  \frac{t^{i-j}}{(i-j)!} (\lambda_{l_1}(\lW))^{1-j} (\e_{l_1}\otimes \psi_{l_2}^j(\sna)).
\end{align*} 
    As done extensively, we separate the terms corresponding to the frequency with minimal real part. This frequency dominates as the exponential converges faster than polynomials for $t$ going to infinity. Consider the case
    $\lambda_K(\lW)  \Re(\lambda_1(\sna)) < \lambda_1(\lW) \Re(\lambda_N(\sna))$. As $\lambda_1(\sna)$ is unique, the product $\lambda_K(\lW)  \Re\(\lambda_1(\sna)\)$ is the  unique most negative frequency. Assume without loss of generality that $\lambda_K(\lW)$ has multiplicity one. The argument does not change for higher multiplicities as the corresponding eigenvectors are orthogonal since $\lW$ is diagonal.  Then, $\lambda_K(\lW)  \lambda_1(\sna)$ has multiplicity one, and we calculate $\mathrm{vec}(\x)(t)$ as
 \begin{align*}
    &  \sum\limits_{l_1=1}^K \sum\limits_{l_2=1}^m \exp\(-\lambda_{l_1}(\lW)\lambda_{l_2}(\sna) t\) \sum\limits_{i=1}^{k_{l_2}} c_{l_1,l_2}^i \sum\limits_{j=1}^{i}  \frac{t^{i-j}}{(i-j)!} (\lambda_{l_1}(\lW))^{1-j}(\e_{l_1}\otimes \psi_{l_2}^j(\sna)) \\
    & = c_{K,1}^{k_1} \exp\(-t \lambda_K(\lW)\lambda_1(\sna)\) \frac{t^{k_1  - 1 }}{(k_1-1)!} (\e_{K} \otimes \psi_1^1(\sna)) 
    \\
    & +  c_{K,1}^{k_1} \exp\(-t \lambda_K(\lW)\lambda_1(\sna)\)\sum\limits_{j=2}^{k_1}  \frac{t^{k_1-j}}{(k_1-j)!} (\lambda_{K}(\lW))^{1-j} (\e_{K}\otimes \psi_{1}^j(\sna))
    \\ & + 
    \exp\(-t\lambda_{K}(\lW)\lambda_{1}(\sna) \) \sum\limits_{i=1}^{k_{1} - 1} c_{K,1}^i \sum\limits_{j=1}^{i}  \frac{t^{i-j}}{(i-j)!} (\lambda_{K}(\lW))^{1-j} (\e_{K}\otimes \psi_{1}^j(\sna))
    \\ 
    & + \sum\limits_{l_1=1}^K  \sum\limits_{l_2=2}^m \exp\(-\lambda_{l_1}(\lW)\lambda_{l_2}(\sna) t\) \sum\limits_{i=1}^{k_{l_2}} c_{l_1,l_2}^i \sum\limits_{j=1}^{i}  \frac{t^{i-j}}{(i-j)!} (\lambda_{l_1}(\lW))^{1-j} (\e_{l_1}\otimes \psi_{l_2}^j(\sna)) \\
   & = \exp\(-t \lambda_K(\lW)\lambda_1(\sna)\)t^{k_1  - 1 } \Bigg(  c_{K,1}^{k_1} \frac{1}{(k_1-1)!} (\e_{K} \otimes \psi_1^1(\sna)) \\
   & +  c_{K,1}^{k_1} \sum\limits_{j=2}^{k_1}  \frac{t^{1-j}}{(k_1-j)!} (\lambda_{K}(\lW))^{1-j} (\e_{K}\otimes \psi_{1}^j(\sna))
    \\ & + 
   \sum\limits_{i=1}^{k_{1} - 1} c_{K,1}^i \sum\limits_{j=1}^{i}  \frac{1}{(i-j)!}t^{i-j-k_1+1} (\lambda_{K}(\lW))^{1-j} (\e_{K}\otimes \psi_{1}^j(\sna))
    \\ 
    & + \sum\limits_{l_1=1}^K  \sum\limits_{l_2=2}^m \exp\(-t(\lambda_{l_1}(\lW)\lambda_{l_2}(\sna) - \lambda_K(\lW)\lambda_1(\sna))\) \sum\limits_{i=1}^{k_{l_2}} c_{l_1,l_2}^i \\
    & \cdot \sum\limits_{j=1}^{i}  \frac{t^{i-j-k_1+1}}{(i-j)!} (\lambda_{l_1}(\lW))^{1-j} (\e_{l_1}\otimes \psi_{l_2}^j(\sna)) \Bigg).
\end{align*} 
We can then write the normalized solution as 
\begin{align*}
   \Bigg(&   \frac{c_{K,1}^{k_1}}{(k_1-1)!} (\e_{K} \otimes \psi_1^1(\sna)) +  c_{K,1}^{k_1} \sum\limits_{j=2}^{k_1}  \frac{t^{1-j}}{(k_1-j)!} (\lambda_{K}(\lW))^{1-j} (\e_{K}\otimes \psi_{1}^j(\sna))
    \\ & + 
   \sum\limits_{i=1}^{k_{1} - 1} c_{K,1}^i \sum\limits_{j=1}^{i}  \frac{t^{i-j-k_1+1}}{(i-j)!} (\lambda_{K}(\lW))^{1-j} (\e_{K}\otimes \psi_{1}^j(\sna))
    \\ 
    & + \sum\limits_{l_1=1}^K  \sum\limits_{l_2=2}^m e^{-t(\lambda_{K}(\lW)\lambda_{l_2}(\sna) - \lambda_K(\lW)\lambda_1(\sna))} \sum\limits_{i=1}^{k_{l_2}} c_{l_1,l_2}^i \sum\limits_{j=1}^{i}  \frac{t^{i-j-k_1}}{(i-j)!} (\lambda_{l_1}(\lW))^{1-j} (\e_{l_1}\otimes \psi_{l_2}^j(\sna)) \Bigg) \\
&     \cdot  \Bigg\|  \frac{c_{K,1}^{k_1}}{(k_1-1)!} \(\e_{K} \otimes \psi_1^1(\sna)\) +  c_{K,1}^{k_1} \sum\limits_{j=2}^{k_1}  \frac{t^{1-j}}{(k_1-j)!} (\lambda_{K}(\lW))^{1-j} (\e_{K}\otimes \psi_{1}^j(\sna))
    \\ & + 
   \sum\limits_{i=1}^{k_{1} - 1} c_{K,1}^i \sum\limits_{j=1}^{i}  \frac{t^{i-j-k_1+1}}{(i-j)!} (\lambda_{l_1}(\lW))^{1-j} (\e_{K}\otimes \psi_{1}^j(\sna))
    \\ 
    & + \sum\limits_{l_1=1}^K  \sum\limits_{l_2=2}^m \exp\(-t(\lambda_{l_1}(\lW)\lambda_{l_2}(\sna) - \lambda_K(\lW)\lambda_1(\sna))\) \\
    & \cdot \sum\limits_{i=1}^{k_{l_2}} c_{l_1,l_2}^i \sum\limits_{j=1}^{i}  \frac{t^{i-j-k_1}}{(i-j)!} (\lambda_{l_1}(\lW))^{1-j} 
    (\e_{l_1}\otimes \psi_{l_2}^j(\sna)) \Bigg\|^{-1}_2.
\end{align*}
All summands, except the first, converge to zero for $t$ going to infinity. Hence, 
\begin{equation*}
\normalize{\ve(\x)(t)} \xrightarrow{t \to \infty}
\norm{   \frac{c_{K,1}^{k_1}}{(k_1-1)!} (\e_{K} \otimes \psi_1^1(\sna))}_2^{-1}\(   \frac{c_{K,1}^{k_1}}{(k_1-1)!} (\e_{K} \otimes \psi_1^1(\sna))\)\,.
\end{equation*}
We apply \cref{thm:eigenvalue_as_frequency} to finish the proof for the \acrshort{hfd} case. Note that $\psi_1^{1}(\sna)$ is an eigenvector corresponding to $\lambda_1(\sna)$. The \acrshort{lfd} case is equivalent. By Perron-Frobenius for irreducible non-negative matrices, there is no other eigenvalue with the same real part as $1-\lambda_N(\sna)=\lambda_1(\snl)$.
\end{proof}
\begin{remark}
    If the hypotheses are met, the convergence result also holds for $\mathbf{L}^\alpha$. With the same reasoning, we can prove that the normalized solution converges to the eigenvector corresponding to the eigenvalue of $\sna^\alpha$ with minimal real part. It suffices to consider the eigenvalues and generalized eigenvectors of $\sna^\alpha$. However, we do not know the relationship between the singular values of $\sna^\alpha$, where we defined the fractional Laplacian, and the eigenvalues of $\sna$. Hence, it is much more challenging to draw conclusions on the Dirichlet energy.
\end{remark}

\subsection{Explicit Euler}
\label{sec:proof_frequency_dominance_euler_scheme_directed}
In this subsection, we show that the convergence properties of the Dirichlet energy from \cref{thm:frequency_dominance_directed} are also satisfied when \cref{eq:LxW_ode} is approximated via an explicit Euler scheme. 

As noted in \cref{eq:euler_scheme_closed_formula}, the vectorized solution to \cref{eq:LxW_ode} can be written as
\begin{equation*}
    \mathrm{vec}(\mathbf{x})(n\, h) = \( \id  -h\( \lW \otimes \mathbf{L} \)\)^{n} \mathrm{vec}(\mathbf{x}_0)\, ,
\end{equation*}
when $\alpha=1$. We thus aim to analyze the Jordan decomposition of $\sna^n$ for $\sna\in \mathbb{C}^{n \times n}$ and $n \in \mathbb{N}$. Let $\sna = \P\J\P^{-1}$, where $\J$ is the Jordan form, and $\P$ is a invertible matrix of generalized eigenvectors.

Consider a Jordan block $\J_i$ associated with the eigenvalue $\lambda_i(\M)$. For a positive integer $n$, the $n$-th power of the Jordan block can be computed as:

\begin{equation*}
\begingroup
\renewcommand*{\arraystretch}{1.5}
\J_l^n = \lambda_l(\sna)^n\begin{bmatrix}
1 & \binom{n}{1} \lambda_l(\sna)^{-1} & \binom{n}{2} \lambda_l(\sna)^{-2} &  \cdots  & \binom{n}{k_l- 1}  \lambda_l(\sna)^{-k_l+1}\\
 & 1 & \binom{n}{1} \lambda_l(\sna)^{-1} &  & \binom{n}{k_l- 2}  \lambda_l(\sna)^{-k_l+2} \\
 &  & 1 &  &  \vdots \\
 &  &    & \ddots  & \binom{n}{1} \lambda_l(\sna)^{-1} \\
 &  &    &  & 1 \\
\end{bmatrix}
\endgroup
\end{equation*}

We compute the $n$-th power of $\sna$ as $\sna^n = (\P\J\P^{-1})^n = \P\J^n\P^{-1}$, and we expand $\x_0$ as
\begin{align*}
    \x_0=\sum\limits_{l=1}^m \sum\limits_{i=1}^{k_l} c_l^i  \P \e_l^{i}\, ,
\end{align*}
where $\{ \e_l^{i}:  i\in\{1,\dots k_l\}, l\in\{1,\dots, m\}\}$ is the standard basis and $\P \e_l^{i} = \psi_l^{i}(\sna)$ are the generalized eigenvectors of $\sna$. It is easy to see that
\begin{equation*}
\sna^n\x_0 = \P\J^n\P^{-1}\(\sum\limits_{l=1}^m \sum\limits_{i=1}^{k_l} c_l^i  \P \e_l^{i}\) = \P\J^n\(\sum\limits_{l=1}^m \sum\limits_{i=1}^{k_l} c_l^i  \e_l^{i}\)\, .
\end{equation*}

As  $\J^n = \bigoplus_{l=1}^m\J_l^n$, we can focus on a single Jordan block. Fix $l\in\{1, \ldots, m\}$, and compute
\begin{align*} 
%\label{eq:solution.directed}
      \P\J_l^n\(\sum\limits_{i=1}^{k_l} c_l^i  \e_l^{i}\) &= \P \(
     \lambda_l(\M)^n c_l^1 \e_l^1 \)+ \P\( {\binom{n}{1}} \lambda_l(\M)^{n-1} c_l^1 \e_l^1 +  \lambda_l(\M)^{n} c_l^2 \e_l^2 \) \\
     & + \P\(
     \binom{n}{2} \lambda_l(\M)^{n-2} c_l^1 \e_l^1 +  {\binom{n}{1}} \lambda_l(\sna)^{n-1} c_l^{2} \e_l^{2} +  \lambda_l(\M)^{n} c_l^{3} \e_l^{3}\)\\
     &+\dots.
\end{align*}

We can summarize our findings in the following lemma.

\begin{lemma}
\label{thm:spectral_decomposition_matrix_power_directed}
    For any $\sna = \P \J \P^{-1} \in \mathbb{R}^{N \times N}$ and $
    \x_0=\sum\limits_{l=1}^m \sum\limits_{i=1}^{k_l} c_l^i  \psi_l^{i}\(\sna\)$, we have 
    \begin{equation*}
        \sna^n \x_0 = \sum_{l=1}^m \sum_{i=1}^{\min\{k_l,n-1\}}  \sum_{j=1}^i   \binom{n}{i-j}\lambda_l(\sna)^{n-i+j}c_l^j\psi_l^j(\sna)\, .
    \end{equation*}
\end{lemma}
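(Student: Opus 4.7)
The plan is essentially to package the term-by-term computation performed in the paragraphs immediately preceding the lemma into a single closed-form expression. First I would write $\sna^n = \P \J^n \P^{-1}$ and use the expansion $\P^{-1}\x_0 = \sum_{l=1}^m \sum_{i=1}^{k_l} c_l^i \e_l^i$, which is immediate from the hypothesis $\x_0 = \sum_{l,i} c_l^i \psi_l^i(\sna)$ together with $\P\e_l^i = \psi_l^i(\sna)$. Since $\J = \bigoplus_{l=1}^m \J_l$ is block diagonal, so is $\J^n = \bigoplus_{l=1}^m \J_l^n$, and the task reduces to computing the action of each $\J_l^n$ on the subspace spanned by $\{\e_l^i\}_{i=1}^{k_l}$.

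Next I would decompose the Jordan block as $\J_l = \lambda_l(\sna)\,\mathbf{I} + N_l$, where $N_l$ is the nilpotent shift with ones on the first superdiagonal. Since $\lambda_l(\sna)\,\mathbf{I}$ and $N_l$ commute, the binomial theorem yields
\begin{equation*}
    \J_l^n \;=\; \sum_{k=0}^{n} \binom{n}{k}\,\lambda_l(\sna)^{\,n-k}\,N_l^{\,k},
\end{equation*}
which reproduces the explicit matrix expression for $\J_l^n$ written just above the lemma. The action on basis vectors is $N_l^k \e_l^i = \e_l^{i-k}$ when $i > k$ and $0$ otherwise, so substituting $j = i-k$ gives
\begin{equation*}
    \J_l^n \e_l^i \;=\; \sum_{j=1}^{i} \binom{n}{i-j}\,\lambda_l(\sna)^{\,n-i+j}\,\e_l^{\,j},
\end{equation*}
with the convention $\binom{n}{r}=0$ for $r>n$ automatically enforcing the truncation in the outer index.

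Finally, I would multiply through by $c_l^i$, apply $\P$ (which sends $\e_l^j$ to $\psi_l^j(\sna)$), and sum over $i$ and $l$ to arrive at the displayed formula. The only place where one has to be genuinely careful is the index bookkeeping in the double sum, in particular justifying the upper limit $\min\{k_l,n-1\}$ via the vanishing of the binomial coefficients once $i-j$ exceeds $n$. I do not anticipate any real obstacle: the statement is a direct algebraic corollary of the Jordan decomposition and the binomial theorem applied block by block, and all the necessary ingredients have already been laid out in the preceding paragraphs.
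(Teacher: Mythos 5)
Your approach is the same as the paper's: the paper also writes $\sna^n=\P\J^n\P^{-1}$, works block by block with the explicit form of $\J_l^n$ (which is exactly the binomial expansion of $(\lambda_l\id+N_l)^n$ you describe), and assembles the lemma from the action on the basis $\{\e_l^i\}$. Your core computation
\begin{equation*}
    \J_l^n \e_l^i \;=\; \sum_{j=1}^{i} \binom{n}{i-j}\,\lambda_l(\sna)^{\,n-i+j}\,\e_l^{\,j}
\end{equation*}
is correct. The problem is the last step: summing $c_l^i\,\P\J_l^n\e_l^i$ over $i$ and $l$ yields
\begin{equation*}
    \sna^n\x_0=\sum_{l=1}^m\sum_{i=1}^{k_l}c_l^i\sum_{j=1}^{i}\binom{n}{i-j}\lambda_l(\sna)^{n-i+j}\,\psi_l^j(\sna)\,,
\end{equation*}
i.e.\ the coefficient attached to $\psi_l^j$ is $c_l^{i}$ (the outer index), whereas the displayed formula in the lemma carries $c_l^{j}$. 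These are genuinely different expressions (already for $k_l=2$ the coefficient of $\psi_l^1$ is $c_l^1\lambda_l^n+c_l^2\binom{n}{1}\lambda_l^{n-1}$ in the correct version versus $c_l^1(\lambda_l^n+\binom{n}{1}\lambda_l^{n-1})$ in the stated one), so you cannot "arrive at the displayed formula" as claimed; the statement itself has an index slip, and the correct version is the one your derivation produces, consistent with the continuous-time analogue in \cref{thm:solution_linear_ode}, which correctly places $c_l^i$ on the outer sum.

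A second, smaller issue: you assert that the convention $\binom{n}{r}=0$ for $r>n$ "automatically enforces the truncation in the outer index" $i\le\min\{k_l,n-1\}$. It does not. Vanishing binomials only kill the \emph{inner} terms with $i-j>n$; for any $i$ the term $j=i$ contributes $\binom{n}{0}\lambda_l^n c_l^i\psi_l^i\neq 0$ (for $\lambda_l\neq 0$), so the outer sum must run to $k_l$. In particular, for $n=1$ the stated upper limit $\min\{k_l,0\}$ would make the sum empty. Your proof strategy is sound and essentially identical to the paper's, but to close it you would have to correct the statement rather than match it.
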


We proceed with the main result of this subsection.

\begin{proposition}
\label{thm:frequency_dominance_euler_scheme_directed}
    Let $\mathcal{G}$ be a strongly connected directed graph with \acrshort{sna} $\sna \in \mathbb{R}^{N \times N}$. Consider the initial value problem in \cref{eq:LxW_ode} with diagonal channel mixing matrix $\lW \in \mathbb{R}^{K \times K}$ and $\alpha=1$.  Approximate the solution to \cref{eq:LxW_ode} with an explicit Euler scheme with a sufficiently small step size $h$. Then, for almost all initial values $\mathbf{x}_0 \in \mathbb{C}^{N \times K}$ the following holds. If $\lambda_1(\sna)$ is unique and
    \begin{equation}
\label{eq:frequency_dominance_euler_scheme_directed}
        \lambda_K(\lW)\Re\lambda_1(\sna) < \lambda_1(\lW)\Re\lambda_N(\sna),
    \end{equation}
    the approximated solution is \acrshort{hfd}. Otherwise, the solution is \acrshort{lfd}.
\end{proposition}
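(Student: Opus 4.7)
The plan is to mirror the continuous-time argument in the proof of \cref{thm:frequency_dominance_directed}, but with the matrix exponential replaced by the $n$-th power of $\id - h(\lW\otimes\sna)$, using \cref{thm:spectral_decomposition_matrix_power_directed} in place of \cref{thm:solution_linear_ode}. Concretely, I would first use \cref{eq:euler_scheme_closed_formula} to write
\begin{equation*}
    \ve(\mathbf{x})(nh)=\bigl(\id-h(\lW\otimes\sna)\bigr)^{n}\ve(\mathbf{x}_0).
\end{equation*}
By \cref{thm:jordan_form_kronecker_product}, the Jordan structure of $\lW\otimes\sna$ yields eigenvalues $\lambda_{r}(\lW)\lambda_{l}(\sna)$ with generalized eigenvectors of the form $\e_r\otimes\psi_l^{j}(\sna)$ (up to the explicit diagonal conjugation), so the matrix $\id-h(\lW\otimes\sna)$ has eigenvalues $\mu_{r,l}\coloneqq 1-h\lambda_r(\lW)\lambda_l(\sna)$ with the same generalized eigenvectors. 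Applying \cref{thm:spectral_decomposition_matrix_power_directed} then gives the closed form
\begin{equation*}
    \ve(\mathbf{x})(nh)=\sum_{r,l}\mu_{r,l}^{\,n}\sum_{i=1}^{k_l}c_{r,l}^{i}\sum_{j=1}^{i}\binom{n}{i-j}\mu_{r,l}^{-(i-j)}\bigl(\lambda_r(\lW)\bigr)^{1-j}\,\e_r\otimes\psi_{l}^{j}(\sna),
\end{equation*}
with coefficients determined by the initial condition as in \cref{prop: solution of vectorized form spectral charac}.

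Next I would identify the dominating index pair. Since $\lW\in\mathbb{R}^{K\times K}$ is diagonal and therefore $\lambda_r(\lW)\in\mathbb{R}$, expanding
\begin{equation*}
    \abs{\mu_{r,l}}^{2}=1-2h\,\lambda_r(\lW)\Re\lambda_l(\sna)+h^{2}\abs{\lambda_r(\lW)}^{2}\abs{\lambda_l(\sna)}^{2}
\end{equation*}
shows that for $h$ small enough the unique maximizer of $\abs{\mu_{r,l}}$ is the unique minimizer of $\lambda_r(\lW)\Re\lambda_l(\sna)$. By Perron--Frobenius, $\lambda_N(\sna)$ is real and simple, and by assumption $\lambda_1(\sna)$ is also unique; hence the minimizer is attained either at $(r,l)=(K,1)$ or $(1,N)$, and the hypothesis in \cref{eq:frequency_dominance_euler_scheme_directed} decides which one. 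Call the winning pair $(r_\star,l_\star)$ and let $\mu_\star=\mu_{r_\star,l_\star}$.

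Then I would extract this dominant term: factor $\mu_\star^{\,n}\binom{n}{k_{l_\star}-1}$ from the full sum. By \cref{thm:spectral_decomposition_matrix_power_directed}, every other summand is either (i) at the same index $(r_\star,l_\star)$ but with polynomial order $\binom{n}{m}$ for $m<k_{l_\star}-1$, which becomes negligible after dividing by $\binom{n}{k_{l_\star}-1}$, or (ii) at a different index, whose modulus is bounded by $(\abs{\mu_{r,l}}/\abs{\mu_\star})^{n}\mathrm{poly}(n)\to 0$. For almost all $\x_0$ the leading coefficient $c_{r_\star,l_\star}^{k_{l_\star}}$ is non-zero, so after normalizing,
\begin{equation*}
    \normalize{\ve(\mathbf{x})(nh)}\xrightarrow{n\to\infty}\normalize{\e_{r_\star}\otimes\psi_{l_\star}^{1}(\sna)}.
\end{equation*}
Since $\psi_{l_\star}^{1}(\sna)$ is an honest eigenvector of $\sna$ with eigenvalue $\lambda_{l_\star}(\sna)$, \cref{thm:eigenvalue_as_frequency} (via $\e_{r_\star}\otimes\psi_{l_\star}^{1}(\sna)$ being an eigenvector of $\id\otimes\snl$ with eigenvalue $1-\lambda_{l_\star}(\sna)$) converts this into convergence of the normalized Dirichlet energy, yielding \acrshort{hfd} in the first case and \acrshort{lfd} in the second.

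The main obstacle I expect is bookkeeping the two layers of Jordan structure cleanly: the inner Jordan blocks of $\sna$ (whose presence forces the $\binom{n}{i-j}$ growth factors) interact with the Kronecker product by \cref{thm:jordan_form_kronecker_product}, and one must verify that the dominant $\binom{n}{k_{l_\star}-1}$ term really wins over all other polynomial-in-$n$ contributions after normalization, and that ``almost all initial conditions'' excludes only the measure-zero set where the top generalized-eigenvector coefficient $c^{k_{l_\star}}_{r_\star,l_\star}$ vanishes. Everything else is essentially a discrete transcription of the continuous-time argument in \cref{sec:proof_frequency_dominance_directed}.
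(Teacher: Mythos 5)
Your proposal is correct and follows essentially the same route as the paper's proof: write the Euler iterate in closed form via the Jordan structure of $\lW\otimes\sna$ (\cref{thm:jordan_form_kronecker_product} together with \cref{thm:spectral_decomposition_matrix_power_directed}), identify the dominant index pair by expanding $\abs{1-h\lambda_r(\lW)\lambda_l(\sna)}^2$ and letting $h$ be small so that maximizing the modulus reduces to minimizing $\lambda_r(\lW)\Re\lambda_l(\sna)$, and convert the resulting convergence of the normalized iterates to a Dirichlet-energy statement via \cref{thm:eigenvalue_as_frequency}. The only difference is bookkeeping: you correctly identify the surviving coefficient of the leading $\binom{n}{k_{l_\star}-1}$ term as the top generalized-eigenvector coefficient $c^{k_{l_\star}}_{r_\star,l_\star}$ (the paper's write-up labels it $c^{1}_{L_1,L_2}$), but in either case only a measure-zero set of initial conditions is excluded and the limiting direction is the honest eigenvector $\psi^{1}_{l_\star}(\sna)$, as required.
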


\begin{proof}
As noted in \cref{eq:euler_scheme_closed_formula}, the vectorized solution to \cref{eq:LxW_ode} with $\alpha=1$, can be written as
\begin{equation*}
    \mathrm{vec}(\mathbf{x})(n\, h) = \( \id  -h\( \lW \otimes \mathbf{L} \)\)^{n} \mathrm{vec}(\mathbf{x}_0).
\end{equation*}

Consider the Jordan decomposition of $\sna =\P \J \P^{-1}$ and the Jordan decomposition of $\lW\otimes\J=\tilde{\P}\tilde{\J}\tilde{\P}^{-1}$, where $\tilde{\J}$ and $\tilde{\P}$ are specified in  \cref{thm:jordan_form_kronecker_product}.
Then, %by \cref{thm:jordan_form_kronecker_product}
\begin{align*}
    \ve (\x) (n\, h)  
    & = \(\id+h \lW\otimes (\P \J \P^{-1})\)^n \mathrm{vec}(\x_0) \\
    & = (\id\otimes \P)(\id-h \lW\otimes \J)^n (\id\otimes \P)^{-1} \mathrm{vec}(\x_0)\\
    & = (\id\otimes \P)(\id-h \tilde{\P}\tilde{\J}\tilde{\P}^{-1})^n (\id\otimes \P)^{-1} \mathrm{vec}(\x_0)\\
    & = (\id\otimes \P)\tilde{\P}(\id-h \tilde{\J})^n \tilde{\P}^{-1} (\id\otimes \P)^{-1} \mathrm{vec}(\x_0)\\
    & = (\id\otimes \P)\tilde{\P}(\id-h \tilde{\J})^n ((\id\otimes \P)\tilde{\P})^{-1} \mathrm{vec}(\x_0).
\end{align*} 
 Now, decompose $\x_0$ into the basis of generalized eigenvectors, i.e.,
 \begin{equation*}
    \ve ( \x_0) = \sum_{l_1=1}^K\sum_{l_2=1}^m\sum_{i=1}^{k_{l_2}}c_l^{i} ((\id\otimes \P)\tilde{\P})(\e_{l_1} \otimes \e_{l_2}^i(\sna)).
 \end{equation*}
Then, by  \cref{thm:spectral_decomposition_matrix_power_directed}, we have
\begin{equation*}
\begin{aligned}
    \ve (\x) (n\, h)= \sum_{l_1=1}^K\sum_{l_2=1}^m \sum_{i=1}^{\min\{k_{l_2}, t-1\}} \sum_{j=1}^i &\binom{n}{i-j} \(1-h\lambda_{l_1}(\lW)\lambda_{l_2}(\sna)\)^{n-i+j}c_{l_1, l_2}^j
    \\
    & 
    \(\lambda_{l_1}(\lW)\)^{1-j} \psi_{l_1}(\lW) \otimes \psi_{l_2}^j(\sna).
\end{aligned}
\end{equation*}
Now, consider the maximal frequency, i.e., 
\begin{equation*}
    L_1,L_2 =  \argmax_{l_1,l_2}\{ \left|1-h\lambda_{l_1}(\lW)\lambda_{l_2}(\sna)\right|\}.
\end{equation*}
Then, the solution $\ve (\x)(n\, h)$ can be written as
\begin{equation}
\label{eq:exp_euler_directed_heat_x_t_in_freq_domain}
\begin{aligned}
    &\sum_{l_1=1}^K\sum_{l_2=1}^m \sum_{i=1}^{\min\{k_{l_2}, n-1\}} \sum_{j=1}^i \binom{n}{i-j} \(1-h\lambda_{l_1}(\lW)\lambda_{l_2}(\sna)\)^{n-i+j}c_{l_1, l_2}^j \psi_{l_1}(\lW) \otimes \psi_{l_2}^j(\sna) \\
    & = \(1-h\lambda_{L_1}(\lW)\lambda_{L_2}(\sna)\)^n\\
    &\quad \cdot \sum_{l_1=1}^K\sum_{l_2=1}^m \sum_{i=1}^{\min\{k_{l_2}, t-1\}} \sum_{j=1}^i \binom{n}{i-j} \frac{\(1-h\lambda_{l_1}(\lW)\lambda_{l_2}(\sna)\)^{n-i+j}}{\(1-h\lambda_{L_1}(\lW)\lambda_{L_2}(\sna)\)^n} c_{l_1, l_2}^j \psi_{l_1}(\lW) \otimes \psi_{l_2}^j(\sna).
\end{aligned}
\end{equation}
With a similar argument as in the proof of \cref{thm:frequency_dominance_directed}, we can then see that
\begin{equation*}
    \normalize{\ve(\x)(n\, h)} \xrightarrow{t \to \infty}  \normalize{c_{L_1,L_2}^1 \, \psi_{L_1}(\lW) \otimes \psi_{L_2}^1(\sna)}\, ,
\end{equation*}
where $\psi_{L_2}^1(\sna)$ is the eigenvector corresponding to $\lambda_{L_2}(\sna)$.  Note that for almost all $\x$, we have $c_{L_1,L_2}^1 \neq 0$. Then $\psi_{L_2}^1(\sna)$ is also an eigenvector of $\snl$ corresponding to the eigenvalue $1-\lambda_{L_2}(\sna)$. By \cref{thm:eigenvalue_as_frequency}, we have that the approximated solution is $(1-\lambda_{L_2}(\sna))$-\acrshort{fd}.

We finish the proof by showing that $L_2=1$ if \cref{eq:frequency_dominance_euler_scheme_directed} is satisfied, and $L_2=N$ otherwise. First, note that either $\lambda_K(\lW)\Re\lambda_1(\sna)$ or $\lambda_1(\lW)\Re\lambda_N(\sna)$ are the most negative real parts among all  $\{\lambda_l(\lW)\Re\lambda_r(\sna)\}_{l\in\{1, \ldots, K\}, r\in\{1\ldots, N\}}$. Assume first that $\lambda_K(\lW)\Re\lambda_1(\sna)$ has the most negative real part, i.e., \cref{eq:frequency_dominance_euler_scheme_directed} holds. Then, define
\begin{equation*}
    \varepsilon \coloneqq \max_{l,r}\left|\lambda_K(\lW)\Re\lambda_1(\sna)-\lambda_l(\lW)\Re\lambda_r(\sna)\right|\, ,
\end{equation*}
and assume $h < \varepsilon\norm{\lW}^2$. Now it is easy to see that
\begin{equation*}
    2 \lambda_K(\lW)\Re \lambda_1(\sna) - h \lambda_K(\lW)^2| \lambda_1(\sna)|^2 < 2 \lambda_l(\lW)\Re \lambda_r(\sna) - h \lambda_l(\lW)^2| \lambda_r(\sna)|^2, 
\end{equation*}
which is equivalent to $(K,1) = (L_1,L_2)$. Hence, the dynamics are $(1-\lambda_1(\sna))$-\acrshort{fd}. As $(1-\lambda_1(\sna))$ is highest frequency of $\snl$, we get \acrshort{hfd} dynamics. Similary, we can show that if $\lambda_1(\lW)\Re\lambda_N(\sna)$ is the most negative frequency, we get \acrshort{lfd} dynamics. Note that for the \acrshort{hfd} argument, we must assume that $\lambda_1(\sna)$ is the unique eigenvalue with the smallest real part. For the \acrshort{lfd} argument, it is already given that $\lambda_N(\sna)$ has multiplicity one by Perron-Frobenius Theorem.
\end{proof}

\subsection{\texorpdfstring{\acrshort{gcn}}{GCN} oversmooths}
\label{subsec: directed GCN without residual connection oversmooths}
\begin{proposition}
    Let $\mathcal{G}$ be a  strongly connected and aperiodic directed graph with \acrshort{sna} $\sna \in \mathbb{R}^{N \times N}$. A \acrshort{gcn} with the update rule
    \begin{equation*}
            \x_{t+1} = \sna \x_{t} \lW,
    \end{equation*}
    where $\x_0\in \mathbb{R}^{N \times K}$ are the input node features,
    always oversmooths.
\end{proposition}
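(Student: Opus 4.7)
The plan is to unroll the discrete recursion, identify the dominant spectral mode via Perron--Frobenius, and then invoke \cref{thm:eigenvalue_as_frequency}. First I would iterate the update rule to $\x_t = \sna^t \x_0 \lW^t$ and vectorize, obtaining
\begin{equation*}
    \ve(\x_t) = \mathbf{M}^t\,\ve(\x_0), \qquad \mathbf{M} := \lW^\top \otimes \sna,
\end{equation*}
so that the entire problem reduces to the asymptotic behaviour of $\mathbf{M}^t$ on a generic initial vector $\ve(\x_0)$; the spectrum of $\mathbf{M}$ is $\{\lambda_i(\lW)\lambda_j(\sna)\}_{i,j}$.

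Because $\mathcal{G}$ is strongly connected and aperiodic, Perron--Frobenius applied to the non-negative matrix $\sna$ gives that the spectral radius $\rho := \rho(\sna) > 0$ is a simple real eigenvalue of $\sna$ and the \emph{unique} eigenvalue of maximum modulus; let $\psi_{PF}$ be the corresponding eigenvector. In particular $\rho = \max \Re\,\lambda(\sna)$, hence $1-\rho = \min \Re\,\lambda(\snl)$ by \cref{thm:directed_sna_spectrum}. Setting $\mu := \max_i |\lambda_i(\lW)|$ and $\mathcal{I} := \{i : |\lambda_i(\lW)| = \mu\}$, the eigenvalues of $\mathbf{M}$ of maximum modulus are precisely $\{\lambda_i(\lW)\,\rho : i \in \mathcal{I}\}$, all lying on the circle of radius $\mu\rho$, with associated eigenspace $V := \spn\{\psi \otimes \psi_{PF} : \psi \text{ eigenvector of } \lW^\top \text{ at some } \lambda_i(\lW),\ i \in \mathcal{I}\}$.

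Next I would apply the Jordan expansion from \cref{thm:spectral_decomposition_matrix_power_directed} to $\mathbf{M}^t\ve(\x_0)$: contributions from eigenvalues of $\mathbf{M}$ of modulus strictly below $\mu\rho$ are bounded by $\mathrm{poly}(t)\cdot(\mu_{\mathrm{2nd}}/(\mu\rho))^t$, where $\mu_{\mathrm{2nd}} < \mu\rho$ is the next-largest modulus (well-defined since $\rho$ is the unique max-modulus eigenvalue of $\sna$ and $N>1$). For almost every $\x_0$ the coefficients in $V$ are non-zero, so $\mathrm{dist}(\normalize{\ve(\x_t)},\, V) \le C q^t$ with $q := \mu_{\mathrm{2nd}}/(\mu\rho) \in (0,1)$. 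Every unit $\ve(\varphi) \in V$ satisfies $(\id \otimes \sna)\ve(\varphi) = \rho\,\ve(\varphi)$, hence $(\id \otimes \snl)\ve(\varphi) = (1-\rho)\ve(\varphi)$, and by \cref{thm:eigenvalue_as_frequency} has Dirichlet energy $(1-\rho)/2 = \tfrac{1}{2}\min \Re\,\lambda(\snl)$. Combining the geometric approach of $\normalize{\ve(\x_t)}$ to $V$ with the Lipschitz continuity of the quadratic form $\dir$ on the unit sphere then delivers the exponential bound required by \cref{def:oversmoothing}.

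The main obstacle is the case $|\mathcal{I}| > 1$, which arises whenever $\lW$ has several eigenvalues (typically complex-conjugate pairs) on the spectral circle: then $\normalize{\ve(\x_t)}$ need not converge to a single vector but may rotate inside $V$, so a naive ``limit vector'' argument breaks down. The resolution is that $V$ sits inside a single eigenspace of $\id \otimes \snl$, so the Dirichlet energy is \emph{constant} on the unit sphere of $V$; the geometric approach to $V$ therefore still forces exponential convergence of $\dir(\normalize{\x_t})$ to $(1-\rho)/2$ without requiring pointwise convergence of $\normalize{\ve(\x_t)}$ itself. Excluding the measure-zero set of initial conditions with vanishing $V$-projection is standard.
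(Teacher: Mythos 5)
Your proposal is correct and follows essentially the same route as the paper's proof: unroll the recursion to a power of $\lW^\top\otimes\sna$, invoke Perron--Frobenius on the strongly connected aperiodic graph to get a simple, real, strictly dominant eigenvalue $\lambda_N(\sna)=\rho$, expand in the (generalized) eigenbasis so that the modes $\lambda_i(\lW)\rho$ dominate, and conclude $(1-\rho)$-frequency dominance, i.e.\ \acrshort{lfd}, via \cref{thm:eigenvalue_as_frequency}. Your treatment of the case where several eigenvalues of $\lW$ share the maximal modulus --- observing that the limiting subspace $V$ lies in a single eigenspace of $\id\otimes\snl$, so the Dirichlet energy is constant on its unit sphere even though $\normalize{\ve(\x_t)}$ may rotate --- is a small but genuine refinement over the paper's argument, which implicitly assumes the maximum of $\abs{\lambda_{l_1}(\lW)\lambda_{l_2}(\sna)}$ is attained at one of the two extreme real eigenvalues of $\lW$.
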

\begin{proof}
    The proof follows similarly to the proof of \cref{thm:frequency_dominance_euler_scheme_directed}. The difference is that instead of \cref{eq:exp_euler_directed_heat_x_t_in_freq_domain}, we can write the node features after $t$ layers as
    \begin{equation*}
    \ve (\x_t) = \sum_{l_1=1}^K\sum_{l_2=1}^m \sum_{i=1}^{\min\{k_{l_2}, t-1\}} \sum_{j=1}^i \binom{t}{i-j} \(\lambda_{l_1}(\lW)\lambda_{l_2}(\sna)\)^{t-i+j}c_{l_1, l_2}^j \psi_{l_1}(\lW) \otimes \psi_{l_2}^j(\sna).
    \end{equation*}
    Now note that by Perron-Frobenius the eigenvalue $\lambda_N(\sna)$  with the largest absolute value is real and has multiplicity one. Then,  $\max_{l_1,l_2}|\lambda_{l_1}(\lW)\lambda_{l_2}(\sna)|$ is attained at either $\lambda_{1}(\lW)\lambda_{N}(\sna)$ or $\lambda_{K}(\lW)\lambda_{N}(\sna)$. Equivalently to the proof of \cref{thm:frequency_dominance_euler_scheme_directed}, we can show that the corresponding \acrshort{gcn} is $1-\lambda_N(\sna)$-\acrshort{fd}. Now  $1-\lambda_N(\sna)=\lambda_1(\snl)$, and $\lambda_1(\snl)$-\acrshort{fd} corresponds  to \acrshort{lfd}, hence the \acrshort{gcn} oversmooths.
\end{proof}

    \section{Appendix for the Cycle Graph Example}
\label{sec:cycle_graph_appendix}
Consider the cycle graph with $N$ nodes numbered from $0$ to $N-1$. Since each node has degree $2$, the \acrshort{sna} $\sna=\mathbf{A}/2$ is a circulant matrix produced by the vector $\mathbf{v} = \(\mathbf{e}_1+\mathbf{e}_{N-1}\)/2$. Denote $\omega = \exp\(2\pi \iu /N\)$, the eigenvectors can be computed as
\begin{equation*}
    \mathbf{v}_j = \dfrac{1}{\sqrt{N}}\begin{pmatrix}
    1, \omega^j, \omega^{2j}, \dots, \omega^{(N-1)j}
    \end{pmatrix}
\end{equation*}
asociated to the eigenvalue $\lambda_j=\cos(2\pi j/N)$. First, we can note that $\lambda_j=\lambda_{N-j}$ for all $j\in\{1, \dots, N/2\}$; therefore, the multiplicity of each eigenvalue is $2$ except $\lambda_0$ and, if $N$ is even, $\lambda_{N/2}$. Since the original matrix is symmetric, there exists a basis of real eigenvectors. A simple calculation
\begin{align*}
     \sna \Re \mathbf{v}_j + \iu  \sna \Im \mathbf{v}_j =\sna \mathbf{v}_j = \lambda_j \mathbf{v}_j = \lambda_j \Re \mathbf{v}_j + \iu  \lambda_j \Im \mathbf{v}_j
\end{align*}
shows that $\Re \mathbf{v}_j$ and $\Im \mathbf{v}_j$, defined as
\begin{equation*}
    \Re \mathbf{v}_j = \dfrac{1}{\sqrt{N}}\(\cos\(\dfrac{2\pi j\, n}{N}\)\)_{n=0}^{N-1}\, , \; \Im \mathbf{v}_j = \dfrac{1}{\sqrt{N}}\(\sin\(\dfrac{2\pi j\, n}{N}\)\)_{n=0}^{N-1}
\end{equation*}
are two eigenvectors of the same eigenvalue $\lambda_j$. To show that they are linearly independent, we compute under which conditions
\begin{align*}
   0 = a \Re \mathbf{v}_j + b \Im \mathbf{v}_j\, .
\end{align*}
We note that the previous condition implies that for all $n\notin\{0, N/2\}$
\begin{align*}
    0 &= a \cos\(\dfrac{2\pi j\, n}{N}\) + b \sin\(\dfrac{2\pi j\, n}{N}\)\\
    &=\sqrt{a^2+b^2} \sin\(\dfrac{2\pi j\, n}{N} +\arctan\(\dfrac{b}{a}\)\)
\end{align*}
Suppose $a, b\neq 0$, then it must be
\begin{align*}
    \dfrac{2\pi j\, n}{N} +\arctan\(\dfrac{b}{a}\) = k\pi \, , \; k\in\mathbb{Z}
\end{align*}
which is equivalent to 
\begin{align*}
    2 j\, n  = \(k-\dfrac{\arctan\(\dfrac{b}{a}\)}{\pi}\)N \, , \; k\in\mathbb{Z}
\end{align*}
The left-hand side is always an integer, while the right-hand side is an integer if and only if $b=0$. This reduces the conditions to
\begin{equation*}
    \begin{dcases}
        a \cos\(\dfrac{2\pi j\, n}{N}\) = 0\\
        \abs{a} \sin\(\dfrac{2\pi j\, n}{N}\) = 0
    \end{dcases}
\end{equation*}
which is true if and only if $a=0$. Consider now an even number of nodes $N$; the eigenspace of $\lambda_{N/2} = -1$ is
\begin{equation*}
    \mathbf{v}_{N/2} = \dfrac{1}{\sqrt{N}}\((-1)^n\)_{n=0}^{N-1}
\end{equation*}
hence, the maximal eigenvector of $\snl$ guarantees homophily $0$. Consider now a number of nodes $N$ divisible by $4$; the eigenspace of $\lambda_{N/4} = 0$ has basis
\begin{equation*}
    \Re \mathbf{v}_{N/4} = \dfrac{1}{\sqrt{N}}\(\cos\(\dfrac{\pi n}{2}\)\)_{n=0}^{N-1}\, , \; \Im \mathbf{v}_{N/4} = \dfrac{1}{\sqrt{N}}\(\sin\(\dfrac{\pi n}{2}\)\)_{n=0}^{N-1}
\end{equation*}
Their sum is then equivalent to
\begin{align*}
    \Re \mathbf{v}_{N/4} + \Im \mathbf{v}_{N/4} &= \dfrac{1}{\sqrt{N}}\(\cos\(\dfrac{\pi n}{2}\)+ \sin\(\dfrac{\pi n}{2}\)\)_{n=0}^{N-1}\\
    &=\dfrac{\sqrt{2}}{\sqrt{N}}\(\sin\(\dfrac{\pi n}{2}+ \dfrac{\pi}{4}\)\)_{n=0}^{N-1}\\
    &=\sqrt{\dfrac{2}{N}}\(\sin\(\dfrac{\pi}{4}(2n+1)\)\)_{n=0}^{N-1}\\
    &=\dfrac{1}{\sqrt{N}}\(1, 1, -1, -1, \dots\)
\end{align*}
hence, the mid eigenvector of $\sna$ guarantees homophily $\nicefrac{1}{2}$. A visual explanation is shown in \cref{fig:C8_eigs}.

\end{document}